\newtheorem{problem}{\bf Problem}
\newtheorem{definition}{\bf Definition}
\newtheorem{assumption}{\bf Assumption}
\newtheorem{remark}{\bf Remark}
\newtheorem{property}{\bf Property}
\newtheorem{lemma}{\bf Lemma}
\newtheorem{theorem}{\bf Theorem}
\newtheorem{corollary}{Corollary}
\newcommand\BibTeX{{\rmfamily B\kern-.05em \textsc{i\kern-.025em b}\kern-.08em
T\kern-.1667em\lower.7ex\hbox{E}\kern-.125emX}}
\begin{document}

\runninghead{Liu et al.}

\title{A Fast Method for Planning All Optimal Homotopic Configurations for Tethered Robots and Its Extended Applications}

% \author{Jinyuan Liu\affilnum{1}, Minglei Fu\affilnum{1}, Wenan Zhang\affilnum{1}, Bo Chen\affilnum{1} and Chenguang Yang\affilnum{2}}

% \affiliation{\affilnum{1}Sunrise Setting Ltd, UK\\
% \affilnum{2}SAGE Publications Ltd, UK}

% \corrauth{Minglei Fu, Sunrise Setting Ltd
% Brixham Laboratory,
% Freshwater Quarry,
% Brixham, Devon,
% TQ5~8BA, UK.}
\author{Jinyuan Liu\affilnum{1}, Minglei Fu\affilnum{1}, Ling Shi\affilnum{2}, Chenguang Yang\affilnum{3}, and Wenan Zhang\affilnum{1}}

\affiliation{\affilnum{1}College of Information Engineering, Zhejiang University of Technology, Hangzhou, Zhejiang, China\\
\affilnum{2}Department of Electronic and Computer Engineering, Hong Kong University of Science and Technology, Hong Kong, China\\
\affilnum{3}Department of Computer Science, University of Liverpool, Liverpool, UK
}

\corrauth{Wenan Zhang, College of Information Engineering, Zhejiang University of Technology, 18 Chaowang Road, Hangzhou, Zhejiang, China, 310014.}
\email{wazhang@zjut.edu.cn}

\begin{abstract}
	Tethered robots play a pivotal role in specialized environments such as disaster response and underground exploration, where their stable power supply and reliable communication offer unparalleled advantages. However, their motion planning is severely constrained by tether length limitations and entanglement risks, posing significant challenges to achieving optimal path planning. To address these challenges, this study introduces CDT-TCS (Convex Dissection Topology-based Tethered Configuration Search), a novel algorithm that leverages CDT Encoding as a homotopy invariant to represent topological states of paths. By integrating algebraic topology with geometric optimization, CDT-TCS efficiently computes the complete set of optimal feasible configurations for tethered robots at all positions in 2D environments through a single computation. Building on this foundation, we further propose three application-specific algorithms: i) CDT-TPP for optimal tethered path planning, ii) CDT-TMV for multi-goal visiting with tether constraints, iii) CDT-UTPP for distance-optimal path planning of untethered robots. All theoretical results and propositions underlying these algorithms are rigorously proven and thoroughly discussed in this paper. Extensive simulations demonstrate that the proposed algorithms significantly outperform state-of-the-art methods in their respective problem domains. Furthermore, real-world experiments on robotic platforms validate the practicality and engineering value of the proposed framework.
\end{abstract}%By integrating algebraic topology with geometric optimization, CDT-TCS rapidly computes the complete set of optimal feasible configurations for tethered robots at all positions in 2D environments through a single computation. 

\keywords{Tethered Robots, Path Planning, Homotopy Invariant, Optimal Distance, Mobile Robots}

\maketitle

\makeatletter
\renewcommand{\thefootnote}{\arabic{footnote}}
\setcounter{footnote}{0}
\makeatother

\section{Introduction}
In recent years, autonomous robotic systems have become indispensable in a wide range of applications, from industrial automation \citep{friedrich2017efficient} and logistics to environmental monitoring \citep{du2023ai,hitz2017adaptive} and disaster response \citep{wan2022accurate}. Among these systems, tethered robots—robots connected to a fixed base or mobile platform via a physical cable—have garnered increasing attention due to their unique advantages in specific operational scenarios \citep{mcgarey2018developing}. Tethered robots are particularly valuable in high-energy-consumption and hazardous environments such as mining operations, biochemical or radiation-contaminated area monitoring, underwater inspections of known structures (e.g., offshore drilling platforms) \citep{filliung2024augmented}, and confined space navigation \citep{polzin2024into}. The presence of a physical tether enables continuous power supply and reliable data transmission \citep{mcgarey2017tslam}, which significantly enhances the robot's endurance and communication stability—critical factors in missions where failure is not an option.%Tethered robots are particularly valuable in complex and hazardous environments such as post-disaster rescue operations \citep{pratt2008use}, underground exploration, underwater inspections \citep{filliung2024augmented}, and confined space navigation \citep{polzin2024into}.

However, the very feature that provides these benefits also introduces significant challenges. The limited length and potential entanglement of the tether impose strict topological constraints on the robot's motion planning \citep{chen2013history}. Unlike free-moving agents, the path of a tethered robot must not only avoid obstacles but also ensure that the tether remains unobstructed and within its maximum allowable length throughout the entire trajectory. These constraints dramatically reduce the feasible solution space and require specialized planning techniques.

Path planning, as a fundamental problem in robotics and automatic control, aims to find a collision-free and dynamically feasible continuous trajectory connecting the initial and goal states within the configuration space. Although many mature algorithms have been developed for general-purpose mobile robots \citep{liu2023path}, including sampling-based methods \citep{gammell2021asymptotically} and graph-search techniques, they often fail to perform effectively when directly applied to tethered robots. The inherent topological constraints introduced by the tether render many traditional planners inefficient or even inapplicable, necessitating the development of novel theoretical frameworks and algorithmic solutions tailored specifically to this class of constrained robotic systems.

In this study, we focus on the path planning problem for tethered robots operating in a known 2D workspace with static obstacles, under the constraint of limited tether length. Given the intrinsic topological relationship between the robot's motion path and its evolving tether configuration \citep{munkres2018elements}, we argue that efficiently identifying all potential feasible configurations at key positions is critical to solving a wide range of planning tasks involving tethered systems. To this end, this work addresses a fundamental and challenging problem: how to rapidly compute the complete set of optimal and feasible configurations for a tethered robot at every location in the environment, given a fixed anchor point and maximum tether length. Once this configuration space is precomputed, it can be leveraged to enable fast and efficient solutions to various downstream planning problems involving tethered robots.

Homotopy invariants, as a fundamental concept in topology, have become central to the study of path planning for tethered robots \citep{bhattacharya2018path}. Their primary role lies in distinguishing the homotopic relationships among different curves (e.g., paths and tether configurations) in the workspace.  Existing approaches often index robot configurations by combining the robot's position with homotopy invariants of the tether curve, such as $h$-signatures \citep{bhattacharya2012topological,bhattacharya2015topological} and winding numbers \citep{pokorny2016high,chipade2024withy}. Prior to executing classical optimal graph search algorithms, these methods typically require precomputing the workspace into a so-called Homotopy-Augmented Graph (HAG) \citep{kim2014path,mccammon2017planning}, where all nodes represent robot configurations that are feasible under tether length constraints. Although such topological representations of configuration space are mathematically elegant, they often suffer from significant computational inefficiencies. The exhaustive preprocessing of the entire workspace can be prohibitively expensive, as the number of valid configurations grows dramatically—far exceeding the number of collision-free positions in the 2D environment. Moreover, a large portion of these configurations may never be used by the planner, leading to unnecessary resource consumption \citep{yang2022efficient}. Additionally, these methods heavily rely on the discretization level of the workspace: excessive discretization leads to combinatorial explosion during preprocessing, while insufficient discretization compromises spatial resolution and may fail to capture critical configurations accurately.

To address these challenges, this work builds upon our previously proposed homotopy invariant framework based on Convex Dissection Topology (CDT) \citep{liu2023homotopy, liu2023homotopyarXiv}, integrating concepts from algebraic topology and geometric graph theory. We introduce a novel algorithm, termed CDT-TCS (Convex Dissection Topology-based Tethered Configuration Search), which enables the rapid identification of the complete set of optimal and feasible configurations for a tethered robot at every location in the environment. On this basis, we further explore several extended applications and propose corresponding algorithms: CDT-TPP, CDT-TMV, and CDT-UTPP. These extensions efficiently address key problems in optimal tethered path planning, multi-goal visiting, and general distance-optimal path planning, respectively. The main contributions of this study are as follows:%Based on this foundation
\begin{enumerate}
    \item {We further refine and extend our previously proposed homotopy invariant theoretical framework based on Convex Dissection Topology, applying it to the field of path planning for tethered robots. This overcomes the limitations of traditional topological representations.}
    \item {We propose the CDT-TCS algorithm—a novel optimal configuration search method for tethered robots—capable of efficiently determining the complete set of optimal feasible configurations at any position in the space under given tether length constraints.}
    \item {Building upon CDT-TCS, we extend our research to address several key application problems, including optimal tethered path planning, tethered multi-goal visiting, and general distance-optimal path planning. For each of these problems, we introduce efficient solutions: CDT-TPP, CDT-TMV, and CDT-UTPP, respectively.}
    \item {All theoretical results and propositions presented in this paper are rigorously proven and thoroughly discussed, providing a new perspectives for future research in this area.}% solid foundation and
    \item {Extensive simulations and real-world experiments are conducted to validate the effectiveness and advantages of the proposed theoretical framework and algorithms.}
\end{enumerate}

This paper is structured as follows. Section 2 provides a comprehensive review of related literature in the field. Section 3 introduces the necessary mathematical definitions, problem formulations, and our previously proposed homotopy invariant framework that serves as the theoretical foundation for this study. Section 4 presents the theoretical analysis and proofs related to optimal tethered configuration search, and introduces the CDT-TCS algorithm. Building upon the results of Section 4, Section 5 explores several extended applications and proposes the corresponding algorithms: CDT-TPP, CDT-TMV, and CDT-UTPP. Section 6 evaluates the proposed algorithms through extensive simulations, comparing their performance against state-of-the-art planners to demonstrate their efficiency. Subsequently, Section 7 deploys the proposed algorithms on real robotic platforms and conduct physical experiments to further validate their practicality and engineering value. Finally, Section 8 concludes the paper and discusses potential directions for future research.

\section{Related Work}

\subsection{Optimal Path Planning}
Graph-based search and sampling-based methods are two of the most widely adopted paradigms for path planning in robotics. Graph-search algorithms, such as Dijkstra's algorithm \citep{dijkstra1959note} and A* \citep{hart1968formal}, address path planning problems by applying graph theory to discretized state spaces. These methods rely heavily on the resolution of the discretization and suffer from exponential increases in computational cost as the problem scale grows, making them less suitable for high-dimensional or continuous environments. Sampling-based algorithms, such as Rapidly-exploring Random Trees (RRT) \citep{lavalle2001randomized} and Probabilistic Roadmaps (PRM) \citep{kavraki1996probabilistic}, circumvent the limitations of grid-based discretization by randomly sampling the continuous configuration space. These approaches are particularly effective in high-dimensional spaces and can efficiently find feasible paths in complex environments. Moreover, asymptotically optimal variants, such as RRT* and PRM* \citep{karaman2011sampling}, are probabilistically complete and asymptotically optimal. This means that as the number of samples approaches infinity, the probability of finding the optimal solution converges to one. However, a well-known drawback of these methods is their slow convergence to the optimal solution, often requiring a large number of iterations before producing near-optimal paths.

In response to the aforementioned challenges, several mainstream improvements at the planner level have emerged in recent years: (i) Leveraging information from currently known paths to narrow the sampling region, thereby focusing exploration on areas that are more likely to contain improved solutions \citep{gammell2018informed,choudhury2016regionally,chintam2024informed}. (ii) Employing batch sampling instead of incremental expansion, combined with heuristic ranking or lazy evaluation strategies to enhance the overall search efficiency \citep{janson2015fast,gammell2020batch,strub2022adaptively}. (iii) Applying post-processing optimization techniques to newly generated paths (or subpaths) in order to rapidly improve path quality and accelerate convergence toward near-optimal solutions \citep{ratliff2009chomp,jeong2019quick,miao2021path,tan2025manipulability}. Additionally, with the rapid advancement of artificial intelligence, (iv) learning-based planning methods, such as imitation learning \citep{yang2025motor,yang2025constrained,tan2024metaheuristic} and reinforcement learning \citep{chen2022deep,vashisth2024deep}, have demonstrated powerful performance in handling complex and dynamic environments. However, these methods typically require substantial computational resources for both training and execution, and they often lack theoretical guarantees of optimality. This limitation is particularly critical for safety-critical applications, such as tethered robot planning, where reliability and efficiency are paramount.

In this study, we primarily adopt the third paradigm (iii) to efficiently obtain optimal paths (or configurations). Specifically, our approach first identifies the homotopy class that contains the optimal path, along with any representative path within that class. Subsequently, a geometry-driven optimization process is applied to this path, enabling it to converge rapidly to the optimal path within the given homotopy class.

\subsection{Hierarchical Path Planning Based on Spatial Abstraction}

Hierarchical path planning based on spatial abstraction is a widely adopted strategy to enhance planning efficiency in complex environments. This approach improves scalability by abstracting the original high-dimensional or fine-grained state space into a coarser, lower-dimensional representation, enabling planning to be performed at multiple levels of abstraction and ultimately synthesizing a complete path \citep{sacerdoti1974planning}. Depending on the abstraction methodology employed, spatial abstraction-based planning can be broadly categorized into the following three classes: (i) Grid-based Spatial Abstraction methods \citep{han2022grid,sun2024accelerated,voros2001low} discretize the continuous environment into grids (e.g., 2D occupancy grids or octrees), where each cell represents an abstract state. High-level planners operate on coarse grids to generate rough paths, while low-level planners refine these paths using finer-resolution grids. (ii) Topological Abstraction methods \citep{ge2011simultaneous,an2024etpnav,chen2021topological,sun2022multi,yang2024rampage} extract key topological features of the environment—such as corridors, rooms, and passages in real-world settings, or subgraphs and subtrees in graph representations—and use them as high-level nodes to form a topological map for global planning. (iii) Geometric Feature-based Abstraction methods \citep{lozano1979algorithm,chi2021generalized,wen2024g,liu2025cdrt} construct hierarchical representations based on geometric properties such as obstacle distribution and free-space shape. These approaches are particularly effective at identifying narrow passages and improving planning success rates. Moreover, they allow planners to exploit geometric information during path generation, significantly enhancing both planning efficiency and path quality.

In this study, we adopt an environment representation based on convex decomposition and build upon it to construct a high-level topological abstraction (Subsection 3.3). This enables efficient hierarchical path planning that leverages both geometric and topological insights.

\subsection{Path Planning Considering Topology}

Various topological tools have been introduced into robotic motion planning to address challenges such as computing distinct non-homotopic paths, enforcing constraints within specific homotopy classes, and guiding the motion of tethered robots. These approaches leverage topological invariants to enhance planning efficiency and ensure path feasibility under complex constraints.

In \citep{pokorny2016high}, a topology-driven motion planning framework was proposed that combines 2D projection of high-dimensional configuration spaces with persistent homology analysis. By projecting the high-dimensional space onto a 2D plane and identifying key topological features (winding centers) the method enables efficient exploration of diverse homotopy classes without explicitly modeling the full-dimensional space. \citep{bhattacharya2012topological} explored topological constraints in search-based planning by introducing the concept of $h$-signatures, which are homology-based descriptors for characterizing path topology. In 2D and 3D environments, these signatures were derived using principles from complex analysis and electromagnetic theory, respectively. Recently, the authors of \citep{bhattacharya2012topological} further proposed a neighborhood-incremental graph structure in \citep{sahin2024topo}, allowing the computation of $k$ geodesic paths that differ not only in cost but also in their topological properties. This approach avoids the need for complex geometric constructions and is not limited to predefined homotopy classes. Another notable contribution is presented in \citep{yang2024tree}, where a novel tree-like structure was introduced to represent homotopy-enriched graphs in 2D environments. This structure facilitates topological reasoning while leveraging geometric information, enabling the planner to incrementally explore all possible topological path variations from the start point until $k$ optimal non-homotopic paths are identified.

In our previous work \citep{liu2023homotopy}, we proposed a topology-aware graph construction method based on convex dissection, offering a compact representation of environmental connectivity. Based on this structure, we introduced a new homotopy invariant termed CDT Encoding. Unlike $h$-signatures and winding numbers methods, CDT Encoding not only captures topological relationships between paths but also actively guides the planning process through its underlying convex dissection graph. For instance, in CDT-RRT* \citep{liu2023homotopy}, we designed an efficient sampling strategy based on CDT Encoding that encourages active exploration of unvisited homotopy classes, combined with an elastic band optimization to extract the shortest path within each class. In CDT-Dijkstra \citep{liu2023cdt}, we further demonstrated how CDT Encoding can be used to label and direct the expansion of homotopy-equivalent paths, enabling fast planning of globally optimal paths for all points in 2D continuous space.

\subsection{Tethered Robot Path Planning}

Early research on path planning tasks for tethered robots primarily focused on multi-robot planning in environments without independent obstacles \citep{sinden1990tethered,hert1995moving,hert2002motion,hert1996ties}. In these studies, each robot was tethered to a point on the boundary of the environment. The tether was assumed to be non-crossable, meaning that as robots moved toward their target positions, their tethers would bend upon encountering other robots. From a topological perspective, this problem can be formulated as a classical Planar Graph Embedding problem \citep{gross2001topological}. Specifically, by treating the anchor points and target positions of the robots as vertices, and the potential paths between them along with the environmental boundaries as edges, the problem reduces to determining whether the resulting graph can be embedded in a plane such that no edges intersect. The solution methods for such problems are well-established in topology and have been implemented in various commercial applications. For instance, several popular puzzle games, such as `Flow Free' and `Same Color: Connect the Dots', are based on the principles of Planar Graph Embedding. These games require players to connect pairs of points in the environment without crossing lines, effectively simulating the constraints imposed by tethered robots in simplified environments.

The development of planning algorithms for single tethered robots typically focuses on guiding the robot to navigate around obstacles and reach the target while satisfying tether length constraints \citep{cao2023neptune}. In general planar environments, determining the optimal motion of a tethered robot hinges on exploring its valid configuration space. In \citep{kim2014path}, a homotopy-augmented graph was constructed using a fixed-radius wavefront propagation algorithm in a grid-based environment to represent the configuration space of the tethered robot. Optimal tethered paths were then searched within this graph. In subsequent work, the authors eliminated the pre-construction step by leveraging homotopy-based heuristics combined with a multi-heuristic A* algorithm \citep{kim2015path}. However, the multi-heuristic A* approach achieved near-optimal solutions but could not guarantee global optimality. In addition to discretizing the environment into grids, sampling-based representations of the workspace have also been explored \citep{wang2018topological}. Building on these foundational works, recent years have seen the emergence of a series of novel planning problems and intriguing research directions related to tethered robots \citep{mccammon2017planning,mechsy2017novel,shapovalov2020exploration}.

\section{Previews}
\subsection{Definitions}
\begin{definition}[Path and Path Space]
	\label{defPath}
	Let $X$ be a continuous space, and the path in $X$ is a continuous function $\sigma:I\to X$ from the interval $I=[0,1]$ to $X$. For the subpaths we use $\sigma_{[t_1,t_2]}$, that is:
	\begin{equation}
		\label{eqSubPath}
		\begin{aligned}
			 & \sigma_{[t_1,t_2]}(t) = \sigma(t_2t - t_1t + t_1), \\
			 & \begin{array}{r@{\quad}r@{}l@{\quad}l}
				   \text{s.t.  }t_1, t_2 \in I.
			   \end{array}
		\end{aligned}
	\end{equation}
	The set of all paths in $X$ are denoted as path space $P(X)$. The set of all paths starting at $x_0$ and ending at $x_1$ are denoted as $P(X;x_0,x_1)$. Furthermore, for a subset $X' \subset X$ the set of all paths starting at $x$ and ending in $X'$ is defined as:
	\begin{equation}
		\label{eqPathSpase2}
		P(X;x, X') = \bigcup_{x'\in X'} P(X;x, x').
	\end{equation}
\end{definition}

\begin{definition}[Straight-line Path]
	\label{defLine}
	The symbol $l$ is used to denote the straight-line path between two points. For a straight-line path with start point $x_1$ and end point $x_2$, it is denoted as $l_{x_1}^{x_2}$, defined as:
	\begin{equation}
		\label{eqLine}
		l_{x_1}^{x_2}(t) = x_1(1-t) + x_2t,
	\end{equation}
	where $t \in I$.
\end{definition}

\begin{definition}[Product of Paths]
	\label{defProPath}
	For $\sigma_1 \in P(X;x_0,x_1)$, $\sigma_2 \in P(X;x_1,x_2)$, let $\sigma_1 * \sigma_2 \in P(X;x_0,x_2)$ denote their product (concatenation),
	\begin{equation}
		\label{eqCoP}
		{\sigma_1 * \sigma_2(t)} =
		\begin{cases}
			\sigma_1(2 t),   & t \in[0,0.5),  \\
			\sigma_2(2 t-1), & t \in[0.5,1].
		\end{cases}
	\end{equation}
	\textup{For convenience, in this study, we extend the definition of the equivalence relation `$=$' in $(P(X), *)$ as follows:}% the Groupoid

	Let $\sigma_1, \sigma_2 \in P(X)$. If there exists a continuous mapping $\varphi : I \to I$ with $\dot{\varphi} \geq 0$ such that for all $t \in I$, $\sigma_1 (t) = \sigma_2(\varphi(t))$, then $\sigma_1$ and $\sigma_2$ are said to be equivalent path, denoted as $\sigma_1 = \sigma_2$. $P(X)$, $*$ and $=$ have the following properties:

	\begin{enumerate}
		\item[1)] {\textbf{Associativity:} If $\sigma_1 * \sigma_2$ and $\sigma_2 * \sigma_3$ are meaningful, then
		      \begin{equation}
			      (\sigma_1 * \sigma_2) * \sigma_3 = \sigma_1 * (\sigma_2 * \sigma_3).
		      \end{equation}
		      }
		\item[2)] {\textbf{Identity:} Given $x\in X$, let $e_x: I\to x$. If $\sigma\in P(X;x_0,x_1)$, then
		      \begin{equation}
			      e_{x_0} * \sigma = \sigma\quad \text{and} \quad \sigma * e_{x_1} = \sigma.
		      \end{equation}
		      }
	\end{enumerate}
\end{definition}

\begin{property}
	\label{th_eqRelation} %equivalence relation $=$
	The `$=$' introduced by \textbf{Definition~\ref{defProPath}} is an equivalence relation, i.e., `$=$' satisfies the following conditions:
	\begin{enumerate}
		\item[1)] {\textbf{Reflexivity:} $\forall \sigma \in P(X)$, $\sigma=\sigma$.}
		\item[2)] {\textbf{Symmetry:} $\forall \sigma_1,\sigma_2 \in P(X)$, $\sigma_1=\sigma_2$ if and only if $\sigma_2=\sigma_1$.}
		\item[3)] {\textbf{Transitivity:} $\forall \sigma_1,\sigma_2,\sigma_3 \in P(X)$, if $\sigma_1=\sigma_2$ and $\sigma_2=\sigma_3$ then $\sigma_1=\sigma_3$.}
	\end{enumerate}
\end{property}
\begin{proof}
	See Appendix 3.1.\qed
\end{proof}

\begin{definition}[Cost of Paths]
	\label{defCostPath}
	The cost function $c:P(X)\to [0,\infty)$ is defined as follows:
	\begin{equation}
		\label{eqPCost}
		c(\sigma)=\lim _{n \to \infty}\sum_{\tau=1}^{n}\left\lVert  \sigma\left(\frac{\tau}{n}\right)-\sigma\left(\frac{\tau-1}{n}\right)\right\rVert,
	\end{equation}
	where $\|\cdot\|$ is the 2-Norm in Euclidean space. Additionally, the cost function $c$ satisfies the following property:
	\begin{equation}
		\label{eqPPCost}
		c(\sigma_1*\sigma_2) = c(\sigma_1)+c(\sigma_2).
	\end{equation}
\end{definition}

\begin{figure*}[htbp]
	\centering
    \begin{minipage}[c]{5.4in}
        \subfloat[]{\includegraphics[width=\linewidth]{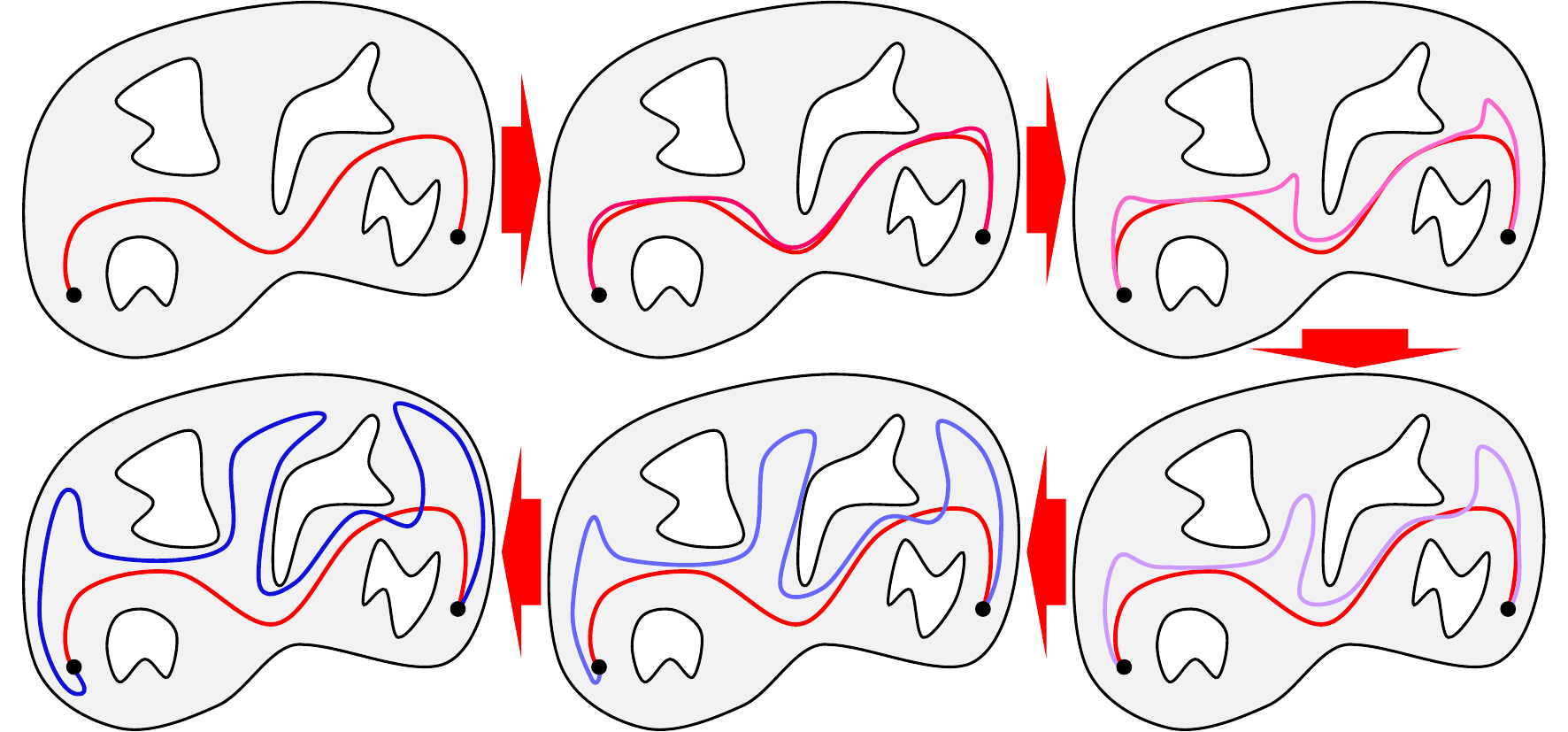}}
    \end{minipage}
    \hfil
    \begin{minipage}[c]{1.25in}
        \subfloat[]{\includegraphics[width=\linewidth]{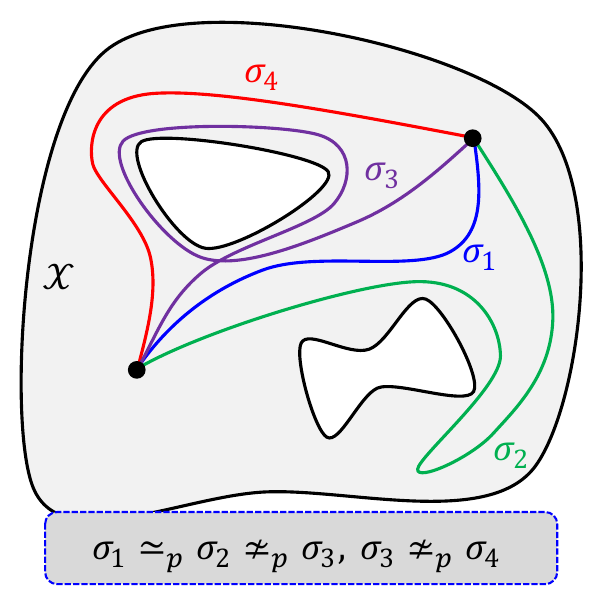}} \\
        \subfloat[]{\includegraphics[width=\linewidth]{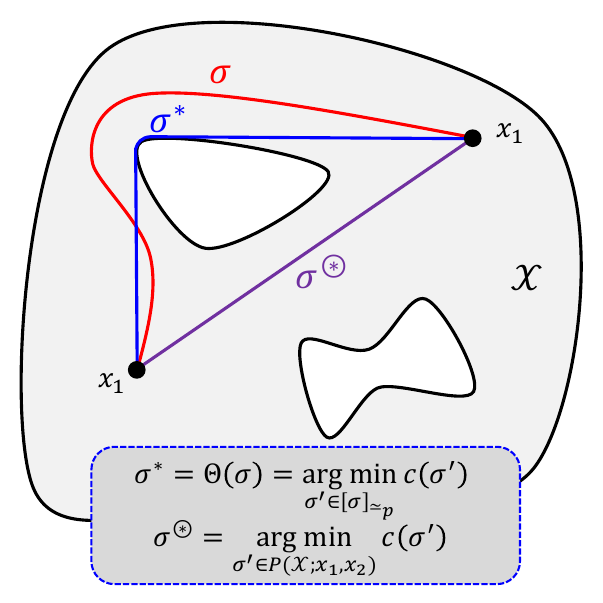}}
    \end{minipage}
	\caption{Illustration of path homotopy. (a) The red path can be continuously deformed into the blue path, indicating that the red and blue paths are homotopic. (b) Examples of homotopic and non-homotopic paths. (c) Illustration of the optimal homotopic path and the globally optimal path.}
	\label{fig_HALO}
\end{figure*}

\begin{definition}[Path Homotopy]
	\label{defPathHomotopy}
	Let $\sigma_1, \sigma_2 \in P(X;x_0,x_1)$. If there is a continuous map $F:I^2 \to X$ such that $F(t,0)=\sigma_1(t)$, $F(t,1)=\sigma_2(t)$, $F(0,\tau)=x_0$, $F(1,\tau)=x_1$, then we say $\sigma_1$ and $\sigma_2$ are path homotopic, denoted as $\sigma_1 \simeq_p \sigma_2$, as shown Fig.~\ref{fig_HALO}(a)(b). The equivalence class of a path $\sigma$ under this relation is called the homotopy class of $\sigma$, often denoted $[\sigma]_{\simeq_p}$.
\end{definition}

The groupoid $(P(X), *, \simeq_p)$ formed by this relation has the following properties:
\begin{enumerate}
	\item[1)] {\textbf{Associativity:} If $\sigma_1 * \sigma_2$ and $\sigma_2 * \sigma_3$ are meaningful, then
			\begin{equation}
				(\sigma_1 * \sigma_2) * \sigma_3 \simeq_p \sigma_1 * (\sigma_2 * \sigma_3).
			\end{equation}
			}
	\item[2)] {\textbf{Identity:} Given $x\in X$, let $e_x: I\to x$. If $\sigma\in P(X;x_0,x_1)$, then
			\begin{equation}
				e_{x_0} * \sigma \simeq_p \sigma\quad \text{and} \quad \sigma * e_{x_1} \simeq_p \sigma.
			\end{equation}
			}
	\item[3)] {\textbf{Inverse:} Given $\sigma \in P(X;x_0,x_1)$, we define the inverse of $\sigma$ as $\overline{\sigma}(t) = \sigma(1-t)$, then
			\begin{equation}
				\sigma * \overline{\sigma} \simeq_p e_{x_0} \quad \text{and} \quad \overline{\sigma} * \sigma \simeq_p e_{x_1}.
			\end{equation}
			}
\end{enumerate}

\begin{definition}[Optimal Homotopic Path\footnote{In conventional path planning research, such paths are often referred to as locally optimal paths.}]
	\label{defOptHomotopyPath}
	For a path $\sigma \in P(X)$, the optimal homotopic path $\sigma^*$ is defined as the path with the minimum cost within its homotopy class $[\sigma]_{\simeq_p}$, i.e.,
	\begin{equation}
		\sigma^* =  \underset{\sigma' \in [\sigma]_{\simeq_p}}{\mathrm{arg\,min}}\ c(\sigma').
	\end{equation}
	For convenience, we define a mapping $\Theta: P(X) \to P(X)$ that maps each path to its optimal homotopic path, i.e.,
	\begin{equation}
		\Theta(\sigma) =  \underset{\sigma' \in [\sigma]_{\simeq_p}}{\mathrm{arg\,min}}\ c(\sigma').
	\end{equation}
\end{definition}

\begin{definition}[Homotopy Invariant of Path]
	\label{defHomotopyInv}
	$H$ is a homotopy invariant of paths in $P(X)$ if and only if for any $\sigma_1$ and $\sigma_2 \in P(X)$, $H(\sigma_1) = H(\sigma_2) \Leftrightarrow \sigma_1 \simeq_p \sigma_2$ and $H(\sigma_1) \neq H (\sigma_2) \Leftrightarrow \sigma_1 \not\simeq_p \sigma_2$.
\end{definition}

\subsection{Problem Formulation}
This study aims to investigate the path planning problem for tethered agents operating within a bounded two-dimensional space. The problem can be formulated as follows:

Let a simply connected $X \subset \mathbb{R}^2$ represent the environmental space, where the obstacle and free spaces are denoted as $X_{obs}$ and $X_{free}=X\backslash X_{obs}$, respectively. The maximum length of the tether connecting the agent to its anchor point is denoted by $\zeta$. Considering that the tether can be viewed as a continuous function connecting the tether anchor point $x_\star$ to the agent's position $x_{agent}$, the feasible configuration space of the tethered agent can be represented in a form similar to a path space, as follows:
\begin{equation}
	\label{eqDefConfigure}
	\begin{aligned}
		\mathcal{C}_{x_\star ,\zeta} = \left\{\sigma \in P(X_{free};x_\star,X_{free}) \middle| c(\sigma)\leq \zeta \right\}.
	\end{aligned}
\end{equation}
To distinguish between the agent's movement path and its configuration, we use the symbol $\varsigma \in \mathcal{C}_{x_\star ,\zeta}$ to represent the feasible configuration of the tethered agent. Here, $\varsigma(0) = x_\star$ and $\varsigma(1) = x_{agent}$.
\begin{remark}
	\label{re_PeqC}
	In this study, based on the definition of the tethered robot configuration, its mathematical essence is also a special type of path. Therefore, any theorem or statement related to paths in this paper is equally applicable to configurations.
\end{remark}

\begin{assumption}
	\label{ass1}
	The tether can be dragged or tightened freely within the free space $X_{free}$ without any resistance.
\end{assumption}
It is important to note that the deformation or tightening process of the tether\footnote{During this process, the tether must not break or splice.} can be viewed as a homotopy transformation as defined in \textbf{Definition~\ref{defPathHomotopy}}. This implies that when the agent is stationary, different configurations of the tether resulting from its deformation belong to the same homotopy class. Based on this, we can define the concept of an optimal homotopic configuration.
\begin{definition}[Optimal Homotopic Configuration]
	\label{defOHConfiguration}
	For any configuration $\varsigma$ of a tethered agent, its optimal homotopic configuration, denoted as $\varsigma^*$, is defined as:
	\begin{equation}
		\label{eqOHConfiguration}
		\varsigma^* = \Theta(\varsigma).
        % \underset{\varsigma' \in [\varsigma]_{\simeq_p}}{\mathrm{arg\,min}}\ c(\varsigma').
	\end{equation}
\end{definition}

The primary problem considered in this study is as described in \textbf{Problem~\ref{proTCS}}. Additionally, based on this research, we further explore extended applications of \textbf{Problem~\ref{proTCS}} (\textbf{Problem~2-4}). Below, we will introduce \textbf{Problem~1-4} with reference to Fig.~\ref{fig_Pro}.

\begin{figure*}[!t]
	\centering
	\subfloat[]{\includegraphics[width=1.65in]{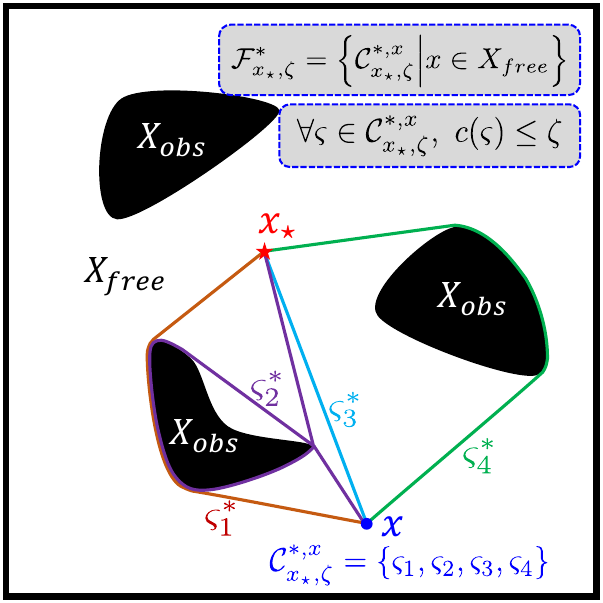}}
	\hfil
	\subfloat[]{\includegraphics[width=1.65in]{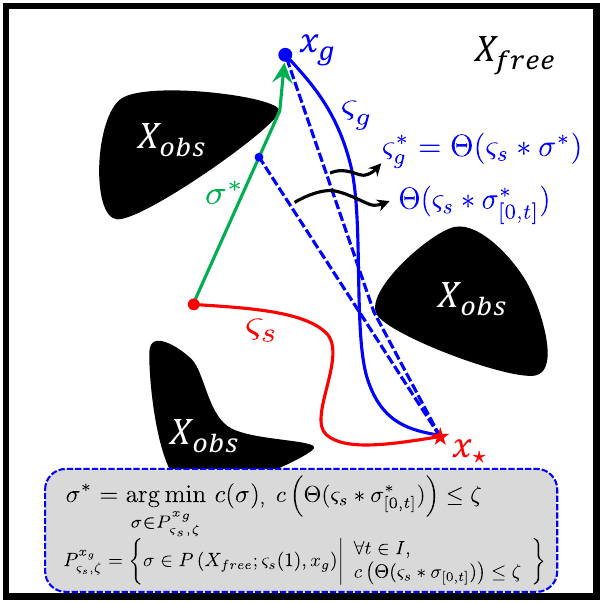}}
	\hfil
	\subfloat[]{\includegraphics[width=1.65in]{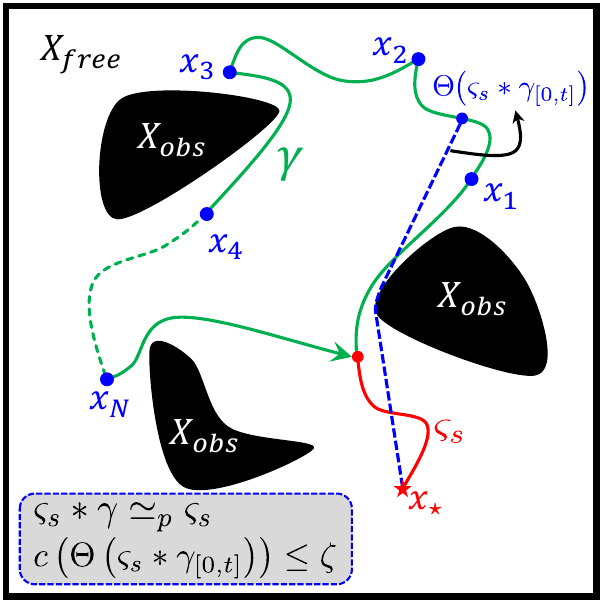}}
	\hfil
	\subfloat[]{\includegraphics[width=1.65in]{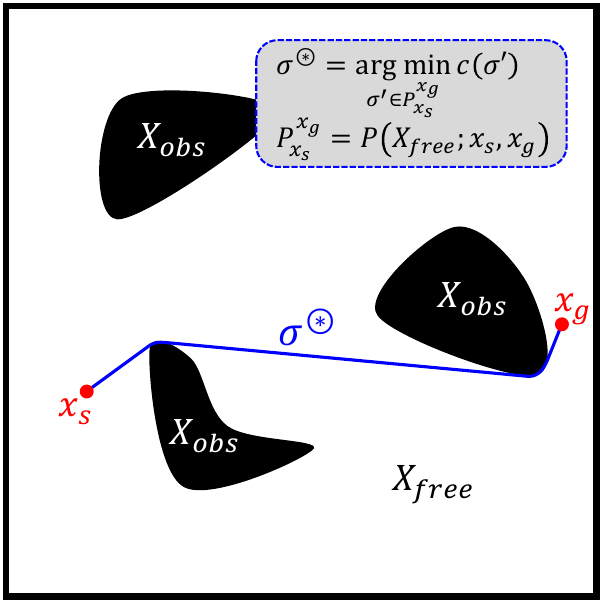}}
	\caption{Illustration of the four problems addressed in this study. (a) Optimal Tethered Configuration Search. (b) Optimal Tethered Path Planning. (c) Tethered Multi-Goal Visiting. (d) Fast Optimal Untethered Path Planning.}
	\label{fig_Pro}
\end{figure*}

\begin{problem}[Optimal Tethered Configuration Search, TCS]
\label{proTCS}
For a tethered robot with a tether length $\zeta$, given an anchor point $x_\star$ and a position $x$,  the objective is to quickly determine the set of all feasible optimal configurations $\mathcal{C}^{*,x}_{x_\star ,\zeta}$ of the tethered robot at the position, defined as:
\begin{equation}
	\label{eqProb1}
	\mathcal{C}^{*,x}_{x_\star ,\zeta} = \left\{ \Theta(\varsigma) \middle| \varsigma\in \mathcal{C}_{x_\star ,\zeta}, \varsigma(1)=x\right\}.
\end{equation}
While the above focuses on determining the optimal configuration set for a specific position $x$, the emphasis of this study is on how to efficiently find the optimal configuration set for the tethered agent for all points in the space at once, given a fixed anchor point $x_\star$. Specifically, for a given $x_\star$ and $\zeta$, determine a family of sets containing the optimal configurations for all possible target positions, denoted as $\mathcal{F}^*_{x_\star ,\zeta}$:
\begin{equation}
	\label{eqProb1Family}
	\mathcal{F}^*_{x_\star ,\zeta} = \left\{ \mathcal{C}^{*,x}_{x_\star ,\zeta} \middle| x\in X_{free}\right\}.
\end{equation}
\end{problem}

\begin{remark}
	\label{re_proTCS}
	The description of \textbf{Problem~\ref{proTCS}} is not strictly feasible, as there are infinitely many points in a continuous space. This means that the family $\mathcal{F}^*_{x_\star ,\zeta}$ would contain an infinite number of elements, making it impossible for any device to store the complete $\mathcal{F}^*_{x_\star ,\zeta}$. Therefore, an algorithm is considered to satisfy the problem's requirements if, after processing the anchor point $x_\star$, it can return the optimal configuration set $\mathcal{C}^{*,x}_{x_\star ,\zeta}$ for any point $x$ extremely quickly.
\end{remark}

\begin{problem}[Optimal Tethered Path Planning, TPP]
\label{proTPP}
Given an starting configuration $\varsigma_s$ and a goal position $x_g$, the TPP problem aims to determine the shortest motion path $\sigma^*$ such that:
\begin{equation}
	\label{eqProb2}
	\sigma^*  = \underset{\sigma \in P^{x_g}_{\varsigma_s,\zeta}}{\mathrm{arg\,min}}\ c(\sigma),
\end{equation}
where $P^{x_g}_{\varsigma_s,\zeta}$ denotes the set of all feasible paths from $\varsigma_s(1)$ to $x_g$, defined as:
\begin{align}
	\label{eqProbPSet}
	P^{x_g}_{\varsigma_s,\zeta} = 
  \left\{ \scalebox{0.9}{$\sigma \in P\left(X_{free};\varsigma_s(1),x_g\right)$} \middle| \begin{array}{l}
		\forall t\in I, \\
		\scalebox{0.9}{$c\left(\Theta(\varsigma_s * \sigma_{[0,t]})\right) \leq \zeta$}
	\end{array} \right\}.
\end{align}
where the constraint on the right-hand side in equation ensures that the tether length constraint is satisfied throughout the motion of the agent along the path $\sigma$.
\end{problem}

\begin{problem}[Tethered Multi-Goal Visiting, TMV]
\label{proTMV}
Given the starting configuration $\varsigma_s$ as the home configuration and a sequence of goal positions $x_1,x_2,\dots,x_N$, the TMV problem aims to determine a motion loop $\gamma$ that satisfies the following conditions:
\begin{enumerate}
	\item[1)] {
	      The agent must sequentially visit each goal position and return to the starting position $\varsigma_s(1)$. The loop $\gamma$ can be expressed as:
	      \begin{equation}
		      \label{eqProbTMV1}
		      \gamma=\sigma_{s,1}*\sigma_{1,2}*\sigma_{2,3}*\dots*\sigma_{N,s},
	      \end{equation}
	      where $\sigma_{i,i+1} \in P(X_{free};x_i,x_{i+1})$, and $\sigma_{s,1}(0) = \sigma_{N,s}(1) = \varsigma_s(1)$.
	      }
	\item[2)] {
          The loop $\gamma$ must satisfy the tether length constraint throughout the motion:
          \begin{equation}
              \label{eqProbTMV2}
              \forall t\in I,\ c\left(\Theta\left(\varsigma_s * \gamma_{[0,t]}\right)   \right)  \leq \zeta.
          \end{equation}
          }
	\item[3)] {
	      After completing the motion along $\gamma$, the configuration of the agent must remain homotopic to the home configuration:
	      \begin{equation}
		      \label{eqProbTMV3}
		      \varsigma_s * \gamma \simeq_p \varsigma_s.
	      \end{equation}
	      }
\end{enumerate}
Let $P_{\gamma}$ denote the set of all feasible loops for the TMV problem. Then, optimal TMV problem aims to determine the shortest path within $P_{\gamma}$, i.e.,
\begin{equation}
	\label{eqProbOptTMV}
	\gamma^* = \underset{\gamma \in P_{\gamma}}{\mathrm{arg\,min}}\ c(\gamma).
\end{equation}
\end{problem}

\begin{problem}[Fast Optimal Untethered Path Planning, UTPP]
\label{proUTPP}
Given an starting position $x_s$ and a goal position $x_g$, the UTPP problem aims to determine the shortest motion path\footnote{In this study, $\sigma^*$ generally denotes the optimal homotopic path (locally optimal path) within a certain homotopy class, while $\sigma^\circledast$ represents the globally optimal path between two points without considering tether constraints.} $\sigma^\circledast$ for a classical mobile robot traveling from $x_s$ to $x_g$. This can be expressed as:
\begin{equation}
	\label{eqProb4}
	\sigma^\circledast = \underset{\sigma \in P^{x_g}_{x_s}}{\mathrm{arg\,min}}\ c(\sigma),
\end{equation}
where,
\begin{equation}
	\label{eqProb4P}
	P^{x_g}_{x_s} = P(X_{free};x_s,x_g).
\end{equation}
\end{problem}

\subsection{Homotopy Invariant Based on Convex Dissection}

This study builds upon our previous work, CDT Encoding \citep{liu2023homotopy}. We provide a brief introduction to this method in this subsection.

\begin{figure}[!t]
	\centering
	\subfloat[]{\includegraphics[height=1.65in]{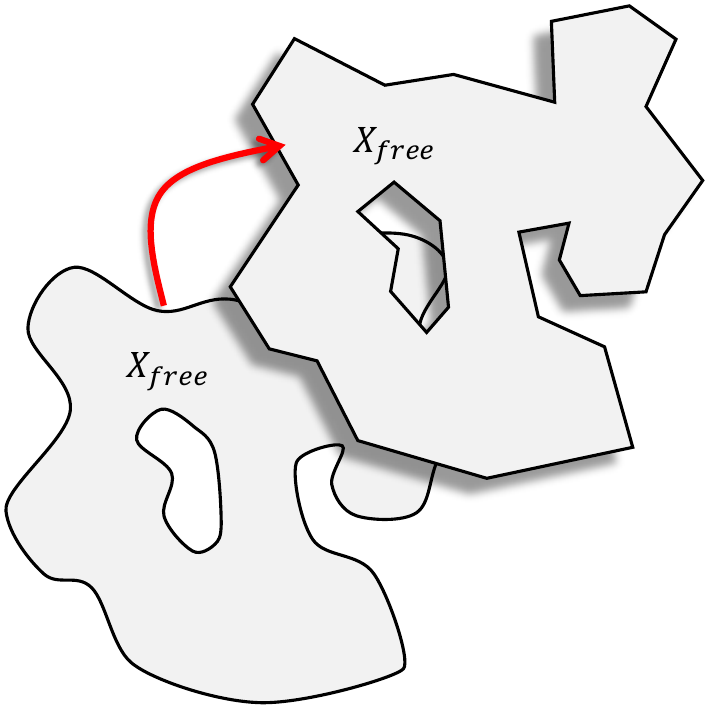}}
	\hfil
	\subfloat[]{\includegraphics[height=1.65in]{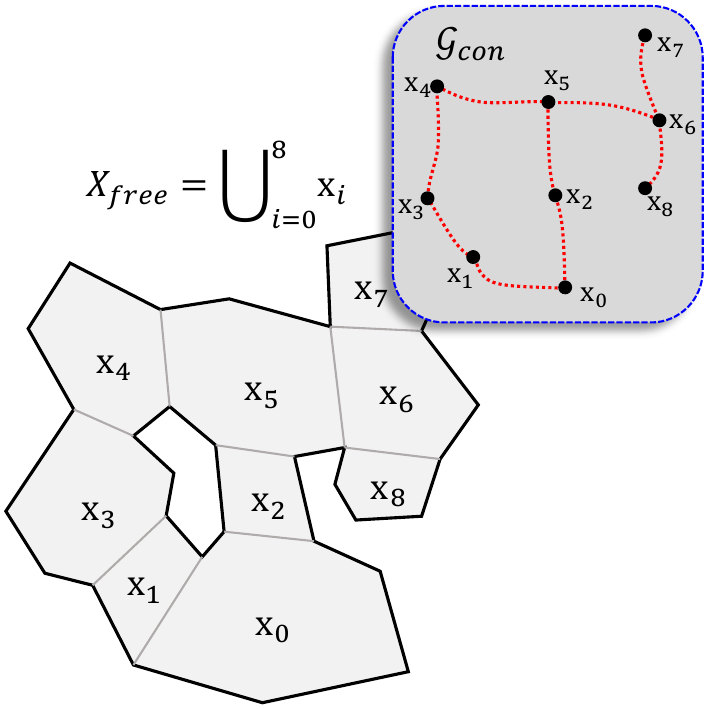}}
	\\
	\subfloat[]{\includegraphics[height=1.65in]{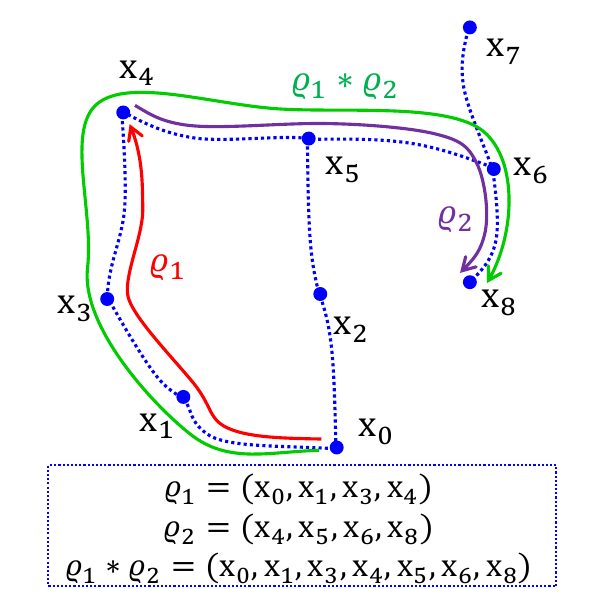}}
	\hfil
	\subfloat[]{\includegraphics[height=1.65in]{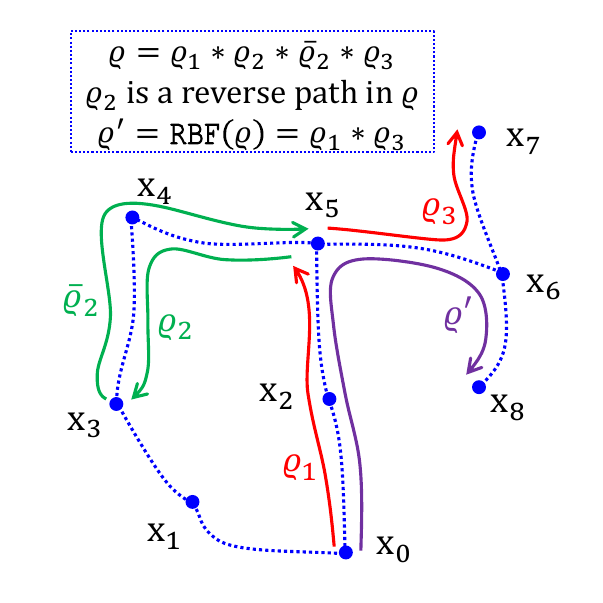}}
	\caption{(a) Illustration of fitting the free space with a simple polygon. (b) Performing convex polygon decomposition and constructing the topological graph $\mathcal{G}_{con}$. (c) Paths on $\mathcal{G}_{con}$ and the concatenation of paths. (d) Illustration of the Rollback Path in $\mathcal{G}_{con}$.}
	\label{fig_CD}
\end{figure}

As illustrated in Fig.~\ref{fig_CD}, any connected 2D free space $X_{free}$ can be approximated by fitting its boundary with a simple polygon. Furthermore, the polygon can be decomposed into a combination of convex polygons. Each convex polygon can then be treated as a node $\mathbf{x}$, while the edges of the decomposition (referred to as `cutlines') are treated as connections between these nodes $\{\mathbf{x}_i,\mathbf{x}_j\}$. The construction of the topological graph is completed by connecting these nodes and edges. This undirected topological graph is denoted as $\mathcal{G}_{con}=\{V_{con}, E_{con}\}$. Based on this, we can define paths on the graph $\mathcal{G}_{con}$ and the mapping of paths from $X_{free}$ to $\mathcal{G}_{con}$.
\begin{definition}[Path in $\mathcal{G}_{con}$]
	\label{defTPPath}
	The path $\varrho$ in $\mathcal{G}_{con}$ is a finite sequence. $\varrho \subset \mathcal{G}_{con}$, and $\forall t\in \mathbb{N}^{T_\varrho}_2$, $\varrho(t)$, $\varrho(t-1)$ are adjacent. The $\varrho(t)$ represents the $t-\mathrm{th}$ element of the sequence $\varrho$, the $T_\varrho$ represents the length of the sequence $\varrho$. For the inverse paths of $\varrho$ we use $\overline{\varrho}$, i.e.,  $\overline{\varrho}(t) = \varrho(T_\varrho - t + 1)$.
\end{definition}

\begin{definition}[Product of Paths in $\mathcal{G}_{con}$]
	\label{defProTPPath}
	\ \newline \indent
	Let $\varrho_1 \in P(\mathcal{G}_{con};\mathbf{x}_0,\mathbf{x}_1)$ and $\varrho_2 \in P(\mathcal{G}_{con};\mathbf{x}_1,\mathbf{x}_2)$. Let $\varrho_1*\varrho_2 \in P(\mathcal{G}_{con};\mathbf{x}_0,\mathbf{x}_2)$ denote their product,
	\begin{equation}
		\label{eq12}
		{\varrho_1*\varrho_2(t)} =
		\begin{cases}
			\varrho_1(t),                          & t \in \mathbb{N}_{1}^{T_{\varrho_1}},                               \\
			\varrho_2\left(t-T_{\varrho_1}\right), & t \in \mathbb{N}_{T_{\varrho_1}+1}^{T_{\varrho_2}+T_{\varrho_1}-1}.
		\end{cases}
	\end{equation}
\end{definition}

\begin{figure*}[!t]
	\centering
	\subfloat[]{\includegraphics[height=1.12in]{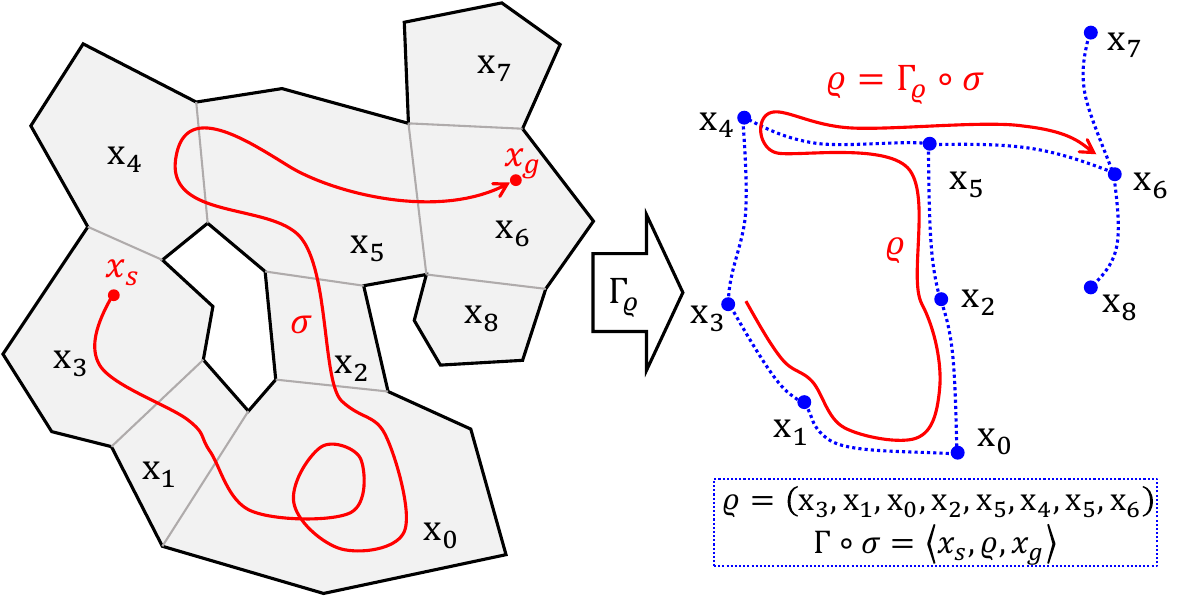}}
	\hfil
	\subfloat[]{\includegraphics[height=1.12in]{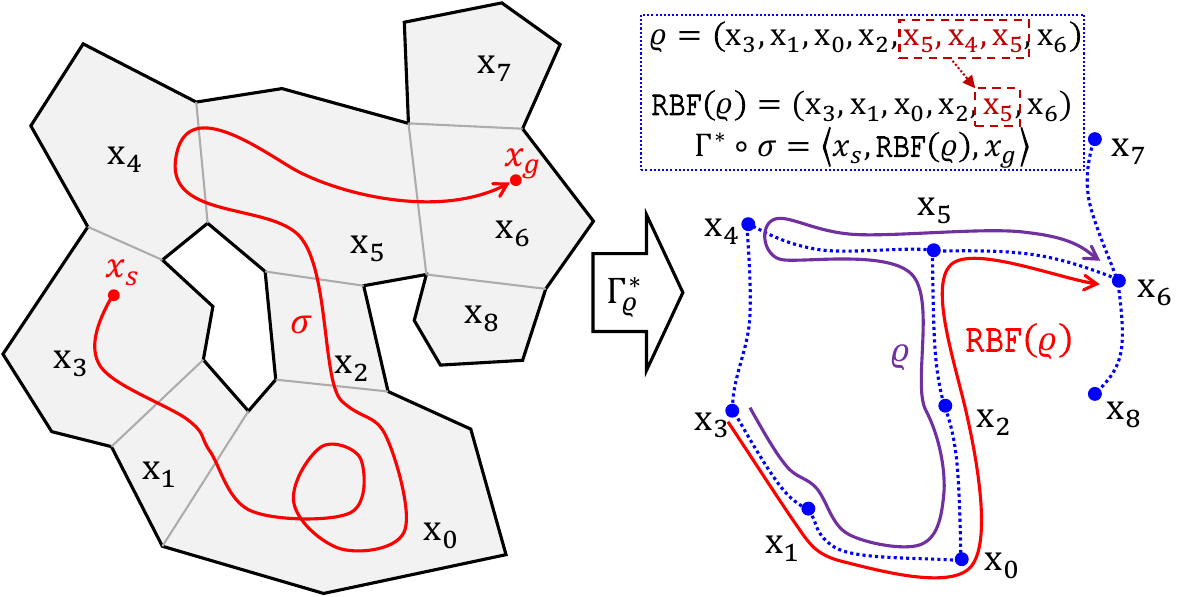}}
	\hfil
	\subfloat[]{\includegraphics[height=1.12in]{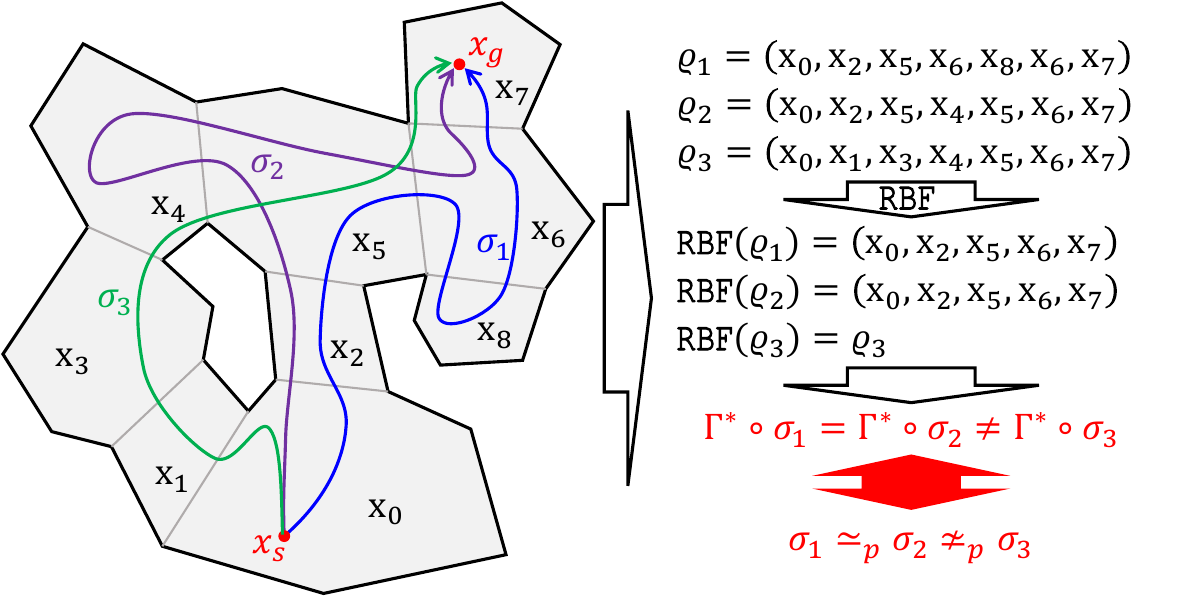}}
	\caption{(a) Illustration of the mappings $\Gamma$ and $\Gamma_\varrho$. (b) Illustration of the mappings $\Gamma^*$ (CDT-Encoder) and $\Gamma^*_\varrho$, where $\Gamma^*_\varrho \circ \sigma = {\tt{RBF}}(\Gamma_\varrho \circ \sigma)$. (c) $\Gamma^*$ is a homotopy invariant; if the CDT encodings of two paths are equal, the paths are homotopic; otherwise, the paths are not homotopic.}
	\label{fig_CDE}
\end{figure*}

\begin{definition}[Mapping $\Gamma$]
	\label{defMAPT}
	As illustrated in Fig.~\ref{fig_CDE}(a). For $\sigma \in P(X_{free})$, $\Gamma \circ \sigma = \langle \sigma(0),\varrho,\sigma(1)\rangle $ is a list consisting of three elements, where $\varrho$ is the sequence of nodes in $P(\mathcal{G}_{con})$ corresponding to the convex polygons passed by $\sigma(t)$ with $t$ in the order from $0$ to $1$. For convenience, we denote the $\Gamma_\varrho \circ \sigma = \varrho$.
\end{definition}

\begin{definition}[Rollback Path in $\mathcal{G}_{con}$]
	\label{defTPRP}
	If the path $\varrho\in P(\mathcal{G}_{con})$ can be written as $\varrho = \varrho_1*\varrho_2*\overline {\varrho}_2*\varrho_3$, where the length of $\varrho_2$ is greater than $1$, $\overline{\varrho}_2$ is the reverse sequence of $\varrho_2$, then $\varrho_2 * \overline{\varrho}_2$ is a rollback path in $\varrho$. And if there is no rollback path in $\varrho_1*\varrho_3$, we call $\varrho_1*\varrho_3$ a rollback-free path of $\varrho$, denoted as ${\tt{RBF}}(\varrho)=\varrho_1*\varrho_3$.
\end{definition}

\begin{definition}[Mapping $\Gamma^*$]%As illustrated in Fig.~\ref{fig_CDE}(b). 
	\label{defMAPTstar}
	Similar to $\Gamma \circ \sigma$, $\Gamma^* \circ \sigma$ is also a list consisting of three elements. The difference is that $\Gamma^*_\varrho \circ \sigma = {\tt{RBF}}(\Gamma_\varrho \circ \sigma)$.

	In addition, we define the product operation for $\Gamma^* \circ \sigma$ as follows:

	Let $\sigma_1 \in P(X_{free};x_0,x_1)$ and $\sigma_2 \in P(X_{free};x_1,x_2)$. The product $(\Gamma^* \circ \sigma_1)*(\Gamma^* \circ \sigma_2)$ is defined as
	\begin{equation}
		\label{eqMAPTstarProd}
		\begin{aligned}
			(\Gamma^* \circ \sigma_1)*(\Gamma^* \circ \sigma_2) & = \langle x_0,\varrho_1,x_1\rangle * \langle x_1,\varrho_2,x_2\rangle \\
			                                                    & = \langle x_0,{\tt{RBF}}(\varrho_1*\varrho_2),x_2\rangle.
		\end{aligned}
	\end{equation}
\end{definition}
It is easy to observe that $\Gamma^*$ is a homomorphism, i.e., $\Gamma^*$ satisfies the following property (referring to Fig.~\ref{fig_CDEhom}(b)):
\begin{equation}
	\label{eqMAPTstarProd2}
	\Gamma^* \circ (\sigma_1*\sigma_2) = (\Gamma^* \circ \sigma_1)*(\Gamma^* \circ \sigma_2).
\end{equation}

\begin{remark}
	Compared to the previous work \citep{liu2023homotopy}, the definitions of $\Gamma$ and $\Gamma^*$ in this paper have been slightly revised to ensure greater mathematical rigor in the subsequent formulations.
\end{remark}

We refer to the map $\Gamma^*$ as the Convex Dissection Topological Encoder (CDT-Encoder). Similar to the $h$-signature \citep{bhattacharya2012topological}, the CDT-Encoder is a homotopy invariant for 2D paths. Specifically, for any homotopy class of paths $[\sigma]_{\simeq_p}$ in $P(X_{free})$, $\Gamma^* \circ [\sigma]_{\simeq_p}$ has a unique encoding. In other words, the following relation is equivalent:
\begin{equation}
	\label{eqCDTencoding}
	\Gamma^* \circ \sigma_1 = \Gamma^* \circ \sigma_2 \Longleftrightarrow \sigma_1 \simeq_p \sigma_2.
\end{equation}

Additionally, to facilitate the subsequent discussions, we introduce the generalized inverse mapping $\overline{\Gamma^*}$ of $\Gamma^*$.
\begin{definition}[Generalized Inverse Mapping $\overline{\Gamma^*}$]
	\label{defInvMAPTstar}
	As illustrated in Fig.~\ref{fig_CDEhom}(a). For any CDT encoding $\langle x_0,\varrho,x_1\rangle$, $\overline{\Gamma^*} \circ \langle x_0,\varrho,x_1\rangle$ represents the optimal path within the homotopy class corresponding to the given CDT encoding. Specifically, we define it as:
	\begin{equation}
		\label{eqInvMAPTstar}
		\overline{\Gamma^*} \circ \langle x_0,\varrho,x_1\rangle = \underset{\sigma \in P_{\langle x_0,\varrho,x_1\rangle}}{\mathrm{arg\,min}}\ c(\sigma),
	\end{equation}
	where $P_{\langle x_0,\varrho,x_1\rangle}$ is defined as:
	\begin{equation}
		P_{\langle x_0,\varrho,x_1\rangle} = \left\{\sigma \in P(X_{free};x_0,x_1) \middle| \Gamma^* \circ \sigma = \langle x_0,\varrho,x_1\rangle \right\}.
	\end{equation}
\end{definition}
Similar to $\Gamma^*$, $\overline{\Gamma^*}$ is also homomorphism, i.e., 
\begin{align}
	&\overline{\Gamma^*} \circ (\langle x_0,\varrho_1,x_1\rangle*\langle x_1,\varrho_2,x_2\rangle) = \nonumber \\
	&\qquad\qquad(\overline{\Gamma^*} \circ \langle x_0,\varrho_1,x_1\rangle)*(\overline{\Gamma^*} \circ \langle x_1,\varrho_2,x_2\rangle).
\end{align}
The implementation of $\overline{\Gamma^*}$ will be provided in \textbf{Algorithm~\ref{alg_ShortestPath}} in the subsequent Subsection 4.2. Furthermore, for $\Gamma^*$ and $\overline{\Gamma^*}$ there is the following simple corollary.

\begin{corollary}
	\label{Cor}
	$\Theta$ is equivalent to the composite of $\overline{\Gamma^*}$ and $\Gamma^*$, i.e.,
	\begin{equation}
		\Theta = \overline{\Gamma^*} \circ \Gamma^*.
	\end{equation}
\end{corollary}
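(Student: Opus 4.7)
The plan is to unwind both sides on a generic path and show that the set minimized over in each case is identical, so the two $\arg\min$ expressions coincide.

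Fix an arbitrary $\sigma\in P(X_{free})$ with $\sigma(0)=x_0$ and $\sigma(1)=x_1$, and set $\langle x_0,\varrho,x_1\rangle := \Gamma^*\circ\sigma$. By \textbf{Definition~\ref{defInvMAPTstar}}, I would write
\begin{equation}
    \overline{\Gamma^*}\circ\Gamma^*\circ\sigma
    = \underset{\sigma'\in P_{\langle x_0,\varrho,x_1\rangle}}{\mathrm{arg\,min}}\ c(\sigma'),
\end{equation}
where $P_{\langle x_0,\varrho,x_1\rangle} = \{\sigma'\in P(X_{free};x_0,x_1)\mid \Gamma^*\circ\sigma' = \langle x_0,\varrho,x_1\rangle\}$. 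In parallel, \textbf{Definition~\ref{defOptHomotopyPath}} gives $\Theta(\sigma)=\arg\min_{\sigma'\in[\sigma]_{\simeq_p}}c(\sigma')$. So the whole content of the corollary is the set-equality $P_{\langle x_0,\varrho,x_1\rangle} = [\sigma]_{\simeq_p}$.

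For that equality, I would invoke the homotopy-invariance characterization of the CDT-Encoder stated in equation~\eqref{eqCDTencoding}: for any $\sigma'\in P(X_{free};x_0,x_1)$,
\begin{equation}
    \Gamma^*\circ\sigma' = \Gamma^*\circ\sigma
    \ \Longleftrightarrow\ \sigma'\simeq_p\sigma.
\end{equation}
The left-hand side is exactly the membership condition for $P_{\langle x_0,\varrho,x_1\rangle}$, while the right-hand side is exactly the membership condition for $[\sigma]_{\simeq_p}$, so the two sets coincide. Substituting back, the two $\arg\min$ problems are literally the same optimization, hence $\overline{\Gamma^*}\circ\Gamma^*\circ\sigma = \Theta(\sigma)$. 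Since $\sigma$ was arbitrary, this gives the corollary.

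The only subtlety worth flagging is the well-posedness of the $\arg\min$: $\overline{\Gamma^*}$ is defined as a single minimizer, while $\Theta$ is likewise defined as a single minimizer, so the equality should be read modulo the reparameterization equivalence `$=$' of \textbf{Definition~\ref{defProPath}}; once one adopts that convention, the argument above is immediate and no further geometric work is required. The main (and only) obstacle is purely notational, namely keeping straight that $P_{\langle x_0,\varrho,x_1\rangle}$ and $[\sigma]_{\simeq_p}$ really do denote the same subset of $P(X_{free})$, which is guaranteed by the homotopy-invariance property of $\Gamma^*$ already established in the preceding text.
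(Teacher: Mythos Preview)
Your argument is correct and is exactly the natural unpacking of the definitions: the homotopy-invariance property \eqref{eqCDTencoding} forces $P_{\langle x_0,\varrho,x_1\rangle}=[\sigma]_{\simeq_p}$, so the two $\arg\min$ problems in \textbf{Definition~\ref{defOptHomotopyPath}} and \textbf{Definition~\ref{defInvMAPTstar}} coincide. The paper itself does not supply a proof for this corollary, labeling it a ``simple corollary'' immediately after introducing $\overline{\Gamma^*}$; your write-up is precisely the short verification the paper leaves implicit, and nothing further is needed.
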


\begin{figure}[!t]
	\centering
	\subfloat[]{\includegraphics[height=1.65in]{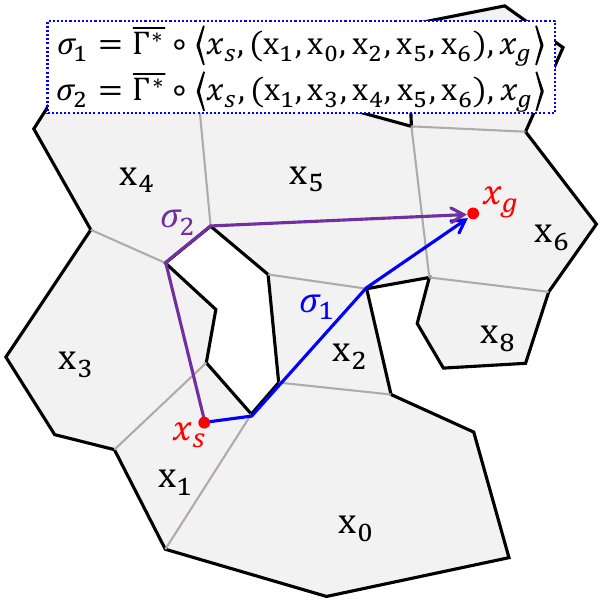}}
	\hfil
	\subfloat[]{\includegraphics[height=1.65in]{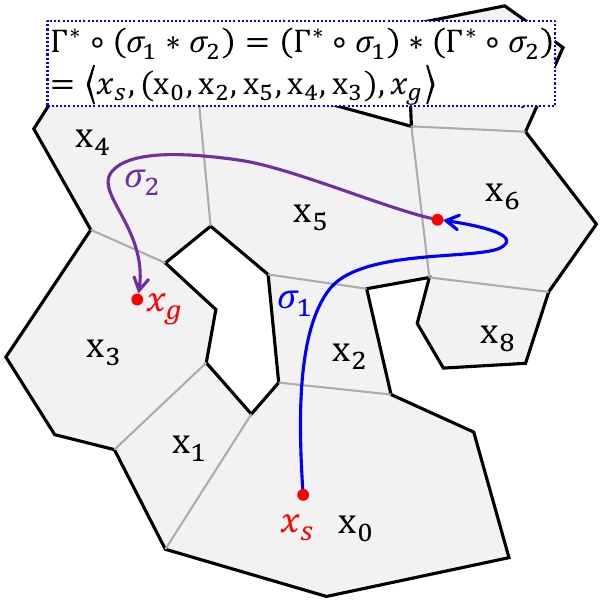}}
	\caption{(a) Illustration of the Generalized Inverse Mapping $\overline{\Gamma^*}$ of $\Gamma^*$. (b) $\Gamma^*$ is a homomorphism.}
	\label{fig_CDEhom}
\end{figure}

\section{Optimal Tethered Configuration Search}
In this section, we will focus on the theoretical derivation and solution methods for the TCS (Optimal Tethered Configuration Search) problem, which forms the foundation for solving other problems such as TTP, TMV, and  UTPP.

\subsection{Theory and Problem Simplification}
In this subsection, we simplify the TCS problem addressed in this study. First, we will prove that it is sufficient to consider only the points on the cutlines to represent the optimal path in any homotopy path classes.
\begin{lemma}
	\label{NSCLOP} %Necessary and Sufficient Conditions for Local Optimal Paths
	A necessary and sufficient condition for $\sigma^* \in P(X_{free})$ to be the shortest path in $[\sigma^*]_{\simeq_p}$ is that, for any $t_1,t_2 \in I$, $\sigma^*_{[t_1,t_2]}$ is the shortest path in $\left[\sigma^*_{[t_1,t_2]}\right]_{\simeq_p}$.
\end{lemma}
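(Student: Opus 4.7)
The plan is to establish the two directions of the biconditional separately, with the necessity direction carrying essentially all of the content and the sufficiency direction being an immediate specialization.

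For sufficiency, I would simply choose $t_1 = 0$ and $t_2 = 1$. Then $\sigma^*_{[0,1]} = \sigma^*$ by Definition~\ref{defProPath} (via the reparametrization $\varphi(t) = t$), so the hypothesis directly yields that $\sigma^*$ is the shortest path in $[\sigma^*]_{\simeq_p}$.

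For necessity, I would argue by contradiction. Suppose $\sigma^*$ is optimal in its homotopy class, yet there exist $t_1 < t_2$ in $I$ and a path $\sigma' \in \left[\sigma^*_{[t_1,t_2]}\right]_{\simeq_p}$ with $c(\sigma') < c\bigl(\sigma^*_{[t_1,t_2]}\bigr)$. Then I would decompose $\sigma^*$ as
\begin{equation}
\sigma^* = \sigma^*_{[0,t_1]} * \sigma^*_{[t_1,t_2]} * \sigma^*_{[t_2,1]},
\end{equation}
and consider the substituted path $\widetilde{\sigma} = \sigma^*_{[0,t_1]} * \sigma' * \sigma^*_{[t_2,1]}$. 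The two key facts to invoke are: (i) the congruence property of $\simeq_p$ with respect to $*$, namely that replacing a factor by a path-homotopic one produces a path-homotopic concatenation (this follows from pasting the prescribed homotopy $F$ between $\sigma^*_{[t_1,t_2]}$ and $\sigma'$ with the constant homotopies on $\sigma^*_{[0,t_1]}$ and $\sigma^*_{[t_2,1]}$), so $\widetilde{\sigma} \simeq_p \sigma^*$; and (ii) the additivity of cost from equation (\ref{eqPPCost}), which gives
\begin{equation}
c(\widetilde{\sigma}) = c\bigl(\sigma^*_{[0,t_1]}\bigr) + c(\sigma') + c\bigl(\sigma^*_{[t_2,1]}\bigr) < c(\sigma^*),
\end{equation}
contradicting the optimality of $\sigma^*$ within $[\sigma^*]_{\simeq_p}$.

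The main obstacle is the rigorous justification of the pasting step in (i), i.e.\ verifying that a homotopy between $\sigma^*_{[t_1,t_2]}$ and $\sigma'$ can be extended to a homotopy between $\sigma^*$ and $\widetilde{\sigma}$ that fixes endpoints. This is a standard construction in algebraic topology (concatenation of homotopies with suitable reparametrization of the $t$-interval), and it is implicitly guaranteed by the groupoid structure $(P(X), *, \simeq_p)$ stated just after Definition~\ref{defPathHomotopy}; I would cite these properties rather than re-derive the pasting lemma from scratch. A minor subtlety is the edge cases $t_1 = 0$ or $t_2 = 1$, which are handled by invoking the Identity property ($e_{x_0} * \sigma \simeq_p \sigma$) so that the decomposition above remains meaningful.
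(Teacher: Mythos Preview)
Your proposal is correct and follows essentially the same approach as the paper: for sufficiency you specialize to $t_1=0$, $t_2=1$, and for necessity you argue by contradiction via the decomposition $\sigma^* = \sigma^*_{[0,t_1]} * \sigma^*_{[t_1,t_2]} * \sigma^*_{[t_2,1]}$ and replace the middle factor by a shorter homotopic $\sigma'$, using additivity of cost. Your version is in fact slightly more careful than the paper's, which does not explicitly justify the pasting of homotopies or the edge cases.
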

\begin{proof}
	See Appendix 3.2.\qed
\end{proof}

\begin{theorem}
	\label{SSDR} %Sampling Space Dimensionality Reduction
	For any optimal path $\sigma^* \in [\sigma^*]_{\simeq_p}$ where $x_s = \sigma^*(0)$, $x_e = \sigma^*(1)$, $\sigma^*$ can definitely be  expressed in the form of multiple line splices as follows.
	When the length of $\Gamma^*_\varrho \circ \sigma^*$ is $1$,
	\begin{equation}
		\label{eq_SSDR1}
		\sigma^* = l^{x_e}_{x_s}.
	\end{equation}
	When the length of $\Gamma^*_\varrho \circ \sigma^*$ is not $1$,
	\begin{equation}
		\label{eq_SSDR2}
		\sigma^* = l_{x_0}^{x_1} * l_{x_1}^{x_2} * \dots * l_{x_{n-1}}^{x_n},
	\end{equation}
	where $x_0=x_s$, $x_n=x_e$, $x_1,\dotsc,x_{n-1}$ is the intersection of $\sigma^*$ in turn with the cutlines.
\end{theorem}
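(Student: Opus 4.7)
The plan is to combine the subpath-optimality principle of \textbf{Lemma~\ref{NSCLOP}} with the convexity of each cell in the dissection. I would first dispatch the base case in which $\Gamma^*_\varrho\circ\sigma^*$ has length one, and then reduce the general case to a finite concatenation of such single-cell subpaths by slicing $\sigma^*$ at the times it crosses cutlines.

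For the base case, $\sigma^*$ lies entirely inside one convex cell $\mathbf{x}$. Convexity makes $\mathbf{x}$ simply connected, so $\sigma^*$ is homotopic to $l_{x_s}^{x_e}$ via the affine homotopy $F(t,\tau)=(1-\tau)\sigma^*(t)+\tau\,l_{x_s}^{x_e}(t)$, which remains in $\mathbf{x}$ by convexity. Since the straight line attains the infimum of Euclidean length between two points, the optimality of $\sigma^*$ forces $\sigma^*=l_{x_s}^{x_e}$, which is~(\ref{eq_SSDR1}). For the general case, let $0=t_0<t_1<\cdots<t_{n-1}<t_n=1$ be the parameters at which $\sigma^*$ crosses the cutlines recorded by $\Gamma^*_\varrho\circ\sigma^*$, and set $x_i=\sigma^*(t_i)$. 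Each subpath $\sigma^*_{[t_{i-1},t_i]}$ lies inside a single convex cell. By \textbf{Lemma~\ref{NSCLOP}} every such subpath is optimal within its own homotopy class, so the base-case argument applies and forces $\sigma^*_{[t_{i-1},t_i]}=l_{x_{i-1}}^{x_i}$. Concatenating according to \textbf{Definition~\ref{defProPath}} produces~(\ref{eq_SSDR2}).

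The main obstacle is guaranteeing that the crossing times $t_i$ are well defined and that no spurious cutline grazings are missed or double-counted. An optimal path could in principle touch a cutline tangentially or slide along it, producing rollbacks in $\Gamma_\varrho\circ\sigma^*$ that are absorbed by ${\tt{RBF}}$. I would handle this by reading the genuine sequence of crossings off $\Gamma^*_\varrho\circ\sigma^*$ rather than $\Gamma_\varrho\circ\sigma^*$, and by observing that any excursion collapsed by ${\tt{RBF}}$ must, again by \textbf{Lemma~\ref{NSCLOP}}, itself be optimal; the only optimal motion that enters and exits a cell through the same cutline point is a degenerate straight segment lying on that cutline, which can be absorbed into the adjacent piece without altering the resulting polyline. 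After this normalization, both cases of the theorem follow directly from the two ingredients above.
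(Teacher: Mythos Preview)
Your proposal is correct and follows essentially the same approach as the paper: both proofs dispatch the single-cell case by convexity (you via the explicit affine homotopy, the paper via the CDT-encoding invariant $\Gamma^*\circ\sigma^*=\Gamma^*\circ l_{x_s}^{x_e}$) and then handle the general case by invoking \textbf{Lemma~\ref{NSCLOP}} on each subpath between consecutive cutline crossings. Your additional paragraph on rollbacks and tangential grazings goes beyond what the paper addresses, but this extra care does not change the structure of the argument.
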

\begin{proof}
	See Appendix 3.3.\qed
\end{proof}

Therefore, according to \textbf{Theorem~\ref{SSDR}}, for the TCS problem, we only need to focus on $x_\star$, $x_g$, and the points on the cutlines. Additionally, guided by \textbf{Lemma~\ref{NSCLOP}} and \textbf{Theorem~\ref{SSDR}}, we can easily derive the following corollary:
\begin{corollary}
	\label{Cor1}
	For any path class $[\sigma]_{\simeq_p}\subset P(X_{free};x_s,x_g) $, if the length of $\Gamma^*_\varrho \circ [\sigma]_{\simeq_p}$ is not $1$, then there exists a point $x_c$ on the cutline between the last two convex polygons of $\Gamma^*_\varrho \circ [\sigma]_{\simeq_p}$, such that the optimal path $\sigma^* \in [\sigma]_{\simeq_p}$ can be expressed as:
	\begin{equation}
		\label{eq_Cor1}
		\sigma^* = \sigma^*_c * l_{x_c}^{x_g},
	\end{equation}
	where $\sigma^*_c$ is the optimal path from $x_s$ to $x_c$ in $[\sigma^*_c]_{\simeq_p}$.
\end{corollary}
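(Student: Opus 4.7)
The plan is to obtain this corollary by stitching together \textbf{Theorem~\ref{SSDR}} (which guarantees that any optimal path is a polyline whose interior vertices lie on cutlines) with \textbf{Lemma~\ref{NSCLOP}} (which guarantees that every subpath of an optimal path is itself optimal within its own homotopy class). The overall strategy is thus purely structural: pick the last cutline vertex produced by \textbf{Theorem~\ref{SSDR}}, call it $x_c$, split $\sigma^*$ at that vertex, and invoke \textbf{Lemma~\ref{NSCLOP}} to promote the left factor to an optimal representative of its class.

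Concretely, I would first take $\sigma^*\in[\sigma]_{\simeq_p}$ to be the optimal homotopic path, which exists by \textbf{Definition~\ref{defOptHomotopyPath}}. Since the hypothesis rules out the length-one case of \textbf{Theorem~\ref{SSDR}}, I apply the second case of that theorem to obtain the polyline decomposition $\sigma^*=l_{x_0}^{x_1}*l_{x_1}^{x_2}*\dots*l_{x_{n-1}}^{x_n}$ with $x_0=x_s$, $x_n=x_g$, and $x_1,\dots,x_{n-1}$ the successive intersections of $\sigma^*$ with the cutlines. I then set $x_c:=x_{n-1}$ and read off
\begin{equation}
\sigma^* \;=\; \underbrace{l_{x_0}^{x_1}*l_{x_1}^{x_2}*\dots*l_{x_{n-2}}^{x_{n-1}}}_{\displaystyle =:\,\sigma^*_c} \;*\; l_{x_c}^{x_g},
\end{equation}
which is the desired concatenation.

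Next I would argue that $x_c$ sits on the cutline between the last two convex polygons of $\Gamma^*_\varrho\circ[\sigma]_{\simeq_p}$. Because a path transitions from one convex cell of $\mathcal{G}_{con}$ to an adjacent one exactly when it crosses the shared cutline, the list $\Gamma_\varrho\circ\sigma^*$ advances by one cell at each intersection $x_i$. After applying ${\tt{RBF}}$ to obtain $\Gamma^*_\varrho\circ\sigma^*$, any cancellation of adjacent cells would correspond to a $\sigma^*$-segment that enters a neighbor and immediately returns; by \textbf{Theorem~\ref{SSDR}} the straight-line segments $l_{x_{i-1}}^{x_i}*l_{x_i}^{x_{i+1}}$ that make up $\sigma^*$ do not perform such a round-trip (otherwise one could strictly shorten $\sigma^*$, contradicting its optimality). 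Hence the list of cells visited by $\sigma^*$ is already rollback-free, and in particular $x_{n-1}=x_c$ is the crossing that realizes the transition into the final convex polygon of $\Gamma^*_\varrho\circ[\sigma]_{\simeq_p}$, placing it on the required cutline.

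Finally, I would apply \textbf{Lemma~\ref{NSCLOP}} to the subinterval $[0,(n-1)/n]$ (or any reparametrization isolating the prefix ending at $x_c$) to conclude that $\sigma^*_c$ is itself the shortest path in $[\sigma^*_c]_{\simeq_p}$, completing the proof. I expect the main obstacle to be the rollback-free argument in the previous paragraph: one must carefully rule out the possibility that two consecutive cutline crossings of $\sigma^*$ straddle the same cutline (which would shorten the polygon list after ${\tt{RBF}}$ and invalidate the identification $n-1\leftrightarrow$ last transition). This is handled cleanly by the optimality of $\sigma^*$ together with the strict triangle inequality for non-collinear line segments, but it is the only nontrivial verification in an otherwise bookkeeping-style proof.
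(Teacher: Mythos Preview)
Your proposal is correct and follows essentially the same approach as the paper, which simply states that the corollary is ``easily derived'' from \textbf{Lemma~\ref{NSCLOP}} and \textbf{Theorem~\ref{SSDR}} without providing an explicit proof. Your additional care in verifying that the polygon sequence of $\sigma^*$ is already rollback-free (so that the last cutline crossing $x_{n-1}$ really does correspond to the transition between the last two cells of $\Gamma^*_\varrho\circ\sigma^*$) is a detail the paper glosses over, and your optimality-plus-triangle-inequality argument handles it correctly.
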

Building upon \textbf{Corollary~\ref{Cor1}}, consider the sequence $\varrho = \Gamma^*_\varrho \circ [\sigma]_{\simeq_p}$, with $T_\varrho$ representing its length. Since $x_c$ is a point on the cutline between $\varrho(T_\varrho-1)$ and $\mathbf{x}_g$ (where $\mathbf{x}_g = \varrho(T_\varrho)$, i.e. the convex polygon where $x_g$ is located), we may assume that $x_c\in\varrho(T_\varrho-1)$. Combining equations (\ref{eqMAPTstarProd}), (\ref{eqMAPTstarProd2}), and (\ref{eq_Cor1}), we obtain:
\begin{align}
	\label{eq_BFS_O}
	\Gamma^* \circ \sigma^*           & = (\Gamma^* \circ \sigma^*_c) * (\Gamma^* \circ l_{x_c}^{x_g}), \nonumber                                \\
	                                  & \Downarrow \nonumber                                                                                     \\
	\langle x_s, \varrho, x_g \rangle & = \langle x_s, \varrho_c, x_c \rangle * \langle x_c, (\varrho(T_\varrho -1), \mathbf{x}_g), x_g \rangle,
\end{align}
where $\varrho_c$ is the subsequence of $\varrho$ from $\varrho(0)$ to $\varrho(T_\varrho -1)$. As shown in equation (\ref{eq_BFS_O}), the second element $\varrho$ of the CDT Encoding for $\sigma^*$ is simply the sequence $\varrho_c$ with an additional convex polygon $\mathbf{x}_g$, which contains the goal point $x_g$, appended at the end. Meanwhile, $\varrho_c$ belongs to the CDT Encoding of paths whose endpoints lie within convex polygons adjacent to $\mathbf{x}_g$. It is important to note that the above discussion applies to all optimal homotopic paths terminating in $\mathbf{x}_g$.

Let $P^*_\zeta(X_{free};x_s,\mathbf{x})$ represent the set of all optimal homotopic paths starting at $x_s$, ending in the convex polygon $\mathbf{x}$, and with length less than $\zeta$, defined as:
\begin{align}
	\label{eqProb1_plus}
	P^*_\zeta(X_{free};x_s,\mathbf{x}) = \left\{ \Theta(\sigma) \middle| \begin{array}{l}
		\sigma \in P(X_{free};x_s,\mathbf{x}) \\
		\land c\left(\Theta(\sigma)\right) \leq \zeta
	    \end{array}\right\}.
\end{align}
Let $P^*_\zeta(\mathcal{G}_{con};\mathbf{x}_s,\mathbf{x})$ denote the set of all rollback-free paths considering $\zeta$ constraints on the topological graph $\mathcal{G}_{con}$, starting is the convex polygon $\mathbf{x}_s$ (with $x_s \in \mathbf{x}_s$) and ending is $\mathbf{x}$. According to the definition of $\Gamma^*_\varrho$, it follows that:
\begin{equation}
	% \label{eq_Cor1}
	\Gamma^*_\varrho \circ P^*_\zeta(X_{free};x_s,\mathbf{x}) = P^*_\zeta(\mathcal{G}_{con};\mathbf{x}_s,\mathbf{x}).
\end{equation}
Further, based on the previous discussion related to (\ref{eq_BFS_O}), the following iterative relationship can be derived:
\begin{align}
	\label{eq_CodingIteration}
	 & P^*_\zeta(\mathcal{G}_{con};\mathbf{x}_s,\mathbf{x}) = \nonumber                                      \\
	 &\bigcup_{\mathbf{x}' \in {\tt{near}}(\mathbf{x})}  \scalebox{0.9}{$\left\{ \varrho = \varrho' + (\mathbf{x}) \middle|
	\begin{array}{l}
		\varrho' \in P^*_\zeta(\mathcal{G}_{con};\mathbf{x}_s,\mathbf{x}') \\
		\land \varrho'(T_{\varrho'}-1) \neq \mathbf{x}               \\
		\land \underset{x \in \mathbf{x}}{\mathrm{min}}\ c\left(\overline{\Gamma^*} \circ \langle x_s, \varrho, x\rangle \right) \leq \zeta
	\end{array} \right\}$},
\end{align}
where ${\tt{near}}(\mathbf{x})$ denotes the convex polygons adjacent to $\mathbf{x}$, $\varrho' + (\mathbf{x})$ means append element $\mathbf{x}$ to sequence $\varrho'$. The second constraint on the right-hand side of (\ref{eq_CodingIteration}), $\varrho'(T_{\varrho'}-1) \neq \mathbf{x}$, ensures that the second-to-last element of $\varrho'$ is not $\mathbf{x}$. This prevents the tail of $\varrho$ from forming a rollback path, which would violate the definition of $\Gamma^*_\varrho$.

It is worth noting that the discussion from \textbf{Corollary~\ref{Cor1}} to this point is based on the case where the length of $\Gamma^*_\varrho \circ \sigma$ is not $1$. For the case where the length is $1$, it occurs only when both the starting and ending points of the path are within the same convex polygon (i.e., $\mathbf{x}_s = \mathbf{x}$). In this case, $\Gamma^*_\varrho \circ \sigma$ is simply the sequence $(\mathbf{x}_s)$. At this point, we should revise (\ref{eq_CodingIteration}) into the following form:
\begin{align}
	\label{eq_CodingIteration2}
	 & P^*_\zeta(\mathcal{G}_{con};\mathbf{x}_s,\mathbf{x}_s) = \left\{(\mathbf{x}_s)\right\}  \cup \nonumber    \\
	 & \bigcup_{\mathbf{x}' \in {\tt{near}}(\mathbf{x_s})}  \scalebox{0.9}{$\left\{ \varrho = \varrho' + (\mathbf{x}_s) \middle|
	\begin{array}{l}
		\varrho' \in P^*_\zeta(\mathcal{G}_{con};\mathbf{x}_s,\mathbf{x}') \\
		\land \varrho'(T_{\varrho'}-1) \neq \mathbf{x}_s             \\
		\land \underset{x \in \mathbf{x}_s}{\mathrm{min}}\ c\left(\overline{\Gamma^*} \circ \langle x_s, \varrho, x\rangle \right) \leq \zeta
	\end{array} \right\}$}.
\end{align}
Thus we call $P^*_\zeta(\mathcal{G}_{con};\mathbf{x}_s,\mathbf{x})$ a \textbf{Tethered Configuration CDT Encoding Set} of convex polygon $\mathbf{x}$.

\begin{remark}
	\label{re_TCS}
	Based on equations (\ref{eq_CodingIteration}) and (\ref{eq_CodingIteration2}), we can easily construct a simple iterative process to find $P^*_\zeta(\mathcal{G}_{con};\mathbf{x}_s,\mathbf{x})$ for any convex polygon $\mathbf{x} \in V_{con}$, Specifically, the iteration begins by adding $(\mathbf{x}_s)$ to $P^*_\zeta(\mathcal{G}_{con};\mathbf{x}_s,\mathbf{x}_s)$; Subsequently, whenever a new element $\varrho$ is added to $P^*_\zeta(\mathcal{G}_{con};\mathbf{x}_s,\mathbf{x}')$ for a neighboring convex polygon $\mathbf{x}'$ of $\mathbf{x}$, (\ref{eq_CodingIteration}) is used to add new elements to $P^*_\zeta(\mathcal{G}_{con};\mathbf{x}_s,\mathbf{x})$. Changes in $P^*_\zeta(\mathcal{G}_{con};\mathbf{x}_s,\mathbf{x})$ will, in turn, trigger updates to other surrounding convex polygons.
\end{remark}

Revisiting the definition of the TCS problem (\textbf{Problem~\ref{proTCS}}), we can observe the connection between equations (\ref{eqProb1}) and (\ref{eqProb1_plus}). Since a configuration is essentially a type of path (as noted in \textbf{Remark~\ref{re_PeqC}}), equation (\ref{eqProb1_plus}) can be interpreted as representing the union of all feasible optimal configurations for target positions within the convex polygon $\mathbf{x}$. Specifically:
\begin{equation}
	P^*_\zeta(X_{free};x_\star,\mathbf{x}) = \bigcup_{x \in \mathbf{x}} \mathcal{C}^{*,x}_{x_\star ,\zeta}.
\end{equation}
Further:
\begin{align}
	\label{eq_GconPathSet}
	\Gamma^*_\varrho \circ \left(\bigcup_{x \in \mathbf{x}} \mathcal{C}^{*,x}_{x_\star ,\zeta}\right) & = \Gamma^*_\varrho \circ P^*_\zeta(X_{free};x_\star,\mathbf{x}) \nonumber \\
	  & = P^*_\zeta(\mathcal{G}_{con};\mathbf{x}_\star,\mathbf{x}).
\end{align}
It is worth noting that although the union $\bigcup_{x \in \mathbf{x}} \mathcal{C}^{*,x}_{x_\star ,\zeta}$ contains an infinite number of elements, the set of elements in $\Gamma^*_\varrho \circ \left(\bigcup_{x \in \mathbf{x}} \mathcal{C}^{*,x}_{x_\star ,\zeta}\right)$ is finite. This is because, for most optimal configurations goal points within the same convex polygon, the second element of their CDT encodings is identical. Thus, knowing $P^*_\zeta(\mathcal{G}_{con};\mathbf{x}_\star,\mathbf{x})$ for any $\mathbf{x}\in V_{con}$ allows us to derive $\mathcal{C}^{*,x_g}_{x_\star ,\zeta}$ for any specific goal point $x_g$ in a straightforward manner. First, determine the convex polygon $\mathbf{x}_g$ containing $x_g$. Then, traverse the sequences $\varrho$ in $P^*_\zeta(\mathcal{G}_{con};\mathbf{x}_\star,\mathbf{x}_g)$, computing and filtering all valid optimal configurations. This process can be mathematically expressed as:
\begin{equation}
	\label{eq_GetTCS}
	\mathcal{C}^{*,x_g}_{x_\star ,\zeta} = \left\{ \overline{\Gamma^*} \circ \langle x_\star, \varrho, x_g\rangle \middle|
	\begin{array}{l}
		\varrho \in P^*_\zeta(\mathcal{G}_{con};\mathbf{x}_\star,\mathbf{x}_g) \\
		\land c\left(\overline{\Gamma^*} \circ \langle x_\star, \varrho, x_g\rangle\right) \leq \zeta
	\end{array}\right\}.
\end{equation}

For any $\mathbf{x}\in V_{con}$ we can use the process described in \textbf{Remark~\ref{re_TCS}} to obtain $P^*_\zeta(\mathcal{G}_{con};\mathbf{x}_\star,\mathbf{x})$. However, one significant challenge remains: observing the three constraints on the right-hand side of (\ref{eq_CodingIteration}), we find that constraints 1 and 2 involve simple traversal and logical checks. In contrast, constraint 3 ($\underset{x \in \mathbf{x}}{\mathrm{min}}\ c\left(\overline{\Gamma^*} \circ \langle x_s, \varrho, x\rangle \right) \leq \zeta$) entails a highly complex and computationally intensive process, as it requires solving the optimal paths for an infinite number of target points. Therefore, the remainder of this subsection will focus on developing an efficient approach to solve constraint 3.

For constraint 3 in (\ref{eq_CodingIteration}), consider an arbitrary point $x_{in}$ inside a convex polygon $\mathbf{x}$. According to \textbf{Corollary~\ref{Cor1}}, the optimal homotopic path $\overline{\Gamma^*} \circ \langle x_s, \varrho, x_{in}\rangle$ can be expressed as:
\begin{equation}
	\overline{\Gamma^*} \circ \langle x_s, \varrho, x_{in}\rangle = \left(\overline{\Gamma^*} \circ \langle x_s, \varrho, x_c\rangle\right) * l_{x_c}^{x_{in}},
\end{equation}
where $x_c$ is a point on the cutline between the convex polygons $\varrho(T_{\varrho})$ and $\varrho(T_{\varrho }-1)$. According to equation (\ref{eqPPCost}), we have:
\begin{align}
	c\left(\overline{\Gamma^*} \circ \langle x_s, \varrho, x_{in}\rangle\right) & = c\left(\overline{\Gamma^*} \circ \langle x_s, \varrho, x_c\rangle\right) + c\left(l_{x_c}^{x_{in}}\right) \nonumber \\
	                                                                            & \geq c\left(\overline{\Gamma^*} \circ \langle x_s, \varrho, x_c\rangle\right).
\end{align}
Thus, to determine whether constraint 3 in (\ref{eq_CodingIteration}) is satisfied, it suffices to consider only the points on the cutline between $\varrho(T_{\varrho})$ and $\varrho(T_{\varrho }-1)$, i.e.,
\begin{align}
	\label{eq_constraintLine}
	                    & \underset{x \in \mathbf{x}}{\mathrm{min}}\ c\left(\overline{\Gamma^*} \circ \langle x_s, \varrho, x\rangle \right) \leq \zeta \nonumber \\
	\Longleftrightarrow & \underset{t \in I}{\mathrm{min}}\ c\left(\overline{\Gamma^*} \circ \langle x_s, \varrho, l_c(t)\rangle \right) \leq \zeta,
\end{align}
where $l_c$ denotes the cutline between $\varrho(T_{\varrho})$ and $\varrho(T_{\varrho }-1)$.

To develop a method for quickly determining whether the points on the cutline satisfy Constraint 3, we will construct two special functions and prove their relevant properties.

\begin{figure}[!t]
	\centering
	\subfloat[]{\includegraphics[height=1.65in]{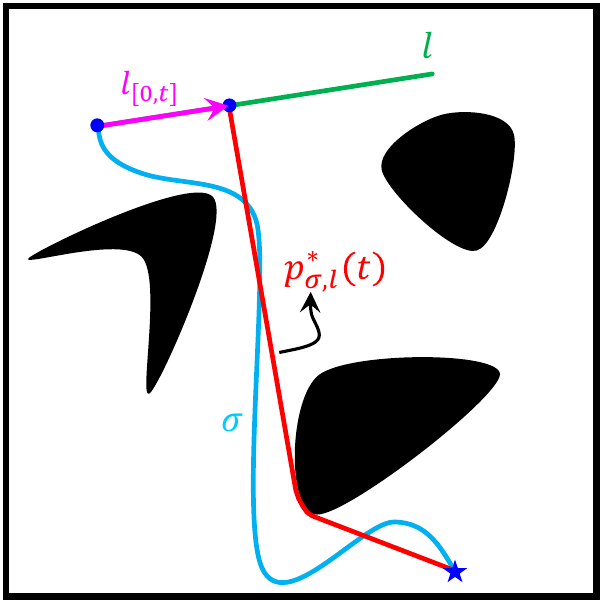}}
	\hfil
	\subfloat[]{\includegraphics[height=1.65in]{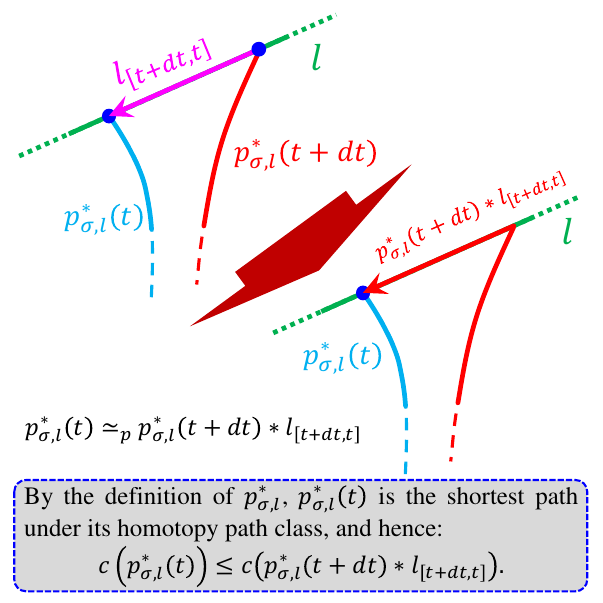}}
	\caption{Illustration of the function $p^*_{\sigma, l}$. The function $p^*_{\sigma, l}(t)$ represents the optimal homotopic path within the homotopy class $\left[\sigma * l_{[0,t]}\right]_{\simeq_p}$.}
	\label{fig_FPfun}
\end{figure}

Let $\sigma$ denote an arbitrary path in $P(X_{free})$, and let $l$ be a straight line connected to the end of $\sigma$, such that $l(0) = \sigma(1)$. Using this setup, we define a special path function $p^*_{\sigma, l}:I\to P(X_{free};\sigma(0),l)$ as follows:
\begin{equation}
	\label{eq_pfun}
	p^*_{\sigma, l}(t) = \Theta\left(\sigma * l_{[0,t]}\right),
\end{equation}
as illustrated in Fig.~\ref{fig_FPfun}(a). The function $p^*_{\sigma, l}$ represents the optimal homotopic path corresponding to the movement of the endpoint of $\sigma$ along the line $l$ as it continuously extends. Further, we define the following composite function $f:I \to [0,\infty)$:
\begin{equation}
	\label{eq_ffun}
	f(t) = \left(c \circ p^*_{\sigma, l}\right) (t).
\end{equation}
The functions $p^*_{\sigma, l}$ and $f$ have the following properties.

\begin{lemma}
	\label{th_phomotopy}
	For any $t_1,t_2 \in I$, 
	\begin{equation}
		p^*_{\sigma, l}(t_2) \simeq_p p^*_{\sigma, l}(t_1)*l_{[t_1,t_2]}.
	\end{equation}
\end{lemma}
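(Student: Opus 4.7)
My plan is to chain a short sequence of homotopies connecting $p^*_{\sigma,l}(t_1) * l_{[t_1,t_2]}$ to $p^*_{\sigma,l}(t_2)$ and then invoke transitivity (Property 1) to conclude.

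First, by Definition 9, $p^*_{\sigma,l}(t_i) = \Theta(\sigma * l_{[0,t_i]})$ lies in the homotopy class $[\sigma * l_{[0,t_i]}]_{\simeq_p}$, so $p^*_{\sigma,l}(t_i) \simeq_p \sigma * l_{[0,t_i]}$ for $i=1,2$. Since path homotopy is preserved under right-concatenation (extend the witnessing homotopy by the identity on the appended factor via the piecewise definition of $*$), I obtain
\begin{equation*}
p^*_{\sigma,l}(t_1) * l_{[t_1,t_2]} \simeq_p (\sigma * l_{[0,t_1]}) * l_{[t_1,t_2]} \simeq_p \sigma * (l_{[0,t_1]} * l_{[t_1,t_2]}),
\end{equation*}
where the second $\simeq_p$ uses the associativity property of the groupoid $(P(X),*,\simeq_p)$ stated after Definition 10.

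The key geometric step is to show $l_{[0,t_1]} * l_{[t_1,t_2]} \simeq_p l_{[0,t_2]}$. For $t_1 \leq t_2$, a direct calculation produces a monotone piecewise-linear reparametrization $\varphi:I\to I$ with $(l_{[0,t_1]} * l_{[t_1,t_2]})(t) = l_{[0,t_2]}(\varphi(t))$, so by the extension of the relation `$=$' in Definition 3 the two paths are strictly equivalent and hence $\simeq_p$. For $t_1 > t_2$, the concatenation traces $l$ forward to $l(t_1)$ and then retraces back to $l(t_2)$; the retracing segment can be written as $l_{[t_2,t_1]} * \overline{l_{[t_2,t_1]}}$, which is $\simeq_p e_{l(t_2)}$ by the Inverse property, after which the Identity property leaves $l_{[0,t_2]}$. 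Combining either case with preservation of $\simeq_p$ under left-concatenation and applying Definition 9 once more gives $\sigma * l_{[0,t_2]} \simeq_p p^*_{\sigma,l}(t_2)$, and transitivity closes the chain.

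The main obstacle is not mathematical depth but careful bookkeeping: every concatenation must respect the matching of endpoints, and the retraction argument for $t_1 > t_2$ depends crucially on $l$ being a straight-line path so that the Inverse property applies cleanly to the back-and-forth portion. Once these details are verified, the lemma reduces to purely algebraic manipulations in the path groupoid combined with the defining property $\Theta(\gamma) \simeq_p \gamma$.
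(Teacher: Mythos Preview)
Your proof is correct and follows essentially the same route as the paper's: both use $p^*_{\sigma,l}(t_i)\simeq_p \sigma*l_{[0,t_i]}$, reduce to $l_{[0,t_1]}*l_{[t_1,t_2]}\simeq_p l_{[0,t_2]}$, and close by transitivity. You supply more detail than the paper (which simply declares the key step ``straightforward''), in particular by treating the $t_1>t_2$ case explicitly; one minor remark is that your comment about the straight-line property being ``crucial'' for the retraction is unnecessary, since the Inverse property $\alpha*\overline{\alpha}\simeq_p e$ holds for arbitrary paths.
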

\begin{proof}
	See Appendix 3.4.\qed
\end{proof}

\begin{theorem}
	\label{th_finequality}
	Let $t \in I$ and $t+dt \in I$. Then the function $f$ satisfies the following inequality:
	\begin{equation}
		\label{eq_finequality0P}
		f(t+dt) - c^l_{dt} \leq f(t) \leq f(t+dt) + c^l_{dt},
	\end{equation}
	\begin{equation}
		\label{eq_finequality0M}
		f(t-dt) - c^l_{dt} \leq f(t) \leq f(t-dt) + c^l_{dt},
	\end{equation}
	where
	\begin{equation}
		\label{eq_finequalityC}
		c^l_{dt} = c\left(l_{[t,t+dt]}\right)  = \left\lvert dt\right\rvert \cdot c(l).
	\end{equation}
\end{theorem}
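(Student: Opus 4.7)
The plan is to derive both inequalities as straightforward consequences of \textbf{Lemma~\ref{th_phomotopy}} together with the optimality built into $p^*_{\sigma,l}$ via $\Theta$, using the cost additivity property (\ref{eqPPCost}). The two-sided bound is really a Lipschitz-type statement: changing the endpoint parameter $t$ by $dt$ can change $f$ by at most the straight-line cost $c^l_{dt}$ traversed along $l$.

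First I would handle the upper bound $f(t+dt) \leq f(t) + c^l_{dt}$. By \textbf{Lemma~\ref{th_phomotopy}} with $t_1=t$, $t_2=t+dt$, we have $p^*_{\sigma,l}(t+dt) \simeq_p p^*_{\sigma,l}(t) * l_{[t,t+dt]}$. Since $p^*_{\sigma,l}(t+dt) = \Theta(\sigma * l_{[0,t+dt]})$ is by \textbf{Definition~\ref{defOptHomotopyPath}} the minimum-cost representative of its homotopy class, any other representative has cost at least $f(t+dt)$. Applying this to $p^*_{\sigma,l}(t) * l_{[t,t+dt]}$ and using (\ref{eqPPCost}) gives $f(t+dt) \leq f(t) + c(l_{[t,t+dt]}) = f(t) + c^l_{dt}$, which rearranges to the left half of (\ref{eq_finequality0P}).

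For the matching upper bound $f(t) \leq f(t+dt) + c^l_{dt}$, I would concatenate the homotopy from \textbf{Lemma~\ref{th_phomotopy}} with the inverse path $\overline{l_{[t,t+dt]}}$. Using the inverse and associativity properties of $\simeq_p$ listed after \textbf{Definition~\ref{defPathHomotopy}}, one obtains
\begin{equation}
p^*_{\sigma,l}(t+dt) * \overline{l_{[t,t+dt]}} \simeq_p p^*_{\sigma,l}(t) * l_{[t,t+dt]} * \overline{l_{[t,t+dt]}} \simeq_p p^*_{\sigma,l}(t).
\end{equation}
Since $p^*_{\sigma,l}(t)$ is optimal in its homotopy class, $f(t) \leq c(p^*_{\sigma,l}(t+dt) * \overline{l_{[t,t+dt]}}) = f(t+dt) + c^l_{dt}$, using that $c(\overline{l_{[t,t+dt]}}) = c(l_{[t,t+dt]})$ (which follows directly from \textbf{Definition~\ref{defCostPath}}). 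This establishes (\ref{eq_finequality0P}) in full.

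The inequality (\ref{eq_finequality0M}) for the backward direction follows by the same argument with the substitution $t \mapsto t-dt$, $dt \mapsto dt$ in the roles of the two parameters in \textbf{Lemma~\ref{th_phomotopy}}; alternatively, one observes that both inequalities in (\ref{eq_finequality0P}) are symmetric in the two arguments and so immediately yield (\ref{eq_finequality0M}) by relabeling. The closing identity $c^l_{dt} = |dt|\cdot c(l)$ is a direct application of \textbf{Definition~\ref{defLine}} to the subpath $l_{[t,t+dt]}$ together with (\ref{eqPCost}). I do not anticipate any substantive obstacle: the whole argument is a three-line consequence of optimality plus the homotopy identity, provided one is careful to invoke the inverse/identity properties to legitimize the cancellation $l_{[t,t+dt]} * \overline{l_{[t,t+dt]}} \simeq_p e_{l(t)}$ on the right endpoint.
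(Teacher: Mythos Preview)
Your proposal is correct and matches the paper's proof essentially line for line: both sides of (\ref{eq_finequality0P}) come from \textbf{Lemma~\ref{th_phomotopy}} plus the optimality of $\Theta$ and cost additivity, and (\ref{eq_finequality0M}) follows by relabeling. The only cosmetic difference is that for the bound $f(t)\leq f(t+dt)+c^l_{dt}$ the paper applies \textbf{Lemma~\ref{th_phomotopy}} directly with $t_1=t+dt$, $t_2=t$ (yielding $l_{[t+dt,t]}$) rather than concatenating with $\overline{l_{[t,t+dt]}}$ and cancelling, but since $l_{[t+dt,t]}=\overline{l_{[t,t+dt]}}$ these are the same step.
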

\begin{proof}
	See Appendix 3.5.\qed
\end{proof}

\begin{lemma}
	\label{th_fcontinuity}
	The function $f$ is continuous on $I$.
\end{lemma}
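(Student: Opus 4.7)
The plan is to derive continuity directly from Theorem~\ref{th_finequality}, which I expect already does essentially all the real work. Rearranging the two double inequalities in the theorem, for any $t$ and any sufficiently small $dt$ with $t+dt, t-dt \in I$, I get
\begin{equation}
    \left\lvert f(t+dt) - f(t) \right\rvert \leq c^l_{dt} = \lvert dt \rvert \cdot c(l),
\end{equation}
so $f$ is in fact Lipschitz continuous on $I$ with constant $c(l)$, which is strictly stronger than the continuity we need.

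From here the $\varepsilon$--$\delta$ argument is routine. First I would dispose of the degenerate case $c(l) = 0$: then $l$ is a constant path, $l_{[0,t]} = l_{[0,0]}$ for every $t \in I$ up to the equivalence `$=$' of Definition~\ref{defProPath}, and hence $f$ is constant on $I$ and trivially continuous. Otherwise, for $c(l) > 0$ and any given $\varepsilon > 0$, I would pick $\delta = \varepsilon / c(l)$; then for any $t' \in I$ with $\lvert t' - t \rvert < \delta$, applying the Lipschitz estimate above with $dt = t' - t$ (using whichever of the two inequalities of Theorem~\ref{th_finequality} corresponds to the sign of $dt$) yields $\lvert f(t') - f(t) \rvert \leq c(l)\, \lvert t' - t \rvert < \varepsilon$. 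This handles interior points, and the same argument works for the endpoints $t = 0$ and $t = 1$ by restricting $dt$ to one side.

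Since Theorem~\ref{th_finequality} is already assumed to be proved earlier in the paper, there is no real obstacle here; the only subtlety worth mentioning is the boundary case $c(l)=0$, where the cost bound becomes vacuous but continuity follows even more directly from the fact that $p^*_{\sigma,l}(t) = \Theta(\sigma * l_{[0,t]})$ depends only on the homotopy class of $\sigma$ when $l$ collapses to a point. I would therefore keep the writeup to a short paragraph invoking Theorem~\ref{th_finequality} and stating the Lipschitz bound, followed by the one-line $\varepsilon$--$\delta$ deduction.
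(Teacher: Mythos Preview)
Your approach is essentially identical to the paper's: both rearrange Theorem~\ref{th_finequality} into the Lipschitz bound $\lvert f(t') - f(t)\rvert \le c(l)\,\lvert t'-t\rvert$ and then take $\delta = \varepsilon/c(l)$. Your treatment is in fact slightly more careful, since the paper silently assumes $c(l)>0$ while you explicitly dispose of the degenerate case $c(l)=0$.
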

\begin{proof}
	See Appendix 3.6.\qed
\end{proof}

\begin{theorem}
	\label{th_fconvex}
	The function $f$ is convex.
\end{theorem}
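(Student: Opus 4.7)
The plan is to exploit the piecewise linear characterization of optimal homotopic paths supplied by \textbf{Theorem~\ref{SSDR}} and reduce the convexity of $f$ to the convexity of finitely many elementary pieces that glue together without downward kinks. For each $t\in I$ the optimal path $p^*_{\sigma,l}(t)$ terminates in a straight segment from some ``last pivot'' $p(t)$ --- an obstacle corner (equivalently, an endpoint of a cutline), or $\sigma(0)$ itself --- to the current endpoint $l(t)$. As $t$ varies, the sequence of corners around which $p^*_{\sigma,l}(t)$ bends is determined by the fixed prefix $\sigma$ together with the visibility structure of the environment; since there are only finitely many obstacle corners and $l$ is a straight segment, this combinatorial structure is piecewise constant on $I$, with finitely many breakpoints $0 = t_0 < t_1 < \dots < t_N = 1$.

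First I would prove convexity on each subinterval $[t_k,t_{k+1}]$. There the last pivot $p_k$ and the length $C_k$ of the prefix from $\sigma(0)$ to $p_k$ are constant, so $f(t) = C_k + \lVert p_k - l(t)\rVert$. Because $l(t) = (1-t)\,l(0) + t\,l(1)$ is affine in $t$, the map $t\mapsto p_k - l(t)$ is affine, and the composition of the Euclidean norm with an affine map is convex; hence $f$ is convex on every piece.

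Next I would analyse the breakpoints. At a transition $t^*$ exactly one of two elementary changes occurs as $l(t)$ sweeps along $l$: either \emph{(i)} a new pivot $p_+$ is inserted because the segment from the current last pivot $p_-$ to $l(t)$ first becomes tangent to an obstacle at the corner $p_+$, or \emph{(ii)} an existing pivot $p_-$ is released because the preceding pivot $p'_-$ now directly sees $l(t)$ and the broken line $p'_-\to p_-\to l(t)$ collapses to the straight segment $p'_-\to l(t)$. In either case, at $t = t^*$ the three relevant points are collinear, with the ``middle'' one lying on the segment joining the other two; this immediately gives $f(t^*-) = f(t^*+)$ (also guaranteed by \textbf{Lemma~\ref{th_fcontinuity}}). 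The key computation is for the one-sided derivatives: a direct differentiation of $\lVert q - l(t)\rVert$ yields $f'(t^*\pm) = \bigl\langle (l(t^*)-q_\pm)/\lVert l(t^*)-q_\pm\rVert,\; l(1)-l(0)\bigr\rangle$, and the two unit vectors from $q_-$ and $q_+$ toward $l(t^*)$ point along the same ray by collinearity, so $f'(t^*-) = f'(t^*+)$.

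Combining these, $f'$ is non-decreasing on each piece (each piece being convex) and continuous across every breakpoint, hence non-decreasing on all of $I$, which is equivalent to convexity of $f$. The step I expect to be the hardest is the careful enumeration of transition types and the handling of non-generic degeneracies --- several pivot changes occurring simultaneously, a pivot transition coinciding with $l(t)$ crossing a cutline, or $p(t)$ coinciding with $\sigma(0)$ or with $l(t^*)$. I plan to dispatch these either by a small perturbation of $l$ together with a continuity argument, or by observing directly that any simultaneous transition decomposes into a sequence of the two elementary types above, each of which preserves the inequality $f'(t^*-) \leq f'(t^*+)$ needed for global convexity.
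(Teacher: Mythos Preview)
Your proposal is correct and takes a genuinely different route from the paper's proof. The paper argues pointwise via the discrete second difference, showing $f(t+dt)+f(t-dt)\ge 2f(t)$ for all sufficiently small $dt$ by a case split on whether the bend count of $p^*_{\sigma,l}(t\pm dt)$ differs from that of $p^*_{\sigma,l}(t)$ by $0$, $+1$, or $-1$, each case handled by a triangle-inequality computation on the last one or two segments (Appendix~3.7, Fig.~\ref{fig_BendProof}). You instead extract the global piecewise form $f(t)=C_k+\lVert p_k-l(t)\rVert$, obtain convexity of each piece as norm-of-affine, and glue by showing $f'$ is continuous at every transition via the collinearity of old pivot, new pivot, and $l(t^*)$. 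The underlying case analysis is the same --- your ``pivot inserted'' and ``pivot released'' are exactly the paper's $(+1,0)$ and $(-1,0)$ cases, and ``within a piece'' is $(0,0)$ --- but your organization yields the slightly stronger conclusion that $f$ is in fact $C^1$ and avoids the somewhat informal limit formulation in the paper's write-up. Conversely, the paper's local second-difference argument does not need the finiteness of breakpoints that your approach relies on (which follows from polygonality of $X_{free}$ and the fixed homotopy class of $\sigma$), so you should state that hypothesis explicitly. The degeneracies you flag (simultaneous pivot changes, pivot coinciding with an endpoint) are precisely the ones the paper also glosses over, and your perturbation-plus-continuity fallback is adequate.
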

\begin{proof}
	See Appendix 3.7.\qed
\end{proof}

Returning to the discussion in (\ref{eq_constraintLine}) on quickly determining whether the points on the dividing line satisfy the constraint, based on \textbf{Theorem~\ref{th_finequality}}, we set $t = 0.5$ in (\ref{eq_finequality0P}), yielding:
\begin{equation}
	f\left(\frac{1}{2}\right) - c^l_{dt} \leq f\left(\frac{1}{2} + dt\right) \leq f\left(\frac{1}{2}\right) + c^l_{dt},
\end{equation}
for $dt\in[-0.5,0.5]$, we can determine the range of values for the minimum cost:
\begin{equation}
	\underset{t \in I}{\mathrm{min}}\ c\left(\overline{\Gamma^*} \circ \langle x_s, \varrho, l_c(t)\rangle \right) \in \left[f\left(\frac{1}{2}\right) - \frac{c(l_c)}{2} , f\left(\frac{1}{2}\right)\right].
\end{equation}
From this, we design a method to conservatively determine whether the cutline satisfies (\ref{eq_constraintLine}), as follows:
\begin{algorithm}[H]
	\begin{algorithmic}
		\State $c_{min} \gets$ Calculate $c\left(\overline{\Gamma^*} \circ \langle x_s, \varrho, l_c(0.5)\rangle\right)$
		\State \textbf{if} $c_{min} - \frac{c(l_c)}{2} > \zeta$ \textbf{then}
		\State \hspace{0.25cm}(\ref{eq_constraintLine}) is not satisfied
		\State \textbf{elif} $c_{min} \leq \zeta$ \textbf{then}
		\State \hspace{0.25cm}(\ref{eq_constraintLine}) is satisfied
		\State \textbf{else}
		\State \hspace{0.25cm}Unable to conservatively determine
	\end{algorithmic}
\end{algorithm}

In cases where the constraint (\ref{eq_constraintLine}) cannot be conservatively determined, \textbf{Theorem~\ref{th_fconvex}} suggests that convex optimization techniques, such as the ternary search method, can be employed iteratively to quickly determine whether (\ref{eq_constraintLine}) is satisfied.

\subsection{Optimal Tethered Configuration Search Algorithm}
In this subsection, we present and explain the details of the CDT-TCS algorithm (Optimal Tethered Configuration Search algorithm based on CDT encoding), developed to solve \textbf{Problem~\ref{proTCS}} based on the preceding discussion. The characteristic of this algorithm is as follows: (i) given an anchor point $x_\star$ and tether length $\zeta$, it can quickly generate the \textbf{Tethered Configuration CDT Encoding Set} corresponding to all convex polygons. (ii) for any goal point $x_g$ in the free space $X_{free}$ the algorithm can return the feasible optimal configuration set $\mathcal{C}^{*,x_{g}}_{x_\star ,\zeta}$ with very high efficiency. The CDT-TCS corresponding to its characteristics can be divided into two independent stages, which are namely \textbf{Preprocessing Stage} and \textbf{GetConfigurations Stage}.

The pseudocode of the CDT-TCS algorithm is shown in \textbf{Algorithm~\ref{alg_TCS}} through \textbf{Algorithm~\ref{alg_ShortestPath}}. Specifically, \textbf{Algorithm~\ref{alg_TCS}} and \textbf{Algorithm~\ref{alg_TCSAFOC}} illustrate the \textbf{Preprocessing Stage} and the \textbf{GetConfigurations Stage} of the CDT-TCS, respectively. \textbf{Algorithm~\ref{alg_EncodingValidity}} presents the fast decision-making method for the third constraint in (\ref{eq_CodingIteration}). \textbf{Algorithm~\ref{alg_ShortestPath}} demonstrates an implementation of $\overline{\Gamma^*}$. The following sections provide a detailed description of each of these algorithms.

In the \textbf{Preprocessing Stage} (\textbf{Algorithm~\ref{alg_TCS}}), the algorithm begins by initializing the input free space $X_{free}$. Specifically, it performs polygonal fitting and convex decomposition on $X_{free}$ to construct the graph $\mathcal{G}_{con}$ (line 2). Next, the convex polygon $\mathbf{x}_\star$ containing the anchor point $x_\star$ is identified (line 3). The algorithm then initializes the queue $\mathcal{Q}_\varrho$ and the Tethered Configuration CDT Encoding Set for all convex polygons with the empty set $\varnothing$, where the queue $\mathcal{Q}_\varrho$ stores newly added Tethered Configuration CDT Encodings.

Lines 5-14 of \textbf{Algorithm~\ref{alg_TCS}} illustrate the iterative process described in \textbf{Remark~\ref{re_TCS}}. In lines 5 and 6, the algorithm initially adds $\mathbf{x}_\star$ to $P^*_\zeta(\mathcal{G}_{con};\mathbf{x}_\star,\mathbf{x}_\star)$ according to (\ref{eq_CodingIteration2}), thereby triggering the entire iteration. In lines 7-14, the algorithm iteratively pops the first element $\varrho$ from the head of the queue $Q_\varrho$ and traverses the neighbors of the last convex polygon $\varrho(T_\varrho)$ in the sequence. For each neighboring convex polygon $\mathbf{x}_{near}$, the algorithm first excludes those that do not satisfy Constraint 2 in (\ref{eq_CodingIteration}), and then uses \textbf{Algorithm~\ref{alg_EncodingValidity}} to check if $\mathbf{x}_{near}$ satisfies Constraint 3 in equation (\ref{eq_CodingIteration}) (i.e., Constraint (\ref{eq_constraintLine})). For valid $\varrho_{new}$, it is added to both $P^*_\zeta(\mathcal{G}_{con};\mathbf{x}_\star,\mathbf{x}_{near})$ and the queue $\mathcal{Q}_\varrho$.

\begin{algorithm}[H]
	\caption{CDT-TCS.}% Algorithm 1
	\begin{algorithmic}[1]
		\State {\textbf{Input: }}$X_{free}$, $x_\star$, $\zeta$
		\State $\mathcal{G}_{con}\gets{\tt{InitializeFreeSpace}}(X_{free})$
		\State $\mathbf{x}_\star \gets {\tt{GetConvexPolygon}}(x_\star)$
		\State Use $\varnothing$ to initialize $\mathcal{Q}_\varrho$ and $P^*_\zeta(\mathcal{G}_{con};\mathbf{x}_\star,\mathbf{x})$, $\mathbf{x}\in V_{con}$
		\State $P^*_\zeta(\mathcal{G}_{con};\mathbf{x}_\star,\mathbf{x}_\star) \overset{+}{\gets} \{(\mathbf{x}_\star)\}$
		\State $\mathcal{Q}_\varrho \overset{+}{\gets} \{(\mathbf{x}_\star)\}$
		\State \textbf{while} $\mathcal{Q}_\varrho \not\equiv \varnothing$ \textbf{do}
		\State \hspace{0.25cm}$\varrho \gets {\tt{PopFirst}}(\mathcal{Q}_\varrho)$
		\State \hspace{0.25cm}\textbf{for} $\mathbf{x}_{near} \in {\tt{Neighbor}}\left(\varrho(T_\varrho)\right) $ \textbf{do}
		\State \hspace{0.50cm}\textbf{if} $\mathbf{x}_{near} \equiv \varrho(T_\varrho-1)$ \textbf{then Continue}
		\State \hspace{0.50cm}$\varrho_{new} = \varrho + (\mathbf{x}_{near})$
		\State \hspace{0.50cm}\textbf{if} ${\tt{EncodingValidity}}(\varrho_{new},x_\star,\zeta)$ \textbf{then}
		\State \hspace{0.75cm}$P^*_\zeta(\mathcal{G}_{con};\mathbf{x}_\star,\mathbf{x}_{near}) \overset{+}{\gets} \{\varrho_{new}\}$
		\State \hspace{0.75cm}$\mathcal{Q}_\varrho \overset{+}{\gets} \{\varrho_{new}\}$
	\end{algorithmic}
	\label{alg_TCS}
\end{algorithm}

In the \textbf{GetConfigurations Stage} (\textbf{Algorithm~\ref{alg_TCSAFOC}}), the algorithm first initializes $\mathcal{C}^{*,x_{g}}_{x_\star ,\zeta}$ as the empty set $\varnothing$ and identifies the convex polygon $\mathbf{x}_g$ containing the goal point $x_g$. Subsequently, the algorithm retrieves all feasible optimal configurations for $x_g$ through an iterative process based on (\ref{eq_GetTCS}), storing the results in $\mathcal{C}^{*,x_{g}}_{x_\star,\zeta}$.

\begin{algorithm}[H]
	\caption{GetAllFOC: GetAllFeasibleOptConfigs $\mathcal{C}^{*,x_{g}}_{x_\star ,\zeta}$.}
	\begin{algorithmic}[1]
		\State {\textbf{Input: }}$x_{g}$
		\State Use $\varnothing$ to initialize $\mathcal{C}^{*,x_{g}}_{x_\star ,\zeta}$
		\State $\mathbf{x}_g \gets {\tt{GetConvexPolygon}}(x_g)$
		\State \textbf{for} $\varrho_g \in P^*_\zeta(\mathcal{G}_{con};\mathbf{x}_\star,\mathbf{x}_g) $ \textbf{do}
		\State \hspace{0.25cm}$\varsigma^*_g \gets \overline{\Gamma^*} \circ \langle x_\star, \varrho_g, x_{g}\rangle$
		\State \hspace{0.25cm}\textbf{if} $c(\varsigma^*_g) \leq \zeta$ \textbf{then}
		\State \hspace{0.50cm}$\mathcal{C}^{*,x_{g}}_{x_\star ,\zeta} \overset{+}{\gets} \{\varsigma^*_g\}$
		\State \textbf{return} $\mathcal{C}^{*,x_{g}}_{x_\star ,\zeta}$
	\end{algorithmic}
	\label{alg_TCSAFOC}
\end{algorithm}

Based on the discussion in Subsection 4.1, \textbf{Algorithm~\ref{alg_EncodingValidity}} presents an implementation of a rapid method to determine whether a Tethered Configuration CDT Encoding $\varrho$ is valid. The algorithm first computes the cutline $l_c$ between the last two convex polygons of $\varrho$. It then conservatively evaluates the validity of $\varrho$ using \textbf{Theorem~\ref{th_finequality}} (lines 3-7). For cases where the validity cannot be conservatively determined (lines 9-28), the algorithm proceeds according to \textbf{Theorem~\ref{th_fconvex}}. It first assesses the monotonicity of the cost function $c\left(\overline{\Gamma^*} \circ \langle x_\star, \varrho, l_c(t)\rangle\right)$ (i.e., $f(t)$ in (\ref{eq_ffun})) with respect to $t$. If the function is monotonic, the validity of $\varrho$ is determined based on the costs at the endpoints (lines 9-14). If the function is non-monotonic, the algorithm applies ternary search to quickly locate the extremum of $c\left(\overline{\Gamma^*} \circ \langle x_\star, \varrho, l_c(t)\rangle\right)$, checking the validity of $\varrho$ during the process (lines 15-28).

\begin{algorithm}[H]
	\caption{Encoding Validity.}% Algorithm 3
	\begin{algorithmic}[1]
		\State {\textbf{Input: }}$\varrho$, $x_\star$, $\zeta$
		\State $l_c \gets {\tt{GetCutline}}(\varrho(T_{\varrho}-1),\varrho(T_{\varrho}))$
		\State $c_{mid} \gets c\left(\overline{\Gamma^*} \circ \langle x_\star, \varrho, l_c(0.5)\rangle\right) $
		\State \textbf{if} $c_{mid} - \frac{c(l_c)}{2} > \zeta$ \textbf{then}
		\State \hspace{0.25cm}\textbf{return} ${\tt{False}}$
		\State \textbf{elif} $c_{mid} \leq \zeta$ \textbf{then}
		\State \hspace{0.25cm}\textbf{return} ${\tt{True}}$
		\State \textbf{else}
		\State \hspace{0.25cm}$c_0 \gets c\left(\overline{\Gamma^*} \circ \langle x_\star, \varrho, l_c(0)\rangle\right)$
		\State \hspace{0.25cm}$c_1 \gets c\left(\overline{\Gamma^*} \circ \langle x_\star, \varrho, l_c(1)\rangle\right)$
		\State \hspace{0.25cm}\textbf{if} $c_0 \leq \zeta$ \textbf{or} $c_1 \leq \zeta$ \textbf{then}
		\State \hspace{0.50cm}\textbf{return} ${\tt{True}}$
		\State \hspace{0.25cm}\textbf{if} $\zeta < c_0 \leq c_{mid} \leq c_1$ \textbf{or} $\zeta < c_1 \leq c_{mid} \leq c_0$ \textbf{then}
		\State \hspace{0.50cm}\textbf{return} ${\tt{False}}$
		\State \hspace{0.25cm}$t_{low} \gets 0$, $t_{high} \gets 1$
		\State \hspace{0.25cm}\textbf{loop}
		\State \hspace{0.50cm}$t_{mid1} \gets t_{low} + (t_{high} - t_{low})/3$
		\State \hspace{0.50cm}$t_{mid2} \gets t_{high} - (t_{high} - t_{low})/3$
		\State \hspace{0.50cm}$c_{mid1} \gets c\left(\overline{\Gamma^*} \circ \langle x_\star, \varrho, l_c(t_{mid1})\rangle\right)$
		\State \hspace{0.50cm}$c_{mid2} \gets c\left(\overline{\Gamma^*} \circ \langle x_\star, \varrho, l_c(t_{mid2})\rangle\right)$
		\State \hspace{0.50cm}\textbf{if} $c_{mid1} \leq \zeta$ \textbf{or} $c_{mid2} \leq \zeta$ \textbf{then}
		\State \hspace{0.75cm}\textbf{return} ${\tt{True}}$
		\State \hspace{0.50cm}\textbf{if} $c_{mid1} \leq c_{mid2}$ \textbf{then}
		\State \hspace{0.75cm}$t_{high} \gets t_{mid2}$
		\State \hspace{0.50cm}\textbf{else}
		\State \hspace{0.75cm}$t_{low} \gets t_{mid1}$
		\State \hspace{0.25cm}\textbf{Until} Condition (e.g. $\left\lvert c_{mid1}-c_{mid2}\right\rvert <$ threshold)
		\State \hspace{0.25cm}\textbf{return} ${\tt{False}}$
	\end{algorithmic}
	\label{alg_EncodingValidity}
\end{algorithm}

For the implementation of $\overline{\Gamma^*}$ in \textbf{Algorithm~\ref{alg_ShortestPath}}, we adopt the approach used in our previous work \citep{liu2023cdt} for obtaining the optimal homotopic path. The core of this method involves constructing a compressed mapping and iteratively minimizing the cost of the current path, eventually converging to the optimal path within its homotopy class. Detailed discussions, proofs, and programming optimizations for \textbf{Algorithm~\ref{alg_ShortestPath}} can be found in Subsection 3.4 of \citep{liu2023cdt}, and will not be reiterated here.

\begin{algorithm}[H]
	\caption{$\overline{\Gamma^*}$: GetOptimaltHomotopyPath.}
	\begin{algorithmic}[1]
		\State {\textbf{Input: }}$\left\langle x_s, \varrho, x_e\right\rangle $
		\State Use $\varrho$ to get the cutline sequence $\{l_{c1},l_{c2},\dots ,l_{cn}\}$
		\State $\{x_0,x_1,\dots ,x_{n+1}\} \gets \{x_s,l_{c1}(\frac{1}{2}),\dots ,l_{cn}(\frac{1}{2}),x_e\}$
		\State $\sigma^* \gets l_{x_0}^{x_1} * l_{x_1}^{x_2} * \dots * l_{x_n}^{x_{n+1}}$
		\State \textbf{repeat}
		\State \hspace{0.5cm} $c_{min} \gets c(\sigma^*) $
		\State \hspace{0.5cm} \textbf{for $k \in \{1,2,\dots,n\}$ do}
		\State \hspace{1.0cm} $x_k = \mathop {\arg \min }\limits_{x \in l_{ck}}\ c(l_{x_{k-1}}^x * l_x^{x_{k+1}})$
		\State \hspace{0.5cm} $\sigma^* \gets l_{x_0}^{x_1} * l_{x_1}^{x_2} * \dots * l_{x_n}^{x_{n+1}}$
		\State \textbf{until} $c(\sigma^*) - c_{min} <$ threshold
		\State \textbf{return} $\sigma^*$
	\end{algorithmic}
	\label{alg_ShortestPath}
\end{algorithm}

\section{Applications and Extensions}
Section 4 presented the foundational theory related to the TCS problem and introduced the solution method, CDT-TCS. Building upon this foundation, the current section explores several applications based on CDT-TCS and discusses improved versions that can be adapted to various application scenarios.

\subsection{Fast Optimal Path Planning for Tethered Robots}
The problem of fast optimal path planning under tethering constraints (\textbf{Problem~\ref{proTPP}}) is one of the most fundamental issues in tethered robot motion planning. Its objective is to plan an optimal trajectory $\sigma^*$ for a tethered robot from its current configuration $\varsigma_s$ to the goal position $x_g$.

According to \textbf{Assumption~\ref{ass1}}, the configuration changes caused by the motion of the robot in free space are related as follows:
\begin{equation}
	\varsigma_g \simeq_p \varsigma_s * \sigma,
\end{equation}
where $\sigma \in P(X_{free};\varsigma_s(1),x_g)$ represents the robot's path, and $\varsigma_{g}$ denotes the new configuration after moving along the $\sigma$ to $x_g$. Thus, considering the configuration change resulting from the robot's motion along the optimal path $\sigma^*$, we have:
\begin{equation}
	\label{eq_FOPP1}
	\varsigma_g \simeq_p \varsigma_s * \sigma^* \Longleftrightarrow \sigma^* \simeq_p \overline{\varsigma_s} * \varsigma_g.
\end{equation}
Due to the constraint on the right-hand side of (\ref{eqProbPSet}) (when $t = 1$) that limits the range of $\sigma^*$, the new configuration $\varsigma_g$ satisfies the following condition:
\begin{equation}
	c\left(\Theta(\varsigma_g)\right) \leq \zeta.
\end{equation}
Let $\varsigma^*_g$ represent the optimal configuration $\Theta(\varsigma_g)$. (\ref{eq_FOPP1}) can then be rewritten as:
\begin{equation}
	\label{eq_FOPP2}
	\sigma^* \simeq_p \overline{\varsigma_s} * \varsigma_g \simeq_p \overline{\varsigma_s} * \varsigma^*_g.
\end{equation}
Since $\sigma^*$ is the optimal path in its homotopy class, we have the following minimization:
\begin{equation}
	\sigma^* = \Theta\left(\overline{\varsigma_s} * \varsigma^*_g\right).
\end{equation}
Based on the definition of $\mathcal{C}^{*,x_{g}}_{x_\star ,\zeta}$, it follows that $\varsigma^*_g \in \mathcal{C}^{*,x_{g}}_{x_\star ,\zeta}$, which leads to the determination of the range of $\sigma^*$ as:
\begin{equation}
	\label{eq_FOPPset}
	\sigma^* \in \left\{ \Theta\left(\overline{\varsigma_s} * \varsigma^*_g\right) \middle| \varsigma^*_g \in \mathcal{C}^{*,x_{g}}_{x_\star ,\zeta}\right\}.
\end{equation}

In summary, a preliminary approach for efficiently solving the TPP problem has been established. Specifically, the method involves first utilizing CDT-TCS to obtain $\mathcal{C}^{*,x_{g}}_{x_\star ,\zeta}$, and then determining the optimal motion path $\sigma^*$ within the determined range, as defined in (\ref{eq_FOPPset}). It is important to note that for the function `$\Theta$' in (\ref{eq_FOPPset}), although algorithms such as the Elastic Band method \citep{quinlan1993elastic} can be employed to quickly obtain the optimal path from the path class $\left[\overline{\varsigma_s} * \varsigma^*_g\right]_{\simeq_p}$, this step can actually be omitted. In the following, we will further simplify the solution process described by (\ref{eq_FOPPset}) for greater efficiency.

Based on \textbf{Definition~\ref{defInvMAPTstar}}, (\ref{eqMAPTstarProd2}) and (\ref{eqMAPTstarProd}), the extremum function in (\ref{eq_FOPPset}) can be expressed as:
\begin{align}
	\Theta\left(\overline{\varsigma_s} * \varsigma^*_g\right) & = \overline{\Gamma^*} \circ \left(\Gamma^* \circ (\overline{\varsigma_s} * \varsigma^*_g)\right)  \nonumber     \\
	                                                                                                                    & = \overline{\Gamma^*} \circ \left\langle x_\star, {\tt{RBF}}(\overline{\varrho_s}*\varrho_g), x_g\right\rangle,
\end{align}
where $\varrho_s=\Gamma^*_{\varrho}\circ \varsigma_s$ and $\varrho_g=\Gamma^*_{\varrho} \circ \varsigma^*_g$. Therefore, the value of the function `$\Theta$' in (\ref{eq_FOPPset}) is determined by the values of $\varrho_g$, which leads to the following:
\begin{align}
	\label{eq_FOPPset1}
	 & \left\{ \Theta\left(\overline{\varsigma_s} * \varsigma^*_g\right) \middle| \varsigma^*_g \in \mathcal{C}^{*,x_{g}}_{x_\star ,\zeta}\right\} = \nonumber \\
	 & \qquad\ \left\{\overline{\Gamma^*} \circ \langle x_\star, {\tt{RBF}}(\overline{\varrho_s}*\varrho_g), x_g\rangle \middle| \varrho_g \in  \Gamma^*_{\varrho} \circ \mathcal{C}^{*,x_{g}}_{x_\star ,\zeta}\right\}.
\end{align}
Furthermore, according to (\ref{eq_GetTCS}), we have:
\begin{align}
	\label{eq_FOPPset2}
	 & \Gamma^*_{\varrho} \circ \mathcal{C}^{*,x_{g}}_{x_\star ,\zeta} =  \nonumber                                                                                                                         \\
	 & \quad \left\{ \varrho_g \in P^*_\zeta(\mathcal{G}_{con};\mathbf{x}_\star,\mathbf{x}_g) \middle| c\left(\overline{\Gamma^*} \circ \langle x_\star, \varrho_g, x_g \rangle\right) \leq \zeta\right\}.
\end{align}
Based on equations (\ref{eq_FOPPset}), (\ref{eq_FOPPset1}), and (\ref{eq_FOPPset2}), the feasible range of $\sigma^*$ can be updated as follows:
\begin{align}
	\label{eq_FOPPsetE}
	\sigma^* & \in \left\{ \Theta\left(\overline{\varsigma_s} * \varsigma^*_g\right) \middle| \varsigma^*_g \in \mathcal{C}^{*,x_{g}}_{x_\star ,\zeta}\right\}\nonumber  \\
	         & =\left\{\overline{\Gamma^*} \circ \langle x_\star, {\tt{RBF}}(\overline{\varrho_s}*\varrho_g), x_g\rangle \middle| \varrho_g \in  \Gamma^*_{\varrho} \circ \mathcal{C}^{*,x_{g}}_{x_\star ,\zeta}\right\}\nonumber  \\
	         & = \scalebox{0.86}{$\left\{\overline{\Gamma^*} \circ \langle x_\star, {\tt{RBF}}(\overline{\varrho_s}*\varrho_g), x_g\rangle \middle| \begin{array}{l}
		                                                                                                                               \varrho_g \in P^*_\zeta(\mathcal{G}_{con};\mathbf{x}_\star,\mathbf{x}_g) \\
		                                                                                                                               \land c\left(\overline{\Gamma^*} \circ \langle x_\star, \varrho_g, x_g \rangle\right) \leq \zeta
	                                                                                                                               \end{array}\right\}$}.
\end{align}
Since \textbf{Algorithm~\ref{alg_TCS}} can efficiently generate the set $P^*_\zeta(\mathcal{G}_{con};\mathbf{x}_\star,\mathbf{x}_g)$ for any $\mathbf{x}_g \in V_{con}$, the solution to the optimal tethered path planning problem becomes a straightforward and rapid process. Specifically, it involves traversing $P^*_\zeta(\mathcal{G}_{con}; \mathbf{x}_\star,\mathbf{x}_g)$ to find the configuration that minimizes $\overline{\Gamma^*} \circ \langle x_\star, {\tt{RBF}}(\varrho_s*\varrho_g), x_g\rangle$.\footnote{The implementation of the ${\tt{RBF}}$ (clear rollback path) is relatively simple and can be likened to string processing in programming.}

The pseudocode for the aforementioned solution process is presented in \textbf{Algorithm~\ref{alg_TPP}} (CDT-TPP). Prior to running CDT-TPP, \textbf{Algorithm~\ref{alg_TCS}} needs to be executed once for preprocessing. The CDT-TPP algorithm begins by initializing $\varrho_s$, $\mathbf{x}_g$, and $\sigma^*$ based on the given planning task (lines 1-4). Subsequently, the algorithm iterates through $P^*_\zeta(\mathcal{G}_{con};\mathbf{x}_\star,\mathbf{x}_g)$ to identify the optimal motion path (lines 5-10). It is important to note that during this process, $\varrho_g$ that do not satisfy the tether length constraint are excluded (lines 6, 7).

\begin{algorithm}[H]
	\caption{CDT-TPP.\Comment{Pre-run Algorithm 1}}
	\begin{algorithmic}[1]
		\State {\textbf{Input: }}$\varsigma_s$, $x_g$
		\State $\varrho_s \gets \Gamma^*_\varrho \circ \varsigma_s$
		\State $\mathbf{x}_g \gets {\tt{GetConvexPolygon}}(x_g)$
		\State $\sigma^* \gets {\tt{null}}$ \Comment{$c({\tt{null}}) = \infty$.}
		\State \textbf{for} $\varrho_g \in P^*_\zeta(\mathcal{G}_{con};\mathbf{x}_\star,\mathbf{x}_g) $ \textbf{do}
		\State \hspace{0.25cm}$\varsigma^*_g \gets \overline{\Gamma^*} \circ \langle x_\star, \varrho_g, x_{g}\rangle$
		\State \hspace{0.25cm}\textbf{if} $c(\varsigma^*_g) \leq \zeta$ \textbf{then}
		\State \hspace{0.50cm}$\sigma \gets \overline{\Gamma^*} \circ \langle x_\star, {\tt{RBF}}(\overline{\varrho_s}  * \varrho_g), x_{g}\rangle$
		\State \hspace{0.50cm}\textbf{if} $c(\sigma) < c(\sigma^*)$ \textbf{then}
		\State \hspace{0.75cm}$\sigma^* \gets \sigma$
		\State \textbf{return} $\sigma^*$
	\end{algorithmic}
	\label{alg_TPP}
\end{algorithm}

In summary, we have designed an efficient algorithm (CDT-TPP) to solve \textbf{Problem~\ref{proTPP}}. However, there appears to be a slight lack of rigor in the preceding discussion. Specifically, we only considered the case where the constraint on the right-hand side of (\ref{eqProbPSet}) holds for $t = 1$. As a result, the solution $\sigma^*$ determined for \textbf{Problem~\ref{proTPP}} does not seem to guarantee that $c\left(\Theta(\varsigma_s * \sigma_{[0,t]})\right) \leq \zeta$ holds for all $t \in I$ (although, in fact, the solution $\sigma^*$ does satisfy this condition). To address potential confusion among readers regarding this issue, we will introduce an additional discussion below. We aim to prove that if $\sigma^*$ satisfies the constraint for $t = 1$ on the right-hand side of (\ref{eqProbPSet}), then it satisfies the constraint for all $t \in I$.\footnote{If you are a practitioner in this field and are primarily concerned with the basic idea of CDT-TPP, you may choose to skip the subsequent discussion in this subsection. This discussion will not lead to any changes in the CDT-TPP algorithm itself.}

\begin{figure}[!t]
	\centering
	\includegraphics[height=3.0in]{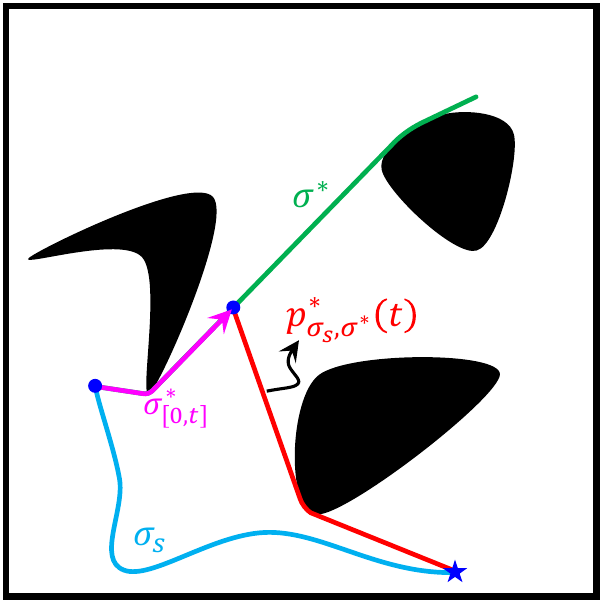}
	\caption{Illustration of the function $p^*_{\sigma_s, \sigma^*}$. The function $p^*_{\sigma_s, \sigma^*}(t)$ represents the optimal homotopic path within the homotopy class $\left[\sigma_s * \sigma^*_{[0,t]}\right]_{\simeq_p}$.}
	\label{fig_genfmap}
\end{figure}

Referring to Fig.~\ref{fig_genfmap}, we can construct two functions analogous to $p^*_{\sigma_s, l}$ and $f$:
\begin{equation}
\label{eq_genpfun}
p^*_{\sigma_s, \sigma^*}(t) = \Theta\left(\sigma_s * \sigma^*_{[0,t]}\right),
\end{equation}
where $\sigma^*$ is an arbitrary optimal homotopic path in $P(X_{free})$, and $\sigma^*(0) = \sigma_s(1)$;
\begin{equation}
	\label{eq_genffun}
	g(t) = \left(c \circ p^*_{\sigma_s, \sigma^*}\right) (t).
\end{equation}
In fact, $p^*_{\sigma_s, l}$ and $f$ can be regarded as the special cases of $p^*_{\sigma_s, \sigma^*}$ and $g$, respectively, when $\sigma^*$ is a straight line. Similar to the function $f$, we can prove the convexity of $g$.
\begin{theorem}
	\label{th_genfconvex}
	The function $g$ is convex.
\end{theorem}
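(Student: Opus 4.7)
The plan is to reduce the claim to Theorem~\ref{th_fconvex} by exploiting the piecewise-linear structure of the optimal path $\sigma^*$. By Theorem~\ref{SSDR}, one may write $\sigma^* = l_{x_0}^{x_1} * l_{x_1}^{x_2} * \cdots * l_{x_{n-1}}^{x_n}$ with intermediate cutline-crossing points $x_k$ occurring at breakpoint times $0 = t_0 < t_1 < \cdots < t_n = 1$ in the parametrization of $\sigma^*$. On each closed subinterval $[t_{k-1}, t_k]$, the subpath $\sigma^*_{[t_{k-1}, t_k]}$ is itself a straight-line path, so $g$ restricted there falls exactly into the setting of the previous subsection.

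Concretely, for fixed $k$ I would set $\tilde{\sigma}_s = \sigma_s * \sigma^*_{[0, t_{k-1}]}$ and $\tilde{l} = \sigma^*_{[t_{k-1}, t_k]}$, and define $\tilde{f}(s) = c(\Theta(\tilde{\sigma}_s * \tilde{l}_{[0, s]}))$. Associativity of $*$ together with Corollary~\ref{Cor} yields the identity $\tilde{f}(s) = g(t_{k-1} + s(t_k - t_{k-1}))$ for $s \in I$, so the convexity of $\tilde{f}$ provided by Theorem~\ref{th_fconvex} transports directly to convexity of $g$ on each piece $[t_{k-1}, t_k]$. In parallel, I would establish continuity of $g$ on all of $I$ by first proving the generalized homotopy relation $p^*_{\sigma_s, \sigma^*}(t_2) \simeq_p p^*_{\sigma_s, \sigma^*}(t_1) * \sigma^*_{[t_1, t_2]}$ (the direct analog of Lemma~\ref{th_phomotopy}), then deducing a Lipschitz-type estimate $|g(t_2) - g(t_1)| \leq c(\sigma^*_{[t_1, t_2]})$, following the template of Theorem~\ref{th_finequality} and Lemma~\ref{th_fcontinuity}.

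The main obstacle is the joint condition: to upgrade piecewise convexity plus continuity to convexity on all of $I$, one must verify $g'_-(t_k) \leq g'_+(t_k)$ at each interior breakpoint $t_k$. An envelope-type argument on the final segment of $p^*_{\sigma_s, \sigma^*}(t_k)$ reduces this to a dot-product inequality between the unit tangents $v_k, v_{k+1}$ of $\sigma^*$ on segments $k$ and $k+1$ and the one-sided limits $u_k^{\pm}$ of the terminal-segment direction of $p^*_{\sigma_s, \sigma^*}(t)$. The key geometric input is that, because $\sigma^*$ is itself globally optimal, any nontrivial bending at $x_k$ is pinned at an obstacle vertex that also constrains the tail of $p^*_{\sigma_s, \sigma^*}$: as $t$ crosses $t_k$, the terminal segment either keeps its direction (yielding equality of one-sided derivatives) or pivots at $x_k$ so that $u_k^+ = v_{k+1}$, in either case forcing the inequality via $v_k \cdot u_k^- \leq 1 = v_{k+1} \cdot u_k^+$. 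Should the pivoting analysis prove technically delicate, a robust fallback is to establish midpoint convexity directly, by concatenating homotopy-equivalent half-paths through $p^*_{\sigma_s, \sigma^*}((t_1+t_2)/2)$ and applying the Lipschitz estimate above; continuity of $g$ then upgrades midpoint convexity to full convexity on $I$.
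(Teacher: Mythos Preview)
Your piecewise reduction is correct and matches the paper's Case~1: on any subinterval where $\sigma^*$ is a straight segment, $g$ is an affine reparametrization of an instance of $f$, hence convex by Theorem~\ref{th_fconvex}. Your continuity argument via the generalized homotopy relation and the Lipschitz bound $|g(t_2)-g(t_1)|\le c(\sigma^*_{[t_1,t_2]})$ is also correct.

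The gap is at the breakpoints. You claim that when $\sigma^*$ bends at $x_k$, the terminal direction of $p^*_{\sigma_s,\sigma^*}$ ``either keeps its direction (yielding equality of one-sided derivatives) or pivots at $x_k$.'' The pivoting branch ($u_k^+=v_{k+1}$) is fine and corresponds to the paper's Case~2. But the other branch is wrong: if the terminal direction $u$ of $p^*$ is continuous at $t_k$ while the direction of motion along $\sigma^*$ jumps from $v_k$ to $v_{k+1}$, then under arc-length parametrization $g'_-(t_k)=c(\sigma^*)\,u\cdot v_k$ and $g'_+(t_k)=c(\sigma^*)\,u\cdot v_{k+1}$, which are \emph{different} in general. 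Nothing you have written forces $u\cdot v_k\le u\cdot v_{k+1}$; the obstacle vertex at $x_k$ constrains $u$ only to lie in a cone, and that cone need not single out the correct sign. Your fallback does not close this either: the Lipschitz estimate gives only $2g(t_m)\le g(t_1)+g(t_2)+c(\sigma^*_{[t_1,t_2]})$, which is strictly weaker than midpoint convexity, and ``concatenating homotopy-equivalent half-paths through $p^*_{\sigma_s,\sigma^*}(t_m)$'' does not produce a competitor with the right cost bound.

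The paper supplies the missing geometric input in its Case~3. It forms the loop $\gamma=p^*_{\sigma_s,\sigma^*}(t-dt)*\sigma^*_{[t-dt,t+dt]}*\overline{p^*_{\sigma_s,\sigma^*}(t+dt)}$, which is null-homotopic and therefore encloses only points of $X_{free}$. This guarantees a straight chord $l\subset X_{free}$ through $\sigma^*(t)$ with endpoints $l(0)\in p^*_{\sigma_s,\sigma^*}(t-dt)$ and $l(1)\in p^*_{\sigma_s,\sigma^*}(t+dt)$. Splitting each of $p^*_{\sigma_s,\sigma^*}(t\pm dt)$ at the chord endpoints, the initial pieces $p^{*,1}_{t\pm dt}$ lie (via the sub-loop $p^{*,1}_{t-dt}*l*\overline{p^{*,1}_{t+dt}}\simeq_p e$) in the setting of Theorem~\ref{th_fconvex} along $l$, yielding $c(p^{*,1}_{t-dt})+c(p^{*,1}_{t+dt})\ge 2c\bigl(p^*_{\sigma_s,\sigma^*}(t)\bigr)$; the nonnegative tails then give the full midpoint inequality. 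This chord-through-the-loop construction is precisely the piece your envelope argument is missing.
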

\begin{proof}
	See Appendix 3.8.\qed
\end{proof}

Based on \textbf{Theorem~\ref{th_genfconvex}}, the following corollary can be derived to ensure that the solution $\sigma^*$ of \textbf{Problem~\ref{proTPP}} satisfies $c\left(\Theta(\varsigma_s * \sigma_{[0,t]})\right) \leq \zeta$ for any $t \in I$.
\begin{corollary}
	\label{Cor_genf}
	For any two configurations $\varsigma_s$ and $\varsigma_g$, if they satisfy $c(\varsigma_s) \leq \zeta$ and $c(\varsigma_g) \leq \zeta$, then for any $t \in I$, $\Theta(\varsigma_s * \sigma_{[0,t]})$ satisfies:
	\begin{equation}
		c\left(\Theta(\varsigma_s * \sigma^*_{[0,t]})\right) \leq \zeta,
	\end{equation}
	where $\sigma^*=\Theta(\overline{\varsigma_s} * \varsigma_g)$.
\end{corollary}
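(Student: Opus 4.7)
The plan is to recognize that the left-hand side of the inequality is exactly the function $g$ defined in equations (\ref{eq_genpfun})--(\ref{eq_genffun}) with $\sigma_s = \varsigma_s$ and the given choice $\sigma^* = \Theta(\overline{\varsigma_s} * \varsigma_g)$, and then exploit its convexity (\textbf{Theorem~\ref{th_genfconvex}}) together with bounds on the endpoints $g(0)$ and $g(1)$. A convex function on $[0,1]$ satisfies $g(t) \leq (1-t)g(0) + t g(1) \leq \max\{g(0), g(1)\}$, so it suffices to show $g(0) \leq \zeta$ and $g(1) \leq \zeta$.

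For $g(0)$: unwinding the definition of subpaths gives $\sigma^*_{[0,0]} = e_{\varsigma_s(1)}$, hence $\varsigma_s * \sigma^*_{[0,0]} \simeq_p \varsigma_s$ by the Identity property of $(P(X),*,\simeq_p)$. Thus $g(0) = c(\Theta(\varsigma_s))$, and since $\Theta(\varsigma_s)$ is the minimum-cost representative of $[\varsigma_s]_{\simeq_p}$, we have $c(\Theta(\varsigma_s)) \leq c(\varsigma_s) \leq \zeta$ by hypothesis.

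For $g(1)$: by the Associativity and Inverse properties of the groupoid together with the definition of $\sigma^*$,
\begin{align}
\varsigma_s * \sigma^* &\simeq_p \varsigma_s * \Theta(\overline{\varsigma_s} * \varsigma_g) \nonumber \\
&\simeq_p \varsigma_s * (\overline{\varsigma_s} * \varsigma_g) \nonumber \\
&\simeq_p (\varsigma_s * \overline{\varsigma_s}) * \varsigma_g \simeq_p e_{x_\star} * \varsigma_g \simeq_p \varsigma_g.
\end{align}
Hence $\Theta(\varsigma_s * \sigma^*) = \Theta(\varsigma_g)$ and $g(1) = c(\Theta(\varsigma_g)) \leq c(\varsigma_g) \leq \zeta$.

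Combining the two endpoint bounds with the convexity provided by \textbf{Theorem~\ref{th_genfconvex}} yields $g(t) \leq \zeta$ for every $t \in I$, which is exactly the claim. The main obstacle is essentially already handled upstream by \textbf{Theorem~\ref{th_genfconvex}}; the remaining work is bookkeeping with the homotopy groupoid laws to reduce $g(1)$ to $c(\Theta(\varsigma_g))$, which relies on $\Theta = \overline{\Gamma^*} \circ \Gamma^*$ being a well-defined operator that depends only on the homotopy class of its argument.
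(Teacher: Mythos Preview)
Your proposal is correct and follows precisely the route the paper intends: the corollary is stated immediately after \textbf{Theorem~\ref{th_genfconvex}} with no separate proof, because the argument is exactly the convexity-plus-endpoint-bounds computation you spell out. Your bookkeeping for $g(0)$ and $g(1)$ via the groupoid identities is the right way to make the implicit step explicit.
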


\subsection{Optimal Multi-Goal Visit for Tethered Robots}

The optimal multi-goal visiting problem is an important issue in path planning, with wide applications in logistics, inspection, tourism planning, and other fields. For traditional optimal multi-goal visiting problems, they are often decomposed into multiple independent optimal path planning sub-problems for resolution. However, as shown in Fig.~\ref{fig_TMVdep}, for the optimal multi-goal visiting problem of tethered robots (\textbf{Problem~\ref{proTMV}}, TMV), the topological constraints introduced by the tether cause the subpath planning tasks to be interdependent and exhibit aftereffects. Consequently, these subtasks cannot be considered independently or resolved simply through Dynamic Programming (DP) methods to determine the solution to the optimal TMV problem. In this subsection, we will discuss how to transform the TMV problem into a generalized path planning problem in a directed graph space with positive weights, and propose the CDT-TMV algorithm based on Dijkstra's algorithm to solve such problems. Before formally introducing the CDT-TMV algorithm, we will first analyze the properties of the optimal solution $\gamma^*$ for \textbf{Problem~\ref{proTMV}}.

\begin{figure*}[!t]
	\centering
	\subfloat[]{\includegraphics[width=1.65in]{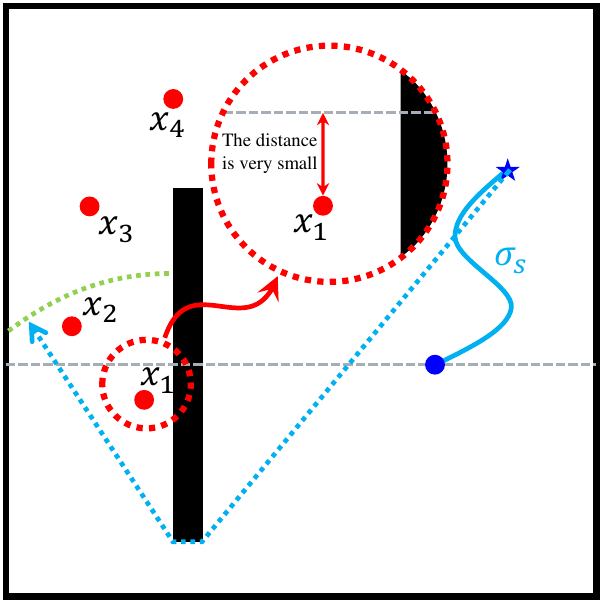}}
	\hfil
	\subfloat[]{\includegraphics[width=1.65in]{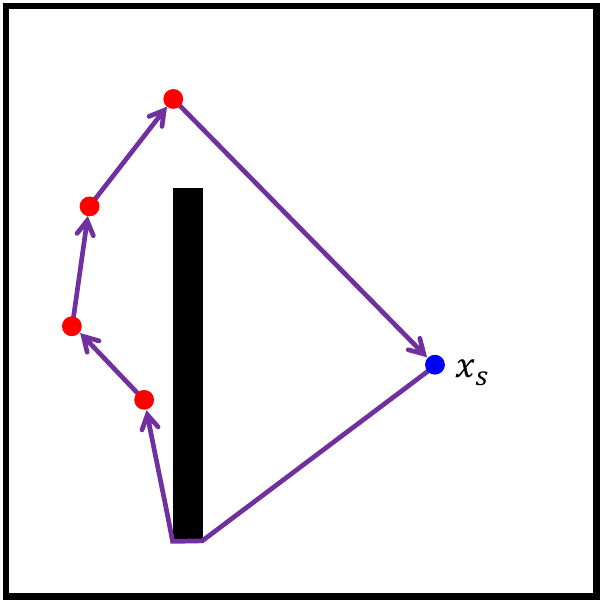}}
	\hfil
	\subfloat[]{\includegraphics[width=1.65in]{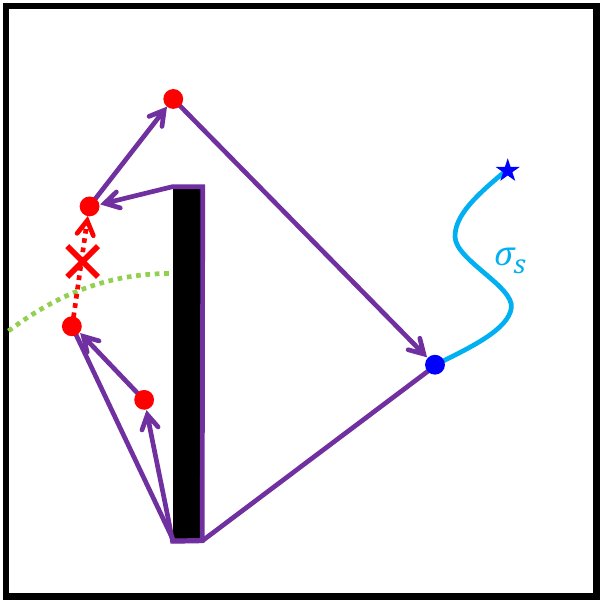}}
	\hfil
	\subfloat[]{\includegraphics[width=1.65in]{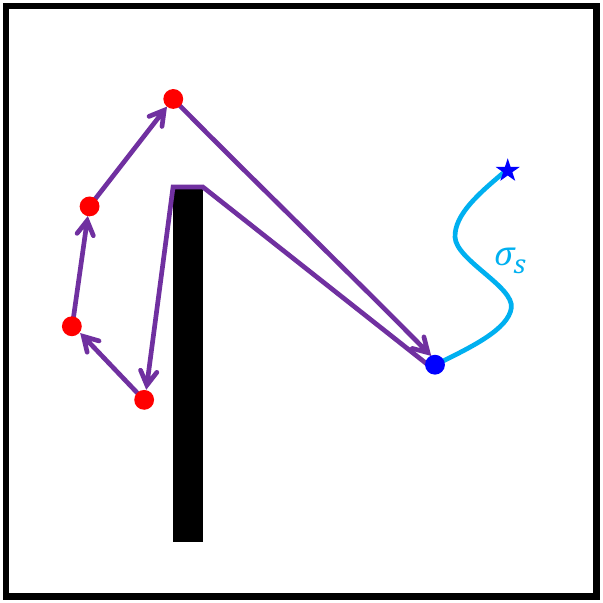}}
	\caption{Illustration of a simple TMV problem. (a) Task overview of the TMV problem, where $\sigma_s$ represents the initial configuration, the gray dashed lines indicate the axes of symmetry for rectangular obstacles, the blue dashed arrows depict the state when the tether is stretched to its maximum length $\zeta$, and the green dashed line shows the farthest position the tethered robot can reach when passing below the obstacle. $x_1$ to $x_4$ represent the points that need to be visited sequentially. (b) For the multi-goal visiting problem of non-tethered robots, it can be decomposed into multiple independent optimal path planning subproblems. (c) Path planned using the Dynamic Programming (DP) method. (d) The optimal path for this TMV problem. Clearly, due to the presence of aftereffects in the TMV problem, the DP method cannot be used to find the optimal path.}
	\label{fig_TMVdep}
\end{figure*}

\begin{theorem}
	\label{th_TMVPath}
	For any set of configurations $\varsigma^*_1 \in \mathcal{C}^{*,x_1}_{x_\star ,\zeta}$, $\varsigma^*_2 \in \mathcal{C}^{*,x_2}_{x_\star ,\zeta}$, $\dots$, $\varsigma^*_N \in \mathcal{C}^{*,x_N}_{x_\star ,\zeta}$, define the path $\gamma$ as:
	\begin{align}
		\label{eq_TMVPath}
		\gamma = \Theta\left(\overline{\varsigma^*_s} * \varsigma^*_1\right) * \Theta\left(\overline{\varsigma^*_1} * \varsigma^*_2\right) * \dots * \Theta\left(\overline{\varsigma^*_N} * \varsigma^*_s\right).
	\end{align}
	Then, the path $\gamma$ is a solution to the TMV problem that sequentially visits $x_1, x_2, \dots, x_N$.
\end{theorem}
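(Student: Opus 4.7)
The plan is to verify that $\gamma$ satisfies the three defining conditions of a TMV solution given in \textbf{Problem~\ref{proTMV}}. Condition (1) (sequential visitation) is immediate: each factor $\Theta(\overline{\varsigma^*_{i-1}} * \varsigma^*_i)$ is a continuous path starting at $x_{i-1}$ and ending at $\varsigma^*_i(1) = x_i$, since $\varsigma^*_i \in \mathcal{C}^{*,x_i}_{x_\star,\zeta}$. Thus the full concatenation visits $x_1, x_2, \ldots, x_N$ in order and returns to $\varsigma_s(1)$, matching the form required in equation~(\ref{eqProbTMV1}).

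For Condition (3), I would telescope the product $\varsigma_s * \gamma$ using the inverse and identity laws of the path-homotopy groupoid listed after \textbf{Definition~\ref{defPathHomotopy}}. Note that $\varsigma_s \simeq_p \varsigma^*_s$ because $\Theta$ returns an optimal representative inside the same homotopy class, and $\Theta(\overline{\varsigma^*_{i-1}} * \varsigma^*_i) \simeq_p \overline{\varsigma^*_{i-1}} * \varsigma^*_i$ for the same reason. Combining these with the inverse law gives $\varsigma^*_{i-1} * \Theta(\overline{\varsigma^*_{i-1}} * \varsigma^*_i) \simeq_p \varsigma^*_i$. A straightforward induction on $i$ from $1$ up to $N$ collapses the entire product down to $\varsigma^*_s \simeq_p \varsigma_s$, which is exactly equation~(\ref{eqProbTMV3}).

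The delicate part is Condition (2), the tether-length bound for every $t \in I$. Here the key tool is \textbf{Corollary~\ref{Cor_genf}}. Because $\varsigma^*_i \in \mathcal{C}^{*,x_i}_{x_\star,\zeta}$, each waypoint configuration satisfies $c(\varsigma^*_i) \leq \zeta$, and similarly $c(\varsigma^*_s) \leq c(\varsigma_s) \leq \zeta$. Given any $t \in I$, I would locate the unique index $k$ such that $\gamma_{[0,t]}$ consists of the first $k-1$ complete factors followed by a prefix of the $k$-th factor $\sigma^*_k := \Theta(\overline{\varsigma^*_{k-1}} * \varsigma^*_k)$. The telescoping argument from the previous paragraph implies that the agent's running configuration at the start of the $k$-th factor is homotopic to $\varsigma^*_{k-1}$, so $\Theta(\varsigma_s * \gamma_{[0,t]}) = \Theta(\varsigma^*_{k-1} * (\sigma^*_k)_{[0,s]})$ for the appropriate local time $s$. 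Applying \textbf{Corollary~\ref{Cor_genf}} to this $k$-th factor, with $\varsigma_s \mapsto \varsigma^*_{k-1}$ and $\varsigma_g \mapsto \varsigma^*_k$, bounds the tether length by $\zeta$ throughout the partial traversal, which gives exactly the pointwise bound required in equation~(\ref{eqProbTMV2}).

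The main obstacle is the parametrization bookkeeping in Condition (2): I must translate a global time $t$ along the composite $\gamma$ into the appropriate local time within a single factor, and must rigorously establish the homotopy identity between the true running configuration $\varsigma_s * \gamma_{[0,t]}$ and the simplified form $\varsigma^*_{k-1} * (\sigma^*_k)_{[0,s]}$ so that the hypotheses of \textbf{Corollary~\ref{Cor_genf}} are satisfied on each segment. Once these identifications are made carefully, the theorem follows by assembling the per-segment bounds together with the two previous conditions.
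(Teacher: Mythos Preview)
Your proposal is correct and follows essentially the same approach as the paper: verify the three TMV conditions in turn, with Condition~(1) immediate from the definitions, Condition~(3) by telescoping in the path-homotopy groupoid, and Condition~(2) by invoking \textbf{Corollary~\ref{Cor_genf}} on each factor. In fact you are more explicit than the paper, which dispatches Condition~(2) in a single sentence (``according to \textbf{Corollary~\ref{Cor_genf}}, $\gamma$ also satisfies condition~(\ref{eqProbTMV2})'') without spelling out the per-segment localization and the identification $\Theta(\varsigma_s * \gamma_{[0,t]}) = \Theta(\varsigma^*_{k-1} * (\sigma^*_k)_{[0,s]})$ that you carefully flag as the main bookkeeping step.
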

\begin{proof}
	See Appendix 3.9.\qed
\end{proof}

\begin{lemma}
	\label{th_optTMVsubPath}
	Let $\gamma^*$ denote the optimal solution to the TMV problem, which can be divided into $N+1$ locally optimal subpaths. That is, for any $\sigma_{i,j} \in \{\sigma_{s,1},\sigma_{1,2},\dots,\sigma_{N,s}\}$ in (\ref{eqProbTMV1}), the following holds:
	\begin{align}
		\sigma_{i,j} = \Theta\left(\sigma_{i,j}\right).
	\end{align}
\end{lemma}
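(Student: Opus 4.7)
The natural approach is by contradiction, swapping one subpath at a time with its optimal homotopic representative and verifying that the resulting loop remains a feasible TMV solution of strictly smaller cost.

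First I would suppose, for contradiction, that there exists some subpath $\sigma_{i,j}$ in the decomposition \eqref{eqProbTMV1} of $\gamma^*$ with $\sigma_{i,j} \neq \Theta(\sigma_{i,j})$, which by Definition~\ref{defOptHomotopyPath} gives the strict inequality $c(\Theta(\sigma_{i,j})) < c(\sigma_{i,j})$. I then construct a new loop $\gamma'$ from $\gamma^*$ by replacing this single segment $\sigma_{i,j}$ with $\Theta(\sigma_{i,j})$, leaving all other factors of the concatenation \eqref{eqProbTMV1} untouched. By property~\eqref{eqPPCost} of the cost function, $c(\gamma') < c(\gamma^*)$. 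It remains to verify that $\gamma'$ is itself a feasible TMV solution; contradicting optimality will then prove the lemma.

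To verify feasibility, I would check the three conditions in Problem~\ref{proTMV} in order. Condition (1) is immediate: the replacement preserves the start and end of the segment, so $\gamma'$ still visits $x_1,\dots,x_N$ in sequence and returns to $\varsigma_s(1)$. Condition (3) follows because $\sigma_{i,j} \simeq_p \Theta(\sigma_{i,j})$ by the definition of $\Theta$, and since path homotopy is preserved under the product $*$ (Definition~\ref{defPathHomotopy} and the groupoid properties listed after it), we obtain $\gamma' \simeq_p \gamma^*$, and hence $\varsigma_s * \gamma' \simeq_p \varsigma_s * \gamma^* \simeq_p \varsigma_s$.

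The main obstacle is Condition (2), i.e.\ showing that the tether length constraint $c(\Theta(\varsigma_s * \gamma'_{[0,t]})) \leq \zeta$ holds for \emph{every} $t \in I$, not merely at the waypoints. Outside the modified segment nothing has changed (up to homotopy, by the same argument as above), so the constraint there inherits from $\gamma^*$. Inside the modified segment, let $\varsigma_i$ denote the configuration held by the agent on arrival at $x_i$ along $\gamma^*$, and let $\varsigma_j$ denote the configuration held on arrival at $x_j$. Since $\sigma_{i,j} \simeq_p \Theta(\sigma_{i,j})$, the end-of-segment configurations reached by either route are path-homotopic, so $\Theta(\varsigma_i * \Theta(\sigma_{i,j})) = \Theta(\varsigma_i * \sigma_{i,j}) = \Theta(\varsigma_j)$, which satisfies the tether constraint because $\gamma^*$ does. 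Moreover, $\Theta(\overline{\varsigma_i} * \Theta(\varsigma_j)) = \Theta(\sigma_{i,j})$ by Corollary~\ref{Cor} and homomorphy of $\Gamma^*$. Applying Corollary~\ref{Cor_genf} with start configuration $\varsigma_i$ and goal configuration $\Theta(\varsigma_j)$ then yields $c(\Theta(\varsigma_i * \Theta(\sigma_{i,j})_{[0,t]})) \leq \zeta$ for all $t \in I$, which is exactly the tether constraint along the replaced segment. This closes the contradiction, and iterating over $i,j$ delivers the claim.
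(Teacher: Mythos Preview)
Your argument is correct and follows the same contradiction-by-segment-replacement strategy as the paper's proof in Appendix~3.10. In fact you are more careful than the paper: the paper simply exhibits $\gamma'$ and observes $c(\gamma')<c(\gamma^*)$ without explicitly checking that $\gamma'$ satisfies the tether constraint \eqref{eqProbTMV2}, whereas you close that gap via Corollary~\ref{Cor_genf}; just be sure to apply the corollary to the \emph{tightened} configurations $\Theta(\varsigma_i)$ and $\Theta(\varsigma_j)$ (both of which have cost $\le\zeta$ by feasibility of $\gamma^*$) rather than to the raw concatenations, so that its hypotheses are met verbatim.
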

\begin{proof}
	See Appendix 3.10.\qed
\end{proof}

\begin{theorem}
	\label{th_optTMVPath}
	Let $\gamma^*$ denote the optimal solution to the TMV problem. Then, there exists a set of configurations $\varsigma^*_1 \in \mathcal{C}^{*,x_1}_{x_\star ,\zeta}$, $\varsigma^*_2 \in \mathcal{C}^{*,x_2}_{x_\star ,\zeta}$, $\dots$, $\varsigma^*_N\in \mathcal{C}^{*,x_N}_{x_\star ,\zeta}$, such that:
	\begin{align}
		\label{eq_optTMVPath}
		\gamma^* = \Theta\left(\overline{\varsigma^*_s} * \varsigma^*_1\right) * \Theta\left(\overline{\varsigma^*_1} * \varsigma^*_2\right) * \dots * \Theta\left(\overline{\varsigma^*_N} * \varsigma^*_s\right).
	\end{align}
\end{theorem}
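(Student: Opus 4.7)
The plan is to combine the two preceding results (Theorem~\ref{th_TMVPath} and Lemma~\ref{th_optTMVsubPath}) by explicitly \emph{constructing} the witness configurations $\varsigma^{*}_{1},\ldots,\varsigma^{*}_{N}$ from the optimal loop $\gamma^{*}$ itself, and then verifying that the subpaths of $\gamma^{*}$ agree with the optimal homotopic paths between consecutive constructed configurations. Concretely, I would decompose $\gamma^{*} = \sigma^{*}_{s,1}*\sigma^{*}_{1,2}*\cdots*\sigma^{*}_{N,s}$ as in (\ref{eqProbTMV1}), and define
\begin{equation}
\varsigma^{*}_{i} \;=\; \Theta\!\left(\varsigma^{*}_{s} * \sigma^{*}_{s,1} * \sigma^{*}_{1,2} * \cdots * \sigma^{*}_{i-1,i}\right), \qquad i=1,\ldots,N.
\end{equation}
By the tether-length constraint (\ref{eqProbTMV2}) instantiated at the time $t$ when $\gamma^{*}$ reaches $x_{i}$, we obtain $c(\varsigma^{*}_{i}) \le \zeta$; moreover $\varsigma^{*}_{i}(1) = x_{i}$, so $\varsigma^{*}_{i} \in \mathcal{C}^{*,x_{i}}_{x_{\star},\zeta}$.

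Next I would verify the factorization segment-by-segment. For each $i$, by construction $\varsigma^{*}_{i+1} \simeq_{p} \varsigma^{*}_{i} * \sigma^{*}_{i,i+1}$, and using the inverse and identity properties of the groupoid $(P(X),*,\simeq_{p})$ established in Section~3.1,
\begin{equation}
\overline{\varsigma^{*}_{i}} * \varsigma^{*}_{i+1} \;\simeq_{p}\; \overline{\varsigma^{*}_{i}} * \varsigma^{*}_{i} * \sigma^{*}_{i,i+1} \;\simeq_{p}\; e_{x_{i}} * \sigma^{*}_{i,i+1} \;\simeq_{p}\; \sigma^{*}_{i,i+1}.
\end{equation}
Applying $\Theta$ and invoking Lemma~\ref{th_optTMVsubPath} (which guarantees that each $\sigma^{*}_{i,i+1}$ is already the minimum-cost representative of its homotopy class, hence a fixed point of $\Theta$), yields $\Theta(\overline{\varsigma^{*}_{i}} * \varsigma^{*}_{i+1}) = \sigma^{*}_{i,i+1}$.

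The last segment $\sigma^{*}_{N,s}$ requires the closure condition (\ref{eqProbTMV3}): since $\varsigma^{*}_{s} * \gamma^{*} \simeq_{p} \varsigma^{*}_{s}$ and $\gamma^{*}$ ends with $\sigma^{*}_{N,s}$, the same groupoid manipulation gives $\overline{\varsigma^{*}_{N}} * \varsigma^{*}_{s} \simeq_{p} \sigma^{*}_{N,s}$, and Lemma~\ref{th_optTMVsubPath} again supplies $\Theta(\overline{\varsigma^{*}_{N}} * \varsigma^{*}_{s}) = \sigma^{*}_{N,s}$. Concatenating the $N+1$ equalities reproduces (\ref{eq_optTMVPath}).

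The main subtlety I anticipate is justifying that the intermediate configurations we pick are genuinely elements of $\mathcal{C}^{*,x_{i}}_{x_{\star},\zeta}$ rather than arbitrary homotopic representatives — this hinges on applying the constraint in (\ref{eqProbTMV2}) at the specific parameter value marking the arrival at $x_{i}$, and on the idempotence $\Theta \circ \Theta = \Theta$ (a direct consequence of Corollary~\ref{Cor}, since $\Gamma^{*}\circ\overline{\Gamma^{*}}$ is the identity on CDT encodings). A minor bookkeeping point is handling the degenerate case $N=0$, and confirming that the ``starting'' term $\Theta(\overline{\varsigma^{*}_{s}}*\varsigma^{*}_{1})$ in (\ref{eq_optTMVPath}) matches $\sigma^{*}_{s,1}$ under the same argument by treating $\varsigma^{*}_{s}$ as the $i=0$ case of the above construction; both follow from exactly the same groupoid identity and an appeal to Lemma~\ref{th_optTMVsubPath}.
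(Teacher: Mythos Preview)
Your proposal is correct and follows essentially the same approach as the paper's own proof: both construct the witness configurations as the optimal homotopic tightening of the initial configuration concatenated with the prefix of $\gamma^{*}$ up to each goal, verify membership in $\mathcal{C}^{*,x_i}_{x_\star,\zeta}$ via constraint~(\ref{eqProbTMV2}), derive the homotopy $\overline{\varsigma^{*}_{i}}*\varsigma^{*}_{i+1}\simeq_p\sigma^{*}_{i,i+1}$ from the groupoid identities, handle the final segment via the closure condition~(\ref{eqProbTMV3}), and then invoke Lemma~\ref{th_optTMVsubPath} to upgrade homotopy to equality under $\Theta$. If anything, your version is slightly more explicit about why the constructed $\varsigma^{*}_{i}$ actually lie in $\mathcal{C}^{*,x_i}_{x_\star,\zeta}$ (the paper asserts this more casually).
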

\begin{proof}
	See Appendix 3.11.\qed
\end{proof}

According to \textbf{Theorem~\ref{th_TMVPath}} and \textbf{Theorem~\ref{th_optTMVPath}}, the simple way to obtain the optimal solution $\gamma^*$ for the TMV problem is to traverse all combinations of configurations in $\mathcal{C}^{*,x_1}_{x_\star ,\zeta}, \mathcal{C}^{*,x_2}_{x_\star ,\zeta}, \dots, \mathcal{C}^{*,x_N}_{x_\star ,\zeta}$ and use (\ref{eq_optTMVPath}) to find the optimal $\gamma^*$. However, this approach requires a total of $(N+1)\bigcup_{i\in N} \left\lvert\mathcal{C}^{*,x_i}_{x_\star ,\zeta}\right\rvert$ calculations of locally optimal paths $\Theta(\overline{\varsigma^*_i} * \varsigma^*_{x_j})$. In terms of computational complexity, this is clearly impractical. To address this, we will now introduce how to use the idea of Dijkstra algorithm to solve the TMV problem.

Let $\gamma$ denote an arbitrary solution to the TMV problem (visiting $N$ targets) as expressed in (\ref{eq_TMVPath}), and let $(\varsigma^*_1, \varsigma^*_2, \dots, \varsigma^*_N)$ represent the sequence of configurations used to determine $\gamma$. Define $\gamma'$ as the path determined by its subsequence (the first $N-1$ elements), i.e.,
\begin{align}
	\gamma' = \Theta\left(\overline{\varsigma^*_s} * \varsigma^*_1\right) * \Theta\left(\overline{\varsigma^*_1} * \varsigma^*_2\right) * \dots * \Theta\left(\overline{\varsigma^*_{N-1}} * \varsigma^*_s\right).
\end{align}
According to \textbf{Theorem~\ref{th_TMVPath}}, it is not difficult to see that $\gamma'$ is a solution to the TMV problem for sequentially visiting $N-1$ targets (i.e., $x_1, x_2, \dots, x_{N-1}$). In other words, for any $\gamma$ that visits $N$ targets, it can be regarded as being derived from a $\gamma'$ that visits $N-1$ targets, i.e., by appending a configuration from $\varsigma^*_N$ to the configuration sequence of $\gamma'$. It is worth noting that the index $N$ here can be replaced by any number from $1$ to $N$.

\begin{figure*}[!t]
	\centering
	\includegraphics[width=6.7in]{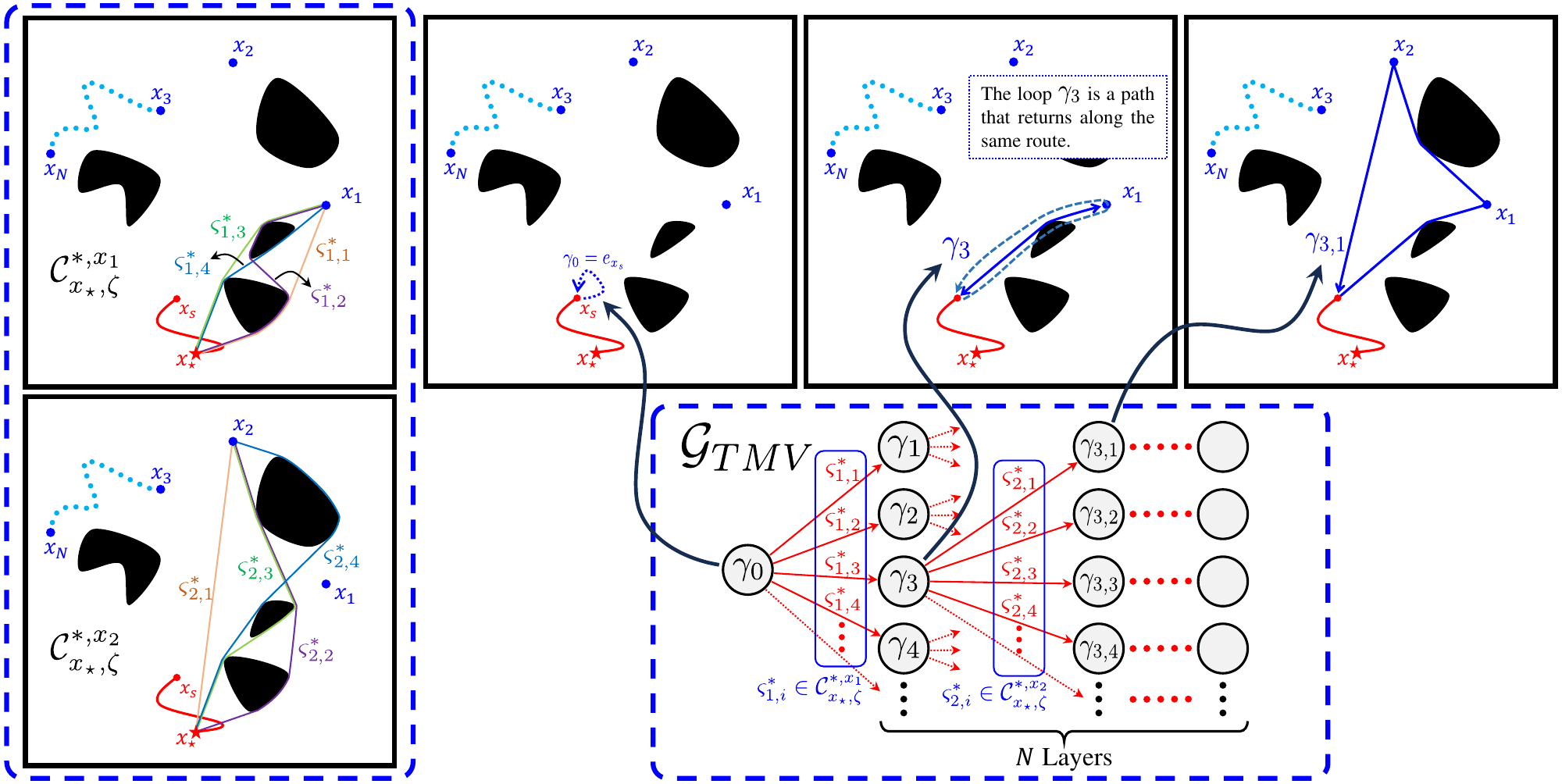}
	\caption{Illustration of the structure of the directed graph $\mathcal{G}_{TMV}$. The node in the graph corresponds to a solution to the TMV problem involving the sequential visit of $k$ goals. The edges in the graph $\mathcal{G}_{TMV}$ correspond to configurations in the sets $\mathcal{C}^{*,x_1}_{x_\star ,\zeta}, \mathcal{C}^{*,x_2}_{x_\star ,\zeta}, \dots, \mathcal{C}^{*,x_N}_{x_\star ,\zeta}$.}
	\label{fig_GTMV}
\end{figure*}

Based on the above discussion, a special directed graph $\mathcal{G}_{TMV}$ can be constructed. The nodes in this graph represent solutions $\gamma_k$ to the TMV problem that sequentially visit $k$ targets, as expressed in (\ref{eq_TMVPath}):
\begin{align}
	\gamma_k = \Theta\left(\overline{\varsigma^*_s} * \varsigma^*_1\right) * \Theta\left(\overline{\varsigma^*_1} * \varsigma^*_2\right) * \dots * \Theta\left(\overline{\varsigma^*_k} * \varsigma^*_s\right).
\end{align}
where $k \in \{1, 2, \dots, N\}$, $\varsigma^*_1 \in \mathcal{C}^{*,x_1}_{x_\star ,\zeta}$, $\varsigma^*_2 \in \mathcal{C}^{*,x_2}_{x_\star ,\zeta}$, $\dots$, $\varsigma^*_k \in \mathcal{C}^{*,x_k}_{x_\star ,\zeta}$. Additionally, we define $\gamma_0 = e_{x_s}$, meaning that for the TMV problem with zero targets to visit, the tethered robot does not move. The edges in the graph $\mathcal{G}_{TMV}$ correspond to configurations in the sets $\mathcal{C}^{*,x_1}_{x_\star ,\zeta}, \mathcal{C}^{*,x_2}_{x_\star ,\zeta}, \dots, \mathcal{C}^{*,x_N}_{x_\star ,\zeta}$. The direction of the edges is from solutions that visit $k-1$ targets to solutions that visit $k$ targets. Specifically, assume that $\mathcal{C}^{*,x_k}_{x_\star ,\zeta}$ contains $n$ configurations, denoted as:
\begin{align}
	\mathcal{C}^{*,x_k}_{x_\star ,\zeta} = \left\{\varsigma^*_{k,1},\varsigma^*_{k,2},\dots,\varsigma^*_{k,n}\right\}.
\end{align}
Let $\gamma_{k-1}$ represent a node in the graph $\mathcal{G}_{TMV}$ corresponding to a solution that visits $k-1$ targets, expressed as:
\begin{align}
	\gamma_{k-1} = \Theta\left(\overline{\varsigma^*_s} * \varsigma^*_1\right) * \Theta\left(\overline{\varsigma^*_1} * \varsigma^*_2\right) * \dots * \Theta\left(\overline{\varsigma^*_{k-1}} * \varsigma^*_s\right).
\end{align}
Then, the node $\gamma_{k-1}$ will connect to $n$ subsequent nodes (i.e., solutions that visit $k$ targets) via $n$ directed edges corresponding to the configurations in $\mathcal{C}^{*,x_k}_{x_\star ,\zeta}$. These subsequent nodes are mathematically expressed as:
\begin{align}
	 & \gamma_{k,1} = \Theta\left(\overline{\varsigma^*_s} * \varsigma^*_1\right) * \dots * \Theta\left(\overline{\varsigma^*_{k-1}} * \varsigma^*_{k,1}\right) * \Theta\left(\overline{\varsigma^*_{k,1}} * \varsigma^*_s\right), \nonumber \\
	 & \gamma_{k,2} = \Theta\left(\overline{\varsigma^*_s} * \varsigma^*_1\right) * \dots * \Theta\left(\overline{\varsigma^*_{k-1}} * \varsigma^*_{k,2}\right) * \Theta\left(\overline{\varsigma^*_{k,2}} * \varsigma^*_s\right), \nonumber \\
	 & \quad \vdots  \nonumber                                                                                                                                                                                                               \\
	 & \gamma_{k,n} = \Theta\left(\overline{\varsigma^*_s} * \varsigma^*_1\right) * \dots * \Theta\left(\overline{\varsigma^*_{k-1}} * \varsigma^*_{k,n}\right) * \Theta\left(\overline{\varsigma^*_{k,n}} * \varsigma^*_s\right).
\end{align}

\begin{remark}
	\label{re_nodeChange}
	For any node $\gamma_{k-1}$ in the directed graph $\mathcal{G}_{TMV}$, the process of transitioning to $\gamma_k$ via the directed edge $\varsigma^*_k \in \mathcal{C}^{*,x_k}_{x_\star ,\zeta}$ can be expressed mathematically as follows:
	\begin{align}
		\label{eq_FSPandRP}
		 & \gamma_{k-1} = \overbrace{\Theta\left(\overline{\varsigma^*_s} * \varsigma^*_1\right) * \dots}^{\text{Fixed subpath of }\gamma_{k-1}} * \underbrace{\Theta\left(\overline{\varsigma^*_{k-1}} * \varsigma^*_s\right)}_{\text{Being Replaced}} , \nonumber                                                                                                                                                                                                                                                                                       \\
		 & \gamma_{k} = \lefteqn{\overbrace{\phantom{\Theta\left(\overline{\varsigma^*_s} * \varsigma^*_1\right) * \dots * \Theta\left(\overline{\varsigma^*_{k-1}} * \varsigma^*_k\right)}}^{\text{Fixed subpath of }\gamma_k}}\Theta\left(\overline{\varsigma^*_s} * \varsigma^*_1\right) * \dots *\underbrace{\Theta\left(\overline{\varsigma^*_{k-1}} * \varsigma^*_k\right) * \Theta\left(\overline{\varsigma^*_k} * \varsigma^*_s\right)}_{\text{Replace }\Theta\left(\overline{\varsigma^*_{k-1}} * \varsigma^*_s\right) \text{ in }\gamma_{k-1}}.
	\end{align}
	That is, the last subpath $\Theta\left(\overline{\varsigma^*_{k-1}} * \varsigma^*_s\right)$ in $\gamma_{k-1}$ is replaced by $\Theta\left(\overline{\varsigma^*_{k-1}} * \varsigma^*_k\right) * \Theta\left(\overline{\varsigma^*_k} * \varsigma^*_s\right)$. In addition, when $k-1=0$, $\gamma_0$ can be written as $\gamma_0 = e_{x_s} = e_{x_s} * \Theta\left(\overline{\varsigma^*_s} * \varsigma^*_s\right)$. Thus, when transitioning to $\gamma_1$ via the directed edge $\varsigma^*_1 \in \mathcal{C}^{*,x_1}_{x_\star ,\zeta}$, the mathematical form still adheres to the above transformation process:
	\begin{align}
		\label{eq_FSPandRP0}
		 & \gamma_0 = e_{x_s} * \underbrace{\Theta\left(\overline{\varsigma^*_s} * \varsigma^*_s\right)}_{\text{Being Replaced}} , \nonumber                                                                                                                                                           \\
		 & \gamma_1 = \underbrace{e_{x_s}}_{\text{Omitted}} * \underbrace{\Theta\left(\overline{\varsigma^*_s} * \varsigma^*_1\right) * \Theta\left(\overline{\varsigma^*_1} * \varsigma^*_s\right)}_{\text{Replace }\Theta\left(\overline{\varsigma^*_s} * \varsigma^*_s\right) \text{ in }\gamma_0}.
	\end{align}
\end{remark}

The cost of a node in the directed graph $\mathcal{G}_{TMV}$ is defined as the path length of its corresponding solution. Consequently, the cost of each edge in $\mathcal{G}_{TMV}$ is defined as the difference between the cost of the subsequent node it connects and the current node. Taking $\gamma_{k-1}$, $\gamma_k$, and $\varsigma^*_k$ as described in \textbf{Remark~\ref{re_nodeChange}} as an example, the cost of $\varsigma^*_k$ in the graph is given by $c(\gamma_k) - c(\gamma_{k-1})$. Based on this, we can derive the following theorem:
\begin{theorem}
	\label{th_TMVGpw}
	The graph $\mathcal{G}_{TMV}$ is a positive weight graph, i.e., the cost of any edge in $\mathcal{G}_{TMV}$ is positive.
\end{theorem}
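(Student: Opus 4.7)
The plan is to compute the edge cost directly using the decomposition exhibited in (\ref{eq_FSPandRP}), and then show the resulting quantity is non-negative by combining the inverse property of the groupoid $(P(X),*,\simeq_p)$ with the minimality that defines $\Theta$.

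First I would fix an arbitrary edge of $\mathcal{G}_{TMV}$, labelled by some $\varsigma^*_k \in \mathcal{C}^{*,x_k}_{x_\star,\zeta}$, joining a node $\gamma_{k-1}$ to a node $\gamma_k$. According to (\ref{eq_FSPandRP}) and the additivity (\ref{eqPPCost}) of the cost $c$, the first $k-1$ concatenated subpaths are literally identical in $\gamma_{k-1}$ and $\gamma_k$, so they cancel; the remaining difference is
\begin{equation*}
c(\gamma_k) - c(\gamma_{k-1}) = c\!\left(\Theta(\overline{\varsigma^*_{k-1}} * \varsigma^*_k)\right) + c\!\left(\Theta(\overline{\varsigma^*_k} * \varsigma^*_s)\right) - c\!\left(\Theta(\overline{\varsigma^*_{k-1}} * \varsigma^*_s)\right),
\end{equation*}
with the convention $\varsigma^*_0 = \varsigma^*_s$ so that the base case $k=1$ in (\ref{eq_FSPandRP0}) is covered uniformly.

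Next, I would show that the first two $\Theta$-images concatenate into the same homotopy class as the third. Since $\Theta$ preserves homotopy class by Definition~\ref{defOptHomotopyPath}, we have
\begin{equation*}
\Theta(\overline{\varsigma^*_{k-1}} * \varsigma^*_k) * \Theta(\overline{\varsigma^*_k} * \varsigma^*_s) \simeq_p \overline{\varsigma^*_{k-1}} * \varsigma^*_k * \overline{\varsigma^*_k} * \varsigma^*_s.
\end{equation*}
Applying associativity, the inverse property $\varsigma^*_k * \overline{\varsigma^*_k} \simeq_p e_{x_\star}$, and the identity property of the path groupoid from Definition~\ref{defPathHomotopy}, this chain reduces to $\overline{\varsigma^*_{k-1}} * \varsigma^*_s$. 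Thus the replacement concatenation lies in the same homotopy class as $\overline{\varsigma^*_{k-1}} * \varsigma^*_s$, and by the very definition of $\Theta$ as the minimum-cost representative of that class,
\begin{equation*}
c\!\left(\Theta(\overline{\varsigma^*_{k-1}} * \varsigma^*_s)\right) \leq c\!\left(\Theta(\overline{\varsigma^*_{k-1}} * \varsigma^*_k)\right) + c\!\left(\Theta(\overline{\varsigma^*_k} * \varsigma^*_s)\right),
\end{equation*}
establishing non-negativity of the edge cost.

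To upgrade non-negativity to strict positivity, I would argue that equality would force the concatenation on the right to itself be an optimal representative of its homotopy class. By Theorem~\ref{SSDR}, such an optimal path is a piecewise straight-line chain whose only corners lie on cutlines, whereas the replacement concatenation is forced to turn at the interior goal point $x_k$. Unless $x_k$ happens to lie on the uninterrupted geodesic $\Theta(\overline{\varsigma^*_{k-1}} * \varsigma^*_s)$ exactly where that geodesic already passes, the two paths cannot agree geometrically, and the inequality must be strict. The only real obstacle in the argument is therefore this final geometric step, which implicitly invokes a general-position assumption on the goal points $x_1,\dots,x_N$; the algebraic backbone of the proof is simply the inverse-property cancellation $\varsigma^*_k * \overline{\varsigma^*_k} \simeq_p e_{x_\star}$ followed by the minimality of $\Theta$.
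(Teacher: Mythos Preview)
Your non-negativity argument is essentially identical to the paper's: cancel the common fixed subpath using (\ref{eq_FSPandRP}) and (\ref{eqPPCost}), observe that $\Theta(\overline{\varsigma^*_{k-1}} * \varsigma^*_k) * \Theta(\overline{\varsigma^*_k} * \varsigma^*_s) \simeq_p \overline{\varsigma^*_{k-1}} * \varsigma^*_s$ via the inverse property, and invoke the minimality of $\Theta$ to conclude $c(\gamma_k) - c(\gamma_{k-1}) \geq 0$. That is exactly what the paper does in Appendix~3.12.

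Where you diverge is in attempting to upgrade $\geq 0$ to strict positivity. The paper does \emph{not} do this: despite the theorem saying ``positive,'' the proof in Appendix~3.12 concludes only with ${\tt EdgeCost}(\varsigma^*_k) \geq 0$ and declares $\mathcal{G}_{TMV}$ a positive weight graph on that basis. In context this suffices, since Dijkstra's algorithm (which is the only downstream use of the theorem) requires merely non-negative edge weights. Your instinct that strict positivity would require a general-position hypothesis on the goal points is correct, and the paper sidesteps this by not actually claiming strictness in the proof. So you may simply drop your final paragraph; the argument is then complete and matches the paper.
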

\begin{proof}
	See Appendix 3.12.\qed
\end{proof}

In summary, the optimal TMV problem can be modeled as a generalized optimal path planning problem in the graph $\mathcal{G}_{TMV}$. Specifically, taking $\gamma_0$ in $\mathcal{G}_{TMV}$ as the starting node and all nodes that visit all targets as the goal set, the task is to find the optimal path from the starting node to the goal set (i.e., identifying the node in the goal set with the lowest cost). Furthermore, according to \textbf{Theorem~\ref{th_TMVGpw}}, $\mathcal{G}_{TMV}$ is a positively weighted graph. Therefore, Dijkstra's algorithm can be employed to solve this generalized optimal path planning problem, thereby achieving the solution to the optimal TMV problem.\footnote{Based on the properties of Dijkstra's algorithm, during each iteration, the cost of the node popped from the priority queue is guaranteed to be no greater than the cost of any node whose cost has not yet been determined. This means that when Dijkstra's algorithm first pops a path visiting all targets from the priority queue, this path is guaranteed to be the optimal solution to the TMV problem.}

Based on the above conclusions, we propose the CDT-TMV algorithm to solve the optimal TMV problem. Its pseudocode is presented in \textbf{Algorithm~\ref{alg_TMV0}} and \textbf{Algorithm~\ref{alg_TMV}}. \textbf{Algorithm~\ref{alg_TMV0}} illustrates the original mathematical form of CDT-TMV (to facilitate the reader's understanding), while \textbf{Algorithm~\ref{alg_TMV}} presents the optimized form using CDT encoding (to assist practitioners in implementation).

In the pseudocode of CDT-TMV, $\mathcal{Q}_\gamma$ represents a priority queue used to store nodes that are yet to be processed. The nodes in this queue are arranged in ascending order based on their path cost as the key value, i.e.,
\begin{align}
	{\tt{Key}}^{\mathcal{Q}_\gamma}(\gamma) = c(\gamma).
\end{align}
${\tt{FixedSubPathDict}}$ is a dictionary structure that stores the fixed subpaths for each node that has been searched by the algorithm. Taking $\gamma_k$ from (\ref{eq_FSPandRP}) as an example:
\begin{align}
	&{\tt{FixedSubPathDict}}[\gamma_k] = \nonumber\\
	&\qquad \Theta\left(\overline{\varsigma^*_s} * \varsigma^*_1\right) * \Theta\left(\overline{\varsigma^*_1} * \varsigma^*_2\right) * \dots * \Theta\left(\overline{\varsigma^*_{k-1}} * \varsigma^*_k\right).
\end{align}%\scalebox{0.9}
${\tt{NumOfTargetsVisited}}$ is a function that returns the number of targets visited by a searched node. Again, taking $\gamma_k$ from (\ref{eq_FSPandRP}) as an example:
\begin{align}
	{\tt{NumOfTargetsVisited}}[\gamma_k] = k.
\end{align}

At the beginning of CDT-TMV (\textbf{Algorithm~\ref{alg_TMV0}}), the algorithm initializes the node $\gamma_0$ and ${\tt{FixedSubPathDict}}[\gamma_0]$ based on the user-provided initial configuration $\varsigma_s$ and (\ref{eq_FSPandRP0}). The initial node $\gamma_0$ is then pushed into the priority queue $\mathcal{Q}_\gamma$ (lines 2-5). Subsequently, the algorithm enters an iterative process. In each iteration, the algorithm pops a node $\gamma$ from the priority queue and retrieves the number of targets visited by $\gamma$ (lines 12, 13). If $\gamma$ has visited all targets ($k \equiv N$ is true), then $\gamma$ is the solution to the input optimal TMV problem, and the algorithm terminates immediately, returning $\gamma$ (line 9). If $k \equiv N$ is false, the algorithm retrieves the fixed subpath of the current node $\gamma$ using ${\tt{FixedSubPathDict}}$. It then iterates over all configurations in $\mathcal{C}^{*,x_{k+1}}_{x_\star ,\zeta}$ and uses (\ref{eq_FSPandRP}) to compute all subsequent nodes $\gamma_{new}$ of $\gamma$ (lines 12-14). During this process, the algorithm records the fixed subpath of $\gamma_{new}$ in ${\tt{FixedSubPathDict}}$ and pushes $\gamma_{new}$ into the queue $\mathcal{Q}_\gamma$ (lines 15, 16).

Finally, it is worth noting that CDT-TMV does not include a relaxation process as in the traditional Dijkstra's algorithm. This is because $\mathcal{G}_{TMV}$ is essentially a tree-like structure rooted at $\gamma_0$. In other words, for any node $\gamma_k$ in $\mathcal{G}_{TMV}$, there exists only one unique parent node. Additionally, if the input TMV problem has no solution, CDT-TMV will terminate due to $\mathcal{Q}_\gamma \not\equiv \varnothing$ becoming false. This situation arises when there exists a $\mathcal{C}^{*,x_k}_{x_\star ,\zeta}$ is empty. For such cases, we can pre-check whether any $\mathcal{C}^{*,x_k}_{x_\star ,\zeta}$ is empty before entering the iterative process (lines 6-16). If such a case is detected, CDT-TMV directly returns `no solution.'

\begin{algorithm}[H]
	\caption{CDT-TMV.\Comment{Original Mathematical Form}}
	\begin{algorithmic}[1]
		\State {\textbf{Input: }}$\varsigma_s$, $x_1$, $x_2$, $\dots$, $x_N$
		\State $x_s \gets \varsigma_s(1)$
		\State $\gamma_0 \gets e_{x_s}$
		\State ${\tt{FixedSubPathDict}}[\gamma_0] \gets e_{x_s}$
		\State $\mathcal{Q}_\gamma \gets \{\gamma_0\}$
		\State \textbf{while} $\mathcal{Q}_\gamma \not\equiv \varnothing$ \textbf{do}
		\State \hspace{0.25cm}$\gamma \gets {\tt{PopFirst}}(\mathcal{Q}_\gamma)$
		\State \hspace{0.25cm}$k \gets {\tt{NumOfTargetsVisited}}(\gamma)$
		\State \hspace{0.25cm}\textbf{if} $k \equiv N$ \textbf{then return} $\gamma$
		\State \hspace{0.25cm}$\sigma^\gamma_{sub} \gets {\tt{FixedSubPathDict}}[\gamma]$
		\State \hspace{0.25cm}\textbf{for} $\varsigma^*_{k+1} \in \mathcal{C}^{*,x_{k+1}}_{x_\star ,\zeta}$ \textbf{do}
		\State \hspace{0.50cm}$\sigma^{x_{k+1}}_{x_k} \gets \Theta\left(\overline{\varsigma^*_{k}} * \varsigma^*_{k+1}\right)$
		\State \hspace{0.50cm}$\sigma^{x_s}_{x_{k+1}} \gets \Theta\left(\overline{\varsigma^*_{k+1}} * \varsigma^*_s\right)$
		\State \hspace{0.50cm}$\gamma_{new} \gets \sigma^{\gamma_{new}}_{sub} * \sigma^{x_{k+1}}_{x_k} * \sigma^{x_s}_{x_{k+1}}$
		\State \hspace{0.50cm}${\tt{FixedSubPathDict}}[\gamma_{new}] \gets \sigma^{\gamma_{new}}_{sub} * \sigma^{x_{k+1}}_{x_k}$
		\State \hspace{0.50cm}Push $\gamma_{new}$ into $\mathcal{Q}_\gamma$
	\end{algorithmic}
	\label{alg_TMV0}
\end{algorithm}

Regarding the optimized version of CDT-TMV using CDT encoding (\textbf{Algorithm~\ref{alg_TMV}}), its main workflow is identical to \textbf{Algorithm~\ref{alg_TMV0}}. The key difference in (\textbf{Algorithm~\ref{alg_TMV}}) lies in how it obtains the CDT encoding of configurations in $\mathcal{C}^{*,x_{k+1}}_{x_\star ,\zeta}$ by combining $P^*_\zeta(\mathcal{G}_{con}; \mathbf{x}_\star, \mathbf{x}_{k+1})$ with the condition in line 15. Furthermore, \textbf{Algorithm~\ref{alg_TMV}} replaces the function $\Theta$ used in \textbf{Algorithm~\ref{alg_TMV0}} with $\overline{\Gamma^*}$ (i.e., \textbf{Algorithm~\ref{alg_ShortestPath}}).
\begin{algorithm}[H]
	\caption{CDT-TMV.\Comment{Pre-run Algorithm 1}}
	\begin{algorithmic}[1]
		\State {\textbf{Input: }}$\varsigma_s$, $x_1$, $x_2$, $\dots$, $x_N$
		\State $x_s \gets \varsigma_s(1)$
		\State $\varrho_s \gets \Gamma^*_\varrho \circ \varsigma_s$
		\State $\gamma_0 \gets e_{x_s}$
		\State ${\tt{FixedSubPathDict}}[\gamma_0] \gets e_{x_s}$
		\State $\mathcal{Q}_\gamma \gets \{\gamma_0\}$

		\State \textbf{while} $\mathcal{Q}_\gamma \not\equiv \varnothing$ \textbf{do}
		\State \hspace{0.25cm}$\gamma \gets {\tt{PopFirst}}(\mathcal{Q}_\gamma)$
		\State \hspace{0.25cm}$k \gets {\tt{NumberOfTargetsVisited}}(\gamma)$
		\State \hspace{0.25cm}\textbf{if} $k \equiv N$ \textbf{then return} $\gamma$
		% \State \hspace{0.50cm}\textbf{return} $\gamma$
		\State \hspace{0.25cm}$\sigma^\gamma_{sub} \gets {\tt{FixedSubPathDict}}[\gamma]$
		\State \hspace{0.25cm}$\varrho^\gamma_{sub} \gets \Gamma^*_\varrho \circ \sigma^\gamma_{sub}$
		\State \hspace{0.25cm}$\mathbf{x}_{k+1} \gets {\tt{GetConvexPolygon}}(x_{k+1})$
		\State \hspace{0.25cm}\textbf{for} $\varrho_{next} \in P^*_\zeta(\mathcal{G}_{con};\mathbf{x}_\star,\mathbf{x}_{k+1})$ \textbf{do}
		\State \hspace{0.50cm}\textbf{if} $c\left(\overline{\Gamma^*} \circ \langle x_\star, \varrho_{next}, x_{k+1}\rangle\right) > \zeta$ \textbf{then Continue}
		\State \hspace{0.50cm}$\sigma^{x_{k+1}}_{x_k} \gets \overline{\Gamma^*} \circ \langle x_k, {\tt{RBF}}(\overline{\varrho^\gamma_{sub}}  * \varrho_{next}), x_{k+1}\rangle $
		\State \hspace{0.50cm}$\sigma^{x_s}_{x_{k+1}} \gets \overline{\Gamma^*} \circ \langle x_{k+1}, {\tt{RBF}}(\overline{\varrho_{next}}  * \varrho_{s}), x_s\rangle $

		\State \hspace{0.50cm}$\gamma_{new} \gets \sigma^{\gamma_{new}}_{sub} * \sigma^{x_{k+1}}_{x_k} * \sigma^{x_s}_{x_{k+1}}$
		\State \hspace{0.50cm}${\tt{FixedSubPathDict}}[\gamma_{new}] \gets \sigma^{\gamma_{new}}_{sub} * \sigma^{x_{k+1}}_{x_k}$
		\State \hspace{0.50cm}Push $\gamma_{new}$ into $\mathcal{Q}_\gamma$
	\end{algorithmic}
	\label{alg_TMV}
\end{algorithm}

\subsection{Planning the Optimal Path of the Mobile Robot}% to All Locations at Once}
This subsection introduces an improved version of CDT-TPP, termed CDT-UTPP. CDT-UTPP is primarily designed for distance-optimal path planning of mobile robots (untethered) as described in \textbf{Problem~\ref{proUTPP}}. The key feature of CDT-UTPP is that, after a preprocessing phase similar to \textbf{Algorithm~\ref{alg_TCS}}, it can rapidly return the optimal path from $x_s$ to $x_g$ in $X_{free}$ for arbitrary $x_s,x_g \in X_{free}$. Before formally introducing the CDT-UTPP algorithm, we will first analyze in detail the properties of the globally optimal path in \textbf{Problem~\ref{proUTPP}}.

\begin{figure}[!t]
	\centering
	\includegraphics[height=3.0in]{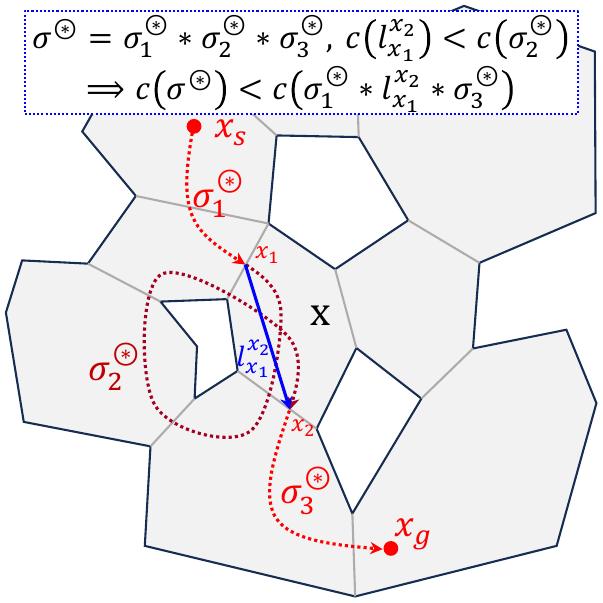}
	\caption{If $\Gamma^*_{\varrho} \circ \sigma^\circledast$ contains duplicate elements, then $\sigma^\circledast$ is necessarily not the globally optimal path.}
	\label{fig_THoptPath}
\end{figure}

\begin{theorem}
	\label{th_optPath}
	If $\sigma^\circledast$ is the optimal path in $P(X_{free};x_s,x_g)$ (as described in (\ref{eqProb4})), then the sequence $\Gamma^*_{\varrho}\circ\sigma^\circledast$ contains no repeated convex polygons.
\end{theorem}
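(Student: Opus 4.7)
The plan is to argue by contradiction: I assume $\sigma^\circledast$ is globally optimal, yet $\Gamma^*_\varrho \circ \sigma^\circledast = (\mathbf{x}_0, \mathbf{x}_1, \ldots, \mathbf{x}_n)$ contains some repeat $\mathbf{x}_i = \mathbf{x}_j$ with $i < j$, and then derive a strictly shorter feasible path in $P(X_{free};x_s,x_g)$, contradicting optimality. Since the two occurrences survive the ${\tt{RBF}}$ reduction, they must correspond to two genuinely separate passages of $\sigma^\circledast$ through the same convex polygon, call it $\mathbf{x} := \mathbf{x}_i = \mathbf{x}_j$. I would then pick times $t_i < t_j$ with $\sigma^\circledast(t_i), \sigma^\circledast(t_j) \in \mathbf{x}$ drawn from these two visits, observing that the sub-path $\sigma^\circledast_{[t_i,t_j]}$ is forced to leave $\mathbf{x}$ and pass through at least one other convex polygon, because of the presence of the intermediate entries $\mathbf{x}_{i+1},\ldots,\mathbf{x}_{j-1}$ in the encoded sequence.

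Next I would invoke convexity: since $\mathbf{x}$ is convex and lies inside $X_{free}$, the straight segment $l_{\sigma^\circledast(t_i)}^{\sigma^\circledast(t_j)}$ is entirely contained in $\mathbf{x}\subset X_{free}$. Define the shortcut path
\begin{equation*}
\sigma' = \sigma^\circledast_{[0, t_i]} * l_{\sigma^\circledast(t_i)}^{\sigma^\circledast(t_j)} * \sigma^\circledast_{[t_j, 1]},
\end{equation*}
which is a feasible element of $P(X_{free};x_s,x_g)$. Because $\sigma^\circledast_{[t_i,t_j]}$ visits points outside $\mathbf{x}$, and the line segment $l_{\sigma^\circledast(t_i)}^{\sigma^\circledast(t_j)}$ lies in $\mathbf{x}$, the sub-path cannot coincide with (or be a reparameterization of) that segment. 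The Euclidean fact that the straight segment is the uniquely shortest curve between two points then gives $c\bigl(l_{\sigma^\circledast(t_i)}^{\sigma^\circledast(t_j)}\bigr) < c\bigl(\sigma^\circledast_{[t_i,t_j]}\bigr)$; combining this with the additivity (\ref{eqPPCost}) yields $c(\sigma') < c(\sigma^\circledast)$, the desired contradiction.

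The main obstacle is the bookkeeping step of linking the repeated entries in the post-${\tt{RBF}}$ encoding back to genuinely distinct time intervals of $\sigma^\circledast$. The worry is whether, after ${\tt{RBF}}$ collapses palindromic detours $\varrho_2 * \overline{\varrho}_2$, a residual repeat $\mathbf{x}_i = \mathbf{x}_j$ could somehow be an artifact of the encoding rather than reflecting two actual passages. The cleanest route is to lift $\mathbf{x}_i$ and $\mathbf{x}_j$ back through $\Gamma_\varrho \circ \sigma^\circledast$—the pre-${\tt{RBF}}$ list, which by construction enumerates the convex polygons traversed in order with no consecutive duplicates—and note that its distinct entries correspond to disjoint sub-intervals of $I$ on which $\sigma^\circledast$ occupies those polygons, separated by genuine excursions through other polygons. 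Once this correspondence is in hand, the Euclidean shortcut argument above completes the proof.
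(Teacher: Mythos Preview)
Your proposal is correct and follows essentially the same contradiction-plus-shortcut argument as the paper: both identify a repeated convex polygon $\mathbf{x}$, pick two points in $\mathbf{x}$ from distinct visits, replace the intervening sub-path by the straight segment inside $\mathbf{x}$, and conclude $c(\sigma') < c(\sigma^\circledast)$. Your extra care about lifting the post-${\tt{RBF}}$ repeats back to $\Gamma_\varrho \circ \sigma^\circledast$ is in fact more explicit than the paper, which simply asserts that a repeated entry forces $\sigma^\circledast$ to pass through $\mathbf{x}$ twice.
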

\begin{proof}
	See Appendix 3.13.\qed
\end{proof}

% \begin{figure*}[!t]
% 	\centering
% 	\subfloat[]{\includegraphics[height=1.3in]{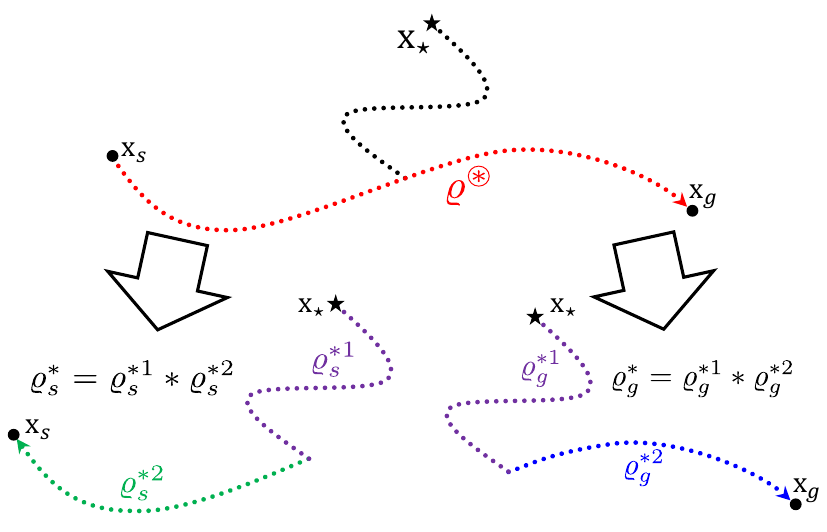}}
% 	\hfil
% 	\subfloat[]{\includegraphics[height=1.3in]{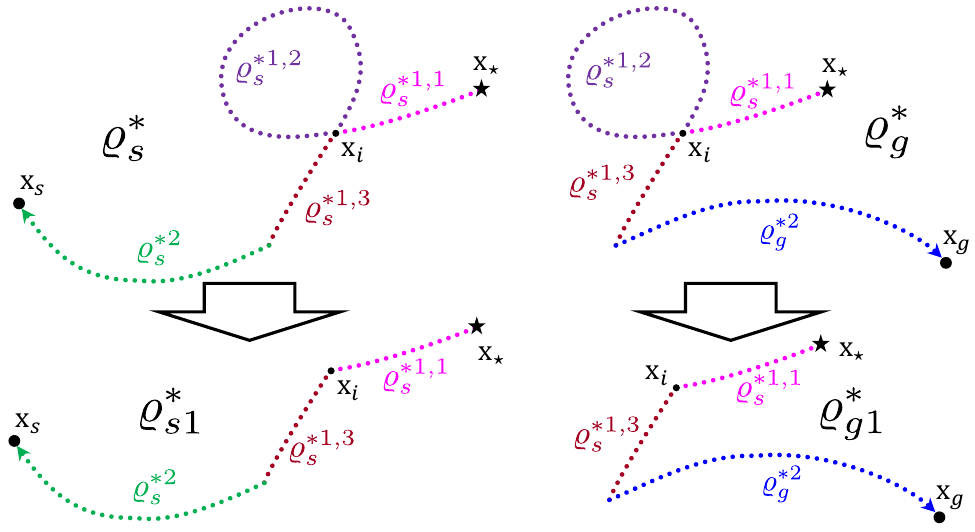}}
% 	\hfil
% 	\subfloat[]{\includegraphics[height=1.3in]{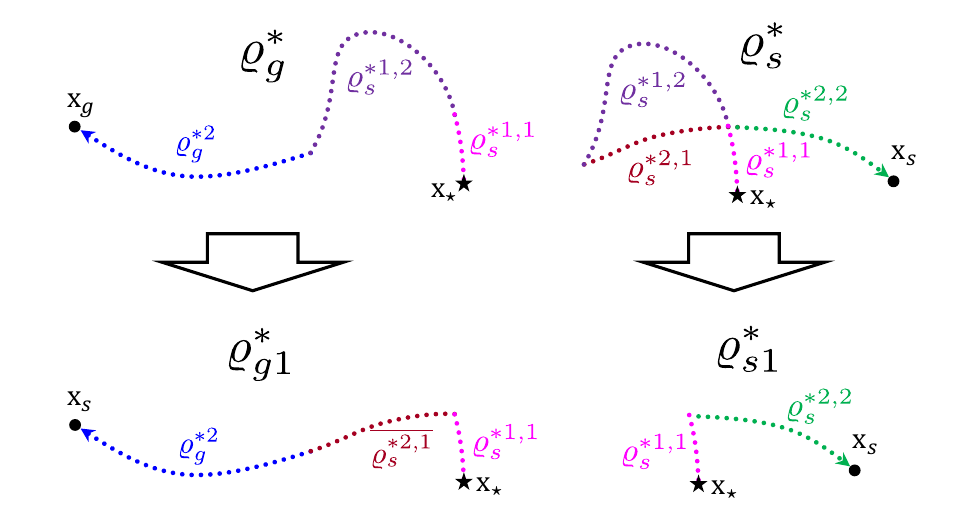}}
% 	\caption{Illustration involved in the proof of \textbf{Theorem~\ref{th_prUTPP}}.}
% 	\label{fig_OptNRCP}
% \end{figure*}

\begin{theorem}
	\label{th_prUTPP}%principlesUTPP
	Let $\sigma^\circledast \in P(X_{free};x_s,x_g)$ be the optimal path. For any $x_\star \in X_{free}$ that can be connected to with $\sigma^\circledast$,\footnote{This assumption guarantees that $P(X_{free};x_\star,x_s)$, $P(X_{free};x_\star,x_g)$ are not empty.} there exist optimal homotopic paths $\sigma^*_s\in P(X_{free};x_\star,x_s)$ and $\sigma^*_g\in P(X_{free};x_\star,x_g)$ such that
	\begin{equation}
		\label{eq_prUTPP0}
		\sigma^\circledast \simeq_p \overline{\sigma^*_s} * \sigma^*_g,
	\end{equation}
	and neither $\Gamma^*_{\varrho}\circ\sigma^*_s$ nor $\Gamma^*_{\varrho}\circ\sigma^*_g$ contains repeated elements.
\end{theorem}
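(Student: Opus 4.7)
The plan is to build $\sigma^*_s$ and $\sigma^*_g$ explicitly in terms of CDT encodings, leveraging the fact that $\sigma^\circledast$ already has a repeat-free encoding by \textbf{Theorem~\ref{th_optPath}}. Write $\Gamma^*_\varrho\circ\sigma^\circledast=(\mathbf{x}_1,\ldots,\mathbf{x}_n)$ with $x_s\in\mathbf{x}_1$ and $x_g\in\mathbf{x}_n$, and let $\mathbf{x}_\star$ be the convex polygon containing $x_\star$. Using a shortest path in the graph $\mathcal{G}_{con}$ from $\mathbf{x}_\star$ to the set $\{\mathbf{x}_1,\ldots,\mathbf{x}_n\}$, I obtain an attachment stem $(\mathbf{y}_0,\mathbf{y}_1,\ldots,\mathbf{y}_m=\mathbf{x}_k)$ with $\mathbf{y}_0=\mathbf{x}_\star$, whose internal vertices are pairwise distinct and lie outside $\{\mathbf{x}_j\}_{j\neq k}$ (otherwise the stem could be shortened). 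The connectivity of $\mathcal{G}_{con}$ needed here follows from the hypothesis that $x_\star$ is path-connected to $\sigma^\circledast$ inside $X_{free}$.

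With the stem in hand, I define the rollback-free sequences
\begin{align*}
\varrho_s &= (\mathbf{y}_0,\ldots,\mathbf{y}_m,\mathbf{x}_{k-1},\ldots,\mathbf{x}_1),\\
\varrho_g &= (\mathbf{y}_0,\ldots,\mathbf{y}_m,\mathbf{x}_{k+1},\ldots,\mathbf{x}_n),
\end{align*}
which are tautologically repeat-free since the $\mathbf{y}$-block is internally distinct, the $\mathbf{x}$-block is repeat-free by \textbf{Theorem~\ref{th_optPath}}, and the two blocks overlap only at $\mathbf{y}_m=\mathbf{x}_k$. Setting $\sigma^*_s=\overline{\Gamma^*}\circ\langle x_\star,\varrho_s,x_s\rangle$ and $\sigma^*_g=\overline{\Gamma^*}\circ\langle x_\star,\varrho_g,x_g\rangle$, \textbf{Definition~\ref{defInvMAPTstar}} ensures that these are the optimal homotopic paths in the homotopy classes whose CDT encodings are $\varrho_s,\varrho_g$; hence $\Gamma^*_\varrho\circ\sigma^*_s=\varrho_s$ and $\Gamma^*_\varrho\circ\sigma^*_g=\varrho_g$ contain no repeated polygons.

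To close the argument, I verify $\overline{\sigma^*_s}*\sigma^*_g\simeq_p\sigma^\circledast$, for which it is enough by (\ref{eqCDTencoding}) to match CDT encodings. Concatenating $\overline{\varrho_s}$ with $\varrho_g$ yields a sequence of the form $\varrho_1*\varrho_2*\overline{\varrho_2}*\varrho_3$ with $\varrho_1=(\mathbf{x}_1,\ldots,\mathbf{x}_k)$, $\varrho_2=(\mathbf{x}_k,\mathbf{y}_{m-1},\ldots,\mathbf{y}_0)$, and $\varrho_3=(\mathbf{x}_k,\ldots,\mathbf{x}_n)$; the middle block $\varrho_2*\overline{\varrho_2}$ is a rollback (of length $>1$ whenever $m\geq 1$), so ${\tt{RBF}}$ collapses it and returns $\varrho_1*\varrho_3=(\mathbf{x}_1,\ldots,\mathbf{x}_n)=\Gamma^*_\varrho\circ\sigma^\circledast$, as required.

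The main obstacle I foresee is the index bookkeeping inside $\mathcal{G}_{con}$: every concatenation $\varrho*\varrho'$ identifies the shared endpoint, so both the assertion that $\varrho_s,\varrho_g$ are rollback-free and the rollback cancellation above must be checked with some care. I also need to handle the degenerate cases $m=0$ (when $x_\star$ already lies in some $\mathbf{x}_k$ and the stem is empty) and $n=1$ (when $x_s,x_g$ share a polygon); in both, the construction specializes cleanly and the ${\tt{RBF}}$ step becomes trivial, so no separate argument is really required.
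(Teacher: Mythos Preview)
Your proposal is correct and takes a genuinely different route from the paper. The paper splits the proof into two sub-propositions: first it picks an \emph{arbitrary} $\sigma^*_s$ and sets $\sigma^*_g=\Theta(\sigma^*_s*\sigma^\circledast)$ to get some pair satisfying (\ref{eq_prUTPP0}); then it runs an iterative clean-up procedure on the encodings $\varrho^*_s,\varrho^*_g$, with a two-case analysis (a repetition internal to the shared prefix $\varrho^{*1}_s$, or a repetition between the prefix and the tail $\varrho^{*2}_s$), excising one loop at a time while checking that ${\tt RBF}(\overline{\varrho^*_{s}}*\varrho^*_{g})=\varrho^\circledast$ is preserved. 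Your construction instead builds the right encodings in one shot: a graph-shortest stem from $\mathbf{x}_\star$ to the nearest polygon $\mathbf{x}_k$ on $\Gamma^*_\varrho\circ\sigma^\circledast$, concatenated with the two halves of that repeat-free sequence. This is cleaner and avoids the case split and the recursion entirely; the only price is that you rely on the purely combinatorial fact that a hop-shortest path to a vertex set meets the set only at its endpoint, which you note. The paper's approach, on the other hand, proves a slightly stronger intermediate statement (every pair satisfying (\ref{eq_prUTPP0}) can be upgraded to a repeat-free pair), though that extra generality is not used elsewhere. Your handling of the degenerate cases $m=0$ and $n=1$ is adequate; the endpoint-identification convention in \textbf{Definition~\ref{defProTPPath}} makes the ${\tt RBF}$ cancellation go through exactly as you describe.
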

\begin{proof}
	The proof of this theorem is divided into the following two sub-propositions:\\
	\textbf{Sub-proposition 1}: There exist $\sigma^*_s$ and $\sigma^*_g$ satisfying (\ref{eq_prUTPP0}).\\
	\textbf{Sub-proposition 2}: For any pair of $\sigma^*_s$ and $\sigma^*_g$ satisfying (\ref{eq_prUTPP0}), they can be transformed into a new pair of optimal homotopic paths $\sigma^*_{sk}$ and $\sigma^*_{gk}$, which still satisfy (\ref{eq_prUTPP0}), while neither $\Gamma^*_{\varrho}\circ\sigma^*_{sk}$ nor $\Gamma^*_{\varrho}\circ\sigma^*_{gk}$ contains repeated elements.\\
	See Appendix 3.14.\qed
\end{proof}

Referring to \textbf{Remark~\ref{re_PeqC}}, we can consider $x_\star$ in \textbf{Theorem~\ref{th_prUTPP}} as the anchor point of a tethered robot, while $\sigma^*_s$ and $\sigma^*_g$ correspond to the optimal homotopic configurations for this tethered robot. Therefore, when the tether length $\zeta$ is set sufficiently long, based on \textbf{Sub-proposition 1} of \textbf{Theorem~\ref{th_prUTPP}}, for any $\sigma^\circledast$, there exist $\varsigma^*_s \in \mathcal{C}^{*,x_{s}}_{x_\star ,\zeta}$ and $\varsigma^*_g \in \mathcal{C}^{*,x_{g}}_{x_\star ,\zeta}$, such that
\begin{equation}
	\label{eq_prUTPP2Conf}
	\sigma^\circledast \simeq_p \overline{\varsigma^*_s} * \varsigma^*_g.
\end{equation}
Additionally, based on \textbf{Sub-proposition 2} of \textbf{Theorem~\ref{th_prUTPP}}, we can further restrict the range of $\varsigma^*_s$ and $\varsigma^*_g$ in (\ref{eq_prUTPP2Conf}):
\begin{align}
	\varsigma^*_s \in \mathcal{C}^{\circledast,x_{s}}_{x_\star ,\zeta},\quad \varsigma^*_g \in \mathcal{C}^{\circledast,x_{g}}_{x_\star ,\zeta},
\end{align}
where the configuration set $\mathcal{C}^{\circledast,x}_{x_\star,\zeta}$ is defined as:
\begin{equation}
	\label{eq_UFOCset}
	\mathcal{C}^{\circledast,x}_{x_\star ,\zeta} = \left\{ \varsigma^* \in \mathcal{C}^{*,x}_{x_\star ,\zeta} \middle| \begin{array}{l} \Gamma^*_{\varrho}\circ\varsigma^* \text{ contains no} \\
		\text{repeated elements}
	\end{array}\right\}.
\end{equation}
Furthermore, the feasible range of the globally optimal path $\sigma^\circledast$ is given by:
\begin{equation}
	\label{eq_UOPPset}
	\sigma^\circledast \in \left\{ \Theta\left(\overline{\varsigma^*_s} * \varsigma^*_g\right) \middle| \varsigma^*_s \in \mathcal{C}^{\circledast,x_{s}}_{x_\star ,\zeta}, \varsigma^*_g \in \mathcal{C}^{\circledast,x_{g}}_{x_\star ,\zeta}\right\}.
\end{equation}

Similar to the simplification process from (\ref{eq_FOPPset}) to (\ref{eq_FOPPsetE}), we will further simplify the solution process described in (\ref{eq_UOPPset}) to improve efficiency. First, we define $\Gamma^*_\varrho \circ \left(\bigcup_{x \in \mathbf{x}} \mathcal{C}^{\circledast,x}_{x_\star ,\zeta}\right)$ as $P^\circledast _\zeta(\mathcal{G}_{con};\mathbf{x}_\star,\mathbf{x})$. According to (\ref{eq_UFOCset}) and (\ref{eq_GconPathSet}), we have:
\begin{align}
	 & P^\circledast_\zeta(\mathcal{G}_{con};\mathbf{x}_\star,\mathbf{x}) = \Gamma^*_\varrho \circ \left(\bigcup_{x \in \mathbf{x}} \mathcal{C}^{\circledast,x}_{x_\star ,\zeta}\right) \nonumber      \\
	 & = \Gamma^*_\varrho \circ \left\{ \varsigma^* \in \bigcup_{x \in \mathbf{x}} \mathcal{C}^{*,x}_{x_\star ,\zeta} \middle| \begin{array}{l} \Gamma^*_{\varrho}\circ\varsigma^* \text{ contains no} \\
		                                                                                                                           \text{repeated elements}
	                                                                                                                           \end{array}\right\}  \nonumber \\
	 & = \left\{ \varrho \in \Gamma^*_\varrho \circ\left(\bigcup_{x \in \mathbf{x}} \mathcal{C}^{*,x}_{x_\star ,\zeta}\right)  \middle| \begin{array}{l} \varrho \text{ contains no} \\
		                                                                                                                                    \text{repeated elements}
	                                                                                                                                    \end{array}\right\}  \nonumber                   \\
	 & = \left\{ \varrho \in P^*_\zeta(\mathcal{G}_{con};\mathbf{x}_\star,\mathbf{x}) \middle| \begin{array}{l} \varrho \text{ contains no} \\
		                                                                                           \text{repeated elements}
	                                                                                           \end{array}\right\}.
\end{align}
It is not difficult to find that $P^\circledast_\zeta(\mathcal{G}_{con};\mathbf{x}_\star,\mathbf{x})$ simply imposes additional constraints on $P^*_\zeta(\mathcal{G}_{con}; \mathbf{x}_\star,\mathbf{x})$. Thus, for $P^\circledast_\zeta(\mathcal{G}_{con};\mathbf{x}_\star,\mathbf{x})$, we can derive an iterative relation similar to (\ref{eq_CodingIteration}):
\begin{align}
	\label{eq_UTCodingIteration}
	 & P^\circledast_\zeta(\mathcal{G}_{con};\mathbf{x}_\star,\mathbf{x}) = \nonumber                        \\
	 & \bigcup_{\mathbf{x}' \in {\tt{near}}(\mathbf{x})}  \scalebox{0.95}{$\left\{ \varrho = \varrho' + (\mathbf{x}) \middle|
	\begin{array}{l}
		\varrho' \in P^\circledast(\mathcal{G}_{con};\mathbf{x}_\star,\mathbf{x}') \\
		\land \varrho' \not\ni \mathbf{x}                                \\
		\land \underset{x \in \mathbf{x}}{\mathrm{min}}\ c\left(\overline{\Gamma^*} \circ \langle x_\star, \varrho, x\rangle \right) \leq \zeta
	\end{array} \right\}$},
\end{align}
where the constraint `$\varrho' \not\ni \mathbf{x}$' corresponds to the condition that `$\varrho$ contains no repeated elements.' Moreover, due to this constraint, $P^\circledast_\zeta(\mathcal{G}_{con};\mathbf{x}_\star,\mathbf{x}_\star)$ will only contain the unique sequence $\left(\mathbf{x}_\star\right)$, i.e.,
\begin{equation}
	% \label{eq_UTCodingIteration_s}
	P^\circledast_\zeta(\mathcal{G}_{con};\mathbf{x}_\star,\mathbf{x}_\star) = \left\{\left(\mathbf{x}_\star\right)\right\}.
\end{equation}

It is worth noting that in this subsection, we attempt to apply the approach used in this paper for solving the path planning problem of tethered robots to address the path planning problem for ordinary mobile robots (\textbf{Problem~\ref{proUTPP}}). This means that we do not need to strictly compute the tether length constraint in (\ref{eq_UTCodingIteration}),\footnote{As mentioned at the beginning of this subsection, $\zeta$ is vaguely set to a sufficiently large number} i.e.,
\begin{equation}
	\label{eq_constraintUTCodingIt}
	\underset{x \in \mathbf{x}}{\mathrm{min}}\ c\left(\overline{\Gamma^*} \circ \langle x_\star, \varrho, x\rangle \right) \leq \zeta.
\end{equation}
To this end, we can construct a relatively relaxed constraint function ${\tt{lower}}C(x_\star, \varrho)$, defined as:
\begin{equation}
	{\tt{lower}}C(x_\star, \varrho) = c\left(\overline{\Gamma^*} \circ \langle x_\star, \varrho, l_c(0.5)\rangle\right) - \frac{c(l_c)}{2},
\end{equation}
where $l_c$ denotes the cutline between $\varrho(T_{\varrho})$ and $\varrho(T_{\varrho }-1)$. According to \textbf{Theorem~\ref{th_finequality}}, it is obvious that:
\begin{equation}
	{\tt{lower}}C(x_\star, \varrho) \leq \underset{x \in \mathbf{x}}{\mathrm{min}}\ c\left(\overline{\Gamma^*} \circ \langle x_\star, \varrho, x\rangle \right).
\end{equation}
By replacing the tether length constraint in (\ref{eq_UTCodingIteration}) with ${\tt{lower}}C(x_\star, \varrho) \leq \zeta$, we obtain:
\begin{align}
	\label{eq_UTCodingIteration2}
	 & P^\circledast_\zeta(\mathcal{G}_{con};\mathbf{x}_\star,\mathbf{x}) = \nonumber                        \\
	 & \bigcup_{\mathbf{x}' \in {\tt{near}}(\mathbf{x})}  \left\{ \varrho = \varrho' + (\mathbf{x}) \middle|
	\begin{array}{l}
		\varrho' \in P^\circledast(\mathcal{G}_{con};\mathbf{x}_\star,\mathbf{x}') \\
		\land \varrho' \not\ni \mathbf{x}                                \\
		\land {\tt{lower}}C(x_\star, \varrho) \leq \zeta
	\end{array} \right\}.
\end{align}
Comparing (\ref{eq_UTCodingIteration}) and (\ref{eq_UTCodingIteration2}), although (\ref{eq_UTCodingIteration2}) increases the number of elements in $P^\circledast_\zeta(\mathcal{G}_{con};\mathbf{x}_\star,\mathbf{x})$ to some extent, it significantly simplifies the computational difficulty of iteratively obtaining $P^\circledast_\zeta(\mathcal{G}_{con};\mathbf{x}_\star,\mathbf{x})$.\footnote{This is because, compared to using \textbf{Algorithm~\ref{alg_EncodingValidity}} to evaluate constraint (\ref{eq_constraintUTCodingIt}), calculating ${\tt{lower}}C(x_\star, \varrho)$ avoids the need to use ternary search to strictly determine the minimum value.}

After obtaining $P^\circledast_\zeta(\mathcal{G}_{con};\mathbf{x}_\star,\mathbf{x})$ for any $\mathbf{x}\in V_{con}$, the feasible range of the globally optimal path can be updated from (\ref{eq_UOPPset}) as follows:
\begin{equation}
	\label{eq_UOPPset2}
	\scalebox{0.93}{$\sigma^\circledast \in \left\{\overline{\Gamma^*} \circ \langle x_s, {\tt{RBF}}(\overline{\varrho_s}*\varrho_g), x_g\rangle \middle| \begin{array}{l}
		\varrho_s \in P^\circledast_\zeta(\mathcal{G}_{con};\mathbf{x}_\star,\mathbf{x}_s)\\
		\varrho_g \in P^\circledast_\zeta(\mathcal{G}_{con};\mathbf{x}_\star,\mathbf{x}_g)
	\end{array}\right\}$}.
\end{equation}

Based on the above discussion, the CDT-UTPP is mainly divided into two stages: \textbf{Preprocessing} and \textbf{Planning}, with their pseudocode shown in \textbf{Algorithm~\ref{alg_UTPP}} and \textbf{Algorithm~\ref{alg_UTPPGet}}, respectively. The Preprocessing Stage of CDT-UTPP (\textbf{Algorithm~\ref{alg_UTPP}}) is essentially similar to that of CDT-TCS (\textbf{Algorithm~\ref{alg_TCS}}), as both are primarily used to obtain $P^\circledast_\zeta(\mathcal{G}_{con};\mathbf{x}_\star,\mathbf{x})$ for all $\mathbf{x}\in V_{con}$. The differences between \textbf{Algorithm~\ref{alg_UTPP}} and \textbf{Algorithm~\ref{alg_TCS}} lie in their line 10 and line 13:
\begin{itemize}
	\item {Line 10 in \textbf{Algorithm~\ref{alg_UTPP}} corresponds to the constraint $\varrho' \not\ni \mathbf{x}$ in (\ref{eq_UTCodingIteration2}).}
	\item {Line 13 in \textbf{Algorithm~\ref{alg_UTPP}} corresponds to the constraint ${\tt{lower}}C(x_\star, \varrho) \leq \zeta$ in (\ref{eq_UTCodingIteration2}).}
\end{itemize}
The Planning Stage of CDT-UTPP (\textbf{Algorithm~\ref{alg_UTPPGet}}) demonstrates how to determine the globally optimal path $\sigma^\circledast$ from $x_s$ to $x_g$ by traversing the set described in (\ref{eq_UOPPset2}).
\begin{algorithm}[H]
	\caption{CDT-UTPP Preprocessing Stage.}% Algorithm 6
	\begin{algorithmic}[1]
		\State {\textbf{Input: }}$X_{free}$, $x_\star$, $\zeta$
		\State $\mathcal{G}_{con}\gets{\tt{InitializeFreeSpace}}(X_{free})$
		\State $\mathbf{x}_\star \gets {\tt{GetConvexPolygon}}(x_\star)$
		\State Use $\varnothing$ to initialize $\mathcal{Q}_\varrho$ and $P^\circledast_\zeta(\mathcal{G}_{con};\mathbf{x}_\star,\mathbf{x})$, $\mathbf{x}\in V_{con}$
		\State $P^\circledast_\zeta(\mathcal{G}_{con};\mathbf{x}_\star,\mathbf{x}_\star) \overset{+}{\gets} \{(\mathbf{x}_\star)\}$
		\State $\mathcal{Q}_\varrho \overset{+}{\gets} \{(\mathbf{x}_\star)\}$
		\State \textbf{while} $\mathcal{Q}_\varrho \not\equiv \varnothing$ \textbf{do}
		\State \hspace{0.25cm}$\varrho \gets {\tt{PopFirst}}(\mathcal{Q}_\varrho)$
		\State \hspace{0.25cm}\textbf{for} $\mathbf{x}_{near} \in {\tt{Neighbor}}\left(\varrho(T_\varrho)\right) $ \textbf{do}
		\State \hspace{0.50cm}\textbf{if} $\mathbf{x}_{near} \in \varrho$ \textbf{then Continue}
		% \State \hspace{0.50cm}$\varrho_{new} = \varrho + (\mathbf{x}_{near})$

		\State \hspace{0.50cm}$l_c \gets {\tt{GetCutline}}(\mathbf{x}_{near},\varrho(T_{\varrho}))$
		\State \hspace{0.50cm}$c_{mid} \gets c\left(\overline{\Gamma^*} \circ \langle x_\star, \varrho + (\mathbf{x}_{near}), l_c(0.5)\rangle\right)$
		\State \hspace{0.50cm}\textbf{if} $c_{mid} - \frac{c(l_c)}{2} <= \zeta$ \textbf{then}
		\State \hspace{0.75cm}$P^\circledast_\zeta(\mathcal{G}_{con};\mathbf{x}_\star,\mathbf{x}_{near}) \overset{+}{\gets} \{\varrho_{new}\}$
		\State \hspace{0.75cm}$\mathcal{Q}_\varrho \overset{+}{\gets} \{\varrho_{new}\}$
	\end{algorithmic}
	\label{alg_UTPP}
\end{algorithm}

\begin{algorithm}[H]
	\caption{CDT-UTPP Planning Stage.}
	\begin{algorithmic}[1]
		\State {\textbf{Input: }}$x_{s}$, $x_{g}$
		\State $\mathbf{x}_s \gets {\tt{GetConvexPolygon}}(x_s)$
		\State $\mathbf{x}_g \gets {\tt{GetConvexPolygon}}(x_g)$
		\State $\sigma^\circledast \gets {\tt{null}}$
		\State \textbf{for} $\varrho_s \in P^\circledast_\zeta(\mathcal{G}_{con};\mathbf{x}_\star,\mathbf{x}_s) $ \textbf{do}
		\State \hspace{0.25cm}\textbf{for} $\varrho_g \in P^\circledast_\zeta(\mathcal{G}_{con};\mathbf{x}_\star,\mathbf{x}_g) $ \textbf{do}
		\State \hspace{0.50cm}$\sigma \gets \overline{\Gamma^*} \circ \langle x_s, {\tt{RBF}}(\overline{\varrho_s}  * \varrho_g), x_g\rangle$
		\State \hspace{0.50cm}\textbf{if} $c(\sigma) < c(\sigma^\circledast)$ \textbf{then}
		\State \hspace{0.75cm}$\sigma^\circledast \gets \sigma$
		\State \textbf{return} $\sigma^\circledast$
	\end{algorithmic}
	\label{alg_UTPPGet}
\end{algorithm}

\section{Simulation Experiment}
In the simulation experiments, we adopt four different simulation environments, as shown in Fig.~\ref{fig_simMap}, to test and analyze the performance of the proposed algorithm across various planning tasks. Each simulation environment uses a square binary image as the state space $X$, where the black regions represent the obstacle space $X_{obs}$ and the white regions represent the free space $X_{free}$. In each environment, two different initial configurations are set (red and purple curves, with the blue `$\star$' indicating the anchor point of the tethered robot), along with four different target points, which are used to design the planning tasks for the experiments. The specific parameters of the experimental environments are provided in \textbf{Table~\ref{tab:simMapData}}.

In terms of hardware, the simulations are conducted on a personal computer equipped with an Intel Core i9-13900k processor (5.40 GHz) and 32 GB of RAM (4800 MT/s). On the software side, the proposed path homotopy representation and related planning algorithms are implemented in C++ on Ubuntu 20.04. The code for the comparative methods in Subsections 6.1 and 6.2 is also implemented by us in C++ and optimized to the best of our ability. The comparative algorithms in Subsection 6.3 utilize the Open Motion Planning Library (OMPL) \citep{sucan2012open} version.

\begin{figure*}[!t]
	\centering
	\subfloat[]{\includegraphics[width=1.65in]{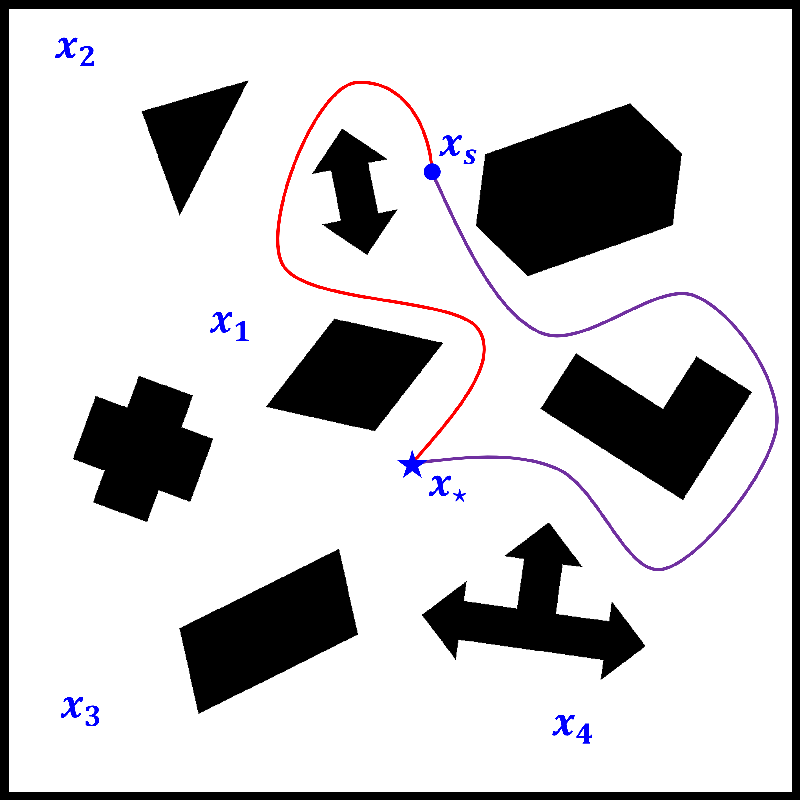}}
	\hfil
	\subfloat[]{\includegraphics[width=1.65in]{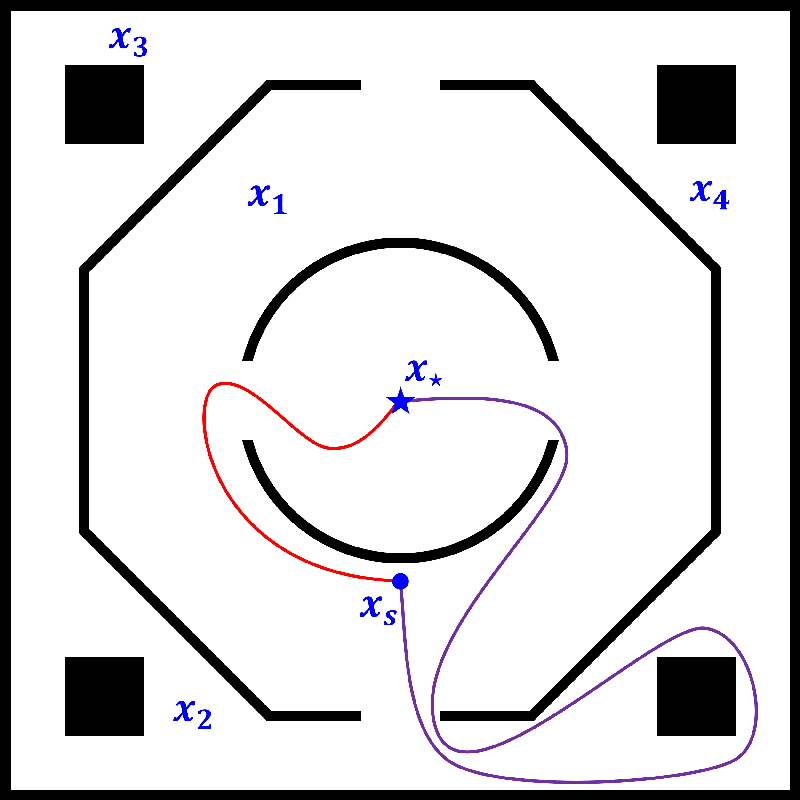}}
	\hfil
	\subfloat[]{\includegraphics[width=1.65in]{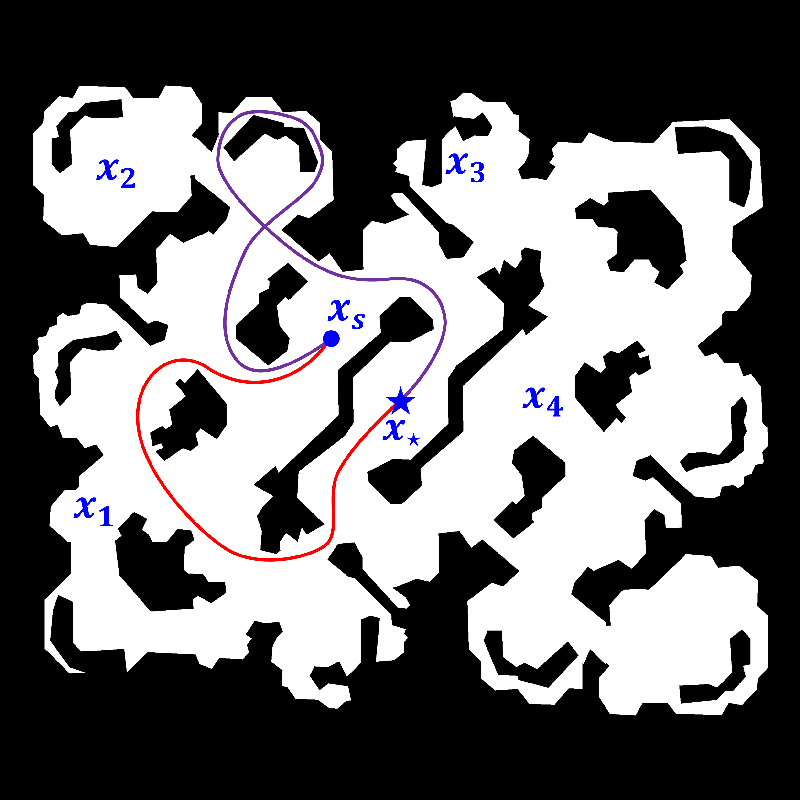}}
	\hfil
	\subfloat[]{\includegraphics[width=1.65in]{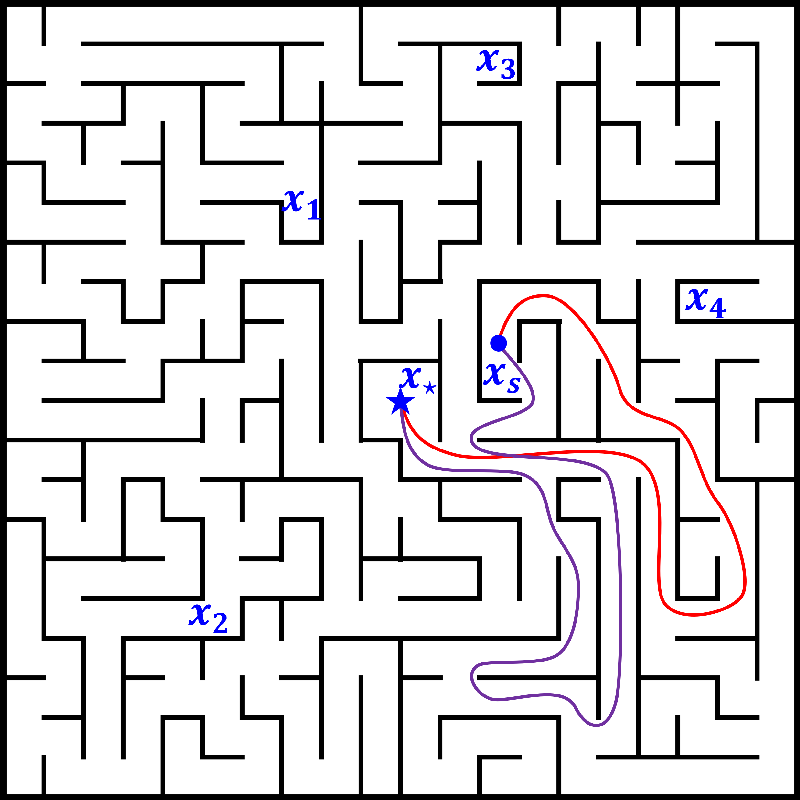}}
	\caption{Illustration of the simulation environments. (a) Cluttered environment, (b) Trap, (c) StarCraft II map, (d) Maze.  Each environment is configured with two distinct initial setups (i.e., red and purple curves) and four different goal points $x_1$-$x_4$. The blue `$\star$' represents the anchor point $x_\star$, and the blue circle indicates the endpoint $x_s$ of the initial configuration.}
	\label{fig_simMap}
\end{figure*}
\begin{figure*}[!t]
	\centering
	\subfloat[]{\includegraphics[width=1.65in]{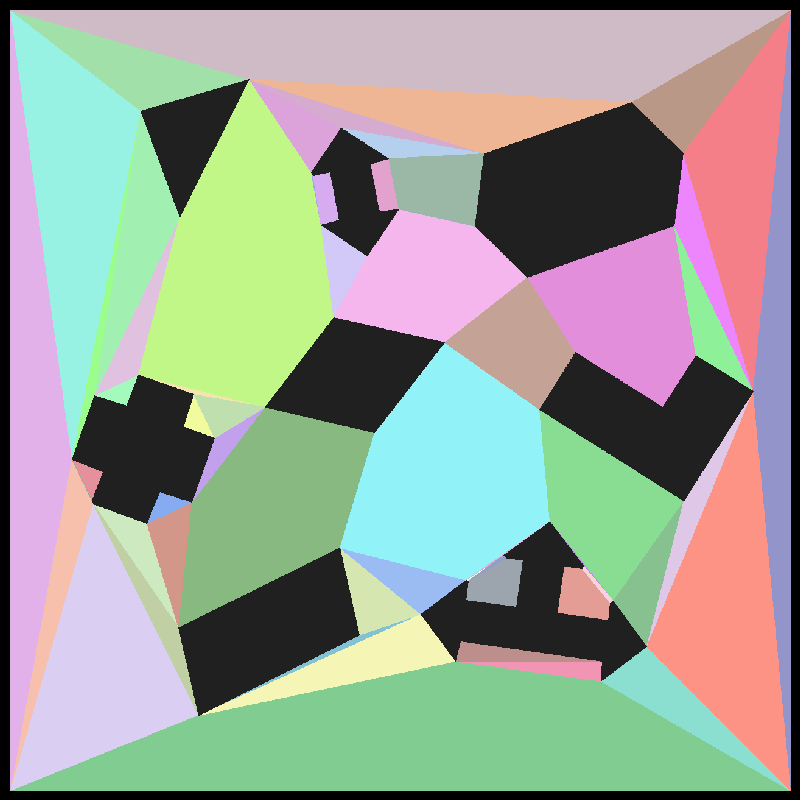}}
	\hfil
	\subfloat[]{\includegraphics[width=1.65in]{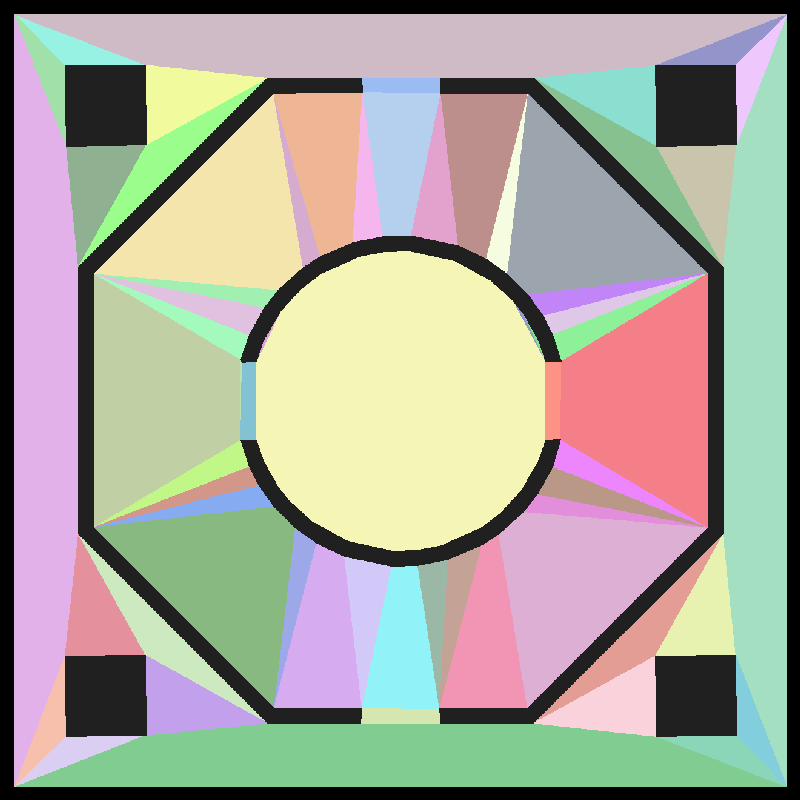}}
	\hfil
	\subfloat[]{\includegraphics[width=1.65in]{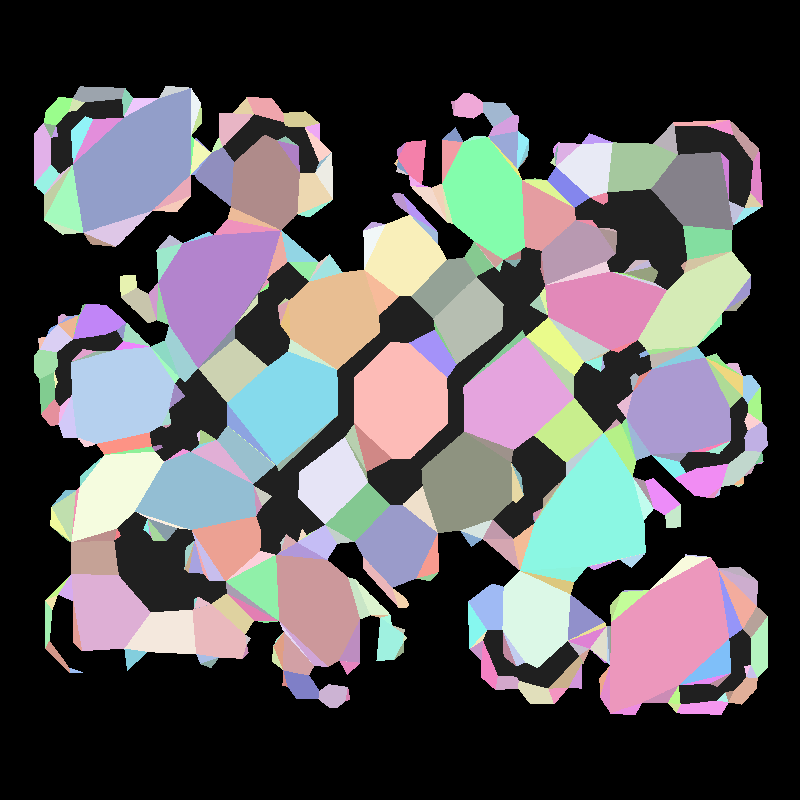}}
	\hfil
	\subfloat[]{\includegraphics[width=1.65in]{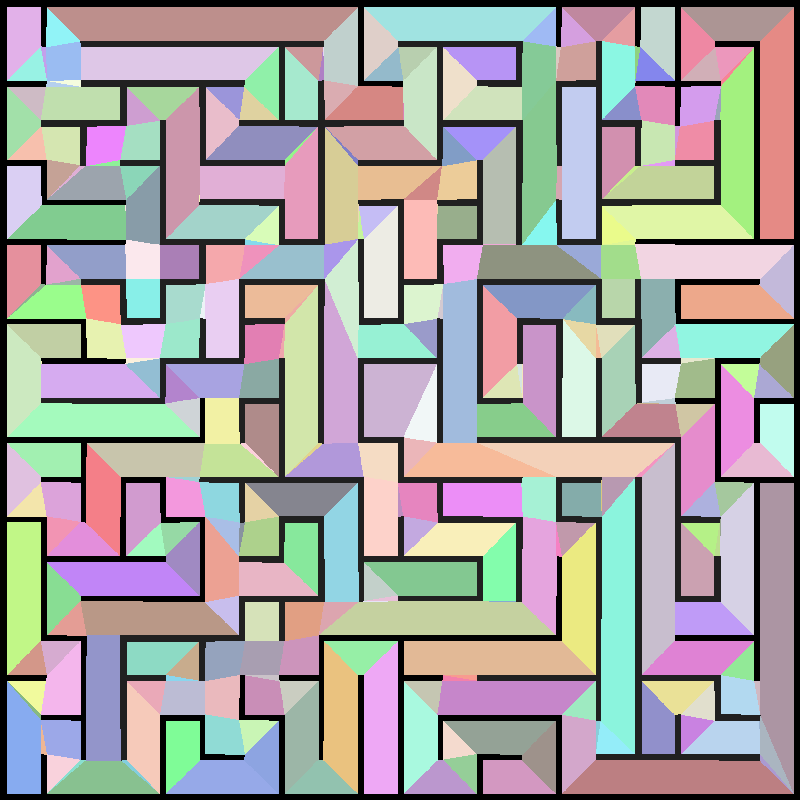}}
	\caption{Illustrates the results of the convex division of each simulation environment by the CDT-TCS algorithm.}
	\label{fig_debugMap}
\end{figure*}

\begin{table*}
	\begin{center}
		\caption{Specific Parameters of the Simulation Experiment Environment \label{tab:simMapData}}
		\scriptsize
		\begin{tabular}{|c|c|c|c|c|c|c|c|c|c|}
			\hline
			\multirow{2.5}*{\makecell[c]{Env.}} &
			\multirow{2.5}*{\makecell[c]{Map Image\\Resolution}} &
			\multirow{2.5}*{\makecell[c]{Number of\\Obstacles}} &
			\multirow{2.5}*{\makecell[c]{Tether\\Length}} &
			\multicolumn{6}{c|}{\makecell*[c]{Point Coordinates}}\\
			\cline{5-10}
			& & & & \makecell*[c]{$x_{\star}$} & \makecell*[c]{$x_s$} & \makecell*[c]{$x_1$} & \makecell*[c]{$x_2$} & \makecell*[c]{$x_3$} & \makecell*[c]{$x_4$}\\
			\hline
			\makecell*[c]{Cluttered} & $1320\times1320$ & $8$ & $1500$ & $(676, 761)$ & $(708, 281)$ & $(368, 535)$ & $(116, 82)$ & $(129, 1161)$ & $(933, 1193)$\\
			\hline
			\makecell*[c]{Trap} & $1320\times1320$ & $8$ & $2300$ & $(660, 660)$ & $(660, 958)$ & $(435, 327)$ & $(306, 1177)$ & $(203, 58)$ & $(1168, 316)$\\
			\hline
			\makecell*[c]{SC2}  & $3120\times3120$ & $15$ & $3200$ & $(1574, 1574)$ & $(1236, 1346)$ & $(365, 1990)$ & $(443, 670)$  & $(1809, 655)$ & $(2108, 1562)$\\
			\hline
			\makecell*[c]{Maze} & $2480\times2480$ & $14$ & $4000$ & $(1239, 1239)$ & $(1540, 1080)$ & $(932, 631)$ & $(641, 1915)$  & $(1538, 190)$ & $(2186, 942)$\\
			\hline
		\end{tabular}
	\end{center}
\end{table*}

\subsection{Optimal Tethered Configuration Search Problems}
In this subsection, we first evaluate the performance of the CDT-TCS algorithm proposed in Section 4 for solving the Optimal Tethered Configuration Search problem. We compare our method with that of Kim \citep{kim2014path}, who, in their original paper, used a grid map to represent the environment. They constructed a Homotopy-Augmented Graph (HAG) by incorporating $h$-signatures and then utilized Dijkstra's algorithm to determine the reachable workspace under tether constraints (preprocessing step). The preprocessing step in Kim's approach can be regarded as planning all feasible Optimal Tethered Configurations at each grid point. However, due to the low precision of grid-based representations of the environment, we replace the grid map in Kim's original method with a Probabilistic Roadmap (PRM) for comparative experiments.  Additionally, considering the trade-off between environmental representation accuracy and computational efficiency, the number of sampling points in the PRM was set to 20,000 (the target points $x_1$ to $x_4$ are not included in the PRM).

In this subsection experiments, we first execute CDT-TCS (\textbf{Algorithm~\ref{alg_TCS}}) and Kim's preprocessing step in each environment with $x_\star$ as the anchor point, recording the time taken for each.  After the preprocessing step, we further utilize CDT-TCS (\textbf{Algorithm~\ref{alg_TCSAFOC}}) and Kim's method to compute the feasible Optimal Tethered Configuration sets $\mathcal{C}^{*,x_k}_{x_\star, \zeta}$ for the four goal points $x_1$-$x_4$, where $k \in \{1, 2, 3, 4\}$. We record the number of configurations in $\mathcal{C}^{*,x_k}_{x_\star, \zeta}$, denoted as $\left\lvert\mathcal{C}^{*,x_k}_{x_\star, \zeta}\right\rvert $, as well as the time required to obtain $\mathcal{C}^{*,x_k}_{x_\star, \zeta}$. For the aforementioned experiments, we run each 100 times and collect the relevant statistics.

The convex decomposition result of our algorithm on the simulated environment is shown in Fig.~\ref{fig_debugMap}. The number of resulting convex polygons (i.e., the number of vertices in graph $\mathcal{G}_{con}$), the number of cutlines (i.e., the number of edges in graph $\mathcal{G}_{con}$), and the number of Tethered Configuration CDT Encodings obtained by CDT-TCS are summarized in \textbf{Table~\ref{tab:CDTMapData}}.  

\begin{table}
	\begin{center}
		\caption{Results of the Preprocessing of the CDT-TCS Algorithm for Each Simulation Environment. \label{tab:CDTMapData}}
		\footnotesize
		\begin{tabular}{|c|c|c|c|}
			\hline
			\makecell*[c]{Env.} &
			\makecell*[c]{Number of\\Polygons\\$\left\lvert V_{con}\right\rvert$} &
			\makecell*[c]{Number of\\Cutlines\\$\left\lvert E_{con}\right\rvert$} &
			\makecell*[c]{$\displaystyle \sum_{\mathbf{x} \in X_{con}} \left\lvert P^*_\zeta(\mathcal{G}_{con};\mathbf{x}_\star,\mathbf{x})\right\rvert$}\\
			\hline
			\makecell*[c]{Cluttered} & $59$ & $66$ & $1068$ \\
			\hline
			\makecell*[c]{Trap} & $65$ & $72$ & $1045$ \\
			\hline
			\makecell*[c]{SC2}  & $410$ & $424$ & $10790$ \\
			\hline
			\makecell*[c]{Maze} & $342$ & $355$ & $7473$ \\
			\hline
		\end{tabular}
	\end{center}
\end{table}

Fig.~\ref{fig_SimTCS} presents a comparison of the average preprocessing time between CDT-TCS and Kim's method, while Fig.~\ref{fig_SimTCSget} shows the average time required to retrieve the feasible set of optimal tethered configurations for a given target point. The experimental results demonstrate that CDT-TCS maintains a high level of real-time performance in the preprocessing phase, with average processing times of $3.16$ ms, $5.22$ ms, $55.19$ ms, and $35.35$ ms across different environments. Compared to the traditional method proposed by \cite{kim2014path}, CDT-TCS achieves a computational speedup of approximately two to three orders of magnitude across various environments. This significant improvement in performance is primarily attributed to the following key factors:
\begin{enumerate}
		\item[1)] {CDT-TCS adopts a topological representation of the environment based on convex dissection (i.e., $\mathcal{G}_{con}$). Compared to the PRM-based scheme used in Kim's algorithm, this allows CDT-TCS to represent the connectivity relationships between regions of the free space more completely while using fewer nodes. This reduces the number of nodes that need to be considered during the preprocessing phase, significantly improving efficiency;}
		\item[2)] {CDT-TCS uses CDT encoding as the homotopy invariant for paths. Since CDT encoding is constructed based on $\mathcal{G}_{con}$ (where its second element, the Tethered Configuration CDT Encoding, corresponds to a path in $\mathcal{G}_{con}$), it provides directional guidance for tethered configuration searches, as demonstrated in processes (\ref{eq_CodingIteration}) and (\ref{eq_CodingIteration2}). In contrast, Kim's method employs an $h$-signature that merely checks for homotopy during the search, without providing directional guidance.}
		\item[3)] {Based on \textbf{Theorem~\ref{th_finequality}} and \textbf{Theorem~\ref{th_fconvex}}, for the vast majority of Tethered Configuration CDT Encodings, CDT-TCS can conservatively determine their validity. For encodings where conservative determination is not possible, CDT-TCS can still quickly and accurately validate their effectiveness using ternary search. This ensures both computational efficiency and precision in determining valid configurations.}
\end{enumerate}
Additionally, combined with \textbf{Table~\ref{tab:CDTMapData}} analysis, it can be observed that the preprocessing time of CDT-TCS maintains a strong positive linear correlation with the number of Tethered Configuration CDT Encodings. This implies that the preprocessing time of CDT-TCS increases linearly only as the number of valid solutions to the TCS problem grows.\footnote{It is important to note that the number of solutions to the TCS problem may experience explosive growth as $\zeta$ increases. This is primarily due to excessively large $\zeta$ values leading to an increased number of potential ways the tether can wrap around obstacles in the environment. The feasible permutations and combinations of these wrapping patterns result in a significant number of non-homotopic valid configurations for the tethered robot.}
\begin{figure}[!t]
	\centering
	\includegraphics[width=3.0in]{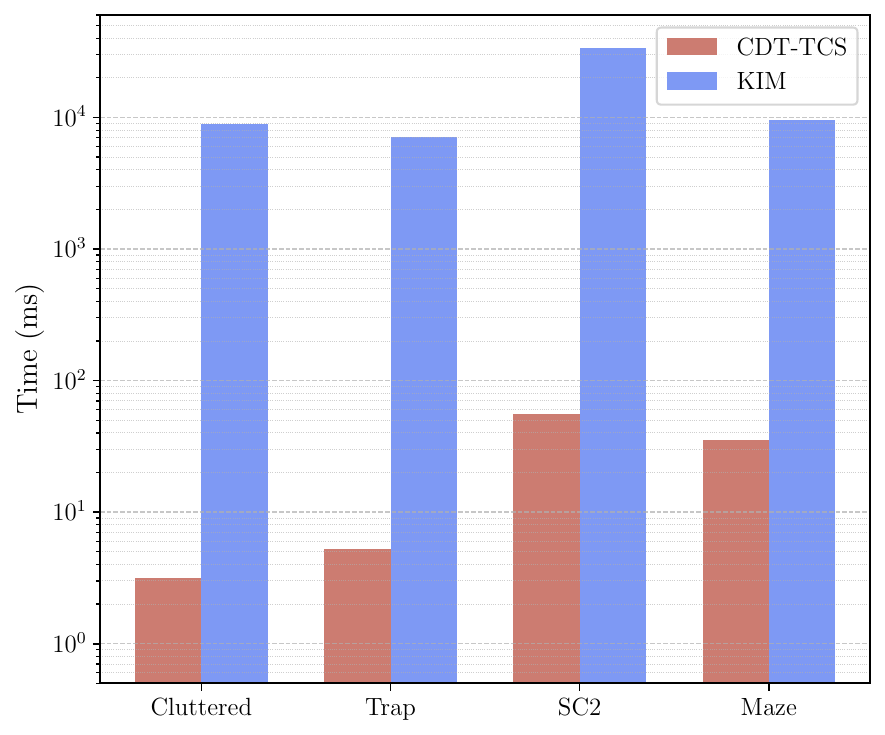}
	\caption{Illustration of the average preprocessing time for CDT-TCS and Kim's algorithm in each environment.}
	\label{fig_SimTCS}
\end{figure}

\begin{figure}[!t]
	\centering
	\subfloat[]{\includegraphics[width=1.65in]{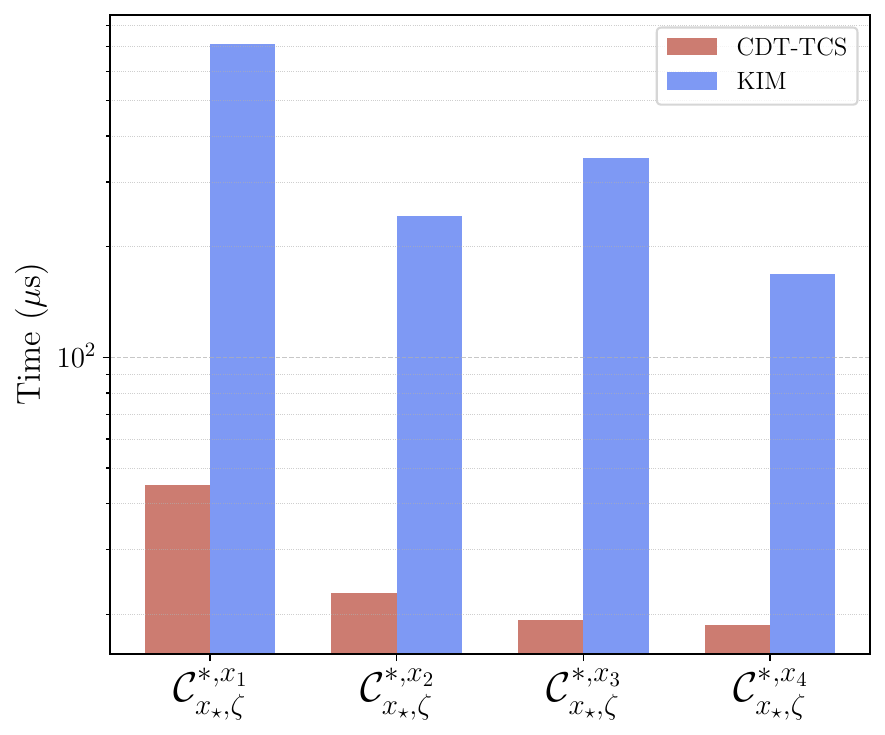}}
	\hfil
	\subfloat[]{\includegraphics[width=1.65in]{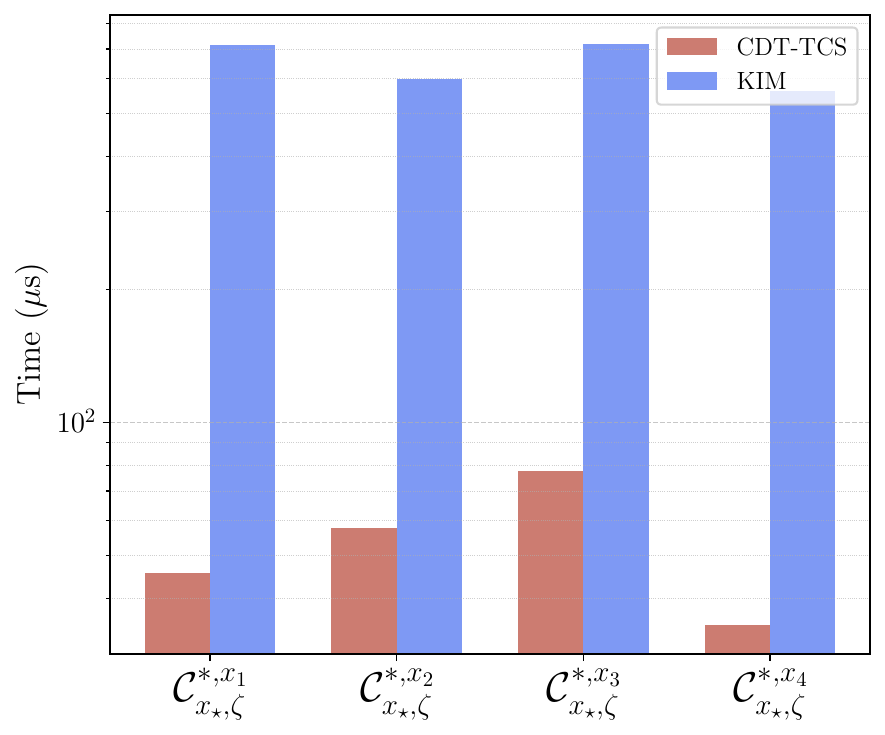}}\\
	% \hfil
	\subfloat[]{\includegraphics[width=1.65in]{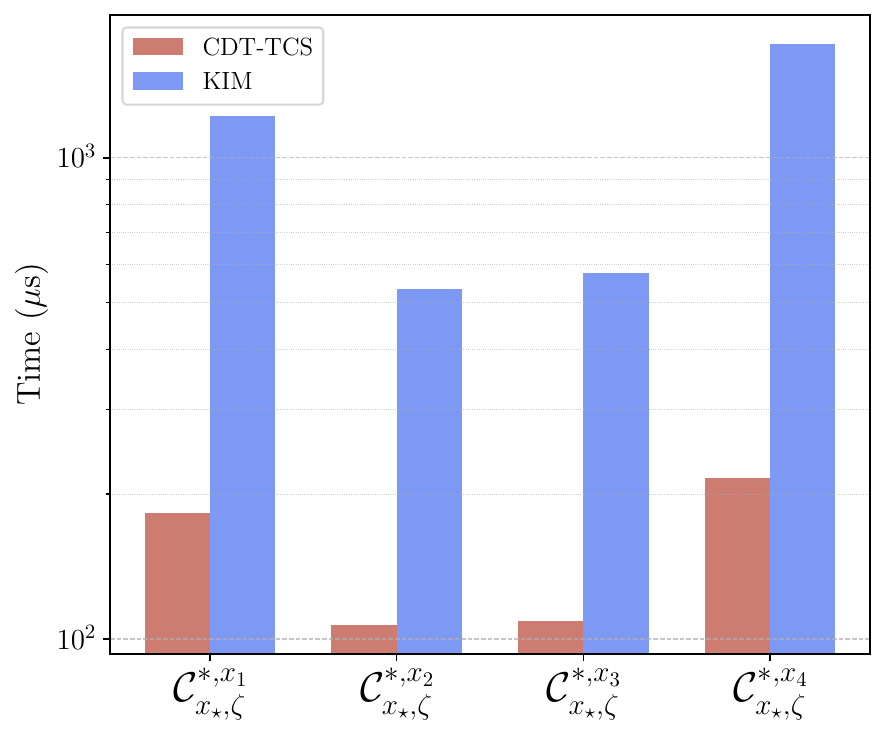}}
	\hfil
	\subfloat[]{\includegraphics[width=1.65in]{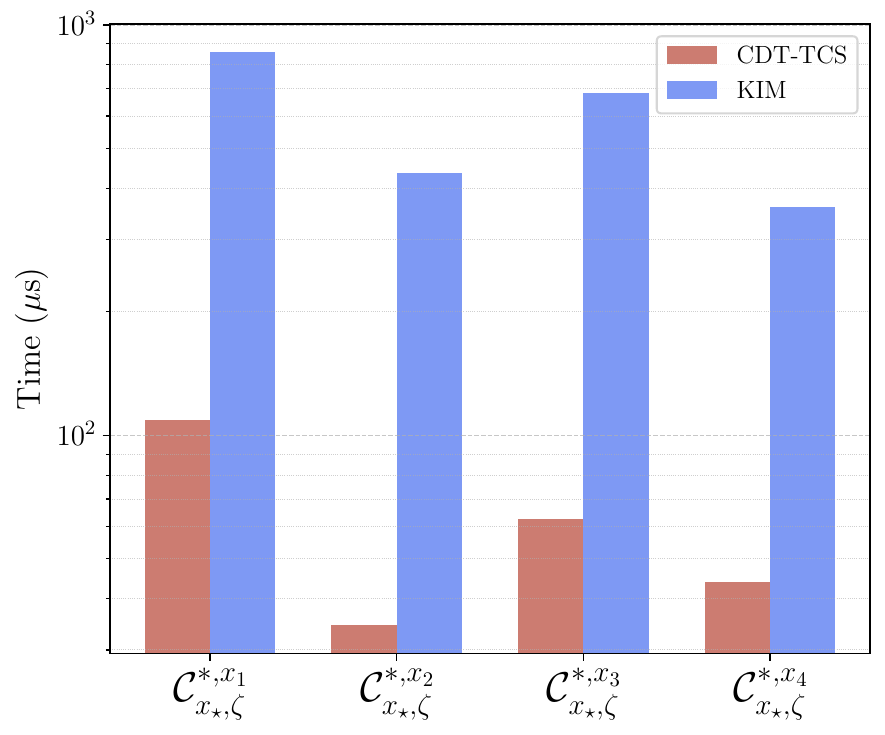}}
	\caption{Illustration of the average time required to retrieve the set of feasible optimal tethered configurations for a given goal point. (a) Cluttered environment, (b) Trap, (c) StarCraft II map, (d) Maze.}
	\label{fig_SimTCSget}
\end{figure}

Fig.~\ref{fig_SimTCSget} demonstrates that CDT-TCS exhibits extremely high efficiency in obtaining the feasible Optimal Tethered Configuration set for target points, with computation times consistently remaining at the microsecond level. These times are nearly negligible, aligning with the description in \textbf{Remark~\ref{re_proTCS}}. Therefore, we conclude that CDT-TCS can rapidly determine the Optimal Tethered Configuration set for all points in the space for tethered robots. In contrast, the traditional Kim method performs slightly worse than CDT-TCS. The primary reason is that during each search, Kim's method needs to traverse all neighboring points $x_{\text{near}}$ in the vicinity of the target point $x_k$ within the HAG and identify all shortest non-homotopic paths at $x_k$. This process involves performing relatively time-consuming collision detection operations for each $l_{x_{\text{near}}}^{x_k}$.

\begin{figure}[!t]
	\centering
	\includegraphics[width=3.2in]{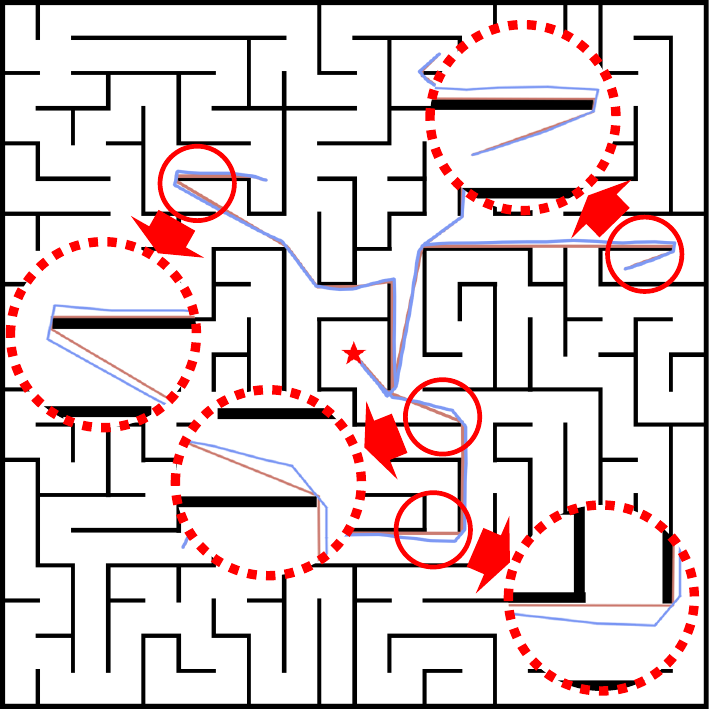}
	\caption{Illustrates a comparison of several pairs of optimal configurations computed by the CDT-TCS and Kim algorithms. In the figure, the configurations calculated by Kim's algorithm are depicted in Lavender Blue, while those computed by the CDT-TCS algorithm are shown in Dusty Rose.}
	\label{fig_TCSConfigExp}
\end{figure}

\begin{table*}
	\begin{center}
		\caption{The Number of True and Found Optimal Configurations for Four Goal Points \label{tab:TCSNumConfig}}
		\small
		\begin{tabular}{|c|ccc|ccc|ccc|ccc|}
			\hline
			\multirow{2.5}*{\makecell[c]{Env.}} &
			\multicolumn{3}{c|}{\makecell*[c]{$\mathcal{C}^{*,x_1}_{x_\star, \zeta}$}} &
			\multicolumn{3}{c|}{\makecell*[c]{$\mathcal{C}^{*,x_2}_{x_\star, \zeta}$}} &
			\multicolumn{3}{c|}{\makecell*[c]{$\mathcal{C}^{*,x_3}_{x_\star, \zeta}$}} &
			\multicolumn{3}{c|}{\makecell*[c]{$\mathcal{C}^{*,x_4}_{x_\star, \zeta}$}}\\
			\cline{2-13}
			& \makecell*[c]{True} & CDT-TCS & Kim & True & CDT-TCS & Kim & True & CDT-TCS & Kim & True & CDT-TCS & Kim \\
			\hline
			\makecell*[c]{Cluttered} & $20$ & $20$ & $20$ & $9$ & $9$ & $9$ & $9$ & $9$ & $9$ & $7$ & $7$ & $7$ \\
			\hline
			\makecell*[c]{Trap} & $15$ & $15$ & $15$ & $14$ & $14$ & $14$ & $13$ & $13$ & $13$ & $14$ & $14$ & $14$ \\
			\hline
			\makecell*[c]{SC2}  & $24$ & $24$ & $24$ & $13$ & $13$ & $13$ & $17$ & $17$ & \textbf{15} & $32$ & $32$ & \textbf{31} \\
			\hline
			\makecell*[c]{Maze} & $20$ & $20$ & \textbf{16} & $11$ & $11$ & \textbf{9} & $17$ & $17$ & $17$ & $13$ & $13$ & \textbf{10} \\
			\hline
		\end{tabular}
	\end{center}
\end{table*}

Finally, it is worth noting that due to the adoption of a convex dissection-based geometric representation of the environment, CDT-TCS achieves higher computational accuracy for optimal configurations (or paths) compared to Kim's algorithm, which uses PRM to represent the environment. Specifically, the lengths of the optimal configurations (or paths) computed by CDT-TCS are shorter. Fig.~\ref{fig_TCSConfigExp} illustrates several pairs of optimal configurations calculated by CDT-TCS and Kim's algorithm. The errors introduced by PRM may prevent Kim's algorithm from successfully finding certain optimal configurations with lengths close to $\zeta$, particularly in complex environments such as SC2 and Maze used in the experiments. \textbf{Table~\ref{tab:TCSNumConfig}} records the true counts of $\|\mathcal{C}^{*,x_k}_{x_\star ,\zeta}\|$ as well as the numbers of configurations found by CDT-TCS and Kim's algorithm.

\subsection{Optimal Tethered Path Planning Problems}
In this subsection, we will continue to evaluate the performance of the CDT-TPP algorithm proposed in Subsection 5.1 and Kim's algorithm in solving the Optimal Tethered Path Planning problem. In the Kim algorithm, the planning phase primarily relies on using the A* algorithm on the preprocessed HAG to determine the optimal path that satisfies the tether constraints. Since the preprocessing steps for both CDT-TPP (via \textbf{Algorithm~\ref{alg_TCS}}, CDT-TCS) and Kim's algorithm need to be executed only once for the same environment, the time spent on preprocessing is excluded from the statistics in this subsection.

In the experiments, we will use $\varsigma_{s1}$ and $\varsigma_{s2}$ from the simulation environment as the initial configurations and sequentially plan the optimal paths for moving from these initial configurations to the target points $x_1$, $x_2$, $x_3$, and $x_4$. During this process, we will record the time consumed by the CDT-TPP algorithm and the Kim algorithm in planning these paths, as well as the costs of the resulting paths. Each experiment will be repeated 100 times, and the relevant data will be statistically analyzed.

\begin{figure}[!t]
	\centering
	\includegraphics[width=2.4in]{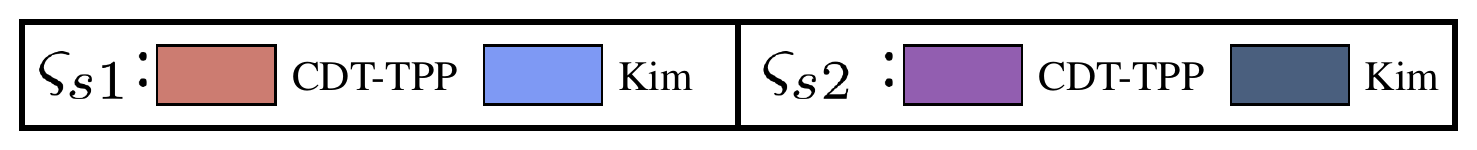}\\
	\subfloat[]{\includegraphics[width=1.65in]{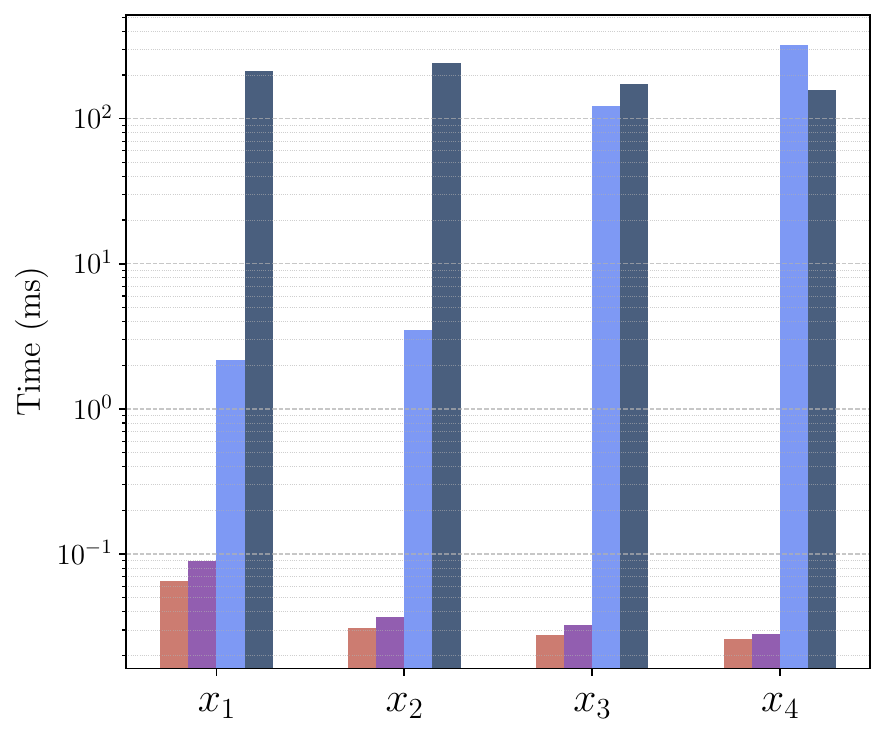}}
	\hfil
	\subfloat[]{\includegraphics[width=1.65in]{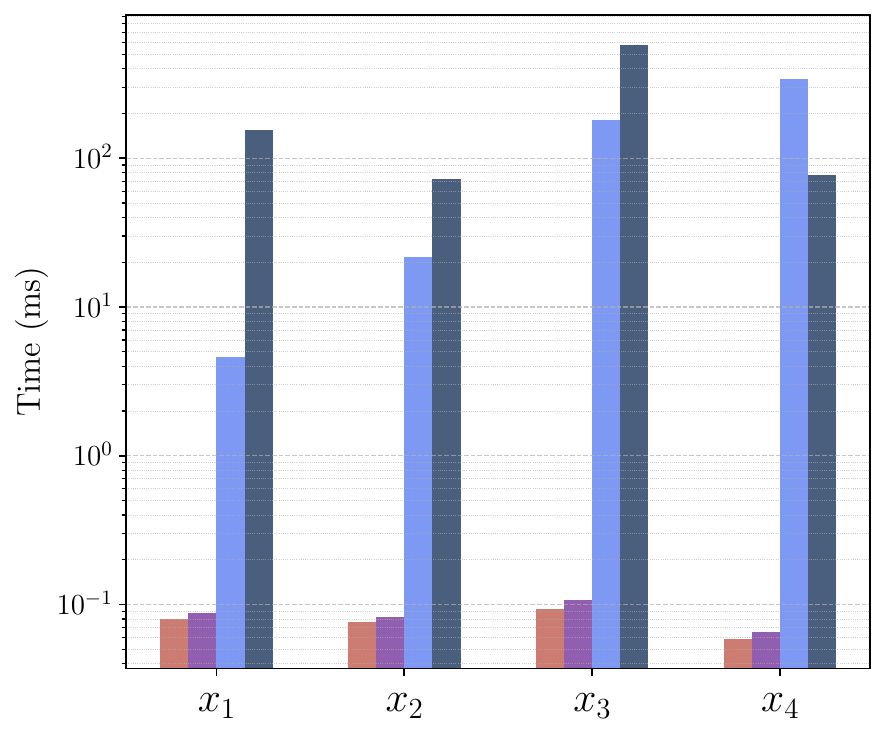}}\\
	% \hfil
	\subfloat[]{\includegraphics[width=1.65in]{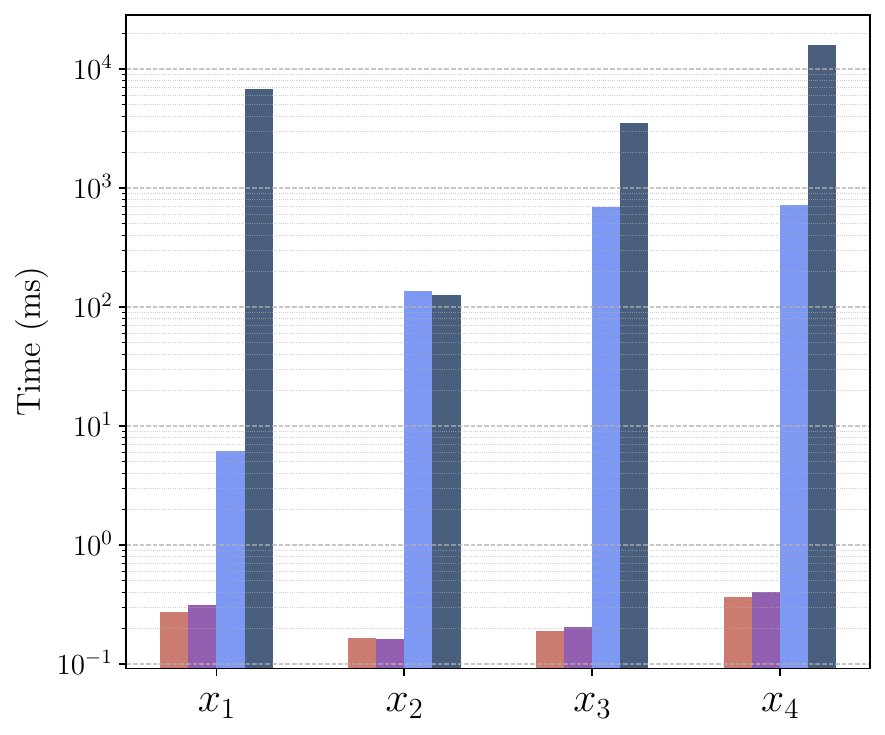}}
	\hfil
	\subfloat[]{\includegraphics[width=1.65in]{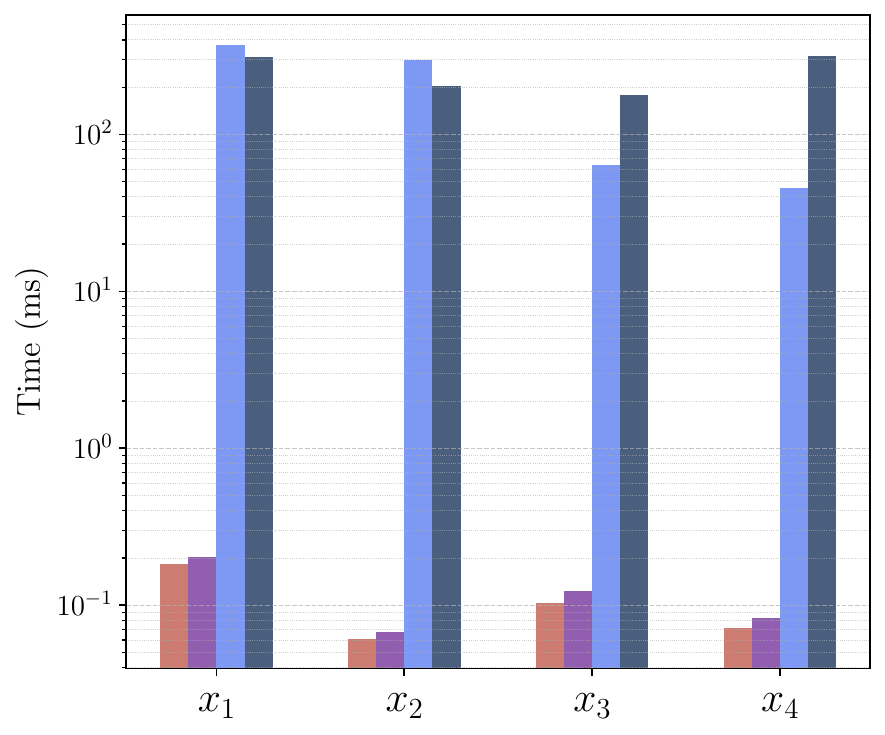}}
	\caption{Illustration of the average path planning times for CDT-TPP and KIM algorithms. (a) Cluttered environment, (b) Trap, (c) StarCraft II map, (d) Maze.}
	\label{fig_SimTPP}
\end{figure}

{
\setlength{\tabcolsep}{3pt}
\begin{table*}
	\begin{center}
		\caption{Path Costs Planned by CDT-TPP and Kim's Algorithm, and the Number of Nodes Visited by Kim's Algorithm \label{tab:TPPcost}}
		\footnotesize
		\begin{tabular}{|c|c|cccc|cccc|cccc|cccc|}
			\hline
			\multicolumn{2}{|c|}{\multirow{2.5}*{\makecell[c]{Goals}}} &
			\multicolumn{4}{c|}{\makecell*[c]{Cluttered}} &
			\multicolumn{4}{c|}{\makecell*[c]{Trap}} &
			\multicolumn{4}{c|}{\makecell*[c]{SC2}} &
			\multicolumn{4}{c|}{\makecell*[c]{Maze}}\\
			\cline{3-18}
			\multicolumn{2}{|c|}{} & \makecell*[c]{$x_1$} & $x_2$ & $x_3$ & $x_4$ & $x_1$ & $x_2$ & $x_3$ & $x_4$ & $x_1$ & $x_2$ & $x_3$ & $x_4$ & $x_1$ & $x_2$ & $x_3$ & $x_4$ \\
			\hline
			\multicolumn{18}{|c|}{\makecell*[c]{Path Cost}}\\
			\hline
			\multirow{2.5}*{\makecell[c]{$\varsigma_{s1}$}} & \makecell*[c]{CDT-TPP} & $435$ & $632$ & $1214$ & $1380$ & $770$ & $532$ & $1409$ & $1404$ & $1090$ & $1392$ & $2718$ & $2716$ & $4001$ & $3846$ & $2047$ & $1518$ \\
			% \cline{2-18}
			& \makecell*[c]{Kim} & $437$ & $631$ & $1213$ & $1383$ & $772$ & $535$ & $1414$ & $1413$ & $1090$ & $1404$ & $2738$ & $2728$ & $4093$ & $3937$ & $2104$ & $1602$ \\
			\hline
			\multirow{2.5}*{\makecell[c]{$\varsigma_{s2}$}} & \makecell*[c]{CDT-TPP} & $1701$ & $2142$ & $1939$ & $1289$ & $2355$ & $1800$ & $2993$ & $1630$ & $3203$ & $1392$ & $3194$ & $3729$ & $4585$ & $4029$ & $4311$ & $4547$ \\
			% \cline{2-18}
			& \makecell*[c]{Kim} & $1708$ & $2149$ & $1946$ & $1294$ & $2363$ & $1808$ & $3005$ & $1636$ & $3239$ & $1404$ & $3239$ & $3770$ & $4708$ & $4137$ & $4422$ & $4680$ \\
			\hline
			\multicolumn{18}{|c|}{\makecell*[c]{The Number of Vertices Visited by the Kim Algorithm During the Planning Phase ($xek = x \times 10^k$)}}\\
			\hline
			\multicolumn{2}{|c|}{\makecell*[c]{$\varsigma_{s1}$}} & $127$ & $240$ & $4971$ & $123e2$ & $320$ & $1225$ & $7849$ & $134e2$ & $206$ & $3335$ & $151e2$ & $152e2$ & $186e2$ & $154e2$ & $3883$ & $2845$\\
			\hline
			\multicolumn{2}{|c|}{\makecell*[c]{$\varsigma_{s2}$}} & $9953$ & $111e2$ & $8132$ & $6550$ & $8777$ & $4610$ & $233e2$ & $4535$ & $192e3$ & $3516$ & $912e2$ & $387e3$ & $169e2$ & $118e2$ & $108e2$ & $174e2$ \\
			\hline
		\end{tabular}
	\end{center}
\end{table*}
}

\textbf{Table~\ref{tab:TPPcost}} presents the path costs planned by CDT-TPP and the Kim algorithm, as well as the number of nodes visited by Kim. The comparison of average path planning times between the CDT-TPP and Kim algorithms is shown in Fig.~\ref{fig_SimTPP}. In terms of planning time, the CDT-TPP algorithm consistently outperforms the Kim algorithm. In this round of experiments, the performance of CDT-TPP was highly stable, with its consumed time consistently remaining within the range of 0.02-0.4 ms. Additionally, the planning efficiency of CDT-TPP is not sensitive to the choice of initial configuration. As can be observed from the experiments, for the same target point, the difference in time consumption when using $\varsigma_{s1}$ or $\varsigma_{s2}$ as the initial configuration is minimal. This is because, according to the pseudocode shown in \textbf{Algorithm~\ref{alg_TPP}}, the planning time consumed by CDT-TPP is primarily linearly correlated with the number of elements in the encoding set $P^*_\zeta(\mathcal{G}_{con}; \mathbf{x}_\star, \mathbf{x}_g)$, where $\mathbf{x}_g$ represents the convex polygon containing the target point.

In contrast, the planning efficiency of the Kim algorithm is highly unstable. For different initial configurations, the time consumption can vary by as much as 2-3 orders of magnitude. For instance, in the case of target point $x_1$ in the SC2 environment, planning takes only 6.2 ms when the initial configuration is $\varsigma_{s1}$, but nearly 13 seconds when the initial configuration is $\varsigma_{s2}$. Such variability is unacceptable for tethered robot platforms that require high real-time performance but have limited computational resources. We believe the primary reasons for this drawback in the Kim algorithm are as follows:
\begin{enumerate}
		\item[1)] {The Kim algorithm uses PRM for environment representation and constructs the HAG by combining it with $h$-signatures. This results in the HAG containing a vast number of vertices (Much more than PRM), which increases the complexity of the graph structure and slows down the search process;}
		\item[2)] {During planning, the Kim algorithm employs the A* algorithm as the base planner. However, under tether constraints, certain special initial configurations may cause the heuristic function (Euclidean distance) used by A* to fail, nearly degrading it to Dijkstra's algorithm. In such cases, the Kim algorithm ends up traversing a large number of invalid vertices in the HAG, significantly increasing computation time.}
\end{enumerate}
Based on this, it is not difficult to identify one of the main reasons why Kim's algorithm is significantly slower than CDT-TPP during the planning phase. Specifically, the preprocessing stage of Kim's algorithm primarily focuses on labeling the vertices in the HAG that satisfy the tether constraints. Beyond this, the preprocessing stage provides no additional guidance for subsequent path planning. In contrast, CDT-TPP leverages the theoretical foundation provided by (\ref{eq_FOPPsetE}) and \textbf{Corollary~\ref{Cor_genf}} to directly utilize the results of the CDT-TCS algorithm to obtain the potential set of optimal paths. It then quickly determines the optimal path through a simple traversal of this set.

\subsection{Tethered Multi-Goal Visitin Problems}
In this subsection, we evaluate the performance of the CDT-TMV algorithm proposed in Subsection 5.2 for solving the Optimal TMV problem. It is important to note that, due to we have not identified any other open-source algorithms specifically designed for addressing the Optimal TMV Problem, we compare CDT-TMV with an exhaustive search method to demonstrate the advantages of CDT-TMV.

In the experiments, we use $\varsigma_{s1}$ as the initial configuration and set $(x_1, x_2, x_3, x_4)$ as the sequence of target points that need to be visited sequentially. Both the CDT-TMV algorithm and the exhaustive search method are employed to plan the optimal path for this TMV problem. During this process, we record the time consumed by the CDT-TMV algorithm and the exhaustive search method, as well as the number of calls made to \textbf{Algorithm~\ref{alg_ShortestPath}}. Each experiment is repeated 100 times, and the relevant data are statistically analyzed.

\begin{figure}[!t]
	\centering
	\subfloat[]{\includegraphics[width=1.65in]{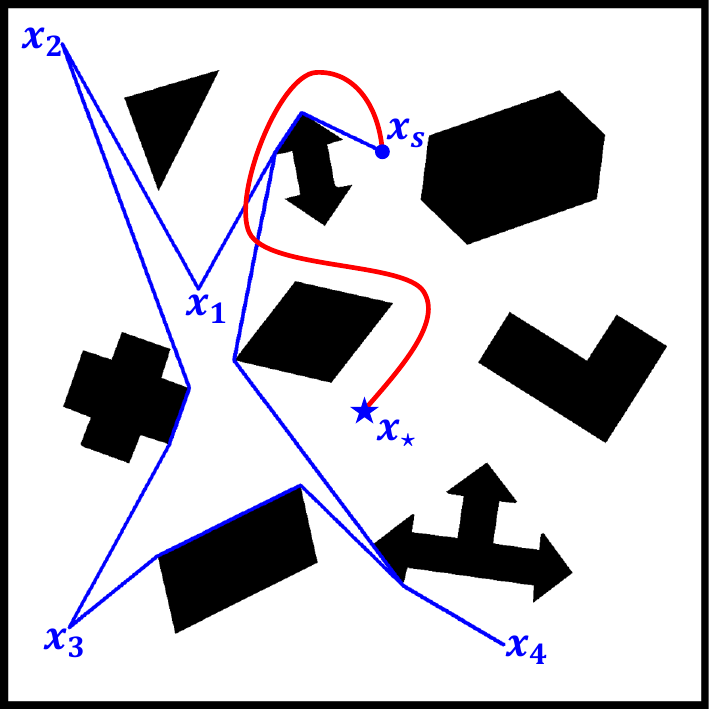}}
	\hfil
	\subfloat[]{\includegraphics[width=1.65in]{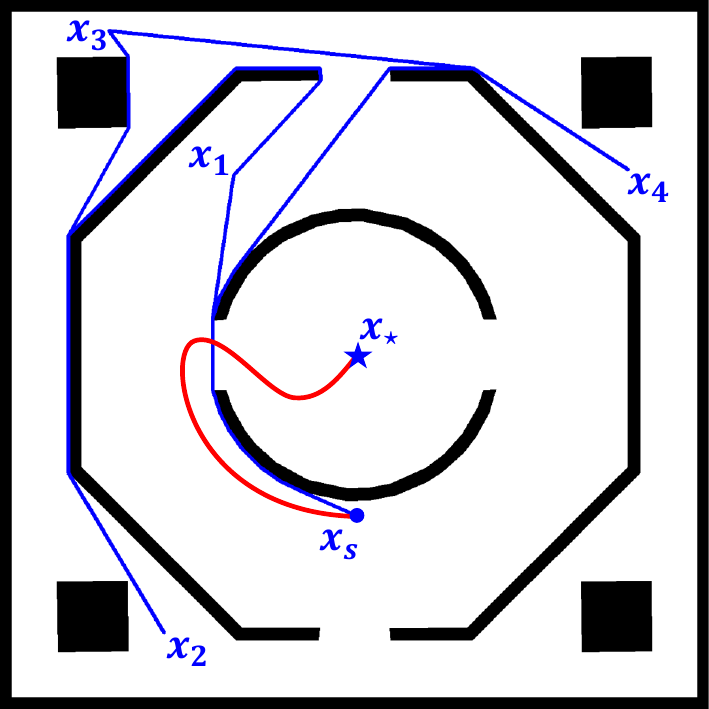}}
	\\%\hfil
	\subfloat[]{\includegraphics[width=1.65in]{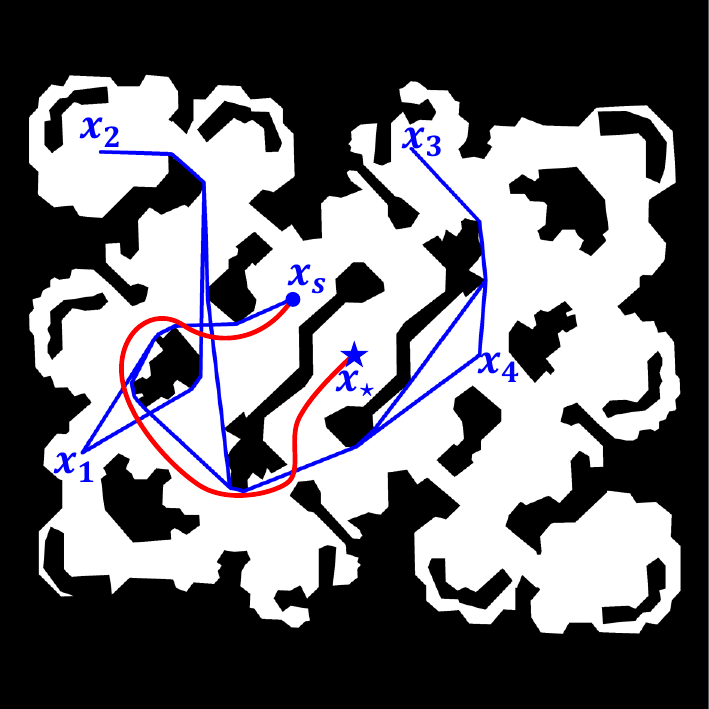}}
	\hfil
	\subfloat[]{\includegraphics[width=1.65in]{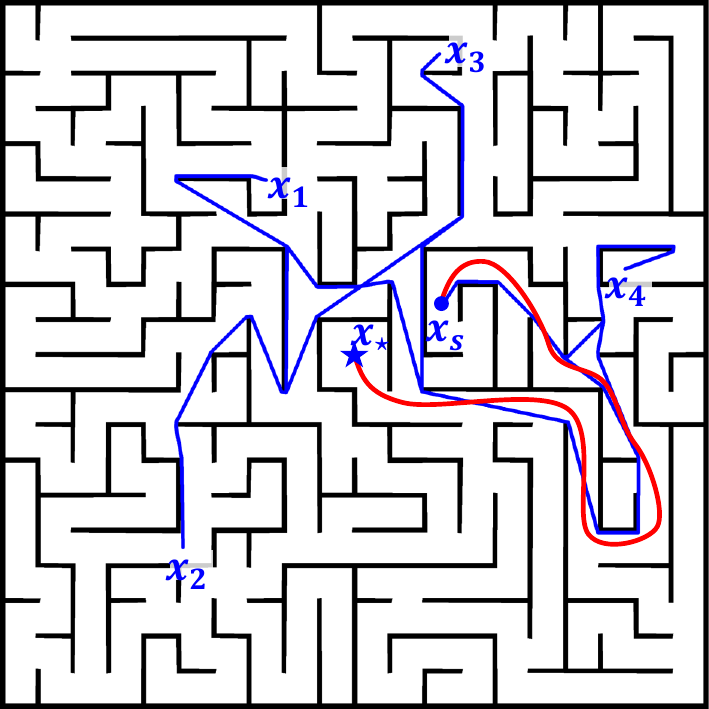}}
	\caption{Illustration of the optimal paths for the four TMV tasks planned by CDT-TMV.}
	\label{fig_simTMVpath}
\end{figure}

\begin{figure}[!t]
	\centering
	\subfloat[]{\includegraphics[width=1.65in]{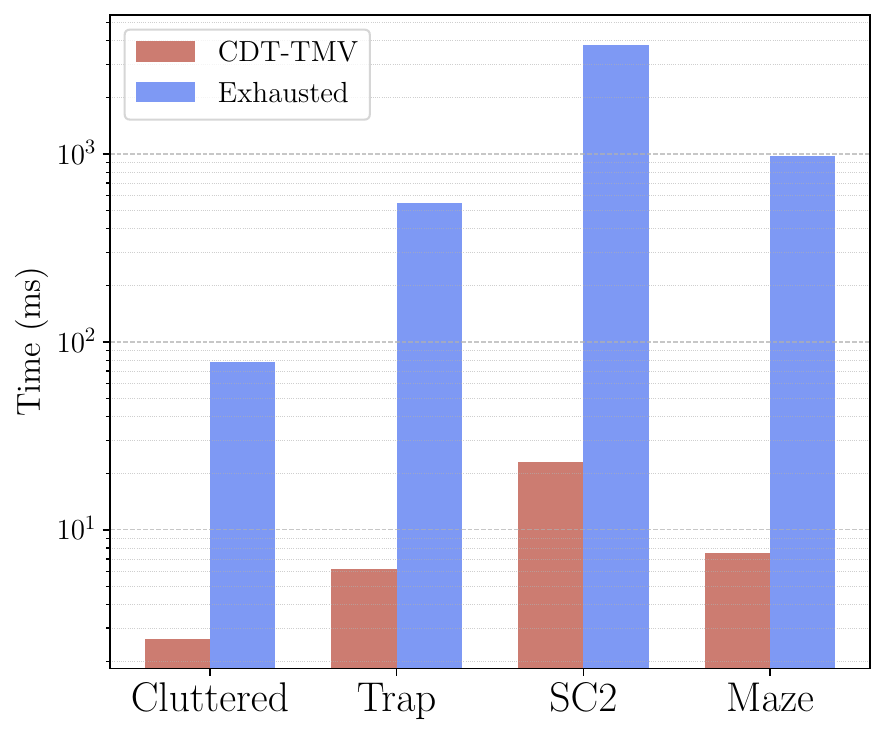}}
	\hfil
	\subfloat[]{\includegraphics[width=1.65in]{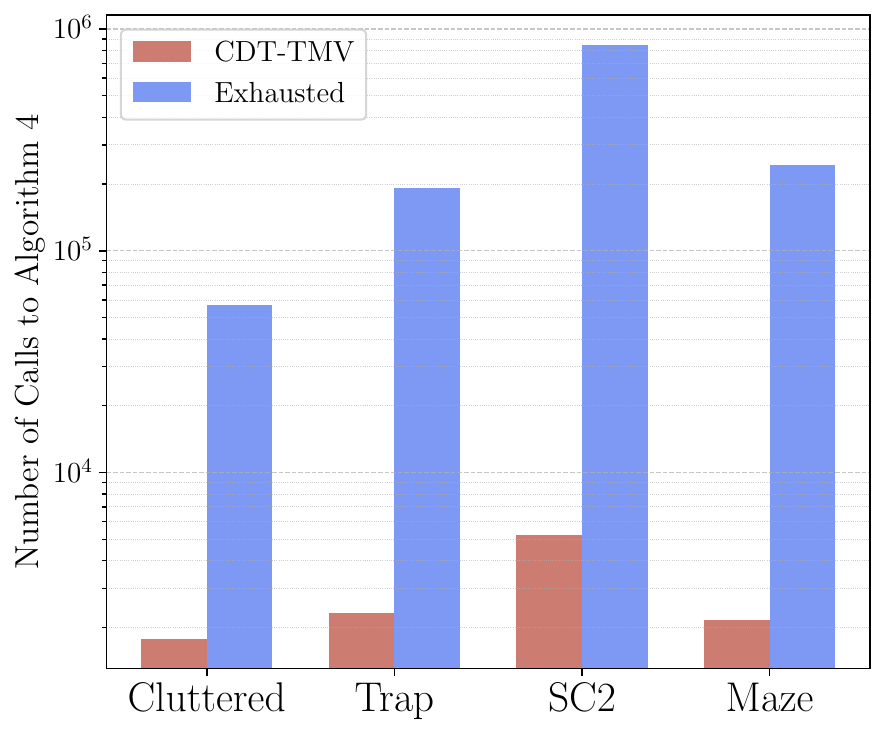}}
	\caption{Comparison of (a) average time consumed and (b) the number of calls to \textbf{Algorithm~\ref{alg_ShortestPath}} between CDT-TMV and the exhaustive search method.}
	\label{fig_SimTMV}
\end{figure}

The Optimal TMV paths obtained from the planning process are shown in Fig.~\ref{fig_simTMVpath}. The average time consumed and the number of calls to \textbf{Algorithm~\ref{alg_ShortestPath}} for both CDT-TMV and the exhaustive search method are presented in Fig.~\ref{fig_SimTMV}. From the experimental results, it is evident that CDT-TMV transforms the Optimal TMV problem into a generalized path planning process on the graph $\mathcal{G}_{TMV}$, utilizing Dijkstra's algorithm for rapid solution. This approach avoids unnecessary attempts of many non-essential configuration sequences by CDT-TMV. Additionally, during each calculation of a new node, CDT-TMV reuses the computational results of its parent node according to \textbf{Remark~\ref{re_nodeChange}}. These two measures significantly reduce the number of calls to \textbf{Algorithm~\ref{alg_ShortestPath}} by CDT-TMV. In the four TMV tasks, the number of calls to \textbf{Algorithm~\ref{alg_ShortestPath}} by the CDT-TMV algorithm was reduced by factors of $31.9$, $81.9$, $162.6$, and $111.8$ compared to the exhaustive search method.

In fact, the TMV problem designed for this experiment, which only involves visiting four target points, is relatively simple. When the number of target points increases, the space that the exhaustive search method needs to traverse will grow exponentially, making it impractical to solve the Optimal TMV problem using the exhaustive search method. In contrast, CDT-TMV can still achieve rapid solutions under such conditions.

\subsection{Fast Optimal Untethered Path Planning Problems}

\begin{figure*}[!t]
	\centering
	\includegraphics[width=4.0in]{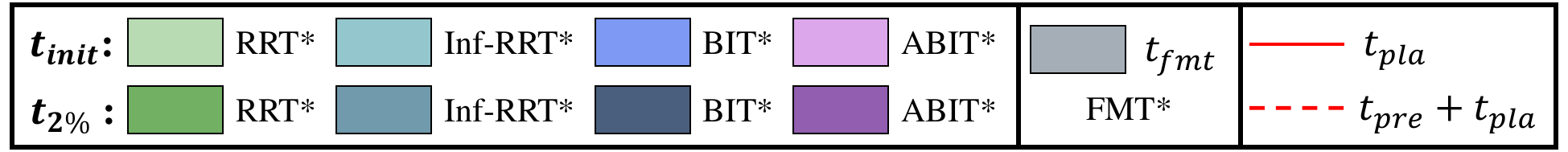}
	\\
	\subfloat[]{\includegraphics[width=1.65in]{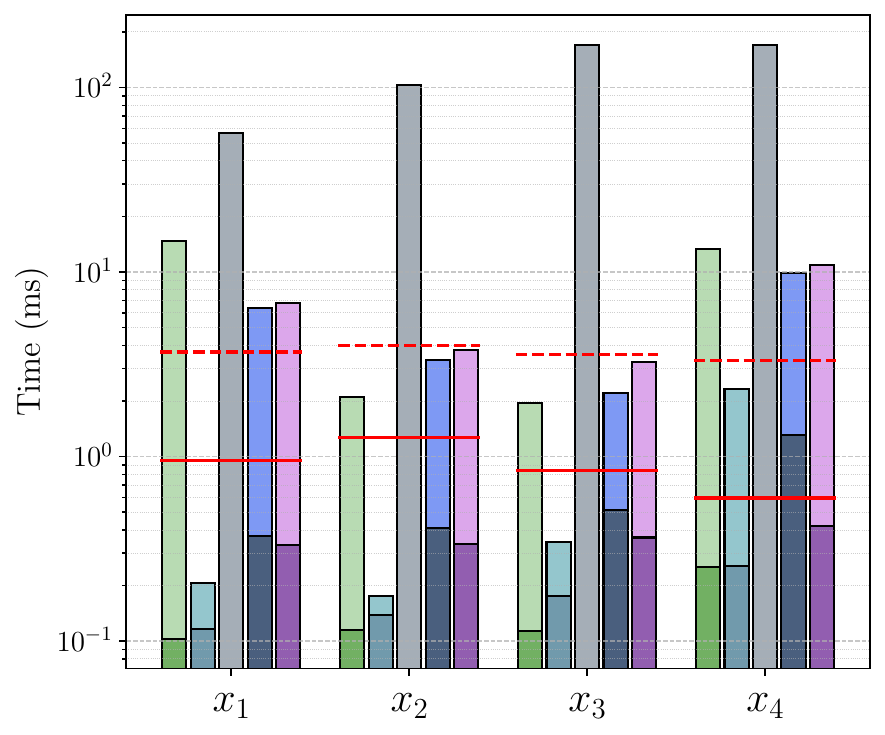}}
	\hfil
	\subfloat[]{\includegraphics[width=1.65in]{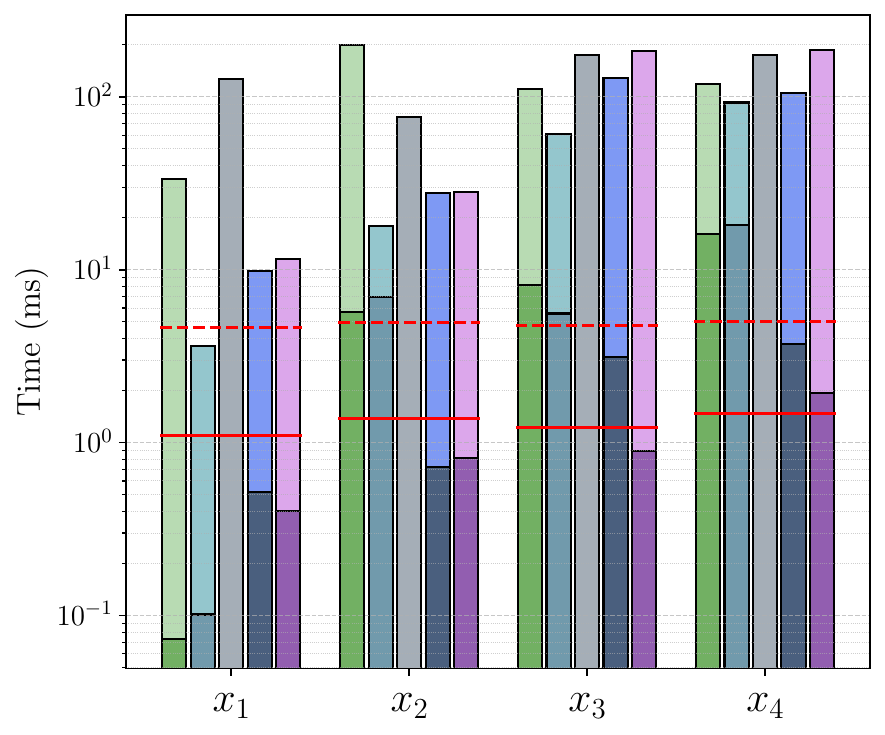}}
	\hfil
	\subfloat[]{\includegraphics[width=1.65in]{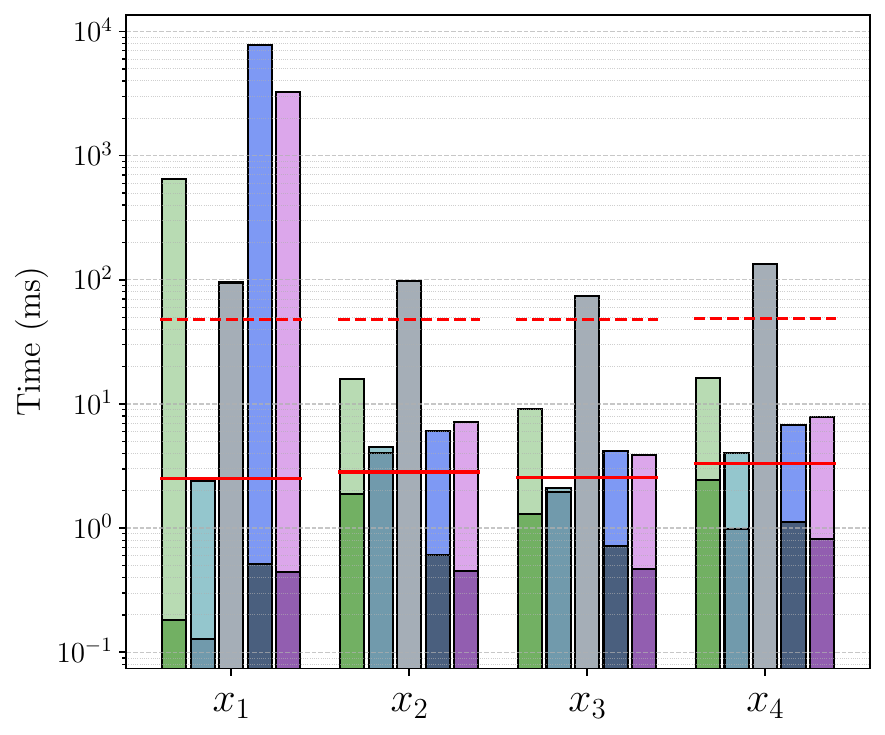}}
	\hfil
	\subfloat[]{\includegraphics[width=1.65in]{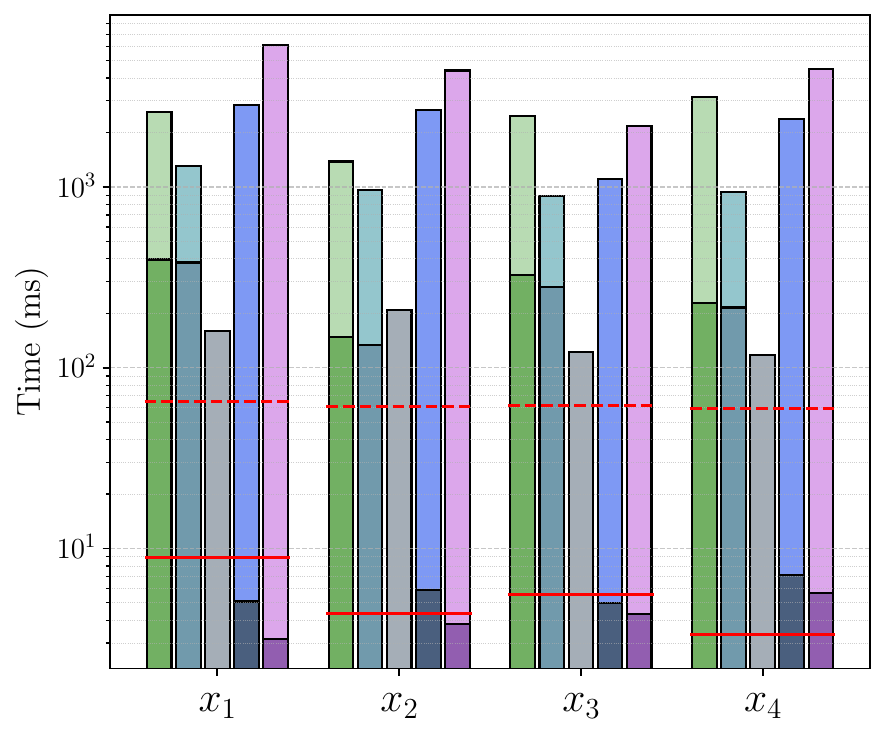}}
	\caption{(a)-(d) correspond to the illustrations of the average planning times for CDT-UTPP and the comparison algorithms in four simulated environments. The solid red line represents the planning time $t_{pla}$ for each task using CDT-UTPP, while the dashed red line indicates the total time $t_{pre} + t_{pla}$, including the preprocessing time for CDT-UTPP.}
	\label{fig_simUTPP}
\end{figure*}

In this subsection, to evaluate the efficiency of the CDT-UTPP algorithm proposed in Subsection 5.3 for solving distance-optimal path planning tasks in 2D environments, we compare CDT-UTPP with six state-of-the-art optimal path planning algorithms: RRT*, PRM* \citep{karaman2011sampling}, Informed-RRT* \citep{gammell2018informed}, FMT* \citep{janson2015fast}, BIT* \citep{gammell2020batch}, and Advanced BIT* (ABIT*) \citep{strub2020advanced}. As shown in Fig.~\ref{fig_simMap}, for each simulation environment, we set up four path planning tasks. Specifically, starting from $x_S$, we plan the optimal paths to $x_1$, $x_2$, $x_3$, and $x_4$.

For the CDT-UTPP algorithm, we conservatively set the parameter $\zeta$ to $2000$, $3000$, $4000$, and $4000$ pixels for the four simulation environments, respectively. This ensures that CDT-UTPP can obtain the optimal path between any two points in the environment.

In the experiments, for the CDT-UTPP algorithm, we record its preprocessing time $t_{pre}$ (\textbf{Algorithm~\ref{alg_UTPP}}) and the planning time $t_{pla}$ (\textbf{Algorithm~\ref{alg_UTPPGet}}) for each task. For RRT*, PRM*, Informed-RRT*, BIT*, and ABIT*, we record the time $t_{init}$ required to find an initial solution and the time $t_{5\%}$ required to find a solution with a cost less than $1.05 \times$ the optimal cost. For the FMT* algorithm, we set the number of sampling points to 20,000 and record the time $t_{fmt}$ required to find a solution. Each experiment is repeated 100 times, and the relevant data are statistically analyzed.

The average planning times for CDT-UTPP and the comparison algorithms are shown in Fig.~\ref{fig_simUTPP}. The experimental results demonstrate that the time $t_{pla}$ required by CDT-UTPP to plan the optimal path remains at the millisecond level. Even when accounting for the preprocessing time $t_{pre} + t_{pla}$, it fully satisfies the requirements for real-time performance. It is worth noting that CDT-UTPP is not an anytime algorithm; instead, it directly provides the planned optimal path after execution. Therefore, CDT-UTPP does not have an advantage over other anytime algorithms (such as RRT*, PRM*, Informed-RRT*, BIT*, and ABIT*) in terms of determining the initial path. However, this is unlikely to have a significant impact in practical applications, as the quality of the initial path is often poor and unsuitable for direct use in robot navigation.

In terms of efficiency in determining the optimal path, the experimental results show that, in most cases, CDT-UTPP performs comparably to or even outperforms the current state-of-the-art path planning algorithms. Particularly in relatively complex environments (e.g., Trap and Maze), the intricate connectivity structures of obstacles significantly diminish the performance advantages of informed sampling strategies used by Informed-RRT*, BIT*, and ABIT*, as well as the heuristic search strategies of BIT* and ABIT*. By contrast, CDT-UTPP exhibits remarkable stability in such environments, consistently completing the planning process in under 10 ms. This is because CDT-UTPP primarily achieves optimal path planning by identifying the homotopy class to which the path belongs. This planning approach is more influenced by the number of independent obstacles in the space and less affected by the specific spatial structure of each individual obstacle. The results shown in Fig.~\ref{fig_simUTPP}(a) and (b) further support this point: for environments with the same number of independent obstacles, the planning times for CDT-UTPP are nearly identical. However, due to changes in obstacle structures, the efficiency of other planners shows noticeable variations.

Finally, it is important to note that CDT-UTPP, as a novel planner, adopts a technical approach entirely different from traditional planners. While it outperforms existing planners in most cases, it still has some limitations that need to be addressed. For instance, for a new environment, we cannot directly determine the appropriate values for the parameters $x_\star$ and $\zeta$. In such cases, we must rely on experience and repeated trials to select a suitable $x_\star$ and $\zeta$. Additionally, CDT-UTPP is only applicable to static 2D environments and cannot be easily adapted, like other algorithms, to handle dynamic obstacles or real-time global planning through simple modifications. Nevertheless, we believe that CDT-UTPP is highly suitable for robotic platforms with relatively fixed working environments and limited computational resources. In such scenarios, engineers can carefully select an optimal $x_\star$ and $\zeta$ for the environment, which can then be embedded into the robot for long-term operation.

\begin{figure*}[!t]
	\centering
	\includegraphics[width=6.8in]{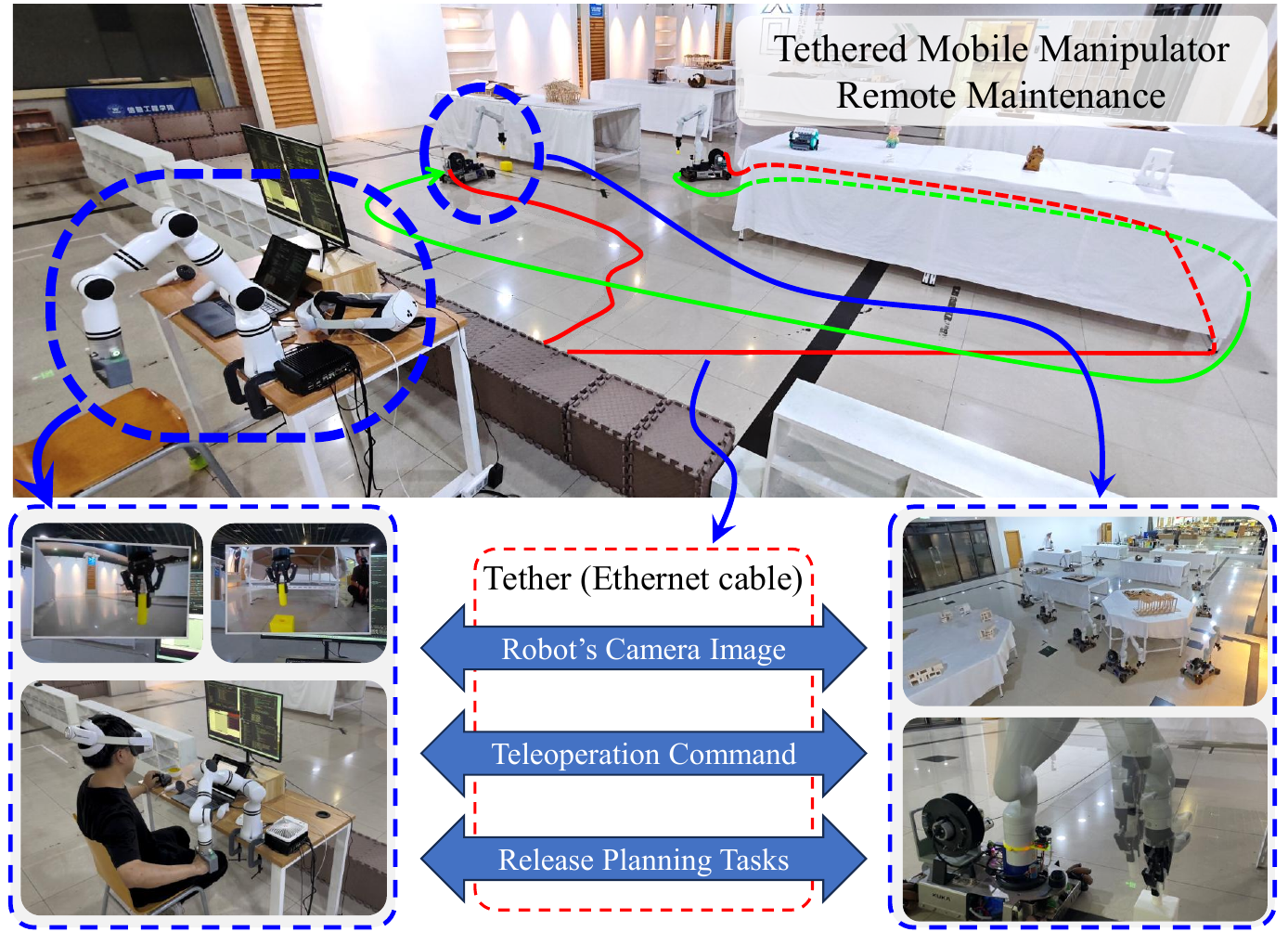}
	\caption{The tethered mobile manipulator automatically moves to the target location based on task inputs, after which the experimentalist remotely operates the robotic arm for assembly.}
	\label{fig_real}
\end{figure*}

\begin{figure*}[!t]
	\centering
	\subfloat[]{\includegraphics[height=1.45in]{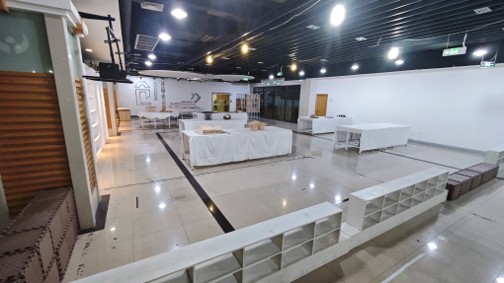}}
	\hfil
	\subfloat[]{\includegraphics[height=1.45in]{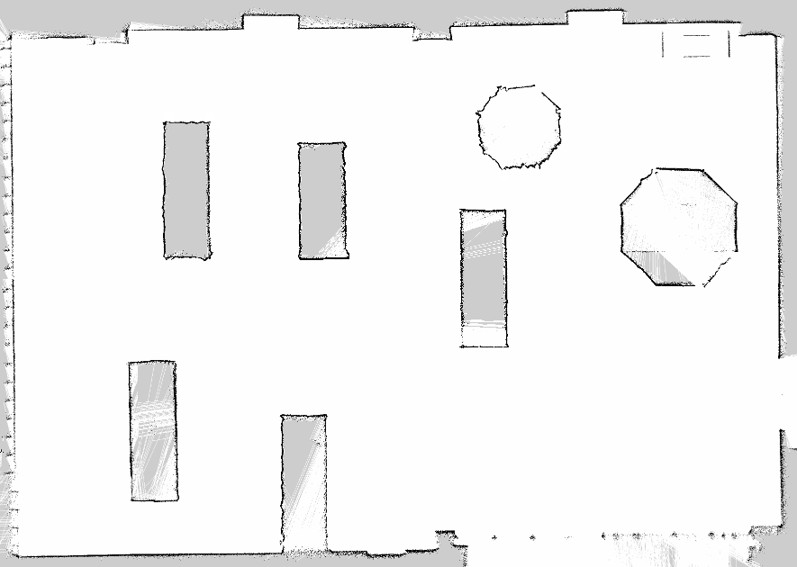}}
	\hfil
	\subfloat[]{\includegraphics[height=1.45in]{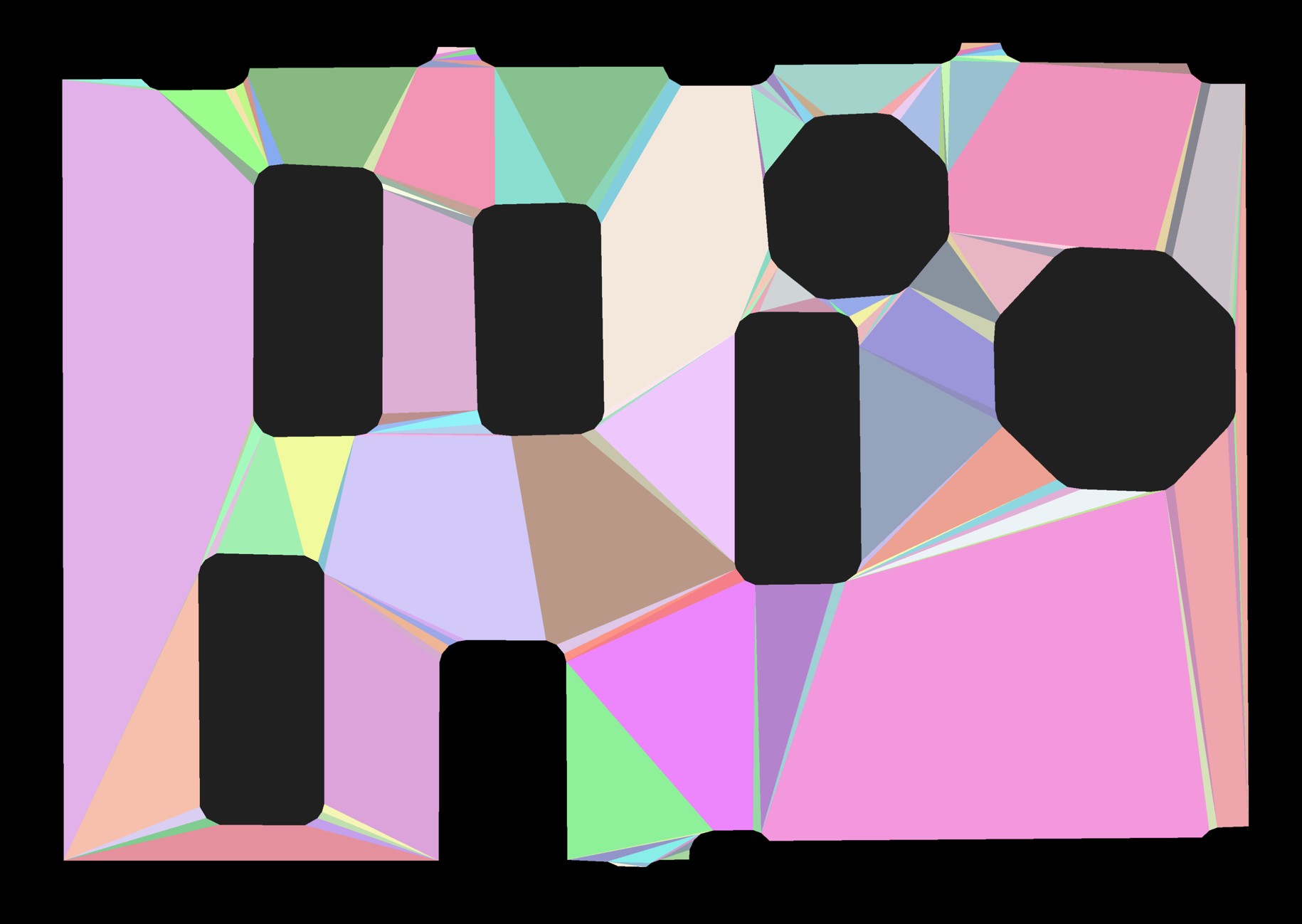}}
	\caption{Illustration of a real-world experimental site. (a) Photos of the exhibition hall. (b) 2D map of the exhibition hall created with Cartographer. (c) The results of the map expansion and convex division using the robot radius.}
	\label{fig_rEnv}
\end{figure*}

\begin{figure*}[!t]
	\centering
	\subfloat[]{\includegraphics[height=1.35in]{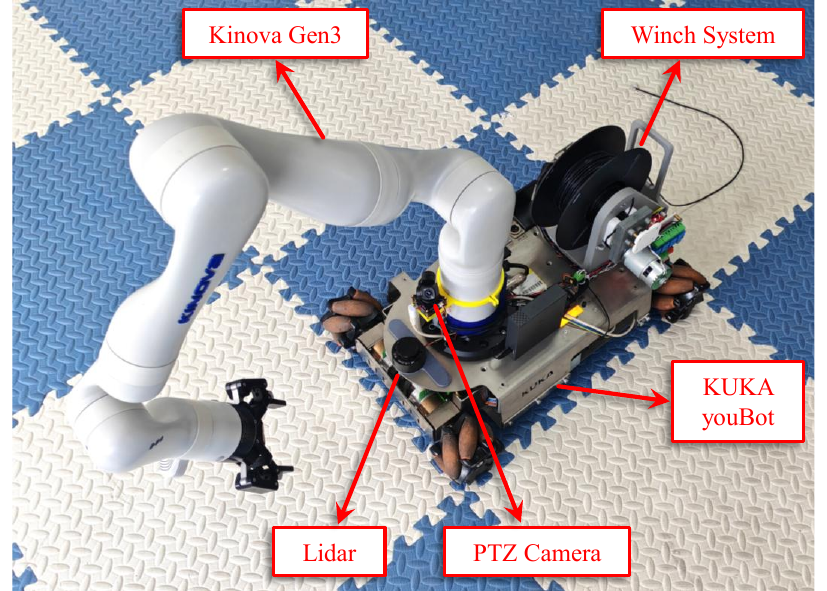}}
	\hfil
	\subfloat[]{\includegraphics[height=1.35in]{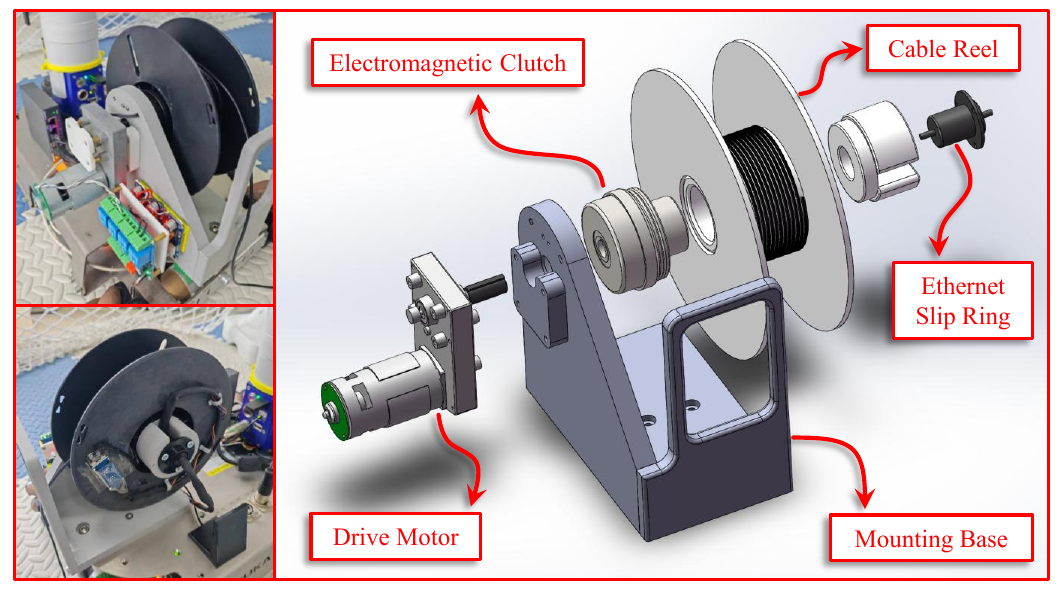}}
	\hfil
	\subfloat[]{\includegraphics[height=1.35in]{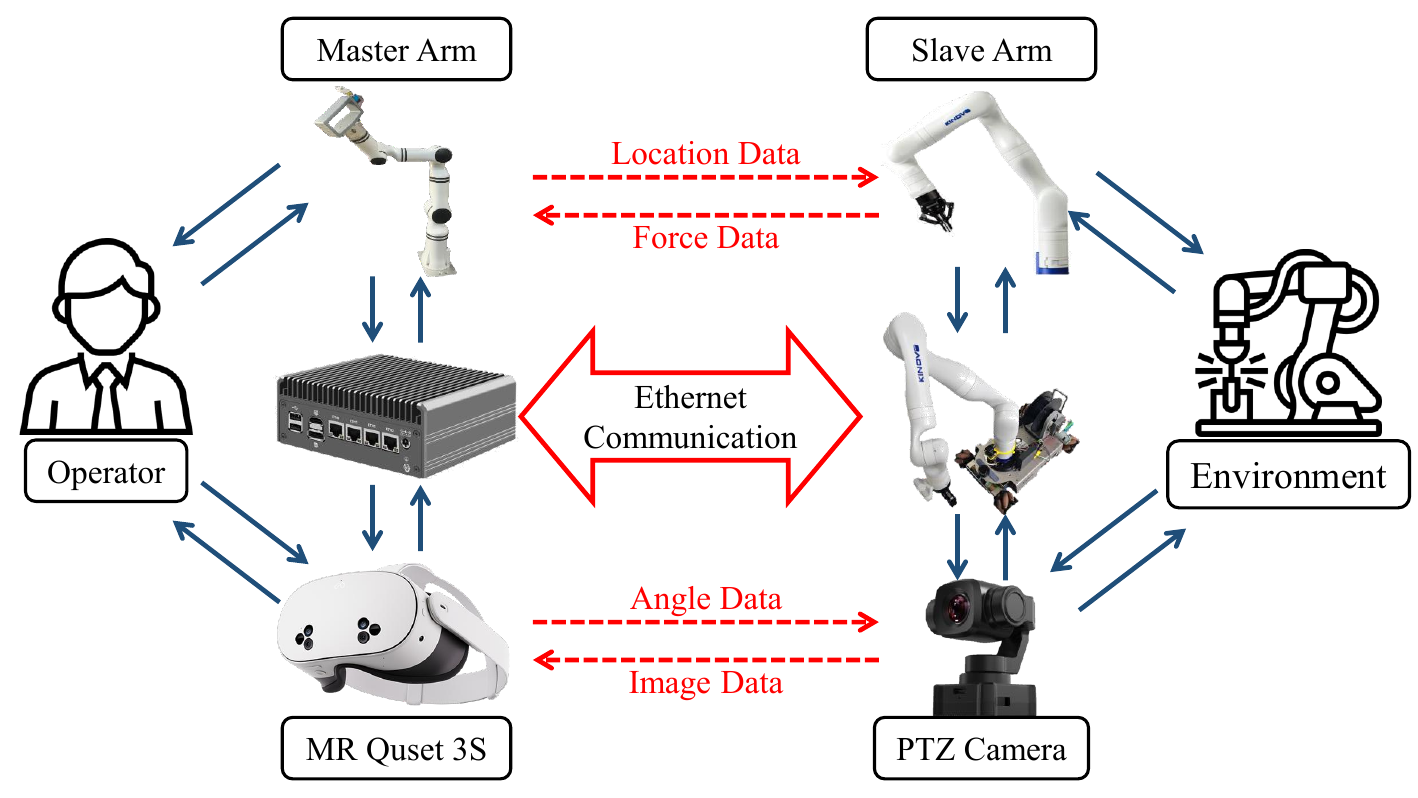}}
	\caption{(a) The tethered mobile manipulator used in the real-world experiment. (b) The robot's tethered rope winch system. (c) Illustration of teleoperation system used in the real-world experiment.}
	\label{fig_rRobot}
\end{figure*}

\section{Real-world Experiments}

To demonstrate the effectiveness and value of the proposed algorithms in practical applications, we present the results of experiments conducted in real-world scenarios. As shown in Fig.~\ref{fig_real}, we designed a task in which a tethered mobile manipulator operates in an unmanned environment without signal coverage to reach a designated area for maintenance. The maintenance operation is primarily performed by the experimenter through teleoperation of the robotic arm to achieve precise assembly of objects. During this process, the robot needs to maintain high-bandwidth real-time video and command transmission with the base station (anchor point) via an Ethernet cable (tether).

The test site, as shown in Fig.~\ref{fig_rEnv}, is a $12 \times 17$ square meter exhibition hall. The map of the environment was pre-constructed using Cartographer, with a resolution of 1 cm. The site contains 6 independent obstacles. The robot used in the experiment is shown in Fig.~\ref{fig_rRobot}(a). On the hardware side, the robot is based on the KUKA youBot platform, equipped with a Kinova Gen3 7-DOF robotic arm. A Robotiq 2F-85 gripper is mounted at the end of the robotic arm for object manipulation. Its main controller is an industrial PC (IPC) equipped with an Intel N100 processor (2.9 GHz) and 8 GB of RAM (4800 MT/s), which communicates with other components via Ethernet. The structure of the robot's tethered rope winch system is shown in Fig.~\ref{fig_rRobot}(b), in which a 16 m long Ethernet cable is stored. On the software side, the robot runs Ubuntu 20.04. The motion control of the robot is divided into two parts: chassis movement and teleoperated robotic arm assembly. The global path for the chassis movement is generated by the proposed algorithm based on task input, and trajectory tracking is performed using a PID algorithm. Localization is achieved using the Lama algorithm \citep{pedrosa2017efficient} for laser-based positioning. The teleoperation system for the robotic arm is shown in Fig.~\ref{fig_rRobot}(c). During the experiment, the operator uses MR glasses to obtain real-time visual feedback from the robot's perspective and controls the Kinova robotic arm through an RM75-6F master arm, achieving force-position synchronization to enable remote assembly of objects \citep{shi2025exploring}.

\subsection{Tethered Configuration Search and Path Planning}
This subsection presents experiments in which a tethered mobile manipulator carries repair accessories from an initial configuration to a target location to repair a damaged device. As shown in Fig.~\ref{fig_realTPPtask}, the above experiment is conducted in four groups. The initial configurations are represented by red curves and rectangles, the target positions are indicated by blue areas. During the experiments, we recorded the time taken by the CDT-TCS algorithm for tethered configuration search $t_{pre}$, the time taken by the CDT-TPP algorithm to plan the optimal path $t_{pla}$, as well as the robot's travel distance $C_{travel}$ and time $t_{travel}$. These data are summarized in \textbf{Table~\ref{tab:RealExpTPP}}. Fig.~\ref{fig_realTPPres} shows the trajectory of the tethered mobile manipulator and images captured during the assembly process.

\begin{figure*}[!t]
	\centering
	\subfloat[]{\includegraphics[width=1.6in]{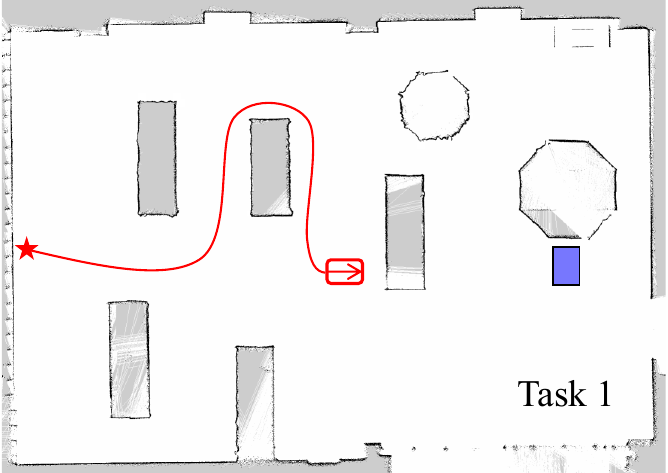}}
	\hfil
	\subfloat[]{\includegraphics[width=1.6in]{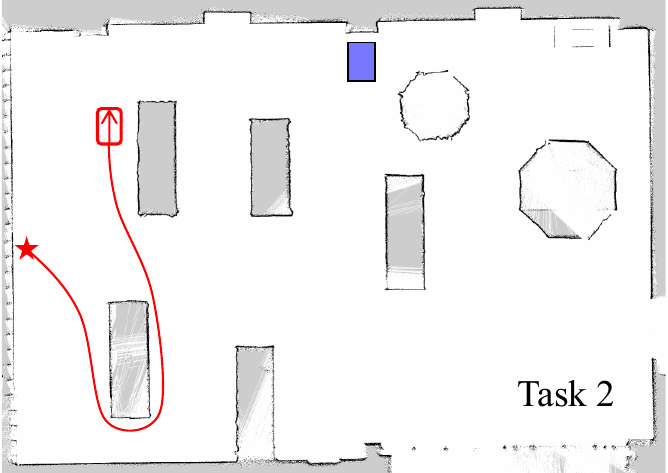}}
	\hfil
	\subfloat[]{\includegraphics[width=1.6in]{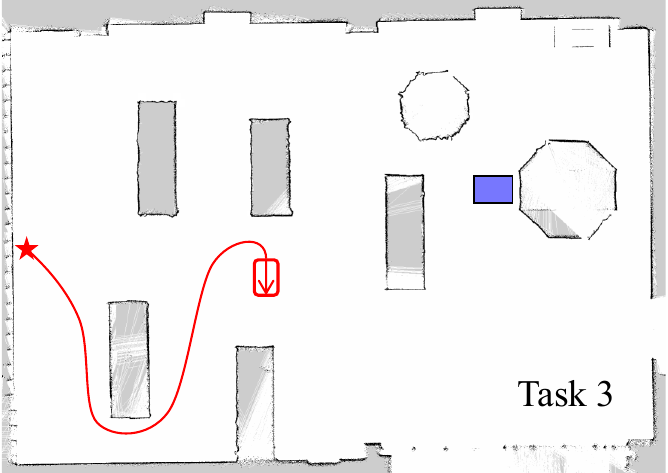}}
	\hfil
	\subfloat[]{\includegraphics[width=1.6in]{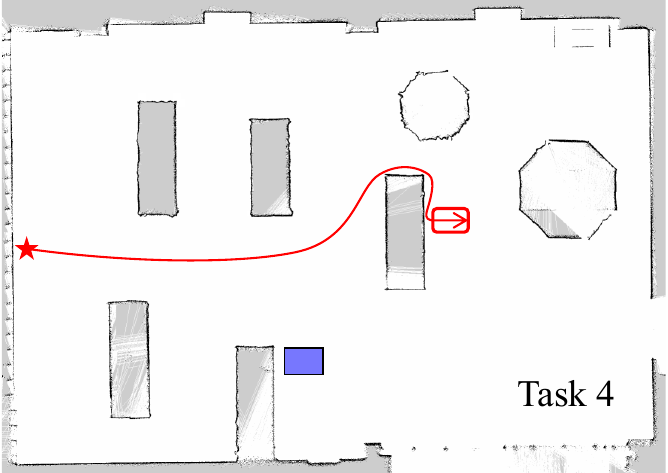}}
	\caption{Illustration of the Tethered Path Planning task in the real-world experiment. The initial configuration is represented by the red curve and rectangle, while the blue area indicates the target position.}
	\label{fig_realTPPtask}
\end{figure*}

\begin{figure*}[!t]
	\centering
	\subfloat[]{\includegraphics[width=1.6in]{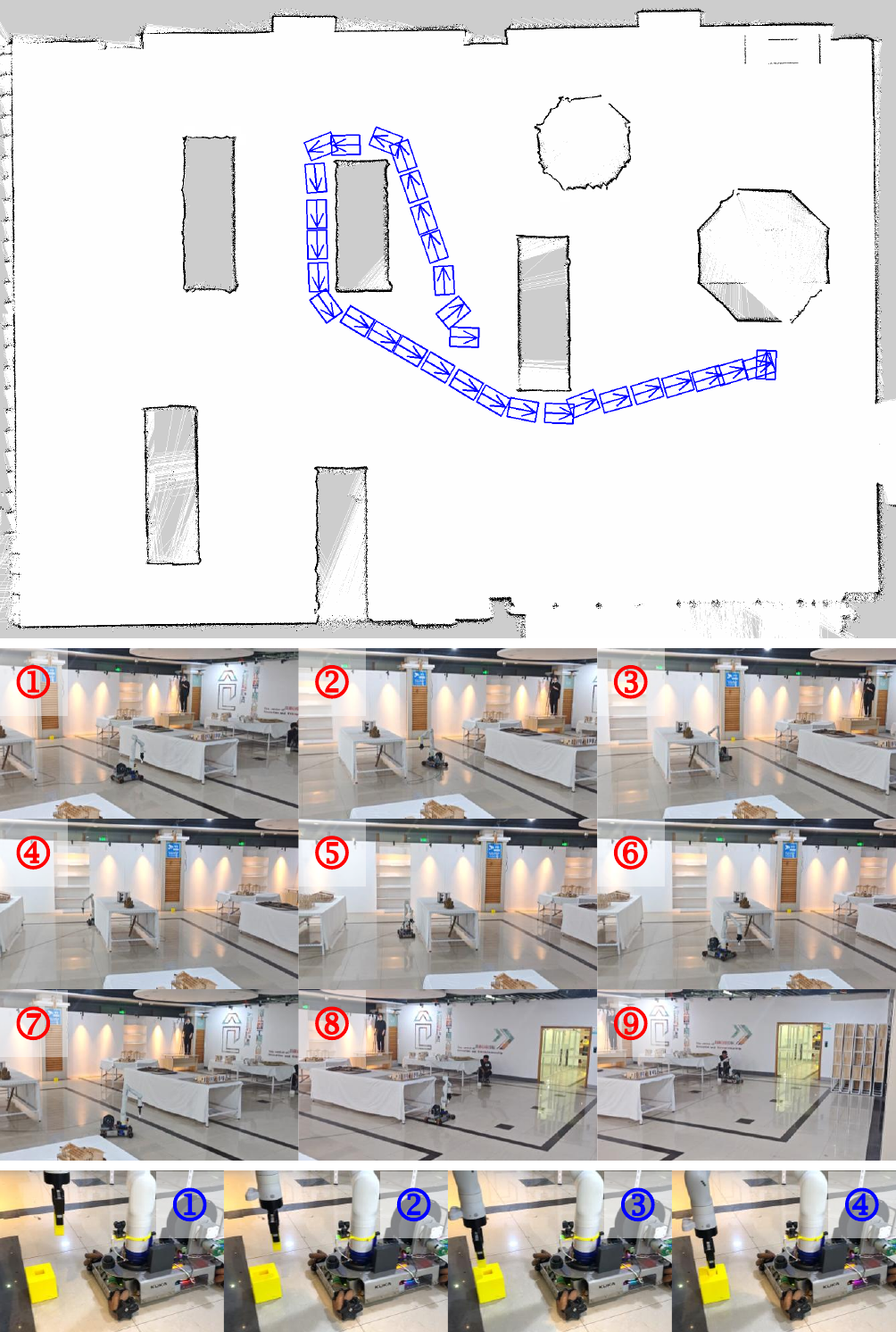}}
	\hfil
	\subfloat[]{\includegraphics[width=1.6in]{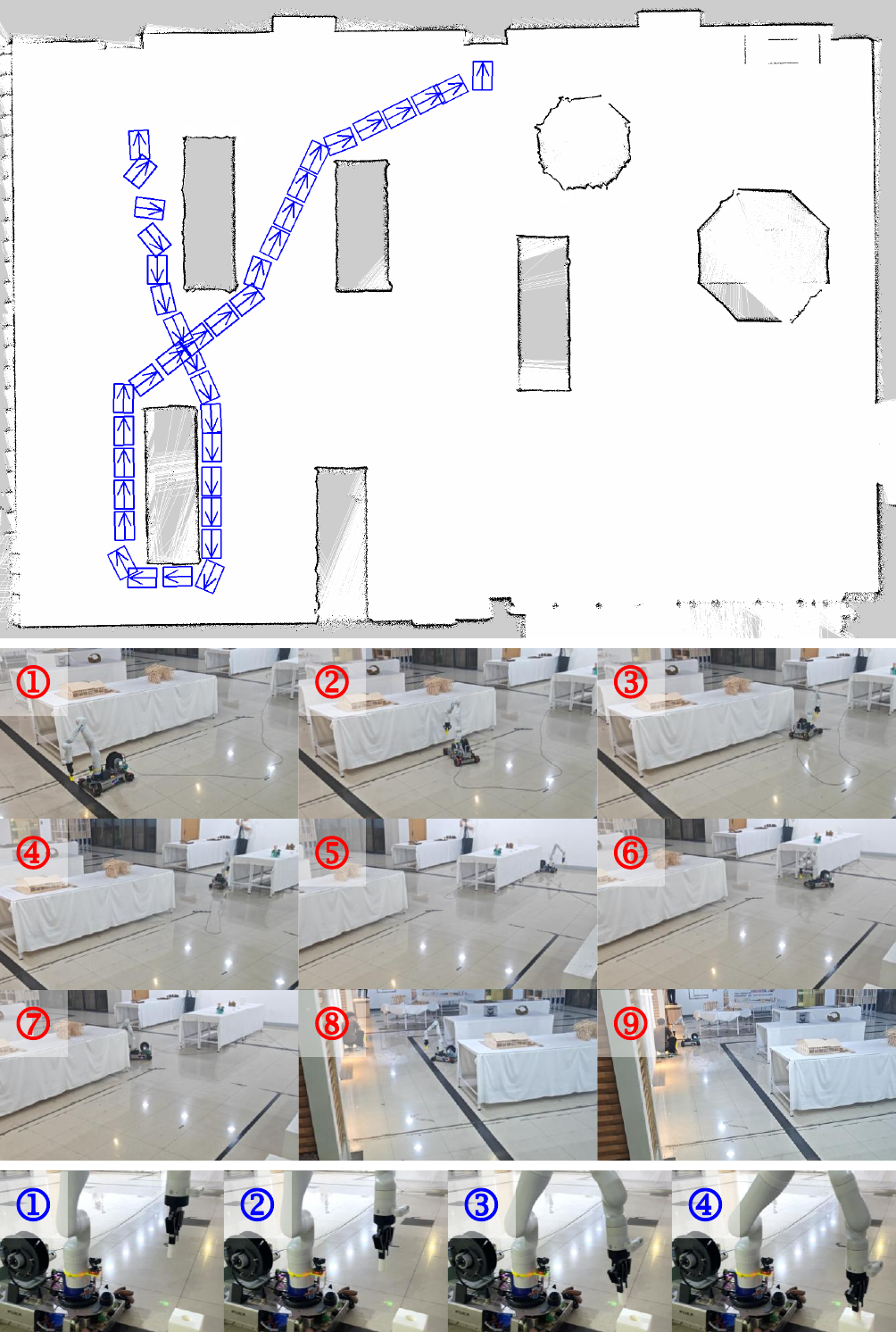}}
	\hfil
	\subfloat[]{\includegraphics[width=1.6in]{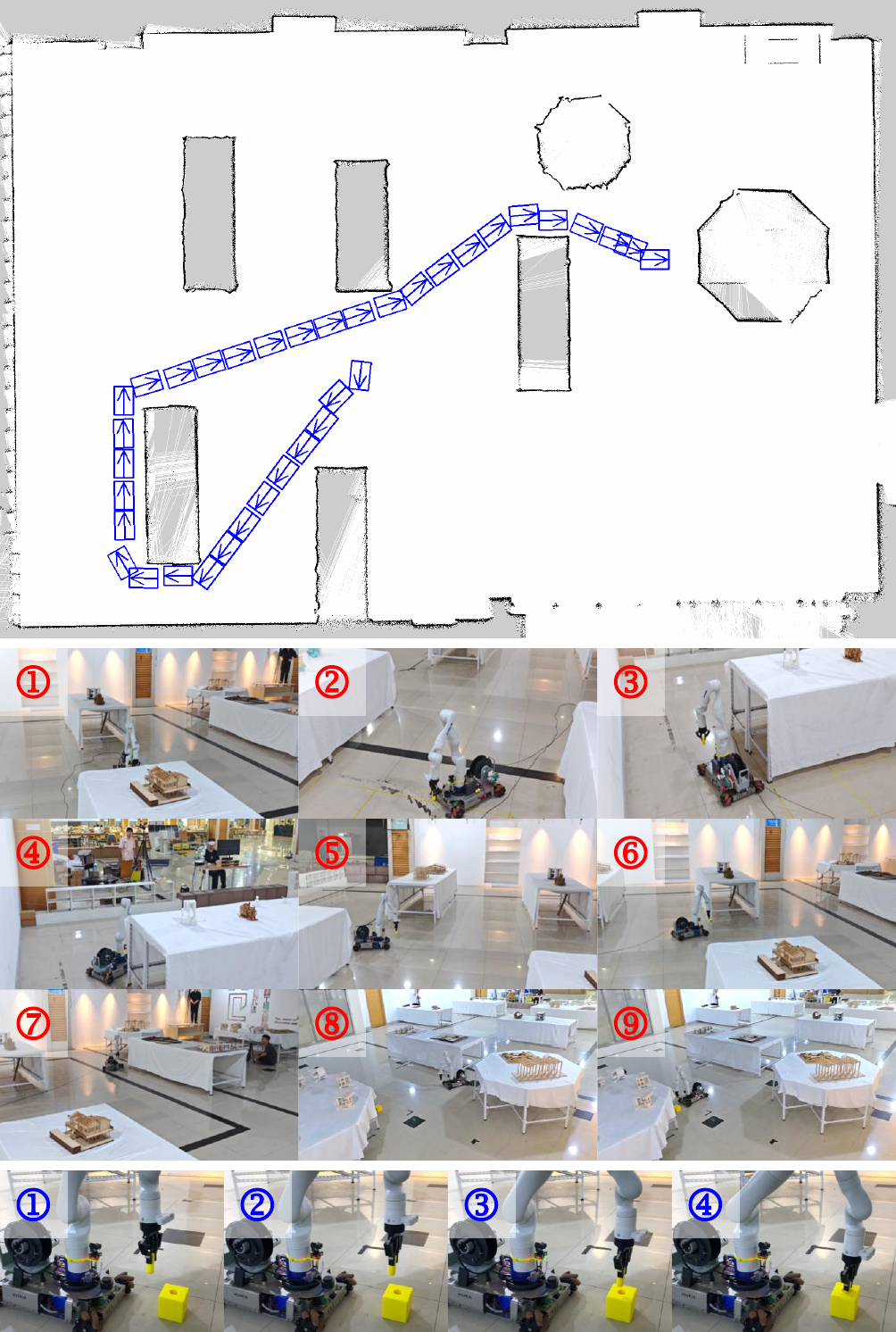}}
	\hfil
	\subfloat[]{\includegraphics[width=1.6in]{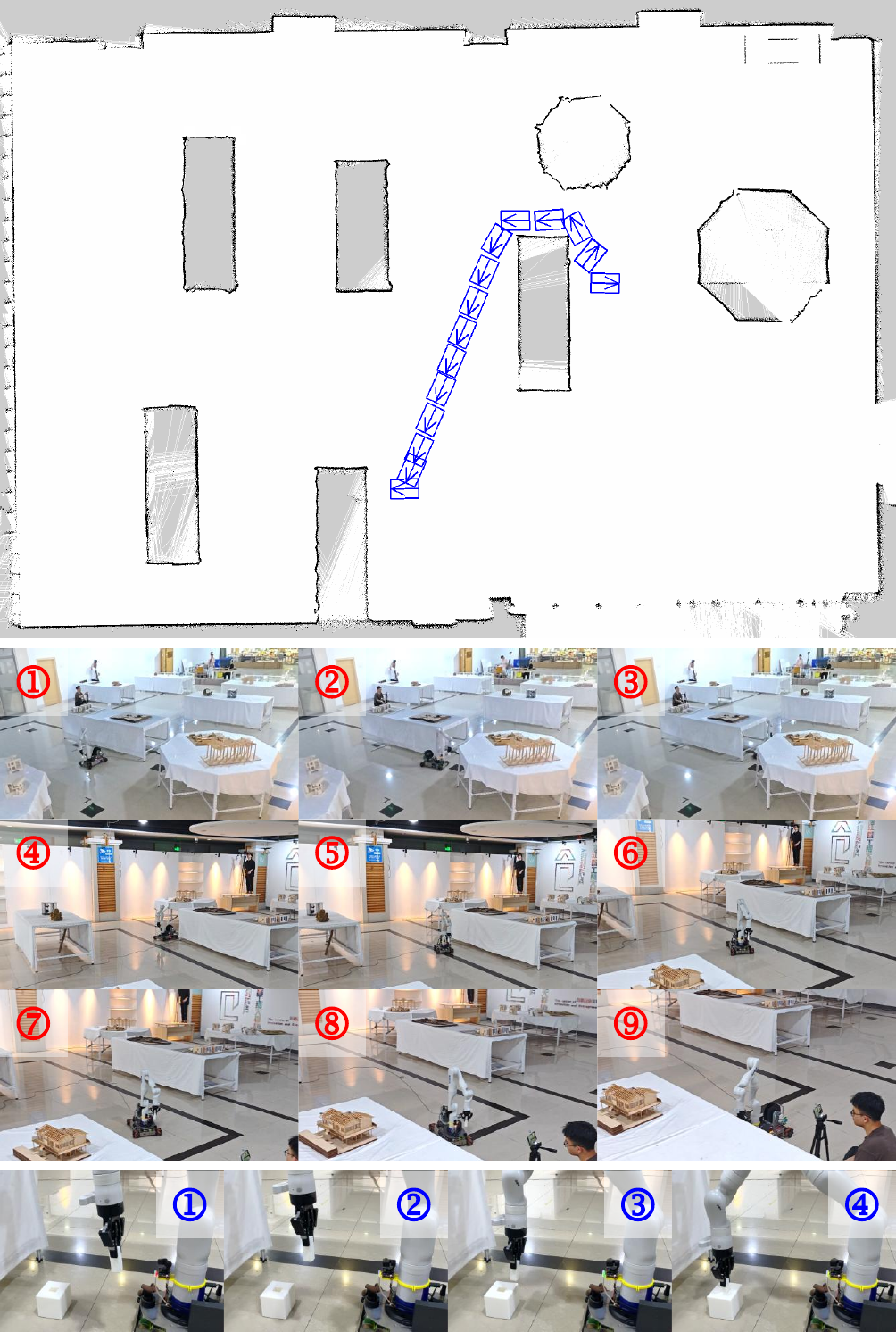}}
	\hfil
	\caption{(a)-(d) show the experimental processes for tasks 1 to 4 in the Tethered Path Planning real-world experiments. The upper part displays the trajectory of the tethered mobile manipulator, while the lower part presents snapshots of the mobile manipulator during movement and assembly.}
	\label{fig_realTPPres}
\end{figure*}

\begin{table}
	\begin{center}
		\caption{Experimental Results of Configuration Search and Path Planning for Tethered Mobile Manipulator \label{tab:RealExpTPP}}
		\small
		\begin{tabular}{|c|c|c|c|c|}
			\hline
			\makecell*[c]{Tasks} &
			\makecell*[c]{$t_{pre}$ (ms)} &
			\makecell*[c]{$t_{pla}$ ($\mu$s)} &
			\makecell*[c]{$t_{travel}$ (s)} &
			\makecell*[c]{$C_{travel}$ (m)}\\
			\hline
			\makecell*[c]{Task1} & \multirow{6.25}*{\makecell[c]{$6.63$}} & $58.33$ & $51.2$ & $18.08$ \\
			\cline{1-1} \cline{3-5}
			\makecell*[c]{Task2} &  & $79.18$ & $64.6$ & $23.76$ \\
			\cline{1-1} \cline{3-5}
			\makecell*[c]{Task3} &  & $81.64$ & $54.3$ & $21.59$ \\
			\cline{1-1} \cline{3-5}
			\makecell*[c]{Task4} &  & $77.92$ & $22.5$ & $8.40$ \\
			\hline
		\end{tabular}
	\end{center}
\end{table}

\subsection{Tethered Multi-Goal Visiting}
This subsection presents an experiment where a tethered mobile manipulator sequentially visits multiple target locations to pick up parts, perform assembly tasks, and then return to the initial configuration. The initial configuration and target locations are shown in Fig.~\ref{fig_realTMVtask}. In this experiment, the CDT-TMV algorithm took 3.52 ms to plan the optimal path, the robot traveled a total distance of 47.58 meters, and the entire experiment lasted 476 seconds, with the robot's movement accounting for 150 seconds of the total time.

\begin{figure}[!t]
	\centering
	\includegraphics[width=3.2in]{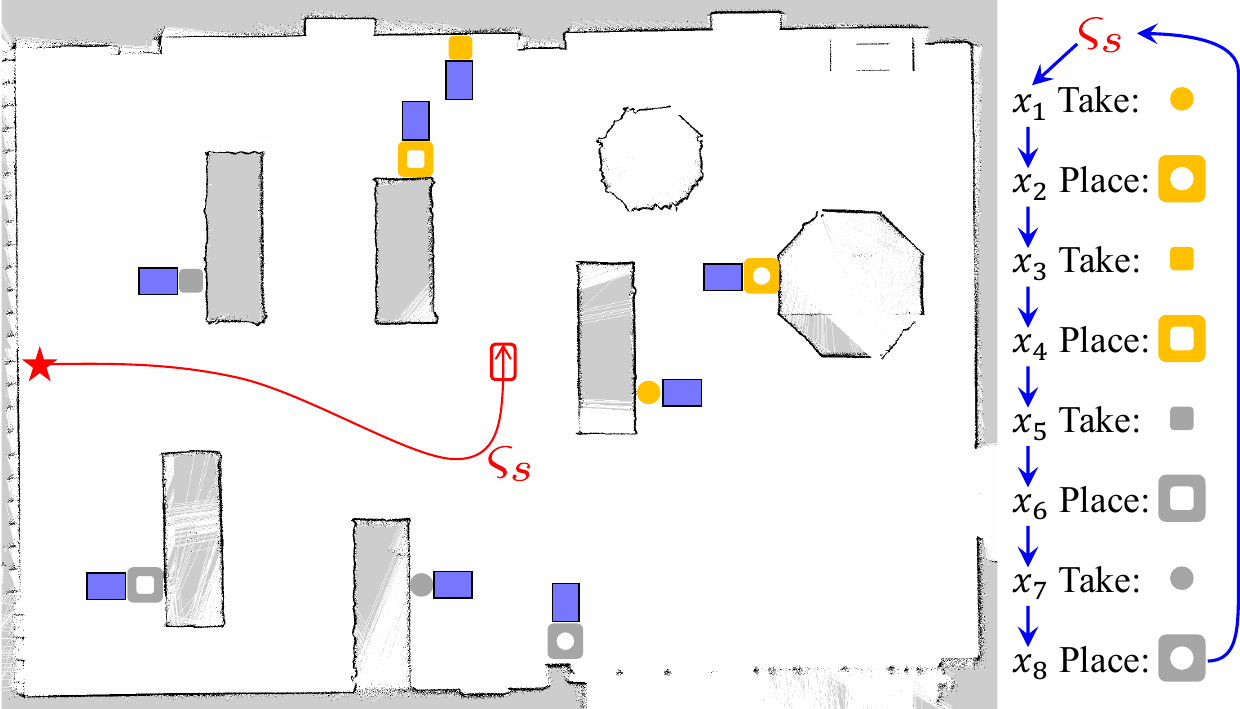}
	\caption{Illustration of the Tethered Multi-Goal Visiting task in the real-world experiment. The tethered mobile manipulator sequentially visits $x_1$ to $x_8$ to pick up components and perform assembly according to the task instructions on the right, returning to the initial configuration after completion.}
	\label{fig_realTMVtask}
\end{figure}

Fig.~\ref{fig_realTMVres} shows the trajectory of the tethered mobile manipulator during experiment, along with images captured during the assembly process. Fig.~\ref{fig_realTMVfun} illustrates the length of the Ethernet cable (tether) when it was fully tightened during the robot's movement. As can be observed from Fig.~\ref{fig_realTMVfun}, the graph consists of multiple upward-opening convex curves. The positions at the connection between these convex curves correspond to the target points. This observation strongly validates the correctness of \textbf{Theorem~\ref{th_genfconvex}} presented in this paper.

\begin{figure}[!t]
	\centering
	\includegraphics[width=3.2in]{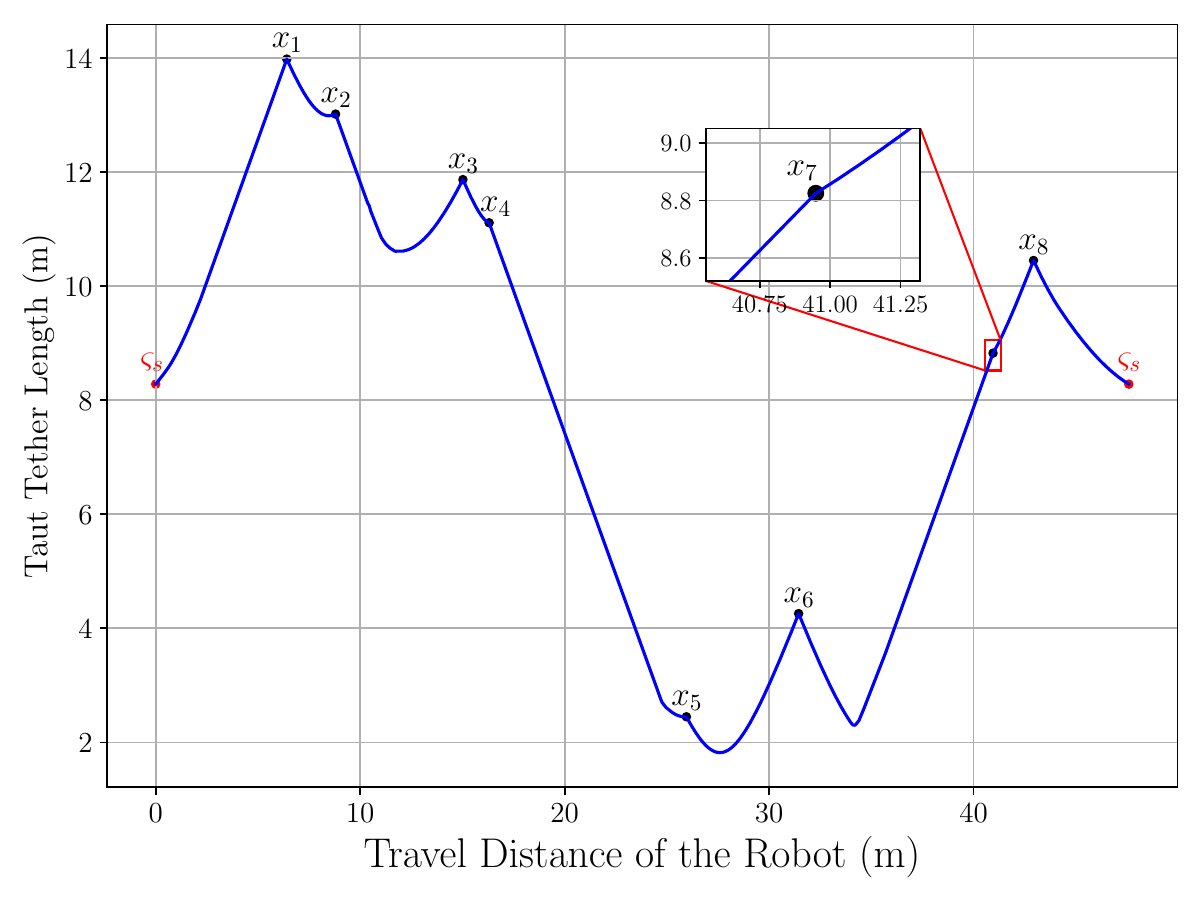}
	\caption{Relationship between the distance traveled by the robot along $\gamma^*$ and the length of the tightened tether during the Tethered Multi-Goal Visiting task, i.e., the graph of $c\left(\Theta\left(\varsigma_s * \gamma^*_{[0,t]}\right)\right)$ vs. $c(\gamma^*_{[0,t]})$.}
	\label{fig_realTMVfun}
\end{figure}

\begin{figure*}[!t]
	\centering
	\includegraphics[width=6.8in]{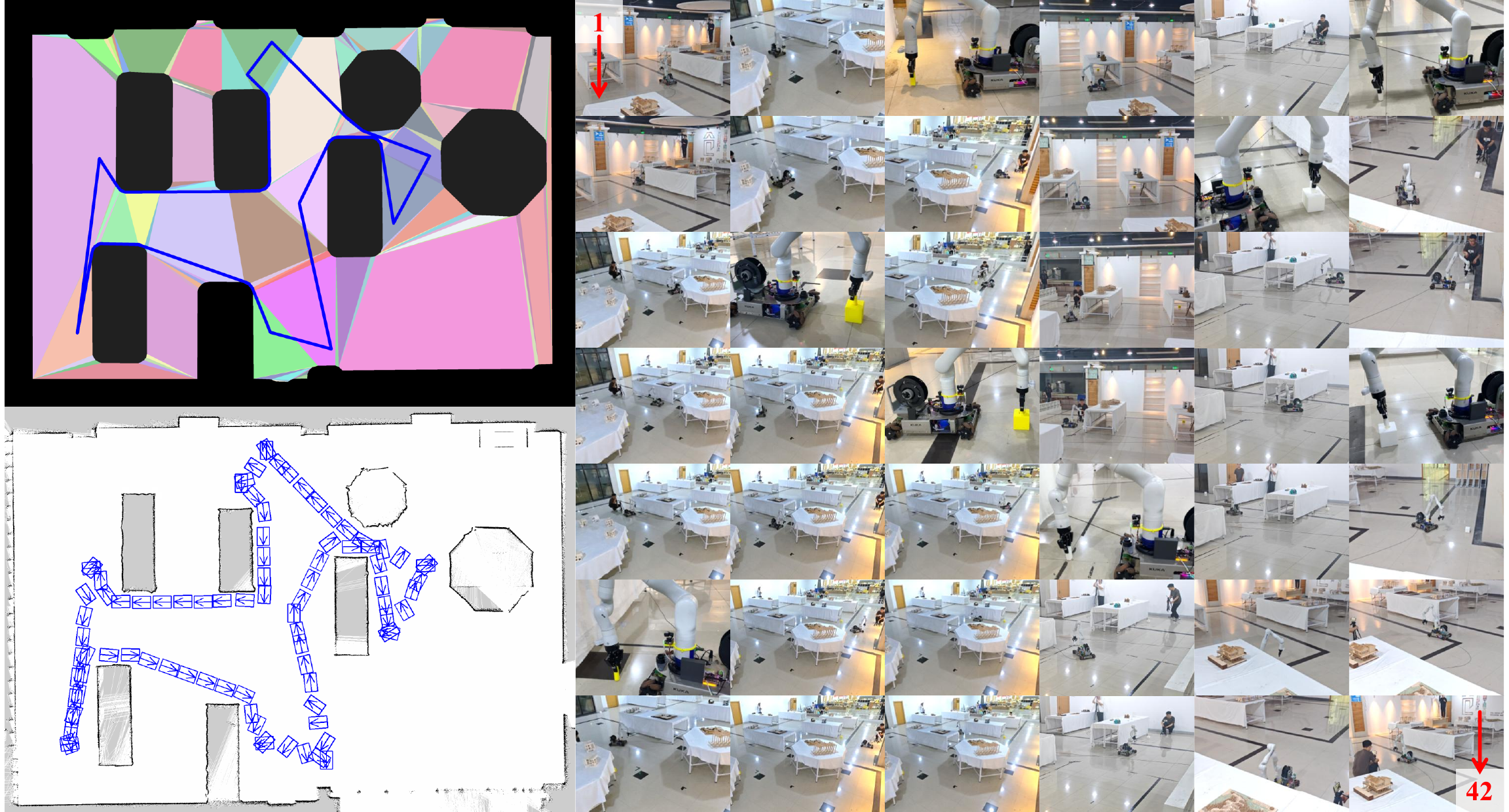}
	\caption{Illustration of the Tethered Multi-Goal Visiting real-world experiment process. The top left shows the robot's planned trajectory using CDT-TMV, the bottom left displays the actual trajectory of the tethered mobile manipulator, and the right side shows snapshots of the experimental process.}
	\label{fig_realTMVres}
\end{figure*}

\subsection{Untethered Path Planning}
In this subsection, we briefly validate the performance of the CDT-UTPP algorithm in real-world scenarios. To achieve this, we removed the tethered rope from the mobile manipulator and set the parameter $\zeta$ of CDT-UTPP to 18 meters. The start and end points of the planning tasks were transmitted to the robot via Wi-Fi. Subsequently, the robot autonomously moved along the optimal path planned by CDT-UTPP to reach the target position. The above experiment was conducted four times, with the start and end points for each task shown in Fig.~\ref{fig_realUTPPtask}(a). \textbf{Table~\ref{tab:RealExpUTPP}} records experimental results, including the initialization time of CDT-UTPP, the time required to plan the optimal path, the distance traveled by the robot, and the movement time. Fig.~\ref{fig_realUTPPtask}(b) illustrates the robot's trajectories during these experiments.

\begin{figure*}[!t]
	\centering
	\subfloat[]{\includegraphics[width=3.2in]{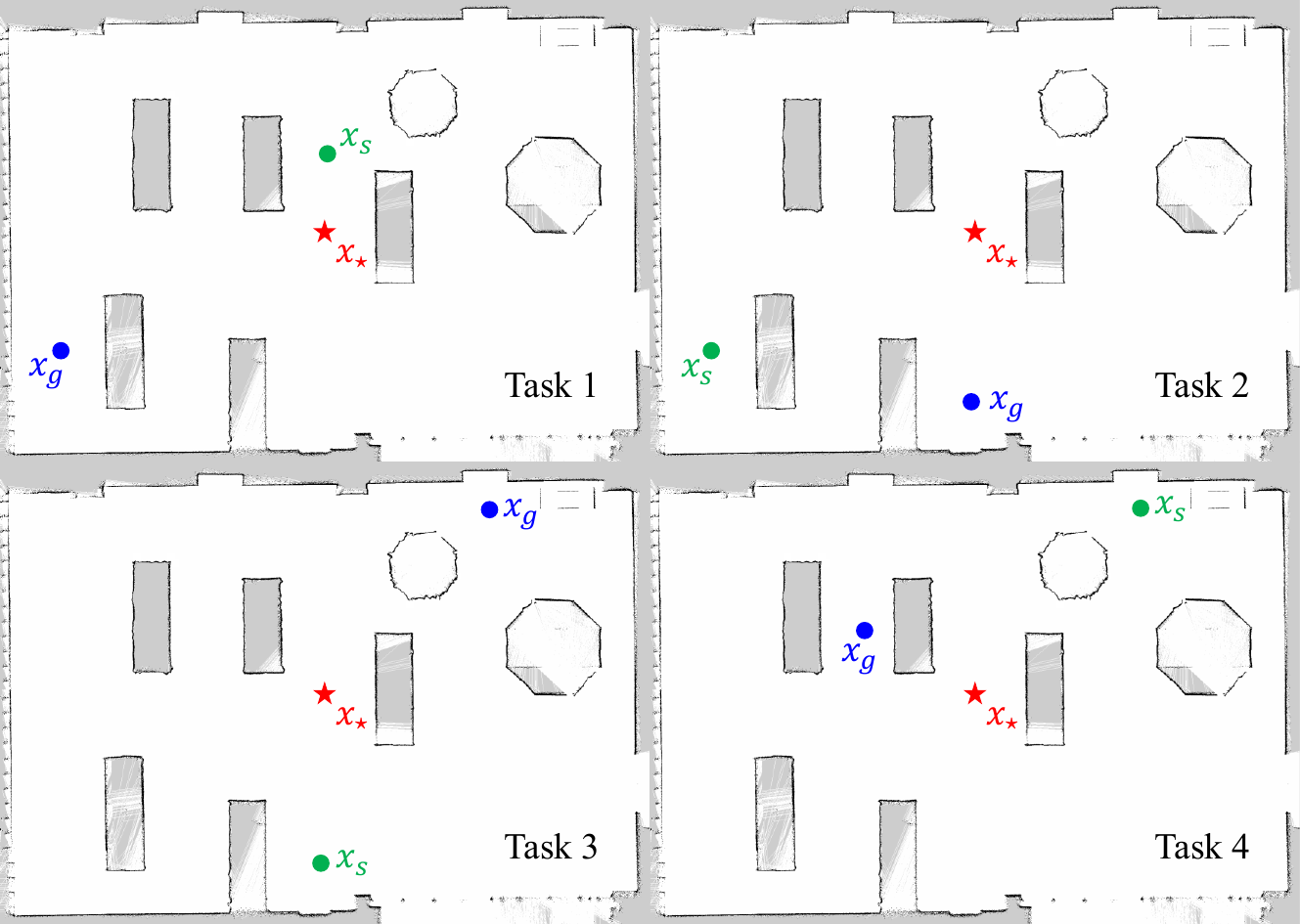}}
	\hfil
	\subfloat[]{\includegraphics[width=3.2in]{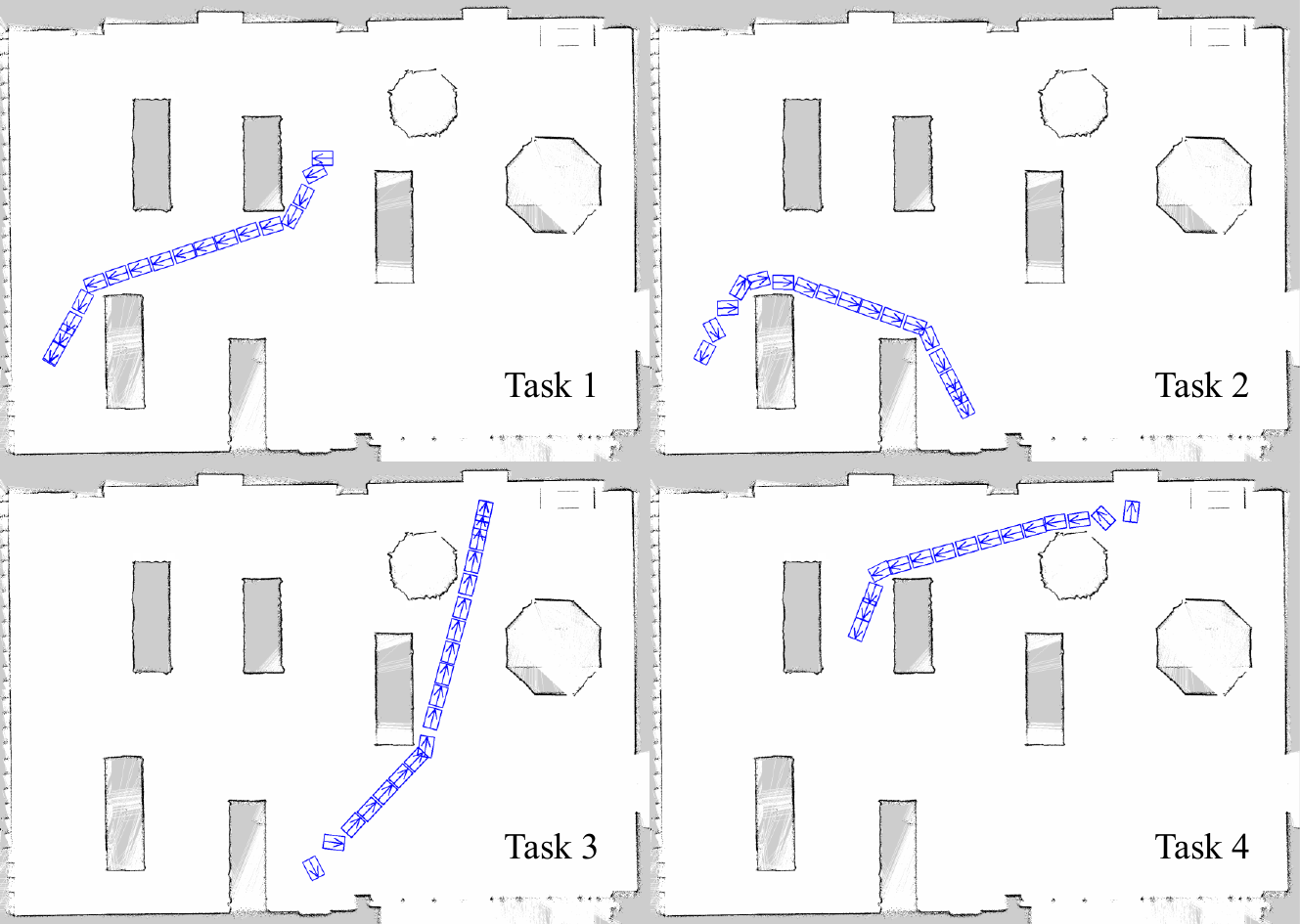}}
	\caption{(a) Illustration of the Untethered Path Planning task in the real-world experiment. $x_\star$ represents the placement position of the CDT-UTPP anchor point, while $x_s$ and $x_g$ denote the start and end points, respectively. (b) Trajectory of the robot during the experiment.}
	\label{fig_realUTPPtask}
\end{figure*}

\begin{table}
	\begin{center}
		\caption{Experimental Results of Initialization and Path Planning for Untethered Mobile Manipulator \label{tab:RealExpUTPP}}
		\small
		\begin{tabular}{|c|c|c|c|c|}
			\hline
			\makecell*[c]{Tasks} &
			\makecell*[c]{$t_{pre}$ (ms)} &
			\makecell*[c]{$t_{pla}$ ($\mu$s)} &
			\makecell*[c]{$t_{travel}$ (s)} &
			\makecell*[c]{$C_{travel}$ (m)}\\
			\hline
			\makecell*[c]{Task1} & \multirow{6.25}*{\makecell[c]{$11.3$}} & $600.36$ & $25.6$ & $9.66$ \\
			\cline{1-1} \cline{3-5}
			\makecell*[c]{Task2} &  & $479.06$ & $24.4$ & $9.67$ \\
			\cline{1-1} \cline{3-5}
			\makecell*[c]{Task3} &  & $534.67$ & $28.8$ & $11.12$ \\
			\cline{1-1} \cline{3-5}
			\makecell*[c]{Task4} &  & $821.44$ & $22.4$ & $8.89$ \\
			\hline
		\end{tabular}
	\end{center}
\end{table}

\subsection{Discussions}
While the proposed algorithms demonstrate strong performance in both simulated and real-world experiments, it is important to acknowledge their potential limitations when applied to practical scenarios.

First, our approach assumes that the environment can be represented as a polygonal workspace, which is a reasonable assumption for most bounded 2D environments. However, similar to many existing studies, we model the robot as a point mass, neglecting its physical dimensions (e.g., radius). In practice, this requires inflating the obstacle regions in the map to account for the robot's size. This inflation may result in underutilization of the tether length, as some positions that are physically reachable by the robot might be deemed infeasible due to the inflated constraints. Specifically, paths planned by our algorithm could be overly conservative, potentially preventing the robot from reaching certain locations near the tether length limit. Nevertheless, we argue that this limitation has minimal negative impact in most cases, as the tether length is typically much larger than the robot's radius. Moreover, this conservative planning approach enhances safety by avoiding scenarios where the tether is fully stretched, thereby reducing the risk of tether breakage.

Second, our method relies on the assumption that the tether can be freely dragged within the $X_{free}$. If the environment contains numerous obstacles with sharp edges or narrow passages that could entangle or trap the tether, the algorithm may face challenges in directly applying its current framework. For instance, such environments could lead to inaccurate feasibility assessments or suboptimal path planning. To address this issue, one potential solution is to modify the tether's physical structure, such as using flexible materials or incorporating active tension control mechanisms, to mitigate the risk of entanglement. Alternatively, future work could explore integrating additional sensory feedback or adaptive planning strategies to handle highly cluttered environments more effectively.

In summary, while these limitations highlight areas for further improvement, the robustness and efficiency of our algorithms remain advantageous for a wide range of tethered robot applications.

\section{Conclusion}

This paper focuses on the problem of optimal configuration search and path planning for tethered robots in 2D environments. By integrating a homotopy invariant framework based on Convex Dissection Topology (CDT) with concepts from algebraic topology and geometric optimization, we propose a simple yet efficient iterative process (\textbf{Remark~\ref{re_TCS}}) to rapidly determine the \textbf{Tethered Configuration CDT Encoding Set} for tethered robots. Furthermore, we derive the range of the cost function for optimal configurations during linear motion (\textbf{Theorem~\ref{th_finequality}}) and rigorously prove its convexity (\textbf{Theorem~\ref{th_fconvex}}). This theoretical result significantly simplifies the computational complexity of the iterative process. Building on this foundation, we introduce a novel framework, CDT-TCS, which efficiently computes the complete set of optimal feasible configurations for tethered robots at all positions in the environment through a single computation.

In extending CDT-TCS to practical applications, we first analyze the solution space for the optimal Tethered Path Planning (TPP) problem and then derive and prove a generalized form of \textbf{Theorem~\ref{th_fconvex}} (\textbf{Theorem~\ref{th_genfconvex}}). Leveraging this result, we propose CDT-TPP, an algorithm designed for fast resolution of the optimal TPP problem. For the optimal Tethered Multi-Goal Visiting (TMV) problem, we demonstrate that it can be transformed into a generalized path planning problem on a positively weighted directed graph $\mathcal{G}_{\text{TMV}}$. By combining Dijkstra's algorithm with CDT encoding, we develop CDT-TMV, which achieves rapid solutions for the optimal TMV problem. Additionally, we investigate the properties of globally optimal paths for untethered robots (\textbf{Theorem~\ref{th_optPath}} and \textbf{\ref{th_prUTPP}}) and propose CDT-UTPP, an algorithm capable of efficiently computing the globally optimal path between any two points in the environment after a preprocessing phase.

Finally, we extensively evaluate the proposed algorithms in both simulated and real-world environments. Experimental results demonstrate that these algorithms achieve satisfactory performance, exhibiting practicality and engineering value. Future work will focus on extending this framework to multi tethered robot path planning and addressing real-time path planning challenges for tethered robots in dynamic environments.

\begin{dci}
The author(s) declared no potential conﬂicts of interest with respect to the research, authorship, and/or publication of this article.
\end{dci}

\begin{funding}
The author(s) disclosed receipt of the following ﬁnancial support for the research, authorship, and/or publication of this article: This work was supported by the National Natural Science Foundation of China (62173305).
\end{funding}

\bibliographystyle{SageH}
\bibliography{uBIB}

% \begin{thebibliography}{99}
% \bibitem[Kopka and Daly(2003)]{R1}
% Kopka~H and Daly~PW (2003) \textit{A Guide to \LaTeX}, 4th~edn.
% Addison-Wesley.

% \bibitem[Lamport(1994)]{R2}
% Lamport~L (1994) \textit{\LaTeX: a Document Preparation System},
% 2nd~edn. Addison-Wesley.

% \bibitem[Mittelbach and Goossens(2004)]{R3}
% Mittelbach~F and Goossens~M (2004) \textit{The \LaTeX\ Companion},
% 2nd~edn. Addison-Wesley.

% \end{thebibliography}

\appendix
\section*{Appendix 1. Source Code}
To facilitate reproducibility and foster further research in the area of tethered robot path planning, we have open-sourced the C++ implementations of the proposed algorithm and the baseline methods used for comparison. The code is open-source and accessible at: \href{https://github.com/TZY-H/THPP.git}{https://github.com/TZY-H/THPP.git}.

\section*{Appendix 2. Multimedia Extensions}
A supplementary video demonstration of this work is publicly accessible online at: \href{https://youtu.be/d8Jhry0jZZM}{https://youtu.be/d8Jhry0jZZM}.

\section*{Appendix 3. Proofs of Theorems}
\subsection*{3.1 Proof of \textbf{Property~\ref{th_eqRelation}}}
\begin{proof}\mbox{}\newline
	\textit{Proof of \textbf{Reflexivity}.} For any $\sigma \in P(X)$, let $\varphi(t) = t$, where $t \in I$. Then, $\sigma(t) = \sigma(\varphi(t))$. Since $\dot{\varphi} = 1 \geq 0$, it follows that $\sigma = \sigma$. \newline
	\textit{Proof of \textbf{Symmetry}.} For any $\sigma_1, \sigma_2 \in P(X)$, if $\sigma_1 = \sigma_2$, then there exists a continuous mapping $\varphi$ such that $\dot{\varphi} \geq 0$ and $\sigma_1(t) = \sigma_2(\varphi(t))$. Let $\tau = \varphi(t)$, which implies $\sigma_2(\tau) = \sigma_1(\varphi^{-1}(\tau))$, where $\varphi^{-1}$ is the inverse mapping of $\varphi$. To show that $\dot{\varphi^{-1}} \geq 0$, we differentiate both sides of the equation $\varphi^{-1}(\tau) = t$ with respect to $t$:
	\begin{equation}
		\frac{d}{d \tau}\left[\varphi^{-1}(\tau)\right]\cdot \dot{\varphi}(t) = 1.
	\end{equation}
	Thus, we obtain
	\begin{equation}
		\dot{\varphi^{-1}}(\tau) = \frac{1}{\dot{\varphi}(t)} \geq 0.
	\end{equation}
	Therefore, $\sigma_2 = \sigma_1$. \newline
	\textit{Proof of \textbf{Transitivity}.} For any $\sigma_1, \sigma_2, \sigma_3 \in P(X)$, if $\sigma_1 = \sigma_2$ and $\sigma_2 = \sigma_3$, then there exist mappings $\varphi_1$ and $\varphi_2$ such that $\dot{\varphi_1} \geq 0$, $\dot{\varphi_2} \geq 0$, and  
	\begin{equation}
	\sigma_1(t) = \sigma_2(\varphi_1(t)), \quad \sigma_2(t) = \sigma_3(\varphi_2(t)).
	\end{equation}
	Consequently, we have  
	\begin{equation}
		\sigma_1(t) = \sigma_2(\varphi_1(t)) = \sigma_2(\varphi_2 \circ \varphi_1(t)),
	\end{equation}
	where $\varphi_2 \circ \varphi_1$ denotes the composition of $\varphi_2$ and $\varphi_1$. Let $\varphi = \varphi_2 \circ \varphi_1$. Then,  
    \begin{equation}
        \sigma_1(t) = \sigma_3(\varphi(t)),
    \end{equation}
	and  
    \begin{equation}
        \dot{\varphi}(t) = \dot{\varphi_2}\left(\varphi_1(t)\right) \cdot \dot{\varphi_1}(t) \geq 0.
    \end{equation}
	Thus, $\sigma_1 = \sigma_3$.
\end{proof}

\subsection*{3.2 Proof of \textbf{Lemma~\ref{NSCLOP}}}
\begin{proof}
	\ \newline \indent
	\textbf{Necessary:} The law of proof by contradiction is used herein. Suppose $\sigma^*$ is the shortest path in $[\sigma^*]_{\simeq_p}$, and $\exists t_1, t_2 \in I$ such that $\sigma^*_{[t_1,t_2]}$ is not the shortest path in $\left[\sigma^*_{[t_1,t_2]}\right]_{\simeq_p}$; that is, there exists $\sigma' \simeq_p \sigma^*_{[t_1,t_2]}$ such that $c(\sigma') < c\left(\sigma^*_{[t_1,t_2]}\right)$. The path $\sigma^*$ can be expressed as
	\begin{equation}
		\label{eq_NSCLOP1}
		\sigma^* = \sigma^*_{[0,t_1]}*\sigma^*_{[t_1,t_2]}*\sigma^*_{[t_2,1]}
	\end{equation}
	Hence,
	\begin{align}
		\label{eq_NSCLOP2}
		c\left(\sigma^*\right) & = c\left(\sigma^*_{[0,t_1]}\right) + c\left(\sigma^*_{[t_1,t_2]}\right) + c\left(\sigma^*_{[t_2,1]}\right) \nonumber \\
		                       & < c\left(\sigma^*_{[0,t_1]}\right) + c\left(\sigma'\right) + c\left(\sigma^*_{[t_2,1]}\right) \nonumber              \\
		                       & < c\left(\sigma^*_{[0,t_1]} * \sigma' * \sigma^*_{[t_2,1]}\right).
	\end{align}
	Therefore, there exists a path $\sigma^*_{[0,t_1]} * \sigma' * \sigma^*_{[t_2,1]}$ in $[\sigma^*]_{\simeq_p}$ that is shorter than $\sigma^*$. This contradicts our original hypothesis. Therefore, the sufficiency condition is true.

	\textbf{Sufficient:} When the latter of the proposition holds, let $t_1=0$ and $t_2=1$. At this moment $\sigma^*_{[t_1,t_2]}(t)=\sigma^*(t)$. Therefore, $\sigma^*$ is the shortest path in $[\sigma^*]_{\simeq_p}$. Therefore, the necessary condition is true.
\end{proof}

\subsection*{3.3 Proof of \textbf{Theorem~\ref{SSDR}}}
\begin{proof}
	For the first assertion, since the length of $\Gamma^*_\varrho \circ \sigma^*$ is $1$, $x_s$ and $x_e$ belong to the same convex polygon. Based on the properties of convex polygons, the straight line $l^{x_e}_{x_s}$ also lies within the same convex polygon. Therefore, we have:
	\begin{equation}
		\label{eq_SSDR31}
		\Gamma^* \circ \sigma^* = \Gamma^* \circ l^{x_e}_{x_s} \Longleftrightarrow \sigma^* \simeq_p l^{x_e}_{x_s}.
	\end{equation}
	Furthermore, since $l^{x_e}_{x_s}$ is the shortest path between $x_s$ and $x_e$, the assertion is valid.

	For the second assertion, $\sigma^*$ can be expressed as follows:
	\begin{equation}
		\label{eq_SSDR3}
		\sigma^* = \sigma_1 * \sigma_2 * \dots * \sigma_n ,
	\end{equation}
	where $\sigma_k \in P(X_{free},x_{k-1},x_k)$. According to the description of the assertion, $x_{k-1}$ and $x_k$ are points on adjacent cutlines, meaning they belong to the same convex polygon. Moreover, based on \textbf{Lemma~\ref{NSCLOP}}, the optimal path $\sigma^*$ must guarantee that each of its segments is optimal; hence, for any $\sigma_k$, $\sigma_k = l^{x_k}_{x_{k-1}}$, the assertion is valid.
\end{proof}

\subsection*{3.4 Proof of \textbf{Lemma~\ref{th_phomotopy}}}
\begin{proof}
	According to the definition of $p^*_{\sigma, l}$ in (\ref{eq_pfun}), it follows that
	\begin{equation}
		p^*_{\sigma, l}(t_1) \simeq_p \sigma * l_{[0,t_1]},
	\end{equation}
	\begin{equation}
		p^*_{\sigma, l}(t_2) \simeq_p \sigma * l_{[0,t_2]}.
	\end{equation}
	Furthermore, based on \textbf{Definition~\ref{defLine}} of the straight-line path, it is straightforward to show that:
	\begin{equation}
		l_{[0,t_2]} \simeq_p l_{[0,t_1]} * l_{[t_1,t_2]}.
	\end{equation}
	Thus, we have:
	\begin{align}
		p^*_{\sigma, l}(t_2) & \simeq_p \sigma * l_{[0,t_2]} \nonumber                 \\
		                     & \simeq_p \sigma * l_{[0,t_1]} * l_{[t_1,t_2]} \nonumber \\
		                     & \simeq_p p^*_{\sigma, l}(t_1)*l_{[t_1,t_2]}.
	\end{align}
\end{proof}

\subsection*{3.5 Proof of \textbf{Theorem~\ref{th_finequality}}}
\begin{proof}
	As illustrated in Fig.~\ref{fig_FPfun}(b), for the right-hand side of inequality (\ref{eq_finequality0P}), $f(t+dt)+c^l_{dt}$ can be interpreted as the cost of the path $p^*_{\sigma,l}(t+dt)$ combined with the segment $l_{[t+dt,t]}$. Thus, we have:
	\begin{align}
		\label{eq_finequality1}
		f(t) & = c\left(p^*_{\sigma,l}(t)\right) \nonumber                                                                                                            \\
		     & \leq c\left(p^*_{\sigma,l}(t+dt)*l_{[t+dt,t]}\right) \tag{\underline{According to (\ref{eq_pfun}) and \textbf{Lemma~\ref{th_phomotopy}}.}} \nonumber \\
		     & = \left(c \circ p^*_{\sigma,l}\right)(t+dt) + c(l_{[t+dt,t]}) \nonumber                                                                                \\
		     & = f(t+dt) + c^l_{dt}.
	\end{align}
	Similarly, for the left-hand side of inequality (\ref{eq_finequality0P}), we can use a similar approach to show:
	\begin{align}
		\label{eq_finequality2}
		 & \begin{aligned}
			   f(t+dt) & = c\left(p^*_{\sigma,l}(t+dt)\right)                       \\
			           & \leq c\left(p^*_{\sigma,l}(t)*l_{[t,t+dt]}\right)          \\
			           & = \left(c \circ p^*_{\sigma,l}\right)(t) + c(l_{[t,t+dt]}) \\
			           & = f(t) + c^l_{dt}
		   \end{aligned} \nonumber \\
		 & \Longleftrightarrow  f(t+dt) - c^l_{dt} \leq f(t).
	\end{align}
	In summary, the inequality (\ref{eq_finequality0P}) is proved. To address inequality (\ref{eq_finequality0M}), let us substitute $-dt$ for $dt$ in (\ref{eq_finequality0P}), yielding:
	\begin{equation}
		\label{eq_finequality0PM}
		f(t-dt) - c^l_{-dt} \leq f(t) \leq f(t-dt) + c^l_{-dt}.
	\end{equation}
	Using equation (\ref{eq_finequalityC}), it is straightforward to show that $c^l_{dt} = c^l_{-dt}$. Thus, we can rewrite the inequality (\ref{eq_finequality0PM}) as:
	\begin{equation}
		f(t-dt) - c^l_{dt} \leq f(t) \leq f(t-dt) + c^l_{dt}.
	\end{equation}
	This completes the proof.
\end{proof}

\subsection*{3.6 Proof of \textbf{Lemma~\ref{th_fcontinuity}}}
\begin{proof}
	To prove the proposition, we use the $\varepsilon - \delta$ definition of continuity. Specifically, we need to show that for every $\varepsilon >0$, $t_c \in I$, there exists a $\delta > 0$ such that if $\left\lvert t-t_c\right\rvert < \delta$ , then $\left\lvert f(t) - f(t_c)\right\rvert < \varepsilon$. From \textbf{Theorem~\ref{th_finequality}}, we have:
	\begin{equation}
		\left\lvert f(t) - f(t_c)\right\rvert \leq c^l_{t-t_c} = \left\lvert t-t_c\right\rvert \cdot c(l).
	\end{equation}
	Thus, if we choose $\delta=\frac{\varepsilon }{c(l)}$, and if $\left\lvert t-t_c\right\rvert < \delta$, we obtain:
	\begin{equation}
		\left\lvert f(t) - f(t_c)\right\rvert \leq \left\lvert t-t_c\right\rvert \cdot c(l) < \frac{\varepsilon }{c(l)} \cdot c(l) = \varepsilon.
	\end{equation}
	Therefore, $f$ satisfies the $\varepsilon - \delta$ definition of continuity, and the theorem is proven.
\end{proof}

\subsection*{3.7 Proof of \textbf{Theorem~\ref{th_fconvex}}}
\begin{figure}[!t]
	\centering
	\subfloat[]{\includegraphics[height=1.65in]{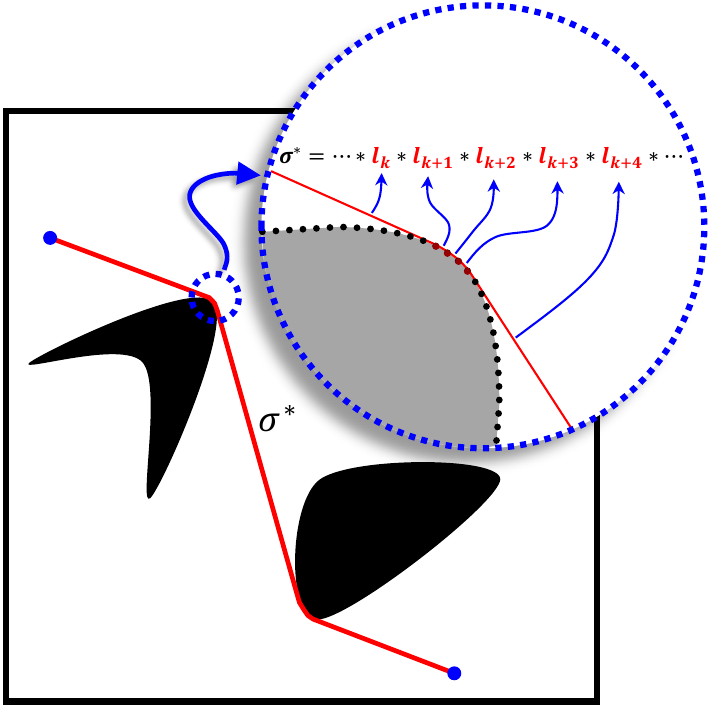}}
	\hfil
	\subfloat[]{\includegraphics[height=1.65in]{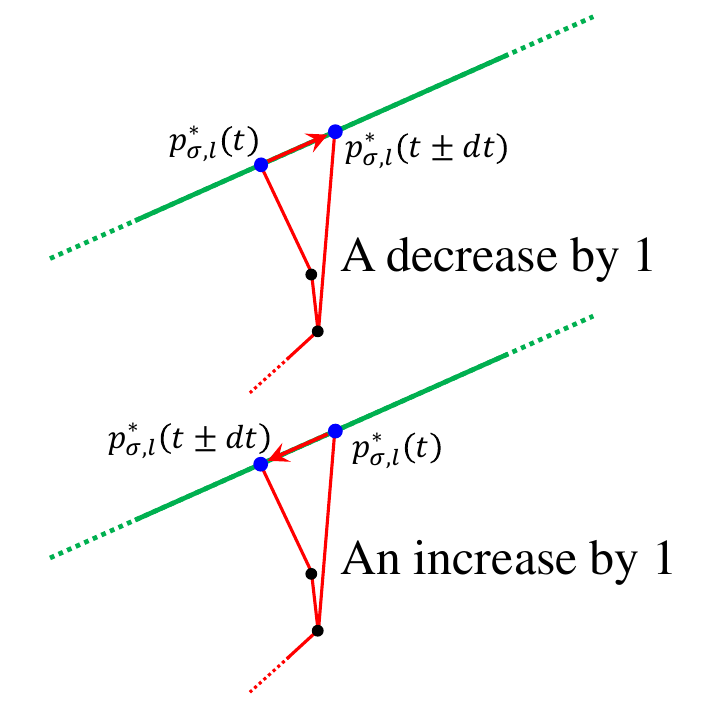}}
	\caption{(a) Any optimal homotopic path in the space can be viewed as a connection of an infinite (or finite) number of line segments, where each segment bends at obstacle points caused by obstacles. (b) Illustration of the change in the number of bends when $dt$ is sufficiently small: the path $p^*_{\sigma, l}(t \pm dt)$ increases (or decreases) the bend count by 1 compared to $p^*_{\sigma, l}(t)$.}
	\label{fig_fconvex1}
	% \vskip -1em
\end{figure}
\begin{proof}
	Building on \textbf{Lemma~\ref{th_fcontinuity}}, to prove that $f$ is strictly convex, it suffices to show that its second derivative, $\ddot{f}$, is strictly greater than zero (with the possibility of being $+\infty$). To this end, the inequality $\ddot{f} \geq 0$ can be expressed in terms of the limit form of the second derivative as follows:
	\begin{equation}
		\label{eq_fconvexDDot}
		\ddot{f}(t) = \lim_{dt\to 0} \frac{f(t + dt) - 2f(t) + f(t - dt)}{{dt}^2} \geq 0.
	\end{equation}
	Since ${dt}^2 > 0$, it is sufficient to prove that:
	\begin{align}
		\label{eq_fconvexDDot1}
		                    & \lim_{dt\to 0} f(t + dt) + f(t - dt) \geq 2f(t), \nonumber \\
		\Longleftrightarrow & \lim_{dt\to 0} \scalebox{0.95}{$c\left(p^*_{\sigma, l}(t + dt)\right)  + c\left(p^*_{\sigma, l}(t - dt)\right) \geq 2c\left(p^*_{\sigma, l}(t)\right)$}.
	\end{align}
	Referring to Fig.~\ref{fig_fconvex1}, before proving inequality (\ref{eq_fconvexDDot1}), we first highlight the following two key concepts related to limits:
	\begin{enumerate}
		\item[1)] {According to \textbf{Lemma~\ref{NSCLOP}}, any optimal homotopic path in space can be viewed as the concatenation of an infinite (or finite) number of straight line segments, each bent at obstruction points due to obstacles. Specifically, the paths $p^*_{\sigma, l}(t)$, $p^*_{\sigma, l}(t+dt)$, and $p^*_{\sigma, l}(t-dt)$ can be expressed as:
		      \begin{align}
			      % \label{eq_fconvexDDot1}
			       & p^*_{\sigma, l}(t) = l_1 * l_2 * \dots * l_{-2} * l_{-1}, \nonumber                    \\
			       & p^*_{\sigma, l}(t+dt) = l^{+}_1 * l^{+}_2 * \dots * l^{+}_{-2} * l^{+}_{-1}, \nonumber \\
			       & p^*_{\sigma, l}(t-dt) = l^{-}_1 * l^{-}_2 * \dots * l^{-}_{-2} * l^{-}_{-1}.
		      \end{align}
		      where $l_{-k}$, $l^+_{-k}$ and $l^-_{-k}$ represent the last $k$-th line segment in each of the three paths, respectively.
		      }
		\item[2)] {The value of $dt$ is infinitesimally small, such that the path $p^*_{\sigma, l}(t\pm dt)$ differs from $p^*_{\sigma, l}(t)$ in terms of the bending induced by obstruction points, with three possible variations: an increase by 1, no change, or a decrease by 1.}
	\end{enumerate}
\begin{figure*}[!t]
	\centering
	\subfloat[]{\includegraphics[width=2.25in]{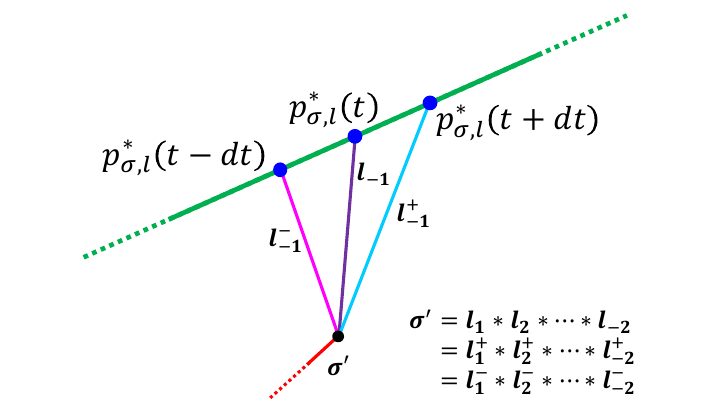}}
	\hfil
	\subfloat[]{\includegraphics[width=2.25in]{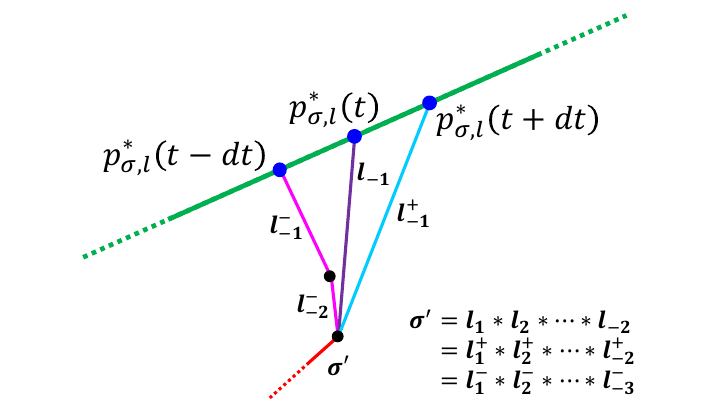}}
	\hfil
	\subfloat[]{\includegraphics[width=2.25in]{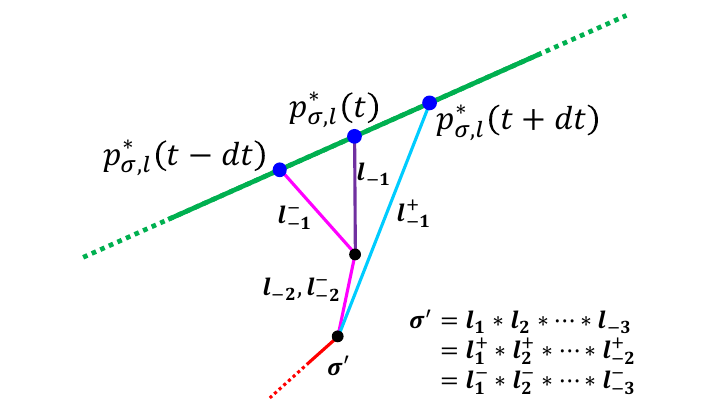}}
	\\
	\subfloat[]{\includegraphics[width=2.25in]{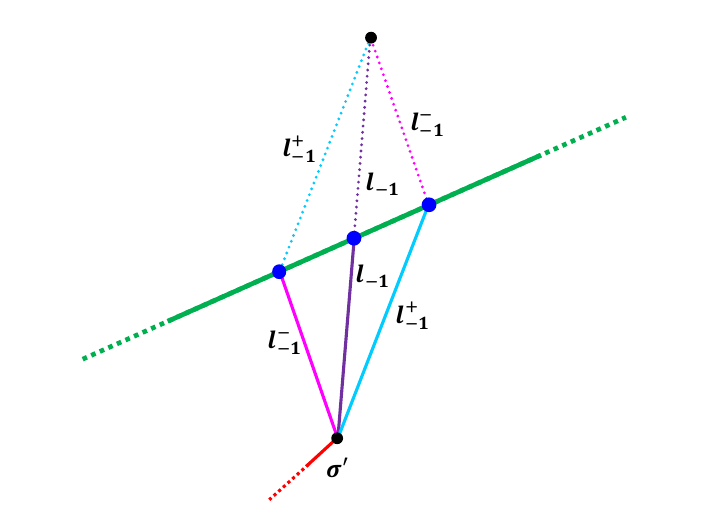}}
	\hfil
	\subfloat[]{\includegraphics[width=2.25in]{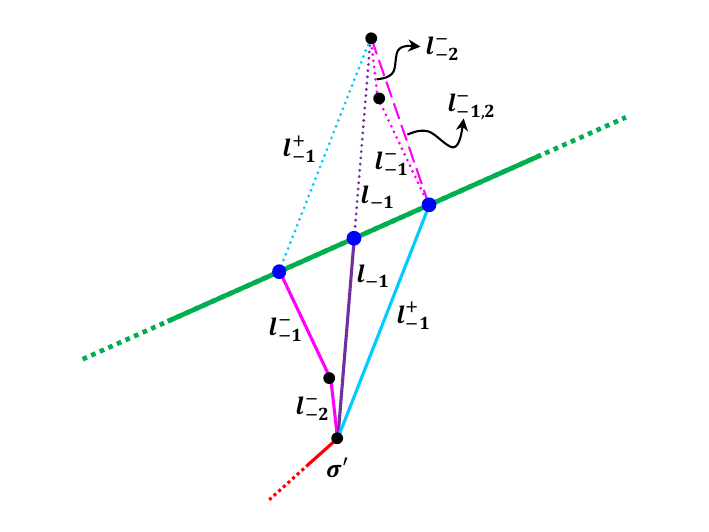}}
	\hfil
	\subfloat[]{\includegraphics[width=2.25in]{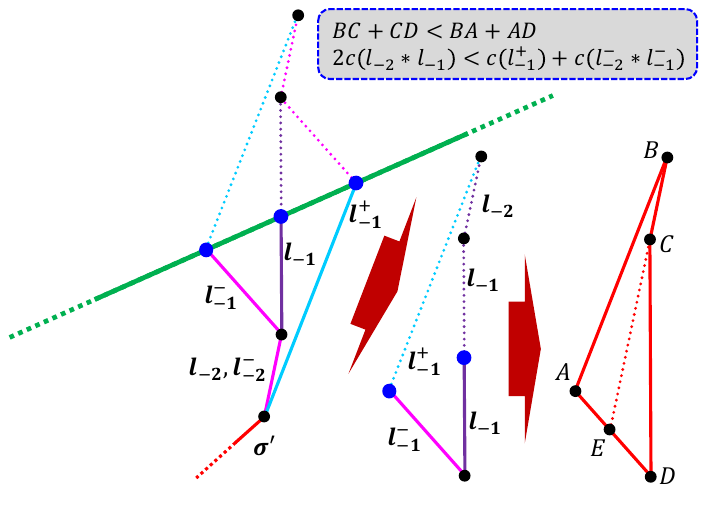}}
	\caption{(a)-(c) correspond to the three possible cases of changes in the number of bends for $\left(p^*_{\sigma, l}(t+dt), p^*_{\sigma, l}(t-dt)\right)$ compared to $p^*_{\sigma, l}(t)$, namely, (0, 0), (+1, 0), and (-1, 0). (d)-(f) show the auxiliary lines required to prove the three cases in \textbf{Theorem~\ref{th_fconvex}}.}
	\label{fig_BendProof}
\end{figure*}

	To illustrate, as shown in \textbf{Table~\ref{tab:Bending}}, the bending variation of $\left(p^*_{\sigma, l}(t+dt), p^*_{\sigma, l}(t-dt)\right)$ compared to $p^*_{\sigma, l}(t)$ will exhibit five possible cases. However, since $dt$ can be either positive or negative, four of these cases will be pairwise equivalent. Consequently, the number of distinct possibilities reduces to three, i.e. $(0,0)$, $(+1,0)$ and $(-1,0)$, as illustrated in Fig.~\ref{fig_BendProof}(a)-(c).
	\begin{table}
		\begin{center}
			\caption{Possible Variations in the Bending of optimal homotopic Paths due to Obstruction Points for $p^*_{\sigma, l}(t\pm dt)$\label{tab:Bending}}
			\centering
			\begin{tabular}{|c|c|c|c|c|c|}
				\hline
				\multicolumn{6}{|c|}{\makecell*[c]{ Variations in the Bending Number of  \\ $p^*_{\sigma, l}(t\pm dt)$ Relative to $p^*_{\sigma, l}(t)$}}\\
				\hline
				\makecell*[c]{$p^*_{\sigma, l}(t+dt)$} & $0$ & $+1$ & $0$  & $-1$ & $0$  \\
				\hline
				\makecell*[c]{$p^*_{\sigma, l}(t-dt)$} & $0$ & $0$  & $+1$ & $0$  & $-1$ \\
				\hline
			\end{tabular}
		\end{center}
	\end{table}

	Referring to Fig.~\ref{fig_BendProof}(d), when the number of bends changes to $(0, 0)$, the proof of Inequality (\ref{eq_fconvexDDot1}) is as follows:
	\begin{align}
		\label{eq_BendProof1}
		\lim_{dt\to 0} c(l^+_{-1}) + c(l^-_{-1})            & \geq 2c(l_{-1}) \tag{\underline{Triangle Inequality.}} \nonumber \\
		c(l^+_{-1}) + c(l^-_{-1}) + 2c(\sigma')             & \geq 2(c(l_{-1})+c(\sigma')) \nonumber                           \\
		c(\sigma' * l^+_{-1}) + c(\sigma' * l^-_{-1})       & \geq 2c(\sigma' * l_{-1}) \nonumber                              \\
		c(p^*_{\sigma, l}(t+dt)) + c(p^*_{\sigma, l}(t-dt)) & \geq 2c(p^*_{\sigma, l}(t)).
	\end{align}
	Referring to Fig.~\ref{fig_BendProof}(e), when the number of bends changes to $(+1, 0)$, the proof of Inequality (\ref{eq_fconvexDDot1}) is as follows:
	\begin{align}
		\label{eq_BendProof2}
		 & \lim_{dt\to 0} \scalebox{0.95}{$c(l^+_{-1}) + c(l^-_{-2}) + c(l^-_{-1}) > c(l^+_{-1}) + c(l^-_{-1,2}) \geq 2c(l_{-1})$} \nonumber      \\
		 & \Longleftrightarrow \lim_{dt\to 0} c(p^*_{\sigma, l}(t+dt)) + c(p^*_{\sigma, l}(t-dt)) > 2c(p^*_{\sigma, l}(t)).
	\end{align}
	Referring to Fig.~\ref{fig_BendProof}(f), when the number of bends changes to $(-1, 0)$, the proof of Inequality (\ref{eq_fconvexDDot1}) is as follows:
	\begin{align}
		\label{eq_BendProof3_1}
		\lVert\overrightarrow{BC}\rVert + \lVert\overrightarrow{CD}\rVert & < \lVert\overrightarrow{BC}\rVert + \lVert\overrightarrow{CE}\rVert + \lVert\overrightarrow{ED}\rVert \nonumber \\
		                                                                  & = \lVert\overrightarrow{BE}\rVert + \lVert\overrightarrow{ED}\rVert \nonumber                                   \\
		                                                                  & < \lVert\overrightarrow{BA}\rVert + \lVert\overrightarrow{AE}\rVert + \lVert\overrightarrow{ED}\rVert \nonumber \\
		                                                                  & = \lVert\overrightarrow{BA}\rVert + \lVert\overrightarrow{AD}\rVert,
	\end{align}
	where $\overrightarrow{CE}$ is the extension of $\overrightarrow{BC}$. Based on the equation (\ref{eq_BendProof3_1}), further we have:
	\begin{align}
		\label{eq_BendProof3_2}
		\lim_{dt\to 0} c(l^+_{-1}) + c(l^-_{-1})            & > 2c(l_{-1}) + c(l_{-2}) \nonumber    \\
		c(l^+_{-1}) + c(l^-_{-1}) + c(l^-_{-2})             & > 2c(l_{-1}) + 2c(l_{-2}) \nonumber   \\
		c(l^+_{-1}) + c(l^-_{-2}*l^-_{-1})                  & > 2c(l_{-2}*l_{-1}) \nonumber         \\
		c(\sigma'*l^+_{-1}) + c(\sigma'*l^-_{-2}*l^-_{-1})  & > 2c(\sigma'*l_{-2}*l_{-1}) \nonumber \\
		c(p^*_{\sigma, l}(t+dt)) + c(p^*_{\sigma, l}(t-dt)) & > 2c(p^*_{\sigma, l}(t)).
	\end{align}

	In conclusion, the inequality (\ref{eq_fconvexDDot1}) is proved, that is, the theorem is established.
\end{proof}

\subsection*{3.8 Proof of \textbf{Theorem~\ref{th_genfconvex}}}

\begin{figure*}[!t]
	\centering
	\subfloat[]{\includegraphics[width=2.25in]{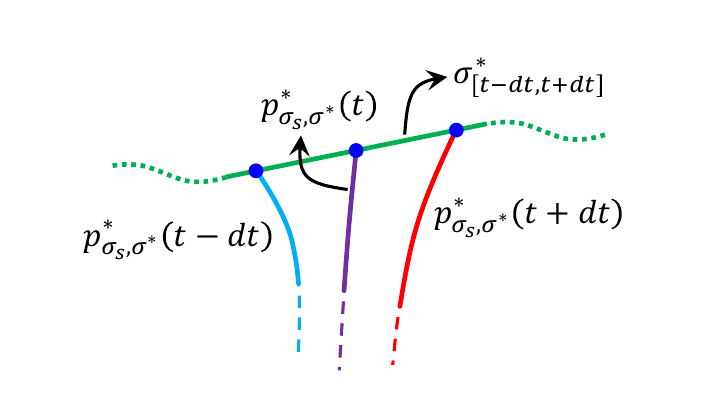}}
	\hfil
	\subfloat[]{\includegraphics[width=2.25in]{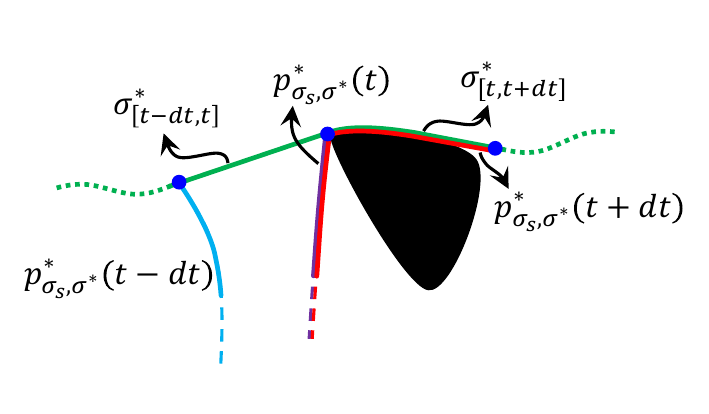}}
	\hfil
	\subfloat[]{\includegraphics[width=2.25in]{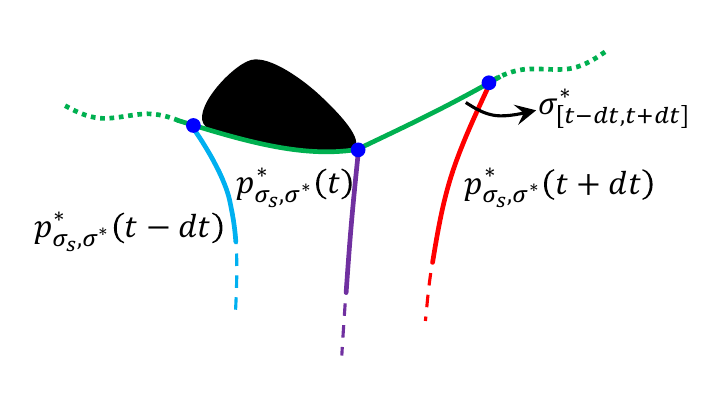}}
	\caption{Illustration of the three possible cases for the subpath $\sigma^*_{[t - dt, t + dt]}$ in inequality (\ref{eq_genfconvexDDot1}) as $dt$ approaches 0. (a) $\sigma^*_{[t - dt, t + dt]}$ is a straight line segment. (b) $\sigma^*_{[t - dt, t + dt]}$ is bent due to obstacles, and $p^*_{\sigma_s, \sigma^*}(t)$ is a subpath of $p^*_{\sigma_s, \sigma^*}(t - dt)$ or/and $p^*_{\sigma_s, \sigma^*}(t + dt)$. (c) $\sigma^*_{[t - dt, t + dt]}$ is bent due to obstacles, and $p^*_{\sigma_s, \sigma^*}(t)$ is not a subpath of $p^*_{\sigma_s, \sigma^*}(t - dt)$ or $p^*_{\sigma_s, \sigma^*}(t + dt)$.}
	\label{fig_genfpr}
\end{figure*}
\begin{proof}
	The proof of the continuity of the function $g$ is similar to that of $f$ and will not be elaborated here. Therefore, similar to $f$, the key to proving that $g$ is strictly convex lies in demonstrating the following inequality for any $t$:
	\begin{align}
		\label{eq_genfconvexDDot1}
		& \lim_{dt\to 0} g(t + dt) + g(t - dt) \geq 2g(t) \nonumber\\
		\Longleftrightarrow & \lim_{dt\to 0} c\left(p^*_{\sigma_s, \sigma^*}(t + dt)\right)  + c\left(p^*_{\sigma_s, \sigma^*}(t - dt)\right) \nonumber\\
		&\geq 2c\left(p^*_{\sigma_s, \sigma^*}(t)\right).
	\end{align}
	As shown in the Fig.~\ref{fig_genfpr}, the proof of inequality (\ref{eq_genfconvexDDot1}) can be divided into three cases for consideration:
	\begin{enumerate}
		\item[1)] {The subpath $\sigma^*_{[t - dt, t + dt]}$ is a straight line segment;}
		\item[2)] {The subpath $\sigma^*_{[t - dt, t + dt]}$ is bent due to obstacles, and $p^*_{\sigma_s, \sigma^*}(t)$ is a subpath of $p^*_{\sigma_s, \sigma^*}(t - dt)$ or/and $p^*_{\sigma_s, \sigma^*}(t + dt)$;}
		\item[3)] {The subpath $\sigma^*_{[t - dt, t + dt]}$ is bent due to obstacles, and $p^*_{\sigma_s, \sigma^*}(t)$ is not a subpath of $p^*_{\sigma_s, \sigma^*}(t - dt)$ or $p^*_{\sigma_s, \sigma^*}(t + dt)$.}
	\end{enumerate}

	For the case where $\sigma^*_{[t - dt, t + dt]}$ is a straight line segment, the function $g$ degenerates into the function $f$ on this segment. According to \textbf{Theorem~\ref{th_fconvex}}, inequality (\ref{eq_genfconvexDDot1}) holds in this case, and $g$ is a convex function.

	As shown in the Fig.~\ref{fig_genfpr}(b), for the second case of $\sigma^*_{[t - dt, t + dt]}$. Taking $p^*_{\sigma_s, \sigma^*}(t)$ as a subpath of $p^*_{\sigma_s, \sigma^*}(t + dt)$ as an example, we have:
	\begin{align}
		p^*_{\sigma_s, \sigma^*}(t + dt) = p^*_{\sigma_s, \sigma^*}(t) * \sigma^*_{[t,t+dt]}.
	\end{align}
	Let $c^{\sigma^*}_{dt}$ denote the length of $\sigma^*_{[t, t + dt]}$, i.e.,
	\begin{align}
		c^{\sigma^*}_{dt} = c\left(\sigma^*_{[t,t+dt]}\right) = \left\lvert dt\right\rvert \cdot c\left(\sigma^*\right).
	\end{align}
	Thus, we have:
	\begin{align}
		\label{eq_genf_S1}
		c\left(p^*_{\sigma_s, \sigma^*}(t + dt)\right) = c\left(p^*_{\sigma_s, \sigma^*}(t)\right) + c^{\sigma^*}_{dt}.
	\end{align}
	From (\ref{eq_genphomo}), it follows that:
	\begin{align}
		p^*_{\sigma_s, \sigma^*}(t) \simeq_p p^*_{\sigma_s, \sigma^*}(t-dt) * \sigma^*_{[t-dt,t]}.
	\end{align}
	Furthermore, based on the definition of $p^*_{\sigma_s, \sigma^*}$, $p^*_{\sigma_s, \sigma^*}(t)$ is the optimal homotopic path within its homotopy class. Therefore:
	\begin{align}
		\label{eq_genf_S2}
		c\left(p^*_{\sigma_s, \sigma^*}(t)\right) \leq c\left(p^*_{\sigma_s, \sigma^*}(t-dt)\right) + c^{\sigma^*}_{dt}.
	\end{align}
	Combining (\ref{eq_genf_S1}) and (\ref{eq_genf_S2}), we obtain:
	\begin{align}
		c\left(p^*_{\sigma_s, \sigma^*}(t+dt)\right) +c\left(p^*_{\sigma_s, \sigma^*}(t-dt)\right) \geq 2c\left(p^*_{\sigma_s, \sigma^*}(t)\right).
	\end{align}
	In this way, we have proven that inequality (\ref{eq_genfconvexDDot1}) holds when $\sigma^*_{[t - dt, t + dt]}$ is the second case.

	For the third case of $\sigma^*_{[t - dt, t + dt]}$, we draw an analogy to \textbf{Lemma~\ref{th_phomotopy}} and refer to its proof process. It is not difficult to arrive at the following conclusion:\\
	\textit{For any $t_1, t_2 \in I$,}
	\begin{equation}
		\label{eq_genphomo}
		p^*_{\sigma_s, \sigma^*}(t_2) \simeq_p p^*_{\sigma_s, \sigma^*}(t_1)*\sigma^*_{[t_1,t_2]}.
	\end{equation}
	\begin{figure}[!t]
		\centering
		\subfloat[]{\includegraphics[width=1.65in]{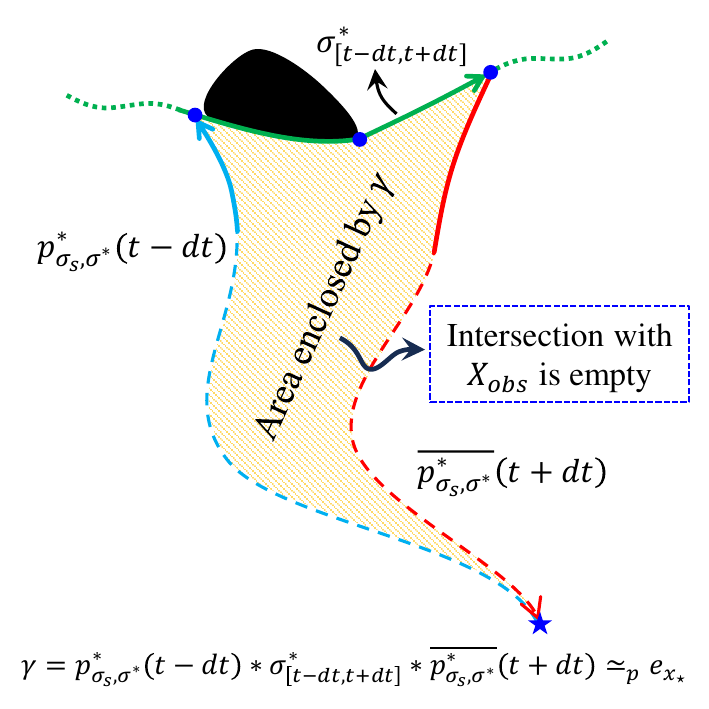}}
		\hfil
		\subfloat[]{\includegraphics[width=1.65in]{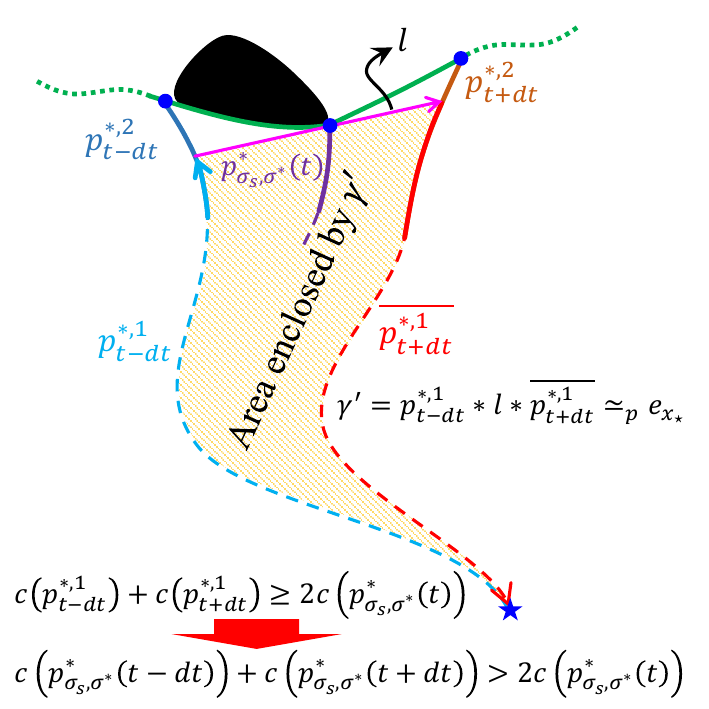}}
		\caption{(a) Illustration of the null-homotopic loop $\gamma$ formed by $p^*_{\sigma_s, \sigma^*}(t - dt)$, $\sigma^*_{[t - dt, t + dt]}$, and $\overline{p^*_{\sigma_s, \sigma^*}}(t + dt)$. (b) Illustration of the line segment $l$ and the null-homotopic loop $\gamma'$ constructed based on $l$ and $\gamma$.}
		\label{fig_genfpr3}
	\end{figure}
	As shown in the Fig.~\ref{fig_genfpr3}(a), we can construct a special loop $\gamma$, defined as  
	\begin{equation}
		\gamma = p^*_{\sigma_s, \sigma^*}(t-dt) * \sigma^*_{[t-dt,t+dt]} * \overline{p^*_{\sigma_s, \sigma^*}}(t+dt).
	\end{equation}
	From (\ref{eq_genphomo}), it is evident that $\gamma$ satisfies:  
	\begin{equation}
		\gamma \simeq_p e_{x_\star}.
	\end{equation}
	That is, $\gamma$ is a null-homotopic loop (a contractible loop). Furthermore, according to the winding number theory, any point enclosed by $\gamma$ (with a non-zero winding number) belongs to the free space $X_{free}$.

	As shown in the Fig.~\ref{fig_genfpr3}(b), in this case, there exists a line segment $l$ enclosed by $\gamma$, which satisfies $\sigma^*(t) \in l$, $l(0) \in p^*_{\sigma_s, \sigma^*}(t - dt)$, and $l(1) \in p^*_{\sigma_s, \sigma^*}(t + dt)$. Based on this, $p^*_{\sigma_s, \sigma^*}(t - dt)$ and $p^*_{\sigma_s, \sigma^*}(t + dt)$ can be expressed in the following sub-segment forms:
	\begin{align}
		p^*_{\sigma_s, \sigma^*}(t-dt) = p^{*,1}_{t-dt} * p^{*,2}_{t-dt},\\
		p^*_{\sigma_s, \sigma^*}(t+dt) = p^{*,1}_{t+dt} * p^{*,2}_{t+dt},
	\end{align}
	where $p^{*,1}_{t - dt}(1) = p^{*,2}_{t - dt}(0) = l(0)$ and $p^{*,1}_{t + dt}(1) = p^{*,2}_{t + dt}(0) = l(1)$. On this basis, we can construct a new loop $\gamma'$ as follows:
	\begin{align}
		\gamma' = p^{*,1}_{t-dt} * l * \overline{p^{*,1}_{t+dt}}.
	\end{align}
	Since all points enclosed by $\gamma$ belong to $X_{free}$, the points enclosed by the loop $\gamma'$ also belong to $X_{free}$. This implies that $\gamma'$ is also a null-homotopic loop. Therefore, we have:
	\begin{align}
		\gamma' = p^{*,1}_{t-dt} * l * \overline{p^{*,1}_{t+dt}} \simeq_p e_{x_\star}
		\Longleftrightarrow p^{*,1}_{t+dt} \simeq_p p^{*,1}_{t-dt} * l.
	\end{align}
	Thus, according to \textbf{Theorem~\ref{th_fconvex}}, we can derive the following relationship:
	\begin{align}
		c\left(p^{*,1}_{t-dt}\right) + c\left(p^{*,1}_{t+dt}\right) \geq 2c\left(p^*_{\sigma_s, \sigma^*}(t)\right),
	\end{align}
	and since $p^{*,1}_{t - dt}$ and $p^{*,1}_{t + dt}$ are subpaths of $p^*_{\sigma_s, \sigma^*}(t - dt)$ and $p^*_{\sigma_s, \sigma^*}(t + dt)$, respectively, it follows that:
	\begin{align}
		&c\left(p^*_{\sigma_s, \sigma^*}(t + dt)\right)  + c\left(p^*_{\sigma_s, \sigma^*}(t - dt)\right)\nonumber\\
		&\qquad\qquad > c\left(p^{*,1}_{t-dt}\right) + c\left(p^{*,1}_{t+dt}\right) \geq 2c\left(p^*_{\sigma_s, \sigma^*}(t)\right).
	\end{align}
	In summary, we have proven that inequality (\ref{eq_genfconvexDDot1}) holds for all cases. Thus, the proposition is established.
\end{proof}

\subsection*{3.9 Proof of \textbf{Theorem~\ref{th_TMVPath}}}
\begin{proof}
	To prove this proposition, we need to show that $\gamma$ satisfies all three conditions of the solution to the TMV problem, namely (\ref{eqProbTMV1}), (\ref{eqProbTMV2}), and (\ref{eqProbTMV3}).

	First, the terms $\Theta\left(\overline{\varsigma^*_s} * \varsigma^*_1\right)$, $\Theta\left(\overline{\varsigma^*_1} * \varsigma^*_2\right)$, $\dots$, $\Theta\left(\overline{\varsigma^*_N} * \varsigma^*_s\right)$ in (\ref{eq_TMVPath}) correspond one-to-one with $\sigma_{s,1}$, $\sigma_{1,2}$, $\sigma_{2,3}$, $\dots$, $\sigma_{N,s}$ in (\ref{eqProbTMV1}). Based on the definitions of the configurations $\varsigma^*_1, \varsigma^*_2, \dots, \varsigma^*_N$ and the operation $\Theta$, it is clear that $\gamma$ satisfies condition (\ref{eqProbTMV1}). Furthermore, according to \textbf{Corollary~\ref{Cor_genf}}, $\gamma$ also satisfies condition (\ref{eqProbTMV2}). From (\ref{eq_TMVPath}), we have the following relationship:
	\begin{align}
		\label{eq_TMVPath1}
		\varsigma^*_s * \gamma & = \varsigma^*_s * \Theta\left(\overline{\varsigma^*_s} * \varsigma^*_1\right) * \Theta\left(\overline{\varsigma^*_1} * \varsigma^*_2\right) * \dots * \Theta\left(\overline{\varsigma^*_N} * \varsigma^*_s\right) \nonumber \\
		                       & \simeq_p \varsigma^*_s * \overline{\varsigma^*_s} * \varsigma^*_1 * \overline{\varsigma^*_1} * \varsigma^*_2 * \dots * \overline{\varsigma^*_N} * \varsigma^*_s \nonumber                                                   \\
		                       & \simeq_p \varsigma^*_s.
	\end{align}
	Thus, $\gamma$ satisfies condition (\ref{eqProbTMV3}). In summary, the proposition is proven.
\end{proof}

\subsection*{3.10 Proof of \textbf{Lemma~\ref{th_optTMVsubPath}}}
\begin{proof}
	We prove this by contradiction. Assume that for the optimal solution $\gamma^*$, there exists a subpath $\sigma_{i,j}\neq \Theta\left(\sigma_{i,j}\right)$, i.e., there exists $\sigma'_{i,j}\in[\sigma_{i,j}]_{\simeq_p}$ such that $c(\sigma'_{i,j})<c(\sigma_{i,j})$. Then, we can construct a new solution $\gamma'$, defined as:
	\begin{align}
		\gamma' = \sigma_{s,1} * \sigma_{1,2} * \dots * \sigma'_{i,j} * \dots + \sigma_{N,s}.
	\end{align}
	The following inequality can be obtained:
	\begin{align}
		c(\gamma^*) & = c(\sigma_{s,1}) + \dots + c(\sigma_{i,j}) + \dots + c(\sigma_{N,s}) \nonumber  \\
		            & > c(\sigma_{s,1}) + \dots + c(\sigma'_{i,j}) + \dots + c(\sigma_{N,s}) \nonumber \\
		            & = c(\gamma').
	\end{align}
	This indicates that the cost of $\gamma'$ is less than that of $\gamma^*$, contradicting the assumption that $\gamma^*$ is the optimal solution. Therefore, the original proposition holds.
\end{proof}

\subsection*{3.11 Proof of \textbf{Theorem~\ref{th_optTMVPath}}}
\begin{proof}
	Consider the dynamic process where the tethered robot starts from the initial configuration $\varsigma_s$ and sequentially visits $N$ targets along the subpaths of $\gamma^*$ as shown in (\ref{eqProbTMV1}). According to (\ref{eqProbTMV2}), the configurations of the tethered robot upon reaching $x_1,x_2,\dots,x_N$ exist. Let $\varsigma^*_{x_1} \in \mathcal{C}^{*,x_1}_{x_\star ,\zeta}$, $\varsigma^*_{x_2} \in \mathcal{C}^{*,x_2}_{x_\star ,\zeta}$,$\dots$, $\varsigma^*_{x_N} \in \mathcal{C}^{*,x_N}_{x_\star ,\zeta}$ denote the configurations of the tethered robot when it reaches these targets, respectively. These configurations satisfy the following relationships:
	\begin{align}
		\label{eq_optTMVPath1}
		 & \varsigma^*_{x_1} \simeq_p \varsigma_s * \sigma_{s,1}, \nonumber                \\
		 & \varsigma^*_{x_2} \simeq_p \varsigma_s * \sigma_{s,1} * \sigma_{1,2}, \nonumber \\
		% &\varsigma^*_{x_3} \simeq_p \varsigma_s * \sigma_{s,1} * \sigma_{1,2} * \sigma_{2,3}, \nonumber\\
		 & \ \vdots  \nonumber                                                             \\
		 & \varsigma^*_{x_N} \simeq_p \varsigma_s * \sigma_{s,1} * \dots * \sigma_{N-1,N}.
	\end{align}
	Thus, there are:
	\begin{align}
		\label{eq_optTMVPath2}
		 & \sigma_{s,1} \simeq_p \overline{\varsigma^*_s} * \varsigma^*_{x_1}, \nonumber     \\
		 & \sigma_{1,2} \simeq_p \overline{\varsigma^*_{x_1}} * \varsigma^*_{x_2}, \nonumber \\
		 & \ \vdots  \nonumber                                                               \\
		 & \sigma_{N-1,N} \simeq_p \overline{\varsigma^*_{x_{N-1}}} * \varsigma^*_{x_N}.
	\end{align}
	Furthermore:
	\begin{align}
		\label{eq_optTMVPath3}
		\varsigma_s * \gamma^* & = \varsigma_s * \sigma_{s,1} * \sigma_{1,2} * \dots * \sigma_{N-1,N} * \sigma_{N,s} \nonumber                                                                                         \\
		                       & \simeq_p \varsigma_s * \left(\overline{\varsigma^*_s} * \varsigma^*_{x_1}\right) * \dots * \left(\overline{\varsigma^*_{x_{N-1}}} * \varsigma^*_{x_N}\right) * \sigma_{N,s} \nonumber \\
		                       & \simeq_p \varsigma^*_{x_N} * \sigma_{N,s}.
	\end{align}
	According to (\ref{eqProbTMV3}), after the tethered robot completes its motion along $\gamma^*$, the final configuration should be homotopic to the initial configuration, i.e., $\varsigma_s * \gamma^* \simeq_p \varsigma_s$. Therefore:
	\begin{align}
		\label{eq_optTMVPath4}
		\varsigma_s * \gamma^* \simeq_p \varsigma^*_{x_N} * \sigma_{N,s} \simeq_p \varsigma_s \Longleftrightarrow \sigma_{N,s} \simeq_p \overline{\varsigma^*_{x_N}} * \varsigma_s.
	\end{align}
	From \textbf{Lemma~\ref{th_optTMVsubPath}}, (\ref{eq_optTMVPath2}), and (\ref{eq_optTMVPath4}), we obtain:
	\begin{align}
		\label{eq_optTMVPath5}
		 & \sigma_{s,1} = \Theta\left(\overline{\varsigma^*_s} * \varsigma^*_{x_1}\right), \nonumber           \\
		 & \sigma_{1,2} = \Theta\left(\overline{\varsigma^*_{x_1}} * \varsigma^*_{x_2}\right), \nonumber       \\
		 & \ \vdots  \nonumber                                                                                 \\
		 & \sigma_{N-1,N} = \Theta\left(\overline{\varsigma^*_{x_{N-1}}} * \varsigma^*_{x_N}\right), \nonumber \\
		 & \sigma_{N,s} = \Theta\left(\overline{\varsigma^*_{x_N}} * \varsigma_s\right).
	\end{align}
	Thus:
	\begin{align}
		\gamma^* = \Theta\left(\overline{\varsigma^*_s} * \varsigma^*_{x_1}\right) * \Theta\left(\overline{\varsigma^*_{x_1}} * \varsigma^*_{x_2}\right) * \dots * \Theta\left(\overline{\varsigma^*_{x_N}} * \varsigma^*_s\right).
	\end{align}
	In summary, $\varsigma^*_{x_1},\varsigma^*_{x_2},\dots,\varsigma^*_{x_N}$ correspond to the configurations $\varsigma^*_1,\varsigma^*_2,\dots,\varsigma^*_N$ in the proposition that satisfy (\ref{eq_optTMVPath}).
\end{proof}

\subsection*{3.12 Proof of \textbf{Theorem~\ref{th_TMVGpw}}}
\begin{proof}
	According to \textbf{Remark~\ref{re_nodeChange}}, for any directed edge $\varsigma^*_k$ in $\mathcal{G}_{TMV}$ and its connected nodes $\gamma_{k-1}$ and $\gamma_k$, the cost of the directed edge $\varsigma^*_k$ is given by:
	\begin{align}
		{\tt{EdgeCost}}(\varsigma^*_k)
		 & = c(\gamma_{k}) - c(\gamma_{k-1}) \nonumber                                                                                                             \\
		 & = c\left(\Theta\left(\overline{\varsigma^*_{k-1}} * \varsigma^*_k\right) * \Theta\left(\overline{\varsigma^*_k} * \varsigma^*_s\right)\right) \nonumber \\
		 & \quad - c\left(\Theta\left(\overline{\varsigma^*_{k-1}} * \varsigma^*_s\right)\right).
	\end{align}
	Additionally, we have:
	\begin{align}
		\Theta\left(\overline{\varsigma^*_{k-1}} * \varsigma^*_k\right) * \Theta\left(\overline{\varsigma^*_k} * \varsigma^*_s\right) & \simeq_p \overline{\varsigma^*_{k-1}} * \varsigma^*_k * \overline{\varsigma^*_k} * \varsigma^*_s \nonumber \\
		                                                                                                                              & \simeq_p \overline{\varsigma^*_{k-1}} * \varsigma^*_s \nonumber                                            \\
		                                                                                                                              & \simeq_p \Theta\left(\overline{\varsigma^*_{k-1}} * \varsigma^*_s\right).
	\end{align}
	By the definition of $\Theta$, $\Theta\left(\overline{\varsigma^*_{k-1}} * \varsigma^*_s\right)$ is the optimal path in its homotopy class. Therefore, we have:
	\begin{align}
		 & c\left(\Theta\left(\overline{\varsigma^*_{k-1}} * \varsigma^*_k\right) * \Theta\left(\overline{\varsigma^*_k} * \varsigma^*_s\right)\right) \geq c\left(\Theta\left(\overline{\varsigma^*_{k-1}} * \varsigma^*_s\right)\right) \nonumber \\
		 & \Longleftrightarrow \ {\tt{EdgeCost}}(\varsigma^*_k) \geq 0 .
	\end{align}
	Since $\varsigma^*_k$ is an arbitrary directed edge in $\mathcal{G}_{TMV}$, it follows that all edge costs in $\mathcal{G}_{TMV}$ are non-negative. Hence, $\mathcal{G}_{TMV}$ is a positive weight graph.
\end{proof}

\subsection*{3.13 Proof of \textbf{Theorem~\ref{th_optPath}}}
\begin{proof}
	\textbf{Theorem~\ref{th_optPath}} will be proved by contradiction.	Assume there exists an optimal path $\sigma^\circledast$ such that the sequence $\Gamma^*_{\varrho}\circ\sigma^\circledast$ contains repeated convex polygons.

	As illustrated in Fig.~\ref{fig_THoptPath}, let $\mathbf{x}$ denote a repeated convex polygon in $\Gamma^*_{\varrho}\circ\sigma^\circledast$. According to the definition of $\Gamma^*_{\varrho}$, the path $\sigma^\circledast$ must pass through the convex polygon $\mathbf{x}$ repeatedly. Let $x_1$ and $x_2$ denote the intersection points of $\sigma^\circledast$ with the cutlines when it enters $\mathbf{x}$ for the first and second time, respectively. Thus, $\sigma^\circledast$ can be expressed as:
	\begin{equation}
		\sigma^\circledast = \sigma^\circledast_1 * \sigma^\circledast_2 * \sigma^\circledast_3,
	\end{equation}
	where $\sigma^\circledast_1$ represents the subpath of $\sigma^\circledast$ from $x_s$ to $x_1$, $\sigma^\circledast_2$ represents the subpath of $\sigma^\circledast$ from $x_1$ to $x_2$, $\sigma^\circledast_3$ represents the subpath of $\sigma^\circledast$ from $x_2$ to $x_g$. Since $x_1$ and $x_2$ belong to the same convex polygon $\mathbf{x}$, the straight-line path $l_{x_1}^{x_2}$ exists. Therefore, a new path $\sigma_{new} \in P(X_{free};x_s,x_g)$ can be constructed as:
	\begin{equation}
		\sigma_{new} = \sigma^\circledast_1 * l_{x_1}^{x_2} * \sigma^\circledast_3.
	\end{equation}
	Since $l_{x_1}^{x_2}$ is the optimal path from $x_1$ to $x_2$, it follows that:
	\begin{align}
		c(\sigma_{new}) & = c(\sigma^\circledast_1) + c(l_{x_1}^{x_2}) + c(\sigma^\circledast_3) \nonumber        \\
		                & < c(\sigma^\circledast_1) + c(\sigma^\circledast_2) + c(\sigma^\circledast_3) \nonumber \\
		                & = c(\sigma^\circledast).
	\end{align}
	Thus, a path shorter than $\sigma^\circledast$ has been found, which contradicts the assumption that $\sigma^\circledast$ is optimal. Therefore, the assumption is false, and the original proposition holds.
\end{proof}

\subsection*{3.14 Proof of \textbf{Theorem~\ref{th_prUTPP}}}
\begin{figure*}[!t]


	\centering
	\subfloat[]{\includegraphics[height=1.3in]{fig/fig_OptNRCP_1.pdf}}
	\hfil
	\subfloat[]{\includegraphics[height=1.3in]{fig/fig_OptNRCP_2.pdf}}
	\hfil
	\subfloat[]{\includegraphics[height=1.3in]{fig/fig_OptNRCP_3.pdf}}
	\caption{Illustration involved in the proof of \textbf{Theorem~\ref{th_prUTPP}}.}
	\label{fig_OptNRCP}
\end{figure*}
\begin{proof}\mbox{}\newline
	\textbf{Sub-proposition 1}: There exist $\sigma^*_s$ and $\sigma^*_g$ satisfying (\ref{eq_prUTPP0}).\\
	\textbf{Sub-proposition 2}: For any pair of $\sigma^*_s$ and $\sigma^*_g$ satisfying (\ref{eq_prUTPP0}), they can be transformed into a new pair of optimal homotopic paths $\sigma^*_{sk}$ and $\sigma^*_{gk}$, which still satisfy (\ref{eq_prUTPP0}), while neither $\Gamma^*_{\varrho}\circ\sigma^*_{sk}$ nor $\Gamma^*_{\varrho}\circ\sigma^*_{gk}$ contains repeated elements.

	\textit{Proof of Sub-proposition 1.} Since $\sigma^\circledast$ and $x_\star$ can be connected in $X_{free}$, $P(X_{free};x_\star,x_s)$ is non-empty. Let $\sigma^*_s$ be an arbitrary optimal homotopic path in $P(X_{free};x_\star,x_s)$. Then, a corresponding $\sigma^*_g\in P(X_{free};x_\star,x_g)$ can be constructed such that (\ref{eq_prUTPP0}) holds, i.e.,
	\begin{equation}
		% \sigma^*_g = \underset{\sigma \in \left[\sigma_s * \sigma^\circledast\right]_{\simeq_p} }{\mathrm{arg\,min}}\ c(\sigma).
		\sigma^*_g = \Theta\left(\sigma^*_s * \sigma^\circledast\right).
	\end{equation}

	\textit{Proof of Sub-proposition 2.} Since $\sigma^\circledast \simeq_p \overline{\sigma^*_s} * \sigma^*_g$, based on (\ref{eqCDTencoding}), (\ref{eqMAPTstarProd2}), and (\ref{eqMAPTstarProd}), we have:
	\begin{align}
		                    & \Gamma^*\circ \sigma^\circledast = (\Gamma^*\circ \overline{\sigma^*_s} ) * (\Gamma^*\circ \sigma^*_g) \nonumber                                          \\
		\Longleftrightarrow & \left\langle x_s, \varrho^\circledast, x_g\right\rangle = \left\langle x_s, {\tt{RBF}}\left(\varrho^*_s * \varrho^*_g\right), x_g \right\rangle \nonumber \\
		\Longrightarrow     & \varrho^\circledast = {\tt{RBF}}\left(\varrho^*_s * \varrho^*_g\right),
	\end{align}
	where $\varrho^\circledast = \Gamma^*_{\varrho}\circ \sigma^\circledast$, $\varrho^*_s = \Gamma^*_{\varrho}\circ \sigma^*_s$, and $\varrho^*_g = \Gamma^*_{\varrho}\circ \sigma^*_g$. As illustrated in Fig.~\ref{fig_OptNRCP}(a), and referring to the definition of ${\tt{RBF}}$, the sequences $\varrho^*_s$ and $\varrho^*_g$ can be expressed in the following segmented forms:
	\begin{align}
		% &\varrho^\circledast = \varrho^{\circledast1} * \varrho^{\circledast2},\\
		 & \varrho^*_s = \varrho^{*1}_s * \varrho^{*2}_s, \\
		 & \varrho^*_g = \varrho^{*1}_g * \varrho^{*2}_g,
	\end{align}
	where $\varrho^{*1}_s$ and $\varrho^{*1}_g$ are the subsequences that form the rollback paths in $\varrho^*_s * \varrho^*_g$. Specifically, the subpaths satisfy the following relationships:
	\begin{align}
		\varrho^\circledast & = {\tt{RBF}}\left(\overline{\varrho^*_s}*\varrho^*_g\right) \nonumber                                                        \\
		                    & = {\tt{RBF}}\left(\overline{(\varrho^{*1}_s * \varrho^{*2}_s)} * (\varrho^{*1}_g * \varrho^{*2}_g)\right)  \nonumber         \\
		                    & = {\tt{RBF}}\left(\overline{\varrho^{*2}_s} * (\overline{\varrho^{*1}_s} * \varrho^{*1}_g) * \varrho^{*2}_g\right) \nonumber \\
		                    & = \overline{\varrho^{*2}_s} * \varrho^{*2}_g.
	\end{align}
	Additionally, it is important to note that, by the definition of $\Gamma^*_{\varrho}$, the sequences $\varrho^*_s$ and $\varrho^*_g$ contain no rollback paths. Therefore, their subsequences $\varrho^{*1}_s$ and $\varrho^{*1}_g$ also contain no rollback paths. Consequently, in the rollback path $\overline{\varrho^{*1}_s}*\varrho^{*1}_g$, $\overline{\varrho^{*1}_s}$ and $\varrho^{*1}_g$ are mutually inverse, which implies:
	\begin{equation}
		\varrho^{*1}_s = \varrho^{*1}_g.
	\end{equation}
	Assume that $\varrho^*_s$ and $\varrho^*_g$ each contain several repeated elements. According to \textbf{Theorem~\ref{th_optPath}}, since $\sigma^\circledast$ is the globally optimal path, $\overline{\varrho^{*2}_s} * \varrho^{*2}_g$ contains no repeated elements. Therefore, the subsequences $\varrho^{*2}_s$ and $\varrho^{*2}_g$ also contain no repeated elements. Consequently, the repeated elements in $\varrho^*_s$ (or $\varrho^*_g$) can only occur in the following two scenarios:
	\begin{enumerate}
		\item[1)] {The subsequence $\varrho^{*1}_s$ (or $\varrho^{*1}_g$) contains repeated elements within itself.}
		\item[2)] {There are repeated elements between the subsequences $\varrho^{*1}_s$ and $\varrho^{*2}_s$ (or between $\varrho^{*1}_g$ and $\varrho^{*2}_g$).}
	\end{enumerate}

	We now describe a process to transform $\varrho^*_s$ into a sequence without repeated elements.

	First, as illustrated in Fig.~\ref{fig_OptNRCP}(b), consider the case where the subsequence $\varrho^{*1}_s$ contains repeated elements within itself (this can be interpreted as the sequence $\varrho^{*1}_s$ intersecting with itself). Suppose $\varrho^{*1}_s$ intersects with itself at $\mathbf{x}_i$. Then, $\varrho^{*1}_s$ and $\varrho^{*1}_g$ can be divided into three subsegments:
	\begin{equation}
		\varrho^{*1}_g = \varrho^{*1}_s = \varrho^{*1,1}_s * \varrho^{*1,2}_s * \varrho^{*1,3}_s,
	\end{equation}%\scalebox{0.95}
	where 
	\begin{align}
	\varrho^{*1,1}_s(T_{\varrho^{*1,1}_s}) = \varrho^{*1,2}_s(0) = \varrho^{*1,2}_s(T_{\varrho^{*1,2}_s }) = \varrho^{*1,3}_s(0) = \mathbf{x}_i.\nonumber
	\end{align}
	Furthermore, two new sequences $\varrho^*_{s1}\in P(\mathcal{G}_{con};\mathbf{x}_\star,\mathbf{x}_s)$ and $\varrho^*_{g1}\in P(\mathcal{G}_{con};\mathbf{x}_\star,\mathbf{x}_g)$ can be constructed as follows:
	\begin{align}
		 & \varrho^*_{s1} = \varrho^{*1,1}_s * \varrho^{*1,3}_s * \varrho^{*2}_s, \\
		 & \varrho^*_{g1} = \varrho^{*1,1}_s * \varrho^{*1,3}_s * \varrho^{*2}_g.
	\end{align}
	Clearly, since $\varrho^*_{s1}$ removes the subsequence $\varrho^{*1,2}_s$ compared to $\varrho^*_s$, it eliminates the repeated element $\mathbf{x}_i$ without introducing new repeated elements. (The same applies to $\varrho^*_{g1}$ compared to $\varrho^*_g$.) Further, the verification of ${\tt{RBF}}\left(\overline{\varrho^*_{s1}}*\varrho^*_{g1}\right) = \varrho^\circledast$ is as follow:
	\begin{align}
		  & {\tt{RBF}}\left(\overline{\varrho^*_{s1}}*\varrho^*_{g1}\right) \nonumber                                                                                                                   \\
		= & {\tt{RBF}}\left(\overline{\left(\varrho^{*1,1}_s * \varrho^{*1,3}_s * \varrho^{*2}_s\right) } * \varrho^{*1,1}_s * \varrho^{*1,3}_s * \varrho^{*2}_g\right) \nonumber                       \\
		= & {\tt{RBF}}\left(\overline{\varrho^{*2}_s} *\left( \overline{\varrho^{*1,3}_s} * \overline{\varrho^{*1,1}_s} * \varrho^{*1,1}_s * \varrho^{*1,3}_s\right)  * \varrho^{*2}_g\right) \nonumber \\
		= & \overline{\varrho^{*2}_s} * \varrho^{*2}_g = \varrho^\circledast.
	\end{align}

	As illustrated in Fig.~\ref{fig_OptNRCP}(c). For the case where repeated elements exist between the subsequences $\varrho^{*1}_s$ and $\varrho^{*2}_s$, assume that $\varrho^{*1}_s$ and $\varrho^{*2}_s$ intersect at $\mathbf{x}_i$. Therefore, $\varrho^{*1}_s$, $\varrho^{*1}_g$, and $\varrho^{*2}_s$ can be divided into two subsegments:
	\begin{align}
		 & \varrho^{*1}_g = \varrho^{*1}_s = \varrho^{*1,1}_s * \varrho^{*1,2}_s, \\
		 & \varrho^{*2}_s = \varrho^{*2,1}_s * \varrho^{*2,2}_s.
	\end{align}
	Furthermore, two new sequences $\varrho^*_{s1}\in P(\mathcal{G}_{con};\mathbf{x}_\star,\mathbf{x}_s)$ and $\varrho^*_{g1}\in P(\mathcal{G}_{con};\mathbf{x}_\star,\mathbf{x}_g)$ can be constructed as follows:
	\begin{align}
		 & \varrho^*_{s1} = \varrho^{*1,1}_s * \varrho^{*2,2}_s,                              \\
		 & \varrho^*_{g1} = \varrho^{*1,1}_s * \overline{\varrho^{*2,1}_s}  * \varrho^{*2}_g.
	\end{align}
	Clearly, this construction ensures that $\varrho^*_{s1}$ and $\varrho^*_{g1}$ eliminate the repeated element $\mathbf{x}_i$ from the original sequences $\varrho^*_s$ and $\varrho^*_g$ without introducing new repeated elements. Further, the verification of ${\tt{RBF}}\left(\overline{\varrho^*_{s1}}*\varrho^*_{g1}\right) = \varrho^\circledast$ is as follow:
	\begin{align}
		  & {\tt{RBF}}\left(\overline{\varrho^*_{s1}}*\varrho^*_{g1}\right) \nonumber                                                                                                  \\
		= & {\tt{RBF}}\left(\overline{\left(\varrho^{*1,1}_s * \varrho^{*2,2}_s\right)} * \varrho^{*1,1}_s * \overline{\varrho^{*2,1}_s}  * \varrho^{*2}_g\right) \nonumber            \\
		= & {\tt{RBF}}\left(\overline{\varrho^{*2,2}_s} * \left(\overline{\varrho^{*1,1}_s} * \varrho^{*1,1}_s\right)  * \overline{\varrho^{*2,1}_s} * \varrho^{*2}_g\right) \nonumber \\
		= & \overline{\varrho^{*2,2}_s} * \overline{\varrho^{*2,1}_s} * \varrho^{*2}_g = \overline{\varrho^{*2,1}_s * \varrho^{*2,2}_s} * \varrho^{*2}_g \nonumber                     \\
		= & \overline{\varrho^{*2}_s} * \varrho^{*2}_g = \varrho^\circledast.
	\end{align}

	Using the two methods described above, we can construct a recursive process to eventually obtain two sequences $\varrho^*_{sk}$ and $\varrho^*_{gk}$ that contain no repeated elements and satisfy ${\tt{RBF}}\left(\overline{\varrho^*_{sk}}*\varrho^*_{gk}\right) = \varrho^\circledast$. Furthermore, based on $\varrho^*_{sk}$ and $\varrho^*_{gk}$, we can construct $\sigma^*_{sk}$ and $\sigma^*_{gk}$ that meet the requirements of \textbf{Sub-proposition 2}, i.e.,
	\begin{align}
		\sigma^*_{sk} = \overline{\Gamma^*} \circ \langle x_\star, \varrho^*_{sk}, x_s\rangle, \\
		\sigma^*_{gk} = \overline{\Gamma^*} \circ \langle x_\star, \varrho^*_{sk}, x_g\rangle.
	\end{align}
	In conclusion, \textbf{Sub-proposition 2} holds, and therefore, \textbf{Theorem~\ref{th_prUTPP}} is proven.
\end{proof}

\end{document}